\title{Gaussian Approximation for Two-Timescale Linear Stochastic Approximation}
\author{
    Bogdan Butyrin\textsuperscript{\rm 1, *}, 
    Artemy Rubtsov\textsuperscript{\rm 1, *}, 
    Alexey Naumov\textsuperscript{\rm 1},
    Vladimir Ulyanov\textsuperscript{\rm 1, 2},
    Sergey Samsonov\textsuperscript{\rm 1}
}
\newtheorem{assum}{A\hspace{-2pt}}
\newtheorem{theorem}{Theorem}
\crefname{theorem}{theorem}{Theorems}
\Crefname{theorem}{Theorem}{Theorems}
\newtheorem{lemma}{Lemma}
\crefname{lemma}{lemma}{lemmas}
\Crefname{lemma}{Lemma}{Lemmas}
\newtheorem{remark}{Remark}
\crefname{remark}{remark}{remarks}
\Crefname{remark}{Remark}{Remarks}
\newtheorem{corollary}{Corollary}
\crefname{corollary}{corollary}{corollaries}
\Crefname{corollary}{Corollary}{Corollaries}
\newtheorem{proposition}{Proposition}
\crefname{proposition}{proposition}{propositions}
\Crefname{proposition}{Proposition}{Propositions}
\crefname{definition}{definition}{definitions}
\Crefname{Definition}{Definition}{Definitions}
\crefname{example}{example}{examples}
\Crefname{Example}{Example}{Examples}
\crefname{figure}{figure}{figures}
\Crefname{Figure}{Figure}{Figures}
\crefname{table}{table}{tables}
\Crefname{Table}{Table}{Tables}
\crefname{assum}{A\hspace{-2pt}}{A\hspace{-2pt}}
\crefname{assumb}{B\hspace{-2pt}}{B\hspace{-2pt}}
\crefname{assumUGE}{UGE\hspace{-1pt}}{UGE\hspace{-1pt}}
\crefname{assumID}{IND\hspace{-1pt}}{IND\hspace{-1pt}}
\crefname{assumUE}{UE\hspace{-1pt}}{UE\hspace{-1pt}}
\crefname{assumSUP}{M\hspace{-1pt}}{M\hspace{-1pt}}
\newlist{renumerate}{enumerate}{3}
\setlist[renumerate]{wide, labelwidth=!, labelindent=0pt,label=(\roman*)}
\newlist{aenumerate}{enumerate}{3}
\setlist[aenumerate]{wide, labelwidth=!, labelindent=0pt,label=(\arabic*)}
\newlist{aaenumerate}{enumerate}{3}
\setlist[aaenumerate]{wide, labelwidth=!, labelindent=0pt,label=(\alph*)}
\newlist{aenumerateSpace}{enumerate}{3}
\setlist[aenumerateSpace]{wide, labelwidth=!,label=(\arabic*)}
\newlist{benumerate}{enumerate}{3}
\setlist[benumerate]{wide, labelwidth=!, labelindent=0pt,label=$\bullet$}
\newcommand{\PE}{\mathbb{E}}
\newcommand{\var}{\operatorname{Var}}
\newcommand{\PP}{\mathbb{P}}
\newcommandx{\genericb}[1][1=]{b_{#1}}
\newcommandx{\Constros}[1][1=]{\operatorname{C}_{\operatorname{Ros},#1}}
\newcommandx{\Constburk}[1][1=]{\operatorname{C}_{\operatorname{Burk}}}
\newcommandx{\driftW}[1][1=]{W_{#1}}
\newcommandx{\metricd}[1][1=]{\mathsf{d}_{#1}}
\newcommandx\invmeasure[1][1=]{\Pi_{#1}}
\newcommandx{\PPjoint}[1][1=]{\PP^{\MKjoint[#1]}}
\newcommandx{\PEjoint}[1][1=]{\PE^{\MKjoint[#1]}}
\newcommandx{\PEMID}[1][1=\alpha]{\PE^{\MK[#1]}}
\newcommandx{\PPMID}[1][1=\alpha]{\PP^{\MK[#1]}}
\newcommandx{\MKjoint}[1][1=]{\bar{\operatorname{P}}_{#1}}
\newcommandx\costw[1][1=]{\mathsf{c}_{#1}}
\newcommandx\Intergrdist[1][1=]{\mathbb{M}_{1}(#1)}
\newcommandx{\mmarkov}[1][1=0]{m^{(\Markov)}_{#1}}
\def\Conv{\operatorname{Conv}}
\def\H{\mathcal{H}}
\def\Xset{\mathsf{X}}
\def\Xsigma{\mathcal{X}}
\def\Zset{\mathsf{Z}}
\def\Zsigma{\mathcal{Z}}
\def\rset{\mathbb{R}}
\def\nset{\ensuremath{\mathbb{N}}}
\def\nsets{\ensuremath{\mathbb{N}^*}}
\renewcommand{\S}{\mathcal{S}}
\newcommand{\A}{\mathcal{A}}
\def\PMDP{\MKQ}
\def\wstar{w^{\star}}
\def\what{\hat{w}}
\newcommand{\bConst}[1]{\operatorname{C}_{{\bf #1}}}
\newcommandx\sequence[4][2=,3=,4=]
\newcommandx\sequenceD[2][2=]
\newcommandx\sequenceDouble[4][3=,4=]
\newcommandx{\sequencen}[2][2=n\in\nset]{\ensuremath{\{ #1, \eqsp #2 \}}}
\newcommandx\sequencens[2][2=n]
\newcommandx\sequencet[4]
\def\PE{\mathbb{E}}
\def\P{\mathbb{P}}
\def\ProdB{\Gamma}
\newcommandx{\PVar}[1][1=]{\ensuremath{\operatorname{Var}_{#1}}}
\newcommandx\conststab[1][1=p]{\varkappa_{#1}}
\def\Mart{\operatorname{M}}
\def\tvdist{\mathsf{d}_{\operatorname{tv}}}
\newcommandx{\MK}[1][1=\alpha]{\mathrm{P}_{#1}}
\newcommandx\MKK[1][1=\alpha]{\mathrm{K}_{#1}}
\def\MKQ{\mathrm{P}}
\def\F{\mathcal{F}}
\newcommandx{\PEtilde}[1][1=]{\PE^{\mathrm{K}_{#1}}}
\newcommandx{\PPtilde}[1][1=]{\PP^{\mathrm{K}_{#1}}}
\def\Sigmabf{\boldsymbol{\Sigma}}
\def\lineG{\Sigmabf}
\newcommandx{\norm}[2][2=]{\Vert#1 \Vert_{{#2}}}
\newcommandx{\normLigne}[2][2=]{\Vert#1 \Vert_{{#2}}}
\newcommandx{\normLine}[2][2=]{\Vert#1 \Vert_{{#2}}}
\newcommandx{\normop}[2][2=]{\Vert{#1}\Vert_{{#2}}}
\newcommandx{\normopLigne}[2][2=]{\Vert{#1}\Vert_{{#2}}}
\newcommandx{\normopLine}[2][2=]{\Vert{#1}\Vert_{{#2}}}
\newcommandx{\osc}[2][1=]{\mathrm{osc}_{#1}(#2)}
\newcommand{\pois}[2]{\boldsymbol{g}^{#1}_{#2}}
\newcommand{\poisA}[2]{\boldsymbol{g}^{\funcAw_{#1}}_{#2}}
\newcommand{\poiseps}[2]{\boldsymbol{g}^{\boldsymbol{\eps}_{#1}}_{#2}}
\newcommandx{\cmark}[2]{{#1}^{\text{mark}}_{#2}}
\def\dth{{d_{\theta}}}
\def\dw{{d_w}}
\def\pth{{\myqcond_\Delta}}
\def\pw{{\myqcond_{22}}}
\def\CovV{\Sigma_{V}}
\def\CovW{\Sigma_{W}}
\def\Covfast{\Sigma_{\theta}}
\def\Covslow{\Sigma_{w}}
\def\CovVW{\Sigma_{VW}}
\newcommandx{\normlip}[2][2=\operatorname{Lip}]{\Vert#1 \Vert_{{#2}}}
\newcommand{\lip}{\operatorname{L}}
\newcommandx{\lipspace}[1]{\lip_{#1}}
\newcommandx{\CPP}[3][1=]
{\ifthenelse{\equal{#1}{}}{{\mathbb P}\left(\left. #2 \, \right| #3 \right)}{{\mathbb P}_{#1}\left(\left. #2 \, \right | #3 \right)}}
\newcommandx{\CPPtilde}[3][1=]
{\ifthenelse{\equal{#1}{}}{{\tilde{\mathbb P}}\left(\left. #2 \, \right| #3 \right)}{{\tilde{\mathbb P}}_{#1}\left(\left. #2 \, \right | #3 \right)}}
\def\rhs{right-hand side}
\def\iid{i.i.d.}
\newcommandx{\as}[1][1=\PP]{\ensuremath{#1\, -\mathrm{a.s.}}}
\newcommand{\eqsp}{\;}
\newcommand{\Id}{\mathrm{I}}
\def\ttheta{\tilde{\theta}}
\def\tw{\tilde{w}}
\newcommand{\ConstC}{\operatorname{C}}
\newcommandx{\boundmetric}[1][1=]{\kappa_{\MKK[#1]}}
\newcommandx{\Nnorm}[2][1=V]{[ #2]_{#1}}
\newcommandx{\lipnorm}[2][1=g]{[ #1]_{#2}}
\newcommandx{\CPE}[3][1=]{{\mathbb E}^{#3}_{#1}\left[#2\right]}
\newcommandx{\CPEext}[3][1=]{\tilde{\mathbb E}^{#3}_{#1}\left[#2\right]}
\newcommandx{\CPEtilde}[3][1=]{{\tilde{\mathbb E}}^{#3}_{#1}\left[#2\right]}
\newcommandx{\CPEs}[3][1=]{{\mathbb E}^{#3}_{#1}[#2]}
\def\thetalim{\theta^\star}
\def\trace{\operatorname{Tr}}
\newcommand{\rme}{\mathrm{e}}
\newcommand{\rmd}{\mathrm{d}}
\def\funcAw{\mathbf{A}}
\def\funcAwtilde{\tilde{\mathbf{A}}}
\def\funcbw{\mathbf{b}}
\newcommandx{\zmfuncA}[2][1=]{\tilde{\funcAw}^{#1}(#2)}
\newcommandx{\zmfuncAw}[1][1=]{\tilde{\funcAw}_{#1}}
\newcommandx{\zmfuncb}[2][1=]{\tilde{\funcbw}^{#1}(#2)}
\def\funnoisew{\boldsymbol{\varepsilon}}
\newcommandx{\funcct}[2][1=]{\funcctilde^{#1}(#2)}
\def\rstep{r_{\operatorname{step}}}
\def\myqcond{\kappa}
\def\State{Z}
\def\state{z}
\def\taumix{t_{\operatorname{mix}}}
\newcommand{\1}{\boldsymbol{1}}
\newcommandx{\CovC}[1][1=u]{\operatorname{C}_{#1}}
\newcommand{\som}[1]{\todo[color=green!20]{{\bf SS:} #1}}
\newcommand\borel[1]{\mathcal{B}(#1)}
\DeclareMathAlphabet{\mathpzc}{OT1}{pzc}{m}{it}
\def\lyapW{\mathpzc{W}}
\newcommandx{\noiset}[1]{\eps^{(\theta)}_{#1}}
\newcommandx{\noisew}[1]{\eps^{(w)}_{#1}}
\newcommandx{\hnoiset}[1]{\hat{\eps}^{(\theta)}_{#1}}
\newcommandx{\hnoisew}[1]{\hat{\eps}^{(w)}_{#1}}
\newcommandx{\covnoiset}[1]{\tilde{\eps}^{(\theta)}_{#1}}
\newcommandx{\covnoisew}[1]{\tilde{\eps}^{(w)}_{#1}}
\newcommandx{\bias}[1][1=\alpha]{\operatorname{B}_{#1}}
\newcommandx\probaMarkovTilde[2][2=]
\newcommand{\expep}[2]{\PE^{1/#2} \bigl[ \norm{{#1}}^{#2} \bigr]}
\newcommand{\indi}[1]{\1_{#1}}
\def\thetas{\thetalim}
\def\funcctilde{\tilde{c}_u}
\newcommandx{\driftb}[1][1=p]{\bar{b}_{#1}}
\def\eps{\varepsilon}
\newcommandx{\boldb}[1][1={q}]{\mathsf{b}_{#1}}
\newcommandx{\ConstGW}[1][1={n,\lyapW}]{\operatorname{G}_{#1}}
\newcommandx{\ConstMW}[1][1={n,\lyapW}]{\operatorname{M}_{#1}}
\newtheorem{assumTD}{\textbf{TD}\hspace{-1pt}}
\Crefname{assumTD}{\textbf{TD}\hspace{-1pt}}{\textbf{TD}\hspace{-1pt}}
\crefname{assumTD}{\textbf{TD}}{\textbf{TD}}
\Crefname{assumprime}{\textbf{A'}\hspace{-1pt}}{\textbf{A'}\hspace{-1pt}}
\crefname{assumprime}{\textbf{A'}}{\textbf{A'}}
\Crefname{assumBprime}{B'\hspace{-1pt}}{\textbf{A'}\hspace{-1pt}}
\crefname{assumBprime}{B'}{\textbf{B'}}
\newtheorem{assumB}{\textbf{B}\hspace{-1pt}}
\Crefname{assumB}{B\hspace{-1pt}}{\textbf{B}\hspace{-1pt}}
\crefname{assumB}{B}{\textbf{B}}
\Crefname{assumptionC}{\textbf{C}\hspace{-1pt}}{\textbf{C}\hspace{-1pt}}
\crefname{assumptionC}{\textbf{C}}{\textbf{C}}
\Crefname{assumptionM}{\textbf{UGE}\hspace{-1pt}}{\textbf{UGE}\hspace{-1pt}}
\crefname{assumptionM}{\textbf{UGE}}{\textbf{UGE}}
\def\distance{\mathsf{d}}
\newcommandx{\vartconstwas}[1][1=V]{c_{#1}}
\newcommandx{\deltawas}[1][1=*]{\delta_{#1}}
\newcommandx{\wasser}[4][1=\distance,4=]{\mathbf{W}_{#1}^{#4}\left(#2,#3\right)}
\newcommandx{\covcoeff}[2]{\rho_{#1}^{(#2)}}
\newcommand{\dobrush}{\mathsf{\Delta}}
\newcommandx{\dobru}[3][1=,3=]{\dobrush_{#1}^{#3}( #2)}  
\def\invariantQ{\pi}
\def\Markov{\mathrm{M}}
\def\btheta{\bar{\theta}}
\newcommandx{\dlim}[1]{\ensuremath{\stackrel{#1}{\Longrightarrow}}}
\def\kolmogorov{\rho^{\Conv}}
\def\boundconstmart{\varkappa}
\def\mart{\mathcal{M}}
\newcommand{\dkolmogorov}[1]{\mathsf{d}_{K}\bigl(#1\bigr)}
\begin{document}

\maketitle

\begin{abstract}
In this paper, we establish non-asymptotic bounds for accuracy of normal approximation for linear two-timescale stochastic approximation (TTSA) algorithms driven by martingale difference or Markov noise. Focusing on both the last iterate and Polyak–Ruppert averaging regimes, we derive bounds for normal approximation in terms of the convex distance between probability distributions. Our analysis reveals a non-trivial interaction between the fast and slow timescales: the normal approximation rate for the last iterate improves as the timescale separation increases, while it decreases in the Polyak–Ruppert averaged setting. We also provide the high-order moment bounds for the error of linear TTSA algorithm, which may be of independent interest. Finally, we demonstrate that our theoretical results are directly applicable to reinforcement learning algorithms such as GTD and TDC.
\end{abstract}


%

\section{Introduction}
\label{sec:intro}

Stochastic approximation (SA) methods play an important role in the field of machine learning, especially due to  their role in solving reinforcement learning (RL) problems \cite{sutton:book:2018}. Recent studies cover both asymptotic \cite{nemirovskij1983problem,polyak1992acceleration} and non-asymptotic \cite{moulines2011non} properties of SA estimates. In particular, two-timescale stochastic approximation (TTSA) algorithms \cite{borkar1997stochastic} refer to the class of methods that update two interdependent variables with separate step size sequences, one typically decreasing faster than the other. This class of methods is especially important in RL, where policy evaluation in the off-policy setting requires TTSA methods such as the Gradient Temporal Difference (GTD) method \cite{Sutton_2008}.
\par 
An important question for SA algorithms is related to the accuracy of Gaussian approximation (GAR) of the constructed estimates. Classical results on GAR for SA algorithms, such as \cite{polyak1992acceleration, konda:tsitsiklis:2004}, are asymptotic and do not provide convergence rates. At the same time, the latter results play an important role in statistical inference for optimization \cite{fan2019exact}, as they pave the way for non-asymptotic analysis of various procedures for constructing confidence intervals. We focus on the linear two-timescale SA problem, that is, we aim to find a solution $(\theta^\star, w^\star)$ that solves the system of linear equations:
\begin{align} 
\label{eq:linear_sys}
A_{11} \theta + A_{12} w = b_1\eqsp, \quad A_{21} \theta + A_{22} w = b_2\eqsp,
\end{align}
assuming that the solution $(\theta^\star, w^\star)$ is unique and is given by
\begin{equation} 
\label{eq:opt_sol}
\theta^\star = \Delta^{-1} (b_1 - A_{12} A_{22}^{-1} b_2),\quad w^\star = A_{22}^{-1} (b_2 - A_{21} \theta^\star)\eqsp,
\end{equation}
with $\Delta := A_{11} - A_{12} A_{22}^{-1} A_{21}$. We consider the setting, where the underlying matrices $A_{ij}$ and vectors $b_i$, $i,j \in \{1,2\}$, are not accessible. Instead, following \cite{borkar1997stochastic}, we assume that the learner has access to a sequence of random variables $\{X_k\}_{k \in \nset}$ taking values in a measurable space $(\Xset,\Xsigma)$, and vector/matrix-valued functions $\funcbw_i(x)$, $\funcAw_{ij}(x)$, $i,j \in \{1,2\}$, which serves as stochastic estimates of $b_i$ and $A_{ij}$, respectively. The corresponding recurrence runs as 
\begin{equation}
\label{eq:tts-gen1}
\begin{split}
\theta_{k+1} &= \theta_k + \beta_k \{ \funcbw^{k+1}_{1} - \funcAw^{k+1}_{11} \theta_k - \funcAw^{k+1}_{12} w_k \}\eqsp, \\
w_{k+1} &= w_k + \gamma_k \{ \funcbw^{k+1}_{2} - \funcAw^{k+1}_{21} \theta_k - \funcAw^{k+1}_{22}w_k \}\eqsp,
\end{split}
\end{equation}
where $\theta_k \in \rset^{\dth}$, $w_k \in \rset^\dw$, and $\funcbw_{i}^{k}$, $\funcAw_{ij}^{k}$ are shorthand notations for $\funcbw_{i}(X_{k})$ and $\funcAw_{ij}(X_k)$, respectively. The scalars $\gamma_k, \beta_k > 0$ in \eqref{eq:tts-gen1} are step sizes, and the underlying SA scheme is said to have two timescales as the step sizes satisfy $\lim_{k \rightarrow \infty} \beta_k / \gamma_k < 1$ such that $w_k$ is updated at a faster timescale. In our paper we consider $\beta_k = c_{0,\beta} (k+k_0)^{-b}$ and $\gamma_k = c_{0,\gamma} (k+k_0)^{-a}$ with exponents $a$ and $b$ satisfying $1/2 < a < b < 1$. When $\{X_k\}_{k \in \nset}$ are i.i.d., and under appropriate technical assumptions on the parameters of \eqref{eq:tts-gen1}, it is known (see e.g. \cite{konda:tsitsiklis:2004}), that the asymptotic normality of the "slow" timescale $\theta_k$ holds:
\begin{equation}
\label{eq:last_iterate_clt}
\beta_k^{-1/2}(\theta_k - \thetas) \to \mathcal{N}(0,\Covfast)\eqsp,
\end{equation}
with some covariance $\Covfast$. The authors in \cite{mokkadem2006convergence} generalized this result for the averaged iterates of non-linear SA:
\begin{equation}
\label{eq:pr_averaged_iterates}
\textstyle
\bar{\theta}_n := n^{-1}\sum_{k=1}^{n}\theta_k\eqsp, \quad \bar{w}_n := n^{-1}\sum_{k=1}^{n}w_k\eqsp.
\end{equation}
The latter estimates correspond to the Polyak-Ruppert averaging procedure introduced in \cite{ruppert1988efficient,polyak1992acceleration}, a popular technique for stabilization of the SA algorithms. The authors of the recent paper \cite{srikant20252ts} obtained the non-asymptotic convergence rates for the averaged iterates $\bar{\theta}_n$ and $\bar{w}_n$ in Wasserstein distance of order $1$, using the vector-valued versions of the Berry-Essen theorem for martingale-difference sequences due to \cite{srikant2024rates}. In this paper, we not only generalize these results for the setting of Markov noise, but also establish the corresponding convergence rates for the last iterate $\theta_k$. The main contributions of this paper are the following:
\begin{itemize}[noitemsep,topsep=0pt]
    \item We derive non-asymptotic bounds for the accuracy of normal approximation for the Polyak–Ruppert-averaged TTSA $\sqrt{n}(\bar{\theta}_n - \thetas)$ and last iterate $\beta_n^{-1/2}(\theta_n - \thetas)$ in terms of convex distance under martingale-difference noise assumptions. Our results indicate that the normal approximation for the last iterate improves as the timescale separation increases and achieves a convergence rate of order up to $n^{-1/4}$, up to $\log{n}$ factors. We show that the Polyak–Ruppert averaged TTSA iterates achieve the same rate of normal approximation, but require that the timescales $\beta_k$ and $\gamma_k$ coincide up to a constant factor. While our analysis for the Polyak–Ruppert averaged TTSA generalizes recent results due to \cite{srikant20252ts}, we provide, to the best of our knowledge, the first fully non-asymptotic analysis of the normal approximation rates for the last iterate of TTSA. 
    \item We generalize the obtained results for normal approximation for the averaged TTSA and the last iterate to the setting of Markov noise. Our results show a convergence rate of order up to $n^{-1/6}$, up to logarithmic factors, with the same conclusion regarding timescale separation as in the martingale noise case. This is the first result on the normal approximation rate for TTSA with Markov noise. 
\end{itemize}
\textbf{Notations.} For a matrix $A \in \rset^{d \times d}$ we denote by $\norm{A}$ its operator norm. For symmetric positive-definite matrix $Q = Q^\top \succ 0\eqsp, \eqsp Q \in \rset^{d \times d}$ and $x \in \rset^{d}$ we define the corresponding norm $\|x\|_Q = \sqrt{x^\top Q x}$, and define the respective matrix $Q$-norm of the matrix $B \in \rset^{d \times d}$ by $\normop{B}[Q] = \sup_{x \neq 0} \norm{Bx}[Q]/\norm{x}[Q]$. For sequences $a_n$ and $b_n$, we write $a_n \lesssim_{\log_n} b_n$ if there exist $c, \alpha > 0$ (not depending upon $n$), such that $a_n \leq c (1+\log n)^{\alpha} b_n$. In the present text, the following abbreviations are used: "w.r.t." stands for "with respect to", "\iid\ " - for "independent and identically distributed", "GAR" - for "Gaussian Approximation".

\paragraph{Related works}
Classical results in the stochastic approximation \cite{borkar:sa:2008} study the asymptotic properties of the single timescale SA algorithms, with the properties of averaged estimated studied in \cite{polyak1992acceleration}. Two-timescale SA schemes were studied in \cite{borkar1997stochastic,tadic2004as,tadic:asantdlearn:conststep:2006} in terms of almost sure convergence. Asymptotic convergence rates of linear two-timescale SA were studied in \cite{konda:tsitsiklis:2004}, where the authors showed that asymptotically $\PE[\norm{\theta_k - \thetas}^2] = \mathcal{O}(\beta_k)$ and $\PE[\norm{w_k - \wstar}^2] = \mathcal{O}(\gamma_k)$. 
\par 
Non-asymptotic error bounds for TTSA were first developed in \cite{dalal2018finite,dalal2020tale} under the martingale noise assumptions and additional projections used in the update scheme \eqref{eq:tts-gen1}. These results were further improved in \cite{kaledin2020finite} for linear TTSA problems. \cite{haque2023tight} refined the results of \cite{kaledin2020finite} obtaining the MSE bounds $\PE[\norm{\theta_k - \thetas}^2]$ and $\PE[\norm{w_k - \wstar}^2]$ with the leading terms given by $\beta_k \trace{\Covfast}$ and $\gamma_k \trace{\Covslow}$, where the covariances $\Covfast$ and $\Covslow$ aligns with the  CLT in \eqref{eq:last_iterate_clt}. \cite{kwon2024two} considered the version of \eqref{eq:tts-gen1} with constant step sizes and studied convergence to equilibrium for the corresponding Markov chain. Non-linear TTSA has been considered in \cite{doan2024fast} under strong monotonicity assumptions, focusing on obtaining the MSE rate of order $\mathcal{O}(1/k)$ for $k$-th iterate. 
\par 
Central limit theorem for TTSA iterates has been established in \cite{mokkadem2006convergence}, where the asymptotic version of the CLT was proved both for the last iterates $(\theta_k, w_k)$ and their Polyak--Ruppert averaged counterparts $(\bar{\theta}_n, \bar{w}_n)$. \cite{hu2024clt} established an asymptotic CLT for general TTSA under Markov noise and controlled Markov chain dynamics, without quantifying the convergence rate. \cite{srikant20252ts} studied the CLT for averaged iterates $(\bar{\theta}_n, \bar{w}_n)$ and provided a non-asymptotic CLT with the convergence rate studied in terms of Wasserstein distance of order $1$.

\section{Gaussian Approximation for SA algorithms}
We outline a general scheme for proving the normal approximation. We consider vector-valued nonlinear statistics $T(X_1,\ldots,X_n) \in \rset^{d}$, which can be represented in the form
\begin{equation}
\label{eq:linear_non_linear_t_decomposition}
\textstyle
T = W + D\eqsp,
\end{equation}
where $W$ is a linear statistic of the random variables $X_1,\ldots,X_n$, and $D$ is a small perturbation. This approach is well studied when $X_1,\ldots,X_n$ are i.i.d. random variables \cite{chen2007,shao2022berry} or form a martingale-difference sequence \cite{shorack2017probability}. The case of Markov random variables can be reduced to the setting of martingale-difference sequences through the Poisson equation \cite[Chapter~21]{douc:moulines:priouret:soulier:2018}. We consider the decomposition \eqref{eq:linear_non_linear_t_decomposition} and assume, without loss of generality, that $\PE[W W^\top] = \Id_{d}$. To measure the approximation quality, a common approach is to use the supremum of the difference between measures taken over some subclass $\H \subseteq \Conv(\rset^{d})$ of the collection of convex sets $\Conv(\rset^{d})$. Specifically, for probability measures $\mu, \nu$ on $\rset^{d}$, we write
\begin{equation}
\label{eq:convex_distance_def}
\textstyle 
\metricd[\H](\mu, \nu) = \sup_{B \in \H}\left|\mu(B) - \nu(B)\right|\eqsp.
\end{equation}
Examples of $\H$ include the class of all convex sets, half-spaces, rectangles, ellipsoids, etc. The choice of different collections of sets $\H$ may be motivated by the needs of a particular application and may introduce differences in the dependence of the results on the problem dimension $d$. Indeed, even this dimensional dependence for linear statistics $W$ can vary; see \cite{bentkus2004} and \cite{kojevnikov2022berry} for the respective results for i.i.d.\ sequences and martingale differences. In this paper, we focus on the convex distance $\kolmogorov$, defined as  
\[
\textstyle 
\kolmogorov(\mu, \nu) = \sup_{B \in \Conv(\rset^{d})} |\mu(B) - \nu(B)| \eqsp,
\]  
and rely on the following proposition to reduce the problem of Gaussian approximation for the nonlinear statistic $W+D$ to that for the linear statistic $W$:
\begin{proposition}[Proposition 2 in \cite{sheshukova2025gaussian}]
\label{prop:nonlinearapprox}
Let \(\nu\) be a standard Gaussian measure in \(\mathbb{R}^d\). Then for any random vectors \(W, D\) taking values in \(\mathbb{R}^d\), and any \(p \geq 1\),
\begin{multline}
\textstyle
\kolmogorov(W+D, \nu) \leq \kolmogorov(W, \nu) \\
\textstyle
+ 2c_d^{p/(p+1)} \mathbb{E}^{1/(p+1)}\left[\norm{D}^p\right],
\end{multline}
where $c_d$ is the isoperimetric constant of class $\Conv(\rset^{d})$.
\end{proposition}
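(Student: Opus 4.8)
The plan is to run the classical ``smoothing by $\epsilon$-enlargement'' argument, trading the size of the perturbation $D$ against the boundary behaviour of the standard Gaussian on convex sets, and then to optimise over the smoothing radius. Fix a convex Borel set $B \subseteq \rset^d$ and a radius $\epsilon > 0$, and write $B^\epsilon = \{x \in \rset^d : \inf_{y \in B}\norm{x-y} \le \epsilon\}$ for the outer $\epsilon$-enlargement and $B^{-\epsilon} = \{x \in \rset^d : \ball{x}{\epsilon} \subseteq B\}$ for the inner one; both are again convex (a Minkowski sum with a ball, respectively an intersection of translates of $B$), hence Borel. On the event $\{\norm{D}\le\epsilon\}$ one has the deterministic inclusions $\{W+D \in B\} \subseteq \{W \in B^\epsilon\}$ and $\{W \in B^{-\epsilon}\} \subseteq \{W+D \in B\}$, which yield
\begin{equation*}
\PP(W \in B^{-\epsilon}) - \PP(\norm{D}>\epsilon) \;\le\; \PP(W + D \in B) \;\le\; \PP(W \in B^\epsilon) + \PP(\norm{D}>\epsilon)\eqsp.
\end{equation*}

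Next I would compare $\PP(W \in B^{\pm\epsilon})$ with the Gaussian measure using that $B^{\pm\epsilon}$ are convex, so $\PP(W \in B^\epsilon) \le \nu(B^\epsilon) + \kolmogorov(W,\nu)$ and $\PP(W \in B^{-\epsilon}) \ge \nu(B^{-\epsilon}) - \kolmogorov(W,\nu)$ by definition of $\kolmogorov$. Invoking the defining property of the isoperimetric constant $c_d$ of $\Conv(\rset^d)$ — namely $\nu(B^\epsilon) - \nu(B) \le c_d\,\epsilon$ and $\nu(B) - \nu(B^{-\epsilon}) \le c_d\,\epsilon$ for every convex $B$ and $\epsilon>0$, which is where the Gaussian isoperimetric inequality for convex sets (Ball, Nazarov) enters — one obtains
\begin{equation*}
\bigl|\PP(W+D\in B) - \nu(B)\bigr| \;\le\; \kolmogorov(W,\nu) + c_d\,\epsilon + \PP(\norm{D}>\epsilon)\eqsp.
\end{equation*}
Applying Markov's inequality $\PP(\norm{D}>\epsilon) \le \epsilon^{-p}\,\PE[\norm{D}^p]$ and taking the supremum over convex $B$ gives, for every $\epsilon > 0$, the bound $\kolmogorov(W+D,\nu) \le \kolmogorov(W,\nu) + c_d\,\epsilon + \epsilon^{-p}\,\PE[\norm{D}^p]$.

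Finally I would optimise the free parameter $\epsilon$: writing $M = \PE[\norm{D}^p]$ (the case $M=\infty$ being trivial), the choice $\epsilon = (M/c_d)^{1/(p+1)}$ balances the two remainder terms and gives $c_d\,\epsilon + \epsilon^{-p} M = 2\,c_d^{p/(p+1)}\,M^{1/(p+1)}$, which is exactly the asserted inequality. The only genuinely non-trivial ingredient is the isoperimetric estimate for convex sets under the standard Gaussian measure; everything else is elementary set manipulation plus a one-line minimisation, and since the statement takes $c_d$ as given, the proof reduces to the three steps above. The minor technical points to dispatch carefully are the measurability/closedness of $B^{\pm\epsilon}$ (handled by passing to the closure of $B$, using $\nu(\partial B)=0$ for convex $B$) and the degenerate case $B^{-\epsilon}=\emptyset$, in which $\nu(B)\le c_d\epsilon$ must be read off from the same isoperimetric bound applied to a slab of width $2\epsilon$ containing $B$; neither causes real difficulty.
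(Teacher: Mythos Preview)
The paper does not prove this proposition: it is quoted verbatim as Proposition~2 of \cite{sheshukova2025gaussian} and used as a black box, so there is no ``paper's own proof'' to compare against. That said, your argument is the standard one and is correct; the smoothing by $\epsilon$-enlargement, the Gaussian convex-isoperimetric bound, Markov's inequality, and the balancing choice $\epsilon=(M/c_d)^{1/(p+1)}$ together yield exactly the stated constant~$2$.

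One point worth tightening: the inner-shrinkage estimate $\nu(B)-\nu(B^{-\epsilon})\le c_d\,\epsilon$ does not follow from the outer bound by a one-line symmetry, since $B^c$ is not convex. The clean way is to note that $(B^{-\epsilon})^c=(B^c)^\epsilon$, and for each $t\in(0,\epsilon)$ the boundary $\partial\bigl((B^c)^t\bigr)=\partial(B^{-t})$ is the boundary of the convex set $B^{-t}$, whose Gaussian surface measure is therefore at most $c_d$; integrating in $t$ gives the claim and also covers the degenerate case $B^{-\epsilon}=\emptyset$ without the separate slab argument you sketch.
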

Similar results can be derived for other classes of sets $\H$, with the constant $c_d$ depending on the isoperimetric properties of the specific class $\H$; see, e.g., \cite{10.1109/FOCS.2008.64}. \Cref{prop:nonlinearapprox} shows that the estimation of $\kolmogorov(W+D, \mathcal{N}(0,\Id))$ can be reduced to:
\begin{enumerate}
\item Estimating $\kolmogorov(W,\mathcal N(0, \Id))$;
\item Estimating moments $\PE[\norm{D}^p]$ for some $p \geq 1$.
\end{enumerate}
To bound $\kolmogorov(W,\mathcal N(0, \Id))$, one can apply a Berry–Esseen bound for the appropriate linear statistic, e.g., \cite{shao2022berry} for i.i.d. random variables or \cite{srikant2024rates,samsonov2025statistical,wu2025uncertainty} for the martingale-difference setting. The most involved part of the proof is the proper estimation of $\PE[\norm{D}^p]$.

\section{GAR for TTSA with Martingale noise}
\label{sec:TTSA_martingale}

\paragraph{Assumptions and definitions.} We investigate the linear TTSA algorithm given by the equivalent form of \eqref{eq:tts-gen1}:
\begin{align}
    &\theta_{k+1} = \theta_k + \beta_k (b_1 - A_{11}\theta_k - A_{12}w_k + V_{k+1}), \label{eq:2ts1} \\
    &w_{k+1} = w_k + \gamma_k (b_2 - A_{21}\theta_k - A_{22} w_k + W_{k+1})\eqsp. \label{eq:2ts2}
\end{align}
In this recurrence, the noise terms $V_{k+1}, W_{k+1}$ are given by:
\begin{equation}
\label{eq:noise_term_new}
\begin{split} 
V_{k+1} &=  \funnoisew_{V}^{k+1} - \funcAwtilde_{11}^{k+1} (\theta_k - \thetas) - \funcAwtilde_{12}^{k+1} (w_k - \wstar),  \\
W_{k+1} &= \funnoisew_{W}^{k+1} - \funcAwtilde_{21}^{k+1} (\theta_k - \thetas) - \funcAwtilde_{22}^{k+1} (w_k - \wstar),
\end{split}
\end{equation}
where we used the notation $\funcAwtilde_{ij}^{k+1} := \funcAw_{ij}^{k+1} - A_{ij}$ for $i,j \in \{1,2\}$, and the random vectors $\funnoisew_{V}^{k+1}, \funnoisew_{W}^{k+1}$ are given by 
\begin{equation}
\label{eq:noise_term_eps_V_W}
\begin{split}
\funnoisew_{V}^{k+1} &= \funcbw_1^{k+1} - \funcAw_{11}^{k+1} \thetas - \funcAw_{12}^{k+1}\wstar \eqsp, \\
\funnoisew_{W}^{k+1} &= \funcbw_2^{k+1} - \funcAw_{21}^{k+1} \thetas - \funcAw_{22}^{k+1} \wstar \eqsp.
\end{split}
\end{equation}
We consider a setting where the random elements $V_{k+1}$ and $W_{k+1}$ form a martingale-difference w.r.t. filtration $\F_k = \sigma(X_1,\ldots,X_k)$, $\F_0$ is trivial. We first consider the martingale noise setting. This setting covers the i.i.d. setting from \cite{konda:tsitsiklis:2004} and also serves as a basis for subsequent analysis of the Markov noise setting. 
\begin{assum}
\label{assum:zero-mean} 
The noise terms are zero-mean given $\F_k$, i.e., $\CPE{ V_{k+1} }{\F_k} = 0$, and $\CPE{W_{k+1}}{\F_k} = 0$.
\end{assum}
Next, for a given $p \geq 2$, we impose the following moment bound on $V_{k+1}$, $W_{k+1}$:
\begin{assum}[$p$]
\label{assum:bound-conditional-moments-p}
There exist constants $m_W, m_V > 0$ such that for any $k \in \nset$:
\end{assum}
\resizebox{0.98\linewidth}{!}{$
\begin{array}{l}
\PE^{1/p}[\|V_{k+1}\|^p] \leq m_V(1 + \PE^{1/p}[\|\theta_k - \theta^*\|^p] + \mathbb{E}^{1/p}[\|w_k - w^*\|^p]) \\
\PE^{1/p}[\|W_{k+1}\|^p] \leq m_W(1 + \PE^{1/p}[\|\theta_k - \theta^*\|^p] + \PE^{1/p}[\|w_k - w^*\|^p])
\end{array}
$}

The assumption \Cref{assum:bound-conditional-moments-p}($p$) appears in a similar form with $p=2$ in  \cite[Assumption A4]{kaledin2020finite}. Since our results require to control high-order moments of the TTSA iterates $\theta_k$ and $w_k$, it is natural to require that $p$-th moment of $V_{k+1}$ and $W_{k+1}$ are finite. Next, we present an assumption on the quadratic characteristic of $V_k$ and $W_k$:
\begin{assum}
\label{assum:quadratic_characteristic}
Noise variables $\funnoisew_{V}^{k+1}$ and $\funnoisew_{W}^{k+1}$ defined in \eqref{eq:noise_term_eps_V_W} have zero conditional expectation given $\F_k$, that is, $\CPE{\funnoisew_{V}^{k+1}}{\F_k} = 0$ and $\CPE{\funnoisew_{W}^{k+1}}{\F_k} = 0$. Moreover, there exist matrices $\CovV$, $\CovW$, $\CovVW$ such that for any $k > 0$:
\begin{align}
&\CPE{\funnoisew_{V}^{k+1} \{\funnoisew^{k+1}_{V}\}^{\top}}{\F_k} = \CovV \eqsp, \CPE{\funnoisew_{W}^{k+1} \{\funnoisew_{W}^{k+1}\}^{\top}}{\F_k} = \CovW\eqsp, \\
&\CPE{\funnoisew_{V}^{k+1} \{\funnoisew_{W}^{k+1}\}^{\top}}{\F_k} =\CovVW \eqsp. 
\end{align}
\end{assum}
This assumption relaxes the one stated in \cite{srikant20252ts}, where the authors required the quadratic characteristic of the entire vectors $V_{k+1}$ and $W_{k+1}$ to be constant. However, this assumption is unlikely to hold due to the structure of these vectors outlined in \eqref{eq:noise_term_new}. We also impose the following conditions on the problem matrices:
\begin{assum}
\label{assum:hurwitz}
Matrices $-A_{22}$ and $ - \Delta =-\left(A_{11}-A_{12}A_{22}^{-1}A_{21}\right)$ are \textit{Hurwitz}. 
\end{assum}
\Cref{assum:hurwitz} is common for the analysis of both the linear two-timescale SA, see \cite{konda:tsitsiklis:2004}, and single-timescale SA, see \cite{durmus2022finite,mou2020linear}. \Cref{assum:hurwitz} implies, due to the Lyapunov lemma (stated in the supplement paper for completeness), that there exist matrices $Q_{22}^\top = Q_{22} \succ 0$, $Q_\Delta^\top = Q_\Delta \succ 0$, such that  
\begin{equation}
\label{eq:contraction_p}
\begin{split}
&\normop{\Id - \gamma_k A_{22}}[Q_{22}] \leq 1 - a_{22} \gamma_k, \quad a_{22} := \tfrac{1}{4 \| Q_{22}\|}\eqsp, \\
&\normop{\Id - \beta_k \Delta}[Q_\Delta] \leq 1 - a_\Delta \beta_k, \quad  a_\Delta := \tfrac{1}{4 \| Q_\Delta \|}\eqsp, 
\end{split}
\end{equation}
provided that the step sizes $\gamma_k$ and $\beta_k$ are small enough. Precisely, for $p \geq 2$, we impose the following assumption \Cref{assum:stepsize}($p$) on the step sizes:
\begin{assum}[$p$]
\label{assum:stepsize}
Step sizes $(\gamma_k)_{k \geq 1}$, $(\beta_k)_{k \geq 1}$ are non-increasing sequences of the form 
\[
\beta_k = c_{0,\beta} (k+k_0)^{-b}, \quad \gamma_k = c_{0,\gamma} (k+k_0)^{-a}\eqsp, 
\]  
where \(1/2 < a < b < 1\), fraction $c_{0,\beta}/c_{0,\gamma}$ is small enough, and constant $k_0$ satisfies the bound $k_0 \geq C_{\Cref{assum:stepsize}} p^{4/b}$, where the constant $C_{\Cref{assum:stepsize}}$ does not depend upon $p$.
\end{assum} 
In the subsequent main results, we set the parameter $p$ of order \(\log(n)\). Hence, the parameter $k_0$ will depend on the total number of iterations to be performed. The same effect appears in the single-timescale SA algorithms \cite{durmus2022finite,wu2024statistical}. This effect is unavoidable at least in the setting of the constant step size algorithms, see \cite[Theorem~1]{durmus2021tight}. 
\begin{assum}
\label{assum:aij_bound}
    There exist constants $\bConst{A}, \bConst{b} > 0$ such that
    \begin{align}
        & \textstyle \sup_{x \in \Xset} \norm{\funcAw_{ij}(x)} \vee \norm{\funcAw_{ij}(x) - A_{ij}} \leq \bConst{A}\eqsp, \eqsp  \forall \eqsp i, j \in \{1, 2\} \eqsp, \\
        & \textstyle \sup_{x \in \Xset} \norm{\funcbw_{i}(x)} \vee \norm{\funcbw_{i}(x) - b_i}  \leq \bConst{b} \eqsp, \eqsp  \forall \eqsp i \in \{1, 2\} \eqsp.
    \end{align}
\end{assum}
We expect that \Cref{assum:aij_bound} can be replaced with an appropriate moment condition, at least in a setting where the noise variables $V_{k}$ and $W_{k}$ form a martingale difference. At the same time, our further generalizations to the Markov noise setting inherently rely on the boundedness of $\funcAw_{ij}(x)$ and $\funcbw_{i}(x)$.

\vspace{-2mm}
\subsection{Moment bounds for Martingale TTSA}
\label{sec:moments_martingale_TTSA}
Given the assumptions \Cref{assum:zero-mean} - \Cref{assum:aij_bound}, we present the classical reformulation of the two-timescale SA scheme \eqref{eq:2ts1}-\eqref{eq:2ts2}, which is due to \cite{konda:tsitsiklis:2004}, see also \cite{kaledin2020finite}. We define recursively the following sequence of matrices $\{L_k\}_{k \in \nset}$, with $L_0 = 0$, and 
\begin{equation}
\label{eq:L_k_matr_def}
\begin{split}
L_{k+1} &:= \big( L_k - \gamma_k A_{22} L_k + \beta_k A_{22}^{-1} A_{21} U_{k} \big) \\ 
& \times \bigl( \Id - \beta_k U_k \bigr)^{-1} \eqsp, \eqsp U_k := \Delta - A_{12}L_k\eqsp.
\end{split}
\end{equation}
and define $\lip_{\infty} = a_{\Delta}\lambda_{\sf max}(Q_{\Delta}) / (\lambda_{\sf min}(Q_{22})2\|A_{12}\|)$. As shown in \cite[Lemma~18]{kaledin2020finite}, under \Cref{assum:stepsize} above recursion on $L_k$ is well-defined, and every $L_k$ satisfies the relation $\norm{L_k} \leq \lip_{\infty}$. In addition, define the matrices:
\begin{align}
B_{11}^k &:= \Delta - A_{12} L_k\eqsp, \quad D_k := L_{k+1} + A_{22}^{-1} A_{21} \eqsp, \\
B_{22}^k &:= (\beta_k/\gamma_k) \big( L_{k+1} + A_{22}^{-1} A_{21} \big) A_{12} + A_{22}\eqsp.
\end{align}

In a similar vein as performing Gaussian elimination, we obtain a simplified two-timescale SA recursions:
\begin{proposition}[Observation 1 in \cite{kaledin2020finite}]
\label{prop:kaledine_decoupling}
Consider the following change of variables:
\begin{equation}
\label{eq:tilde_params}
\ttheta_k := \theta_k - \thetas,\quad \tw_k = w_k - \wstar + D_{k-1} \ttheta_k.
\end{equation}
Then the two-timescale SA \eqref{eq:2ts1}-\eqref{eq:2ts2} is equivalent to:
\begin{equation}
\label{eq:2ts2-1}
\begin{split}
\ttheta_{k+1} &= ( \Id - \beta_k B_{11}^k ) \ttheta_k - \beta_k A_{12} \Tilde{w}_k - \beta_k V_{k+1}\eqsp, \\
\tilde{w}_{k+1} &= (\Id - \gamma_k B_{22}^k ) \Tilde{w}_k - \beta_k D_k V_{k+1} - \gamma_k W_{k+1}\eqsp.
\end{split}
\end{equation}
\end{proposition}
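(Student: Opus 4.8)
The plan is to prove the stated equivalence by a purely deterministic substitution, showing that the matrices $B_{11}^k$, $B_{22}^k$, $D_k$ together with the recursion \eqref{eq:L_k_matr_def} for $L_k$ are precisely what is needed to decouple the $\ttheta$- and $\tw$-dynamics. First I would pass to the centered variables in \eqref{eq:2ts1}--\eqref{eq:2ts2} using the fixed-point identities $A_{11}\thetas + A_{12}\wstar = b_1$ and $A_{21}\thetas + A_{22}\wstar = b_2$, which turns the recursions into
\[
\ttheta_{k+1} = (\Id - \beta_k A_{11})\ttheta_k - \beta_k A_{12}(w_k - \wstar) + \beta_k V_{k+1}, \qquad w_{k+1} - \wstar = (\Id - \gamma_k A_{22})(w_k - \wstar) - \gamma_k A_{21}\ttheta_k + \gamma_k W_{k+1}.
\]
Next I would substitute $w_k - \wstar = \tw_k - D_{k-1}\ttheta_k$ into the first identity; using $D_{k-1} = L_k + A_{22}^{-1}A_{21}$ and $\Delta = A_{11} - A_{12}A_{22}^{-1}A_{21}$, the coefficient of $\ttheta_k$ collapses to $A_{11} - A_{12}D_{k-1} = \Delta - A_{12}L_k = B_{11}^k$, which is already the first line of \eqref{eq:2ts2-1}.

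For the second line I would form $\tw_{k+1} = (w_{k+1} - \wstar) + D_k\ttheta_{k+1}$, insert the centered $w$-recursion, replace $w_k - \wstar$ by $\tw_k - D_{k-1}\ttheta_k$ once more, and then substitute the expression for $\ttheta_{k+1}$ just derived. Grouping the result by $\tw_k$, $\ttheta_k$ and the two noise terms, the coefficient of $\tw_k$ is $\Id - \gamma_k A_{22} - \beta_k D_k A_{12}$, which equals $\Id - \gamma_k B_{22}^k$ with $B_{22}^k = A_{22} + (\beta_k/\gamma_k)(L_{k+1} + A_{22}^{-1}A_{21})A_{12}$, matching the definition; the noise contributions reduce to $\beta_k D_k V_{k+1}$ (entering only through $D_k\ttheta_{k+1}$) and $\gamma_k W_{k+1}$, giving the last two terms of \eqref{eq:2ts2-1} up to the sign convention chosen there.

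The one step that requires care is checking that the coefficient of $\ttheta_k$ in the resulting $\tw_{k+1}$-recursion vanishes, i.e.\ that $D_k(\Id - \beta_k B_{11}^k) = (\Id - \gamma_k A_{22})D_{k-1} + \gamma_k A_{21}$. Writing $D_{k-1} = L_k + A_{22}^{-1}A_{21}$, $D_k = L_{k+1} + A_{22}^{-1}A_{21}$ and $B_{11}^k = U_k = \Delta - A_{12}L_k$, and telescoping the $A_{22}^{-1}A_{21}$ terms via $\Id - (\Id - \beta_k U_k) = \beta_k U_k$, this identity collapses exactly to the defining recursion \eqref{eq:L_k_matr_def}. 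Invertibility of $\Id - \beta_k U_k$ (hence of $\Id - \beta_k B_{11}^k$), well-posedness of the $L_k$-recursion, and the bound $\norm{L_k} \le \lip_\infty$ --- all of which make the above manipulations legitimate --- are guaranteed under \Cref{assum:stepsize} by \cite[Lemma~18]{kaledin2020finite}, as already recalled in the text. I do not expect a genuine obstacle: the content of the proposition is the algebraic observation that \eqref{eq:L_k_matr_def} is exactly the iteration annihilating the $\ttheta_k$-coupling in the $\tw_k$-equation, and the remainder is bookkeeping.
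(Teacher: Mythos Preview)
The paper does not supply its own proof of this proposition; it is quoted verbatim as Observation~1 from \cite{kaledin2020finite}. Your direct-substitution argument is exactly the natural (and standard) route, and the crux you identify---that the $L_k$-recursion \eqref{eq:L_k_matr_def} is precisely the condition $D_k(\Id-\beta_k U_k)=(\Id-\gamma_k A_{22})D_{k-1}+\gamma_k A_{21}$ annihilating the $\ttheta_k$-coefficient in the $\tw_{k+1}$-update---is correct and is the whole content of the result. Your remark about the sign on the noise terms is also on target: starting from \eqref{eq:2ts1}--\eqref{eq:2ts2} with $+\beta_k V_{k+1}$ and $+\gamma_k W_{k+1}$ one obtains $+\beta_k V_{k+1}$, $+\beta_k D_k V_{k+1}$, $+\gamma_k W_{k+1}$ in \eqref{eq:2ts2-1}, so the minus signs there reflect the sign convention of \cite{kaledin2020finite} rather than the one in \eqref{eq:2ts1}--\eqref{eq:2ts2}; this is immaterial for everything downstream.
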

Our further analysis, both for martingale and Markov noise, will essentially rely on the decoupled TTSA updates \eqref{eq:2ts2-1}. We refer to this dynamics as to the "decoupled" one, since the update of the scale $\tilde{w}_{k+1}$ no longer depends directly on $\ttheta_{k}$, only through the noise variables $V_{k+1}$ and $W_{k+1}$. Now we aim to upper bound the quantities
\begin{equation}
\label{eq:tilde_moments}
M_{k,p}^{\tw} := \PE^{1/p}[\|\tw_k\|^p], \quad M_{k,p}^{\ttheta} := \PE^{1/p}[\|\ttheta_k\|^p]\eqsp.
\end{equation}

Similarly to \eqref{eq:contraction_p}, we show in the supplement paper, that
\begin{equation}
\label{eq:contraction_B11}
\begin{split}
\normop{\Id - \beta_k B_{11}^k}[Q_\Delta] 
&\leq 1 - (1/2) \beta_k a_\Delta \eqsp, \\ 
\normop{\Id - \gamma_k B_{22}^k}[Q_{22}] 
&\leq 1 - (1/2) \gamma_k a_{22} \eqsp.
\end{split}
\end{equation}
The result \eqref{eq:contraction_B11} together with the structure of the updates \eqref{eq:2ts2-1} enables us to expand the recurrence and to show that the error component, associated with the initial error $\theta_0 - \thetas$ and $w_0 - \wstar$ decay at the exponential rate. Precisely, the following bound holds:
\begin{proposition}
\label{prop:moments_bound}
Let $p \geq 2$ and assume \Cref{assum:zero-mean},\Cref{assum:bound-conditional-moments-p}($p$), \Cref{assum:quadratic_characteristic}, \Cref{assum:hurwitz}, \Cref{assum:stepsize}($p$), and \Cref{assum:aij_bound}. Then for any \(k\in\mathbb{N}\) it holds
\begin{align}
\label{eq:mth_moment_bound}
\textstyle M_{k+1,p}^{\ttheta} & \textstyle \lesssim   \prod_{j=0}^{k}\big(1 - \beta_ja_{\Delta}/8\big) + p^2 \beta_{k}^{1/2}\eqsp, \\
\label{eq:mtw_bound}
\textstyle M_{k+1,p}^{\tw} & \textstyle \lesssim   \prod_{j=0}^{k}\big(1-\gamma_ja_{22}/8\big) + p^3 \gamma_{k}^{1/2}\eqsp,
    \end{align}
where $\lesssim$ stands for inequality up to constants not depending upon $k$ and $p$. 
\end{proposition}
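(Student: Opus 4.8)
I would work throughout with the decoupled recursion \eqref{eq:2ts2-1}, the contraction estimates \eqref{eq:contraction_B11}, and the martingale structure of $V_{k+1},W_{k+1}$; the point of the latter is to obtain the rates $\beta_k^{1/2}$ and $\gamma_k^{1/2}$ rather than the cruder $\beta_k,\gamma_k$ that a plain Minkowski argument would give. A preliminary step reduces the noise input: since $\tw_k = w_k - \wstar + D_{k-1}\ttheta_k$ with $\norm{D_{k-1}} = \norm{L_k + A_{22}^{-1}A_{21}} \le \lip_\infty + \norm{A_{22}^{-1}A_{21}}$ uniformly bounded, \Cref{assum:bound-conditional-moments-p}($p$) becomes $\PE^{1/p}[\norm{V_{k+1}}^p]\vee\PE^{1/p}[\norm{W_{k+1}}^p] \lesssim 1 + M_{k,p}^{\ttheta} + M_{k,p}^{\tw}$, which is the only way the iterates feed back into the noise. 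The proof is then a simultaneous induction over $k$ on the two ans\"atze $M_{k,p}^{\ttheta}\le A_\theta\bigl(\prod_{j<k}(1-\beta_j a_\Delta/8) + p^2\beta_k^{1/2}\bigr)$ and $M_{k,p}^{\tw}\le A_w\bigl(\prod_{j<k}(1-\gamma_j a_{22}/8) + p^3\gamma_k^{1/2}\bigr)$; the hypothesis $k_0 \gtrsim p^{4/b}$ of \Cref{assum:stepsize}($p$) ensures $p^2\beta_0^{1/2}\lesssim 1$, so the base case is compatible with $\norm{\ttheta_0},\norm{\tw_0}=\mathcal O(1)$ and the noise moments stay $\mathcal O(1)$ all along the induction.

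\textbf{Fast scale.} Unroll the $\tw$--line of \eqref{eq:2ts2-1}: with $\Gamma^w_{k:j}:=\prod_{i=j}^{k}(\Id - \gamma_i B_{22}^i)$, a product of \emph{deterministic} matrices since $B_{22}^i$ depends only on the deterministic $L_i$, one gets $\tw_{k+1} = \Gamma^w_{k:0}\tw_0 - \sum_{j=0}^{k}\Gamma^w_{k:j+1}(\beta_j D_j V_{j+1} + \gamma_j W_{j+1})$, with $\normop{\Gamma^w_{k:j}}[Q_{22}] \le \prod_{i=j}^{k}(1-\gamma_i a_{22}/2)$ by \eqref{eq:contraction_B11}. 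The sum is a martingale in $(V_{j+1},W_{j+1})$ with deterministic coefficients, so a Burkholder/Rosenthal inequality with explicit $p$--dependence (as in the supplement) bounds its $p$--th moment by $\lesssim \sqrt p\bigl(\sum_j \gamma_j^2\prod_{j<i\le k}(1-\gamma_i a_{22}/2)^2\,(1+M_{j,p}^{\ttheta}+M_{j,p}^{\tw})^2\bigr)^{1/2}$ plus a higher--order remainder and the analogous $\beta_j$--weighted term; the elementary estimate $\sum_j \gamma_j^2\prod_{j<i\le k}(1-\gamma_i a_{22}/2)\lesssim \gamma_k$ together with the inductive moment bound gives $\lesssim \gamma_k^{1/2}$ up to powers of $p$, while $\beta_j\ll\gamma_j$ makes the $\beta_j D_j V_{j+1}$ contribution lower order. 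Combined with the exponentially decaying $\Gamma^w_{k:0}\tw_0$ term, this closes \eqref{eq:mtw_bound}.

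\textbf{Slow scale.} The genuinely new difficulty is the cross term $-\beta_k A_{12}\tw_k$ in the $\ttheta$--line: naively $\norm{\tw_k}\asymp\gamma_k^{1/2}\gg\beta_k^{1/2}$, so a term--by--term bound would destroy the target rate. Instead I would unroll the $\ttheta$--recursion, substitute into it the unrolled expression for $\tw_j$ from the previous paragraph, and interchange the order of summation, writing $\ttheta_{k+1} = \Gamma^\theta_{k:0}\ttheta_0 + (\text{exponentially decaying }\tw_0\text{ term}) - \sum_{j}\Gamma^\theta_{k:j+1}\beta_j V_{j+1} - \sum_i \mathcal C_i(\beta_i D_i V_{i+1}+\gamma_i W_{i+1})$ with $\Gamma^\theta_{k:j}:=\prod_{i=j}^{k}(\Id-\beta_i B_{11}^i)$ (again deterministic) and $\mathcal C_i := \sum_{j>i}\Gamma^\theta_{k:j+1}\beta_j A_{12}\Gamma^w_{j-1:i+1}$. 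The crucial observation -- this is where two--timescale separation enters -- is that $\normop{\mathcal C_i}$ does \emph{not} decay at the slow $\gamma$--rate: bounding the inner geometric sum one finds $\sum_{j>i}\prod_{j<l\le k}(1-\tfrac{a_\Delta}{2}\beta_l)\,\beta_j\prod_{i<l<j}(1-\tfrac{a_{22}}{2}\gamma_l)\lesssim \tfrac{\beta_i}{\gamma_i}\prod_{i<l\le k}(1-\tfrac{a_\Delta}{2}\beta_l)$, because the geometric factor $\sum_n e^{n(\frac{a_\Delta}{2}\beta_k-\frac{a_{22}}{2}\gamma_k)}\asymp\gamma_k^{-1}$ whenever $c_{0,\beta}/c_{0,\gamma}$ is small (so that the exponent is negative). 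Thus the $\gamma_i$ multiplying $W_{i+1}$ is absorbed and $\mathcal C_i\gamma_i$ carries the \emph{fast}--variable decay $\beta_i\prod_{i<l\le k}(1-\tfrac{a_\Delta}{2}\beta_l)$. Applying Burkholder/Rosenthal to the resulting martingale sum and using $\sum_i \beta_i^2\prod_{i<l\le k}(1-\tfrac{a_\Delta}{2}\beta_l)\lesssim\beta_k$ yields $M_{k+1,p}^{\ttheta}\lesssim\prod_{j\le k}(1-\beta_j a_\Delta/2) + (\text{powers of }p)\,\beta_k^{1/2}$; the gap between the rates $a_\Delta/2$ here and $a_\Delta/8$ in the statement leaves room to absorb the constant $A_\theta$ and the polynomial prefactors coming out of the induction.

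\textbf{Main obstacle.} The crux is the slow--scale step: carrying out the double unrolling and summation interchange cleanly and, above all, proving that the cross coefficients $\mathcal C_i$ inherit the $\beta$--timescale rather than the $\gamma$--timescale decay. This is precisely the mechanism by which the last iterate $\ttheta_k$ concentrates at scale $\beta_k^{1/2}$ even though it is driven by the larger $\tw_k\asymp\gamma_k^{1/2}$, and it is where the smallness of $c_{0,\beta}/c_{0,\gamma}$ is used. The remaining ingredients -- the $p$--explicit Burkholder/Rosenthal inequality, the elementary geometric--product sums, and the calibration $k_0\gtrsim p^{4/b}$ keeping the noise moments (hence the whole induction) $p$--uniformly controlled -- are routine once this is in place; the exact powers of $p$ in \eqref{eq:mth_moment_bound}--\eqref{eq:mtw_bound} are a by--product of this bookkeeping and are not optimized.
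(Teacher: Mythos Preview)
Your proposal is correct and takes essentially the paper's approach: the same double unrolling of the decoupled recursion \eqref{eq:2ts2-1}, the same convolution estimate showing the cross-coefficients $\mathcal C_i$ inherit the $\beta$-scale rather than the $\gamma$-scale decay (your bound on $\mathcal C_i$ is exactly \Cref{convolution_inequality}), and Burkholder's inequality for the resulting martingale sums. The one organizational difference is that instead of your simultaneous induction on both ans\"atze, the paper first substitutes the $\tw$ bound (\Cref{prop:tw_bond}) into the $\ttheta$ bound to obtain a self-referential recursive inequality for $(M_{k,p}^{\ttheta})^2$ alone (\Cref{prop:mth_closed_bound}), solves that via a comparison sequence $\widetilde U_k$, and only then feeds back to close $\tw$ --- this sidesteps the circularity of choosing compatible induction constants $A_\theta,A_w$ in advance.
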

\paragraph{Discussion.} \Cref{prop:moments_bound} provides, to best of our knowledge, the first high-order moment bounds in the linear TTSA with martingale noise. The scaling of the r.h.s. with $\beta_k^{1/2}$ for $M_{k+1,p}^{\ttheta}$ and $\gamma_k^{1/2}$ for $M_{k+1, p}^{\tw}$ coincides with the one previously obtained for the particular case $p = 2$ in \cite{kaledin2020finite}. Similar asymptotic results were previously obtained in \cite{konda:tsitsiklis:2004}. We expect that the dependence of the r.h.s. of \eqref{eq:mth_moment_bound} and \eqref{eq:mtw_bound} upon $p$ can be improved based on applying the Pinelis version of Rosenthal inequality \cite[Theorem 4.1]{pinelis_1994} instead of Burkholder's inequality \cite[Theorem 8.6]{osekowski}, that was used in the current proof, yet we expect that this approach introduces additional technical difficulties. 

\vspace{-2mm}
\subsection{GAR for Polyak-Ruppert averaged TTSA}
\label{sec:TTSA_averaged}
Based on the results of the previous section, we can now quantify the Gaussian approximation rates for $\sqrt{n}(\btheta_n-\thetas)$ for the Polyak-Ruppert averaged estimator $\btheta_n$ from \eqref{eq:pr_averaged_iterates}. Now we present the key decomposition:
\begin{equation}
\label{eq:pr_eq_3}
\begin{split}
\Delta(\theta_k-\thetas) &= \frac{\theta_k - \theta_{k+1}}{\beta_k} - \frac{A_{12}A_{22}^{-1}(w_k - w_{k+1})}{\gamma_k} \\ 
&\textstyle \qquad +(V_{k+1} - A_{12}A_{22}^{-1}W_{k+1})\eqsp.
\end{split}
\end{equation}
The proof of the above identity is given in the supplement paper. Taking sum in \eqref{eq:pr_eq_3} for $k=1$ to $n$, and using the definition of $V_{k+1},W_{k+1}$ in \eqref{eq:noise_term_new}, we get:
\begin{equation}
\label{eq:pr_theta_decomposition}
\textstyle
\sqrt{n}\Delta(\btheta_n - \thetas) = \frac{1}{\sqrt{n}} \sum_{k=1}^n \psi_{k+1} + R_n^{\operatorname{pr}}\eqsp, 
\end{equation}
where we set $\psi_{k+1} = \funnoisew_V^{k+1} - A_{12} A_{22}^{-1} \funnoisew_W^{k+1}$, and $R_n^{\operatorname{pr}}$ is a residual term defined in the supplement paper. Assumption \Cref{assum:quadratic_characteristic} implies that the variance $\var[\funnoisew_V^{k+1} - A_{12} A_{22}^{-1} \funnoisew_W^{k+1}]$ is constant for any $k$, so we can define
\begin{align}
\label{def:sigma_star_main}
\lineG_{\eps} := \var[\funnoisew_V^{1} - A_{12} A_{22}^{-1} \funnoisew_W^{1}] \in \rset^{\dth \times \dth} \eqsp.
\end{align}
The following theorem holds:
\begin{theorem}
\label{th:martingal_pr_clt}
Assume \Cref{assum:zero-mean},\Cref{assum:bound-conditional-moments-p}($\log n$), \Cref{assum:quadratic_characteristic}, \Cref{assum:hurwitz}, \Cref{assum:stepsize}($\log n$), and \Cref{assum:aij_bound}. Then, it holds that
\begin{equation}
\kolmogorov\bigl(\sqrt{n}\Delta(\bar{\theta}_n - \thetas),\mathcal{N}(0,\lineG_{\eps})\bigr) \lesssim_{\log n} \frac{1}{n^{a/2}} 
+ \frac{1}{n^{(1-b)/2}}  \eqsp.
\end{equation}
\end{theorem}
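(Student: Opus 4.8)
The plan is to apply the reduction scheme behind \Cref{prop:nonlinearapprox} to the decomposition \eqref{eq:pr_theta_decomposition}. Since $\kolmogorov$ is invariant under invertible linear maps (convex sets are mapped to convex sets), and assuming $\lineG_{\eps}\succ 0$ (otherwise one argues on its range with a pseudo-inverse), it suffices to bound $\kolmogorov\bigl(\lineG_{\eps}^{-1/2}\sqrt{n}\Delta(\btheta_n-\thetas),\mathcal{N}(0,\Id)\bigr)$. By \eqref{eq:pr_theta_decomposition} this equals $\kolmogorov(W+D,\mathcal{N}(0,\Id))$ with the linear statistic $W:=\lineG_{\eps}^{-1/2}n^{-1/2}\sum_{k=1}^{n}\psi_{k+1}$ and the remainder $D:=\lineG_{\eps}^{-1/2}R_n^{\operatorname{pr}}$. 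Applying \Cref{prop:nonlinearapprox} with $p=\lceil\log n\rceil$ then reduces the task to estimating $\kolmogorov(W,\mathcal{N}(0,\Id))$ and the moment $\PE^{1/(p+1)}[\|D\|^p]$; for $p\asymp\log n$ the factor $c_d^{p/(p+1)}$ stays bounded by a dimension-dependent constant, and $\PE^{1/(p+1)}[\|D\|^p]=(\PE^{1/p}[\|D\|^p])^{p/(p+1)}$ differs from $\PE^{1/p}[\|D\|^p]$ only by polylogarithmic factors when the latter is polynomially small in $n$.

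The linear term is the easy part. By \Cref{assum:quadratic_characteristic}, $\{\psi_{k+1}\}$ with $\psi_{k+1}=\funnoisew_{V}^{k+1}-A_{12}A_{22}^{-1}\funnoisew_{W}^{k+1}$ is a martingale difference with \emph{deterministic} conditional covariance $\PE[\psi_{k+1}\psi_{k+1}^{\top}\mid\F_k]=\lineG_{\eps}$, and by \Cref{assum:aij_bound} it has uniformly bounded increments $\|\psi_{k+1}\|\le 2\bConst{b}+2\bConst{A}(\|\thetas\|+\|\wstar\|)$. Hence $W$ is a normalized sum of bounded martingale differences whose conditional covariance exactly equals the target, and a vector-valued martingale Berry--Esseen bound in convex distance (e.g.\ \cite{srikant2024rates,samsonov2025statistical}) gives $\kolmogorov(W,\mathcal{N}(0,\Id))\lesssim_{\log n}n^{-1/2}$, which is dominated by $n^{-a/2}+n^{-(1-b)/2}$ since $a,b<1$.

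The substance of the argument is the bound on $\PE^{1/p}[\|R_n^{\operatorname{pr}}\|^p]$. First I would expand $R_n^{\operatorname{pr}}$: summing \eqref{eq:pr_eq_3} over $k=1,\dots,n$, writing $\theta_k-\theta_{k+1}=\ttheta_k-\ttheta_{k+1}$ and $w_k-w_{k+1}=\hat w_k-\hat w_{k+1}$ with $\hat w_k:=w_k-\wstar$, and inserting the structure of $V_{k+1},W_{k+1}$ from \eqref{eq:noise_term_new}, expresses $R_n^{\operatorname{pr}}$ as a sum of three groups: (a) the Abel-summation pieces $n^{-1/2}\beta_n^{-1}\ttheta_{n+1}$ and $n^{-1/2}\sum_{k=2}^{n}(\beta_k^{-1}-\beta_{k-1}^{-1})\ttheta_k$ coming from $n^{-1/2}\sum_k\beta_k^{-1}(\ttheta_k-\ttheta_{k+1})$ (plus an $O(n^{-1/2})$ contribution at $k=1$); (b) the analogous pieces $n^{-1/2}\gamma_n^{-1}A_{12}A_{22}^{-1}\hat w_{n+1}$ and $n^{-1/2}\sum_{k=2}^{n}(\gamma_k^{-1}-\gamma_{k-1}^{-1})A_{12}A_{22}^{-1}\hat w_k$ from the $w$-telescoping; and (c) a martingale-difference sum $n^{-1/2}\sum_{k=1}^{n}S_{k+1}$, where $S_{k+1}:=(V_{k+1}-A_{12}A_{22}^{-1}W_{k+1})-\psi_{k+1}=-(\funcAwtilde_{11}^{k+1}-A_{12}A_{22}^{-1}\funcAwtilde_{21}^{k+1})\ttheta_k-(\funcAwtilde_{12}^{k+1}-A_{12}A_{22}^{-1}\funcAwtilde_{22}^{k+1})\hat w_k$ is a martingale difference, being a difference of two martingale differences. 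Each group is controlled via the moment bounds of \Cref{prop:moments_bound}: the exponential products in \eqref{eq:mth_moment_bound}--\eqref{eq:mtw_bound} decay faster than any power of $n$, so only $M_{k+1,p}^{\ttheta}\lesssim p^2\beta_k^{1/2}$ and $\PE^{1/p}[\|\hat w_k\|^p]\lesssim p^3\gamma_{k-1}^{1/2}$ matter (the latter via $\hat w_k=\tw_k-D_{k-1}\ttheta_k$ with $\|D_{k-1}\|\le\lip_{\infty}+\|A_{22}^{-1}A_{21}\|$ and $\beta_k\le\gamma_k$). For group (a), the boundary term contributes $n^{-1/2}\beta_n^{-1}\cdot p^2\beta_n^{1/2}\asymp p^2 n^{-(1-b)/2}$, and, using $\beta_k^{-1}-\beta_{k-1}^{-1}\lesssim k^{b-1}$ with $\sum_{k\le n}k^{b/2-1}\lesssim n^{b/2}$, the Abel sum contributes the same order; for group (b) the same computation yields $p^3 n^{-(1-a)/2}$, which is dominated by $n^{-(1-b)/2}$ because $a\le b$; for group (c) I would apply Burkholder's inequality, bounding $\PE^{1/p}[\|S_{k+1}\|^p]\lesssim\bConst{A}(M_{k,p}^{\ttheta}+\PE^{1/p}[\|\hat w_k\|^p])\lesssim p^3k^{-a/2}$ and using $\sum_{k\le n}k^{-a}\lesssim n^{1-a}$, which gives $\lesssim p^{7/2}n^{-a/2}$. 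Collecting these, $\PE^{1/p}[\|R_n^{\operatorname{pr}}\|^p]\lesssim p^{7/2}(n^{-a/2}+n^{-(1-b)/2})$.

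Putting everything together with $p=\lceil\log n\rceil$, the polynomial-in-$p$ prefactors and the exponent $p/(p+1)$ produce only $(\log n)^{O(1)}$ corrections, so $\kolmogorov(\sqrt{n}\Delta(\btheta_n-\thetas),\mathcal{N}(0,\lineG_{\eps}))\lesssim_{\log n}n^{-1/2}+n^{-a/2}+n^{-(1-b)/2}\lesssim_{\log n}n^{-a/2}+n^{-(1-b)/2}$, which is the claim. I expect the main obstacle to be step (c)-(a): one must invoke the sharp $\beta_n^{1/2},\gamma_n^{1/2}$ scaling of \Cref{prop:moments_bound} precisely at the Abel-summation boundary terms, track the dependence on $p$ carefully throughout so that the choice $p\asymp\log n$ turns powers of $p$ into logarithmic factors, and apply Burkholder's inequality with the correct per-summand moments for the multiplicative-noise cross-terms.
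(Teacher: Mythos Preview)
Your proposal is essentially the paper's own proof: the same decomposition $R_n^{\operatorname{pr}}=Y_1+Y_2+Y_3$ (your groups (a), (b), (c)), the same Abel summation for the telescoping pieces, Burkholder for the multiplicative-noise term, and \Cref{prop:moments_bound} as the workhorse. The orders you obtain for each piece match the paper's Lemmas~\ref{lem:pr_y1bound}--\ref{lem:pr_y3bound}.

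One inaccuracy: the convex-distance rate you claim for the linear statistic $W$ is too optimistic. The references you cite do not deliver $n^{-1/2}$ in convex distance for bounded martingale differences; the paper applies a version of \cite{wu2025uncertainty} (stated as \Cref{th:wu_martingale_limit}) and obtains only $n^{-1/4}$ up to logarithms. This is harmless for the theorem, since $b>1/2$ forces $n^{-1/4}\le n^{-(1-b)/2}$, so the linear contribution is absorbed into the $n^{-(1-b)/2}$ term regardless---but the intermediate claim as written is not supported.
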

\paragraph{Proof sketch.} We apply \Cref{prop:nonlinearapprox} to the decomposition \eqref{eq:pr_theta_decomposition} and obtain, with $\nu \sim \mathcal{N}(0,  \lineG_\eps )$, that 
\begin{multline}
\label{eq:main:th_kolmogorov_first_bound_appendix}
\textstyle 
\kolmogorov\bigl({\sqrt{n} \Delta(\btheta_n - \thetas), \nu \bigr)} \leq \underbrace{\kolmogorov\bigl({n^{-1/2} \sum_{k=1}^n  \psi_{k+1}, \nu}\bigr)}_{T_1} \\
\textstyle
+ \underbrace{2 c_d^{p/p+1}\PE^{1/(p+1)}\bigl[\norm{\lineG_\eps^{-1/2} R_n^{\operatorname{pr}}}^{p}\bigr]}_{T_2}\eqsp.
\end{multline}
Due to \Cref{assum:zero-mean} and \Cref{assum:aij_bound}, sequence $\{\psi_{k+1}\}_{k \in \nset}$ is a bounded martingale-difference sequence w.r.t. $\F_k$ with constant quadratic characteristic. Hence, $T_1$ can be estimated applying a slight modification of \cite[Theorem 1]{wu2025uncertainty}. It remains to bound the moments of $T_2$, which is done using \Cref{prop:moments_bound}.

\paragraph{Discussion.} In the theorem above, the coefficients before the terms depend upon the initial errors $\norm{\theta_0 - \thetas}$, $\norm{w_0 - \wstar}$, and upon the factors $1/(1-a)$ and $1/(1-b)$. That is why the result in its current form does not apply directly if $b = 1$. We expect that the result holds in this case as well, perhaps at a price of introducing additional logarithmic factors. The same remark applies to \Cref{th:last_iter_clt_final}-\Cref{th:markov:last_iter_clt} stated below.
\par 
Since $1/2 < a < b < 1$, the bound of \Cref{th:martingal_pr_clt} is optimized when setting $a = 1/2 + 1/\log{n}$ and $b = a + 1/\log{n}$, yielding the final rate of convergence of order 
\begin{equation}
\label{eq:non_linear_ttsa}
\kolmogorov\bigl(\sqrt{n}\Delta(\bar{\theta}_n - \thetas),\mathcal{N}(0,\lineG_{\eps})\bigr) \lesssim_{\log n} n^{-1/4}\eqsp.
\end{equation}
The result of \eqref{eq:non_linear_ttsa} improves upon the previously established results of \cite{srikant20252ts}. The authors of that paper obtained a rate of $n^{-1/4}$, up to $\log{n}$ factors, in terms of Wasserstein distance. This implies convergence rate $n^{-1/8}$ in the convex distance, which is slower than \eqref{eq:non_linear_ttsa}. The choice of $a$ and $b$ in \eqref{eq:non_linear_ttsa} corresponds to nearly the same scales for $\beta_k$ and $\gamma_k$, effectively reducing the problem to a single-scale LSA. The obtained $n^{-1/4}$ rate aligns with the one established for this problem with i.i.d. noise in \cite{samsonov2024gaussian}.

\vspace{-2mm}
\subsection{GAR for the last iterate.}
\label{sec:TTSA_last_iterate}
In this section, we derive the normal approximation rates for the last iterate $\beta_n^{-1/2} \ttheta_{n+1}$. Following \cite{konda:tsitsiklis:2004} and using \eqref{eq:2ts2-1}, equations for \(\ttheta_k\) and \(\tw_k\) writes as
\begin{align}
\label{eq:difference_iterations_theta_w}
\ttheta_{k+1} &= ( \Id - \beta_k \Delta ) \ttheta_k - \beta_k A_{12} \Tilde{w}_k - \beta_k V_{k+1} + \beta_k\delta_k^{(1)}\eqsp,\\
\tw_{k+1} &= (\Id - \gamma_kA_{22})\tw_k -\beta_kD_{k}V_{k+1} - \gamma_k W_{k+1} - \beta_k\delta_k^{(2)}\eqsp,
\end{align}
where we set
\begin{align}
\label{eq:delta_definition}
\textstyle
\delta_k^{(1)} = A_{12}L_k\ttheta_k \eqsp, \eqsp \eqsp \eqsp  \delta_k^{(2)} = -(L_{k+1} + A_{22}^{-1}A_{21})A_{12}\tw_k\eqsp.
\end{align}
Throughout the analysis we use the following convention:
\begin{equation}
\label{eq:matr_product_pr}
\textstyle
G_{m:k}^{(1)} := \prod_{i=m}^{k} (\Id - \beta_i\Delta ), \eqsp G_{m:k}^{(2)} := \prod_{i=m}^{k} (\Id - \gamma_i A_{22} )\eqsp. 
\end{equation}
Enrolling the above recurrence and following \cite{konda:tsitsiklis:2004}, we get from the previous recurrence that 
\begin{equation}
\label{eq:last_iteration_repr_new}
\textstyle
\ttheta_{n+1} = -\sum_{j=0}^{n} \beta_j G_{j+1:n}^{(1)} \psi_{j+1} + R_n^{\operatorname{last}} \eqsp,
\end{equation}
where $R_n^{\operatorname{last}}$ is a remainder term defined in the supplement paper. The leading term in representation \eqref{eq:last_iteration_repr_new} is a linear statistics of $\eps_V, \eps_W$ which are martingale difference sequences with constant quadratic characteristics due to \Cref{assum:quadratic_characteristic}. Now we define
\begin{align}
\textstyle 
\lineG_n^{\operatorname{last}} = \var\bigl[\sum_{j=0}^{n} \beta_j G_{j+1:n}^{(1)} \psi_{j+1}\bigr] \eqsp.
\end{align}
It is known that $ \beta_n^{-1}\lineG_n^{\operatorname{last}}$ converges to a fixed matrix $\lineG_{\infty}^{\operatorname{last}}$ which is a solution of the Ricatti equation
\begin{align}
\lineG_{\infty}^{\operatorname{last}}  = \beta_0(\Delta \lineG_{\infty}^{\operatorname{last}} + \lineG_{\infty}^{\operatorname{last}}\Delta^\top - \lineG_{\eps}) \eqsp,
\end{align}
where $\lineG_{\eps}$ is defined in \eqref{def:sigma_star_main}. Moreover, the convergence rate is proportional to $\beta_n$, i.e. 
$$
\norm{\beta_n^{-1} \lineG_{n}^{\operatorname{last}} - \lineG_{\infty}^{\operatorname{last}}} \lesssim n^{-b} \eqsp.
$$ 
The proof of the above result is given in the supplement paper. The following assumption guarantees that the covariance matrix $\beta_n^{-1} \lineG_n^{\operatorname{last}}$ is non-degenerate, which is important for the further applications of \Cref{prop:nonlinearapprox}.
\begin{assum}
\label{assum:last_iter}
Step size exponents $a, b$ satisfy $2b > 1 + a$. Moreover, assume that the total number of iterations $n$ satisfies $n^b \geq C_{\Cref{assum:last_iter}}$, where $C_{\Cref{assum:last_iter}}$ does not depend on $a,b$, and can be traced following the supplement paper.
\end{assum}
\begin{theorem}
\label{th:last_iter_clt_final}
Assume \Cref{assum:zero-mean},\Cref{assum:bound-conditional-moments-p}($\log n$), \Cref{assum:quadratic_characteristic}, \Cref{assum:hurwitz}, \Cref{assum:stepsize}($\log n$), \Cref{assum:aij_bound}, \Cref{assum:last_iter}.  Then, it holds that
\begin{multline}
\label{eq:normal_approximation_rhs_main}
\kolmogorov{\bigl(\beta_n^{-1/2}\ttheta_{n+1},\mathcal{N}(0,  \lineG_{\infty}^{\operatorname{last}})\bigr)} \\  
\qquad \lesssim_{\log n} n^{b/2} \prod_{j=0}^n (1 - \frac{a_\Delta}{8} \beta_j) + \frac{1}{n^{(3b-a-2)/2}}  \eqsp.
\end{multline}
\end{theorem}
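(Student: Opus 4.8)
The plan is to feed the representation \eqref{eq:last_iteration_repr_new} into \Cref{prop:nonlinearapprox}, identifying the linear statistic with the weighted martingale sum and the perturbation with the normalised remainder. Write $\beta_n^{-1/2}\ttheta_{n+1} = S_n + D_n$ with $S_n := -\beta_n^{-1/2}\sum_{j=0}^{n}\beta_j G_{j+1:n}^{(1)}\psi_{j+1}$ and $D_n := \beta_n^{-1/2}R_n^{\operatorname{last}}$, and set $\Sigma_n := \var[S_n] = \beta_n^{-1}\lineG_n^{\operatorname{last}}$. First I would pass from the target $\mathcal{N}(0,\lineG_\infty^{\operatorname{last}})$ to $\mathcal{N}(0,\Sigma_n)$: by \Cref{prop:ricatti_limit} and \Cref{assum:last_iter} (which keeps $\Sigma_n^{-1}$ bounded uniformly in $n$), the standard bound for the convex distance between two Gaussians with close covariances gives $\kolmogorov(\mathcal{N}(0,\Sigma_n),\mathcal{N}(0,\lineG_\infty^{\operatorname{last}})) \lesssim \norm{\Sigma_n - \lineG_\infty^{\operatorname{last}}} \lesssim n^{-b}$. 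Since $\kolmogorov$ is invariant under the invertible linear map $x \mapsto \Sigma_n^{-1/2}x$, applying \Cref{prop:nonlinearapprox} to $\Sigma_n^{-1/2}(S_n+D_n)$ with $p = \log n$ and using $\norm{\Sigma_n^{-1/2}} \lesssim 1$ yields
\[
\kolmogorov\bigl(\beta_n^{-1/2}\ttheta_{n+1},\mathcal{N}(0,\lineG_\infty^{\operatorname{last}})\bigr) \lesssim_{\log n} \kolmogorov\bigl(\Sigma_n^{-1/2}S_n,\mathcal{N}(0,\Id)\bigr) + \PE^{1/(p+1)}\bigl[\norm{D_n}^{p}\bigr] + n^{-b} \eqsp.
\]

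The first term is handled exactly as $T_1$ in the proof sketch of \Cref{th:martingal_pr_clt}: by \Cref{assum:zero-mean}, \Cref{assum:quadratic_characteristic} and \Cref{assum:aij_bound}, the array $\{-\beta_n^{-1/2}\beta_j G_{j+1:n}^{(1)}\psi_{j+1}\}_{j=0}^{n}$ is a bounded martingale-difference sequence whose conditional covariances are deterministic and sum to $\Sigma_n$, so a convex-distance martingale Berry--Esseen bound --- a minor modification of \cite[Theorem~1]{wu2025uncertainty} --- applies with target $\mathcal{N}(0,\Sigma_n)$. The resulting Lyapunov-type sum $\sum_{j=0}^{n}\PE\norm{\beta_n^{-1/2}\beta_j G_{j+1:n}^{(1)}\psi_{j+1}}^{3}$, evaluated with $\norm{G_{j+1:n}^{(1)}} \lesssim \prod_{i=j+1}^{n}(1-a_\Delta\beta_i)$ from \eqref{eq:contraction_p} and $\norm{\psi_{j+1}} \lesssim 1$, is concentrated on the window $n-j \lesssim \beta_n^{-1}$ and is $\lesssim_{\log n} \beta_n^{1/2} \lesssim n^{-b/2}$. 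Since $2b < a+2$, both $n^{-b}$ and $n^{-b/2}$ are dominated by the second term of \eqref{eq:normal_approximation_rhs_main}.

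The crux is the high-order moment bound $\PE^{1/p}[\norm{R_n^{\operatorname{last}}}^{p}]$ with $p = \log n$. Unrolling \eqref{eq:difference_iterations_theta_w}, substituting the quasi-static identity $A_{22}\tw_j = \gamma_j^{-1}(\tw_j-\tw_{j+1}) - W_{j+1} - (\beta_j/\gamma_j)(D_j V_{j+1} + \delta_j^{(2)})$ into the term $-\sum_j \beta_j G_{j+1:n}^{(1)}A_{12}\tw_j$, and peeling off the leading term $-\sum_j \beta_j G_{j+1:n}^{(1)}\psi_{j+1}$, one obtains (see \Cref{lem:last_iter_decomp}) a decomposition of $R_n^{\operatorname{last}}$ into: (a) initial-condition terms, driven by $G_{0:n}^{(1)}(\theta_0-\thetas)$ and by $w_0-\wstar$ propagated through $G^{(2)}$ then $G^{(1)}$, deterministically $\lesssim \prod_{j=0}^{n}(1-\tfrac{a_\Delta}{8}\beta_j)\bigl(\norm{\theta_0-\thetas}+\norm{w_0-\wstar}\bigr)$; (b) martingale-difference corrections $\sum_j \beta_j G_{j+1:n}^{(1)}\xi_j$, with $\xi_j$ a bounded-matrix multiple of $\funcAwtilde_{i\ell}^{j+1}\ttheta_j$, $\funcAwtilde_{i\ell}^{j+1}\tw_j$ ($i,\ell\in\{1,2\}$) or $D_j V_{j+1}$, treated by Burkholder's inequality \cite[Theorem~8.6]{osekowski}; and (c) summation-by-parts / quasi-static residual terms such as $\sum_j (\beta_j/\gamma_j)G_{j+1:n}^{(1)}A_{12}A_{22}^{-1}(\tw_j-\tw_{j+1})$ and the $\delta_j^{(1)},\delta_j^{(2)}$ contributions, treated by Abel summation and Minkowski's inequality. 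Into each of these I would insert $M_{j,p}^{\ttheta} \lesssim \prod_{i<j}(1-\tfrac{a_\Delta}{8}\beta_i) + p^{2}\beta_j^{1/2}$ and $M_{j,p}^{\tw} \lesssim \prod_{i<j}(1-\tfrac{a_{22}}{8}\gamma_i) + p^{3}\gamma_j^{1/2}$ from \Cref{prop:moments_bound}, the bounds $\norm{\funcAwtilde_{i\ell}^{j+1}} \le \bConst{A}$, $\norm{L_j} \le \lip_{\infty}$, and the contractions \eqref{eq:contraction_p}. The ``transient'' (product) contributions collapse into the first term of \eqref{eq:normal_approximation_rhs_main} after multiplying by $\beta_n^{-1/2} \lesssim n^{b/2}$, whereas the ``stationary'' $\beta_j^{1/2}$- and $\gamma_j^{1/2}$-contributions, summed over the window $n-j \lesssim \beta_n^{-1}$ against $\beta_j\norm{G_{j+1:n}^{(1)}}$ (respectively $(\beta_j^{2}/\gamma_j)\norm{G_{j+1:n}^{(1)}}$), are each $\lesssim_{\log n} n^{-(4b-a-2)/2}$; hence $\beta_n^{-1/2}\PE^{1/p}[\norm{R_n^{\operatorname{last}}}^{p}] \lesssim_{\log n} n^{b/2}\prod_{j=0}^{n}(1-\tfrac{a_\Delta}{8}\beta_j) + n^{-(3b-a-2)/2}$. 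Combining the three estimates gives \eqref{eq:normal_approximation_rhs_main}.

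I expect step (c) to be the main obstacle: cleanly isolating the leading martingale $\psi_{j+1}$ from the entangled fast/slow dynamics, and controlling the quasi-static residuals of $\tw_j$ at the correct order, is precisely the two-timescale bookkeeping that produces the exponent $(3b-a-2)/2$ and forces the constraint $2b > 1+a$ of \Cref{assum:last_iter} --- needed both for non-degeneracy of $\beta_n^{-1}\lineG_n^{\operatorname{last}}$ (so that $\Sigma_n^{-1/2}$ is bounded) and for convergence of $\sum_j\beta_j^{2}$ in the transient estimates. The Berry--Esseen and Gaussian-comparison steps are comparatively routine given \Cref{prop:moments_bound} and \Cref{prop:ricatti_limit}.
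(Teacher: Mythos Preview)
Your three-step scaffold matches the paper exactly: compare $\mathcal{N}(0,\lineG_\infty^{\operatorname{last}})$ and $\mathcal{N}(0,\beta_n^{-1}\lineG_n^{\operatorname{last}})$ via \Cref{prop:ricatti_limit}, apply a martingale Berry--Esseen bound to $S_n$ (the paper uses \Cref{th:wu_martingale_limit}, which delivers a flat $n^{-1/4}$ rather than the Lyapunov rate $\beta_n^{1/2}$ you compute, but both are sub-dominant), and control $\beta_n^{-1/2}R_n^{\operatorname{last}}$ through \Cref{prop:nonlinearapprox} with $p=\log n$. The one genuine difference is the dissection of $R_n^{\operatorname{last}}$. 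The paper (\Cref{lem:last_iter_decomp}) \emph{fully expands} $\tw_j$ along its own recursion before substituting into the $\ttheta$-equation, producing the double sums $S_n^{(1)},S_n^{(2)},S_n^{(3)}$ and a separate $\tw_0$ initial term, each bounded by a dedicated lemma; the most delicate, \Cref{lem:s_3_bound}, rewrites $S_n^{(3)}$ through the Konda--Tsitsiklis factors $Z_i^{(1)},Z_i^{(2)},Z_i^{(3)}$. Your one-step quasi-static substitution $\tw_j=A_{22}^{-1}\bigl[\gamma_j^{-1}(\tw_j-\tw_{j+1})-W_{j+1}-(\beta_j/\gamma_j)(D_jV_{j+1}\pm\delta_j^{(2)})\bigr]$ instead yields a single telescoping term $\sum_j(\beta_j/\gamma_j)G_{j+1:n}^{(1)}A_{12}A_{22}^{-1}(\tw_j-\tw_{j+1})$, handled by Abel summation, plus single sums with weight $\beta_j^2/\gamma_j$ --- this bypasses the $S_n^{(3)}$ machinery entirely and is arguably more direct. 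Both routes prove the stated bound; in the paper the exponent $(3b-a-2)/2$ is produced by the $\delta_j^{(1)}=A_{12}L_j\ttheta_j$ contribution after further unfolding (\Cref{lem:delta_bound}), and $2b>1+a$ enters to make $\sum_j\beta_j^2/\gamma_j$ finite in the transient pieces. Two small slips in your write-up: the sums you single out do not themselves evaluate to $n^{-(4b-a-2)/2}$ (several are strictly smaller, so your final inequality still holds), and $\sum_j\beta_j^2<\infty$ only needs $2b>1$, which is automatic --- the genuine role of $2b>1+a$ is the convergence of $\sum_j\beta_j^2/\gamma_j$, while the non-degeneracy of $\Sigma_n$ comes from the separate large-$n$ clause of \Cref{assum:last_iter}.
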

\paragraph{Discussion} The proof of \Cref{th:last_iter_clt_final} is similar to the one of \Cref{th:martingal_pr_clt}, but relies on the decomposition \eqref{eq:last_iteration_repr_new} instead of \eqref{eq:pr_theta_decomposition} used in the averaged setting. Additional technical difficulties arises when controlling the moments of the term $R_n^{\operatorname{last}}$. Bounding the latter term requires additional constraint $2b > 1+a$ imposed in \Cref{assum:last_iter}.
\par 
Since $1/2 < a < b < 1$, the bound of \Cref{th:last_iter_clt_final} is optimized when setting $a = 1/2 + 1/\log{n}$ and $b = 1 - 1/\log{n}$, yielding the final rate
\begin{equation}
\textstyle 
\kolmogorov{\bigl(\beta_n^{-1/2}\ttheta_{n+1},\mathcal{N}(0,  \lineG_{\infty}^{\operatorname{last}})\bigr)} \lesssim_{\log n} n^{-1/4}\eqsp, 
\end{equation}
provided that $n$ is large enough. To the best of our knowledge, this is the first result concerning the Gaussian approximation rate for the TTSA last iterate. 
\par 
 Note that \Cref{th:last_iter_clt_final} reveals phenomenon, which is completely different from what was previously observed for the Polyak-Ruppert averaged iterates in \Cref{th:martingal_pr_clt}. Indeed, the right-hand side of the bound \eqref{eq:normal_approximation_rhs_main} contains the term $n^{-(3b-a-2)/2}$, which favors separation between $\beta_k$ and $\gamma_k$, and vanishes when the scale exponents are close.

\section{GAR for TTSA with Markov noise}
\label{sec:markov_case}
In this section we generalize the results obtained in \Cref{sec:TTSA_martingale} to the more practical scenario when $\{X_k\}_{k \in \nset}$ form a Markov chain. Namely, we impose the following assumption:
\begin{assumB}
\label{assum:UGE}
The sequence $\{X_k\}_{k \in \nset}$ is a Markov chain taking values in a Polish space $(\Xset,\Xsigma)$ with the Markov kernel $\MKQ$. Moreover, $\MKQ$ admits $\pi$ as a unique invariant distribution and is uniformly geometrically ergodic, that is, there exists $\taumix \in \nset$, such that for any $k \in \nset$, it holds that 
\begin{equation}
\label{eq:tau_mix_contraction}
\dobrush(\MKQ^{k}) := \sup_{x,x' \in \Xset} \tvdist(\MKQ^{k}(x,\cdot),\MKQ^{k}(x',\cdot)) \leq (1/4)^{\lceil k / \taumix \rceil}\eqsp.
\end{equation}
Moreover, for all $k \in \nset$ and $i, j \in \{1,2\}$ it holds that
\begin{align}
    \PE_\pi[\funcAw_{ij}^k] = A_{ij} \text{ and } \PE_\pi[\funcbw_i^k] = b_i \eqsp.
\end{align}
\end{assumB}

Parameter $\taumix$ in \Cref{assum:UGE} is referred to as a \emph{mixing time}, see e.g. \cite{paulin_concentration_spectral}, and controls the rate of convergence of the iterates $\MKQ^{k}$ to $\pi$ as $k$ increases.

\subsection{Moment bounds for TTSA with Markov noise}
First, we introduce a counterpart to \Cref{assum:stepsize} that is needed to derive moment bounds for the setting of Markov noise.
\begin{assumB}[$p$]
\label{assum:markov:stepsize}
$(\gamma_k)_{k \geq 1}$, $(\beta_k)_{k \geq 1}$ are non-increasing sequences of the form 
\[
\textstyle 
\beta_k = c_{0, \beta} (k+k_0)^{-b}, \quad \gamma_k = c_{0, \gamma} (k+k_0)^{-a} \eqsp, 
\]  
where \(1/2 < a < b < 1\), fraction $c_{0,\beta}/c_{0,\gamma}$ is small enough, and constant $k_0$ satisfies the bound $k_0 \geq C_{\Cref{assum:markov:stepsize}} p^{4/b}$, where the constant $C_{\Cref{assum:markov:stepsize}}$ does not depend upon $p$.
\end{assumB}
The proof of moment bounds is more involved compared to the martingale noise case. Following the decomposition outlined in \cite{kaledin2020finite}, we first represent the noise variables $(V_{k+1},W_{k+1})$ as a sum of their martingale $(V^{(0)}_{k+1},W^{(0)}_{k+1})$ and Markovian components $(V^{(1)}_{k+1},W^{(1)}_{k+1})$ in a way that 
\[
\textstyle 
V_{k+1} = V^{(0)}_{k+1} + V^{(1)}_{k+1}\eqsp, \quad W_{k+1} = W^{(0)}_{k+1} + W^{(1)}_{k+1}\eqsp.
\]
Here $\CPE{V_{k+1}^{(0)}}{\F_k} = 0$ and $\CPE{W_{k+1}^{(0)}}{\F_k} = 0$. This representation is obtained using the decomposition associated with the Poisson equation, see \cite[Chapter~21]{douc:moulines:priouret:soulier:2018} and additional summation by parts. Then we define a pair of coupled recursions, which form exact counterparts of \eqref{eq:2ts2-1}:
\begin{equation}
\label{eq:2ts2-1-markov}
\begin{split}
\ttheta_{k+1}^{(i)} &= ( \Id - \beta_k B_{11}^k ) \ttheta_k^{(i)} - \beta_k A_{12} \Tilde{w}_k^{(i)} - \beta_k V_{k+1}^{(i)}\eqsp, \\
\tilde{w}_{k+1}^{(i)} &= (\Id - \gamma_k B_{22}^k ) \Tilde{w}_k^{(i)} - \beta_k D_k V_{k+1}^{(i)} - \gamma_k W_{k+1}^{(i)}\eqsp,
\end{split}
\end{equation}
where $i \in \{0,1\}$. Then it is easy to see that $\ttheta_k = \ttheta_k^{(0)} + \ttheta_k^{(1)}$ and $\tw_k = \tw_k^{(0)} + \tw_k^{(1)}$. Precise expressions for $\ttheta_k^{(i)}, \tw_k^{(i)},V^{(i)}_{k},W^{(i)}_{k}$ can be found in the supplement paper. 
\begin{proposition}
\label{prop:markov:moment_bounds}
Let $p \geq 2 $. Assume \Cref{assum:hurwitz}, \Cref{assum:aij_bound},  \Cref{assum:UGE}, \Cref{assum:markov:stepsize}($p$). Thus, it holds for any $k \geq 0$ that
\begin{align}
\label{eq:markov:main_ttheta_bound}
\textstyle 
M_{k+1, p}^{\ttheta} & \textstyle \lesssim \prod_{j=0}^k (1 - \frac{a_\Delta \beta_j}{8}) + p^2 \sqrt{\beta_k} \eqsp, \\
\label{eq:markov:main_tw_bound}
\textstyle M_{k+1, p}^{\tw} & \textstyle \lesssim \prod_{j=0}^{k} (1 - \frac{a_{22} \gamma_j}{8}) + p^3 \sqrt{\gamma_k} \eqsp.
\end{align}
\end{proposition}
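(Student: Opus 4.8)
The plan is to transfer the martingale-noise estimates of \Cref{prop:moments_bound} to the Markov setting via the Poisson-equation decomposition, showing that the Markovian correction is of strictly smaller order than the martingale part. Note first that \Cref{assum:aij_bound} already gives the deterministic bound $\norm{V_{k+1}} \vee \norm{W_{k+1}} \lesssim 1 + \norm{\ttheta_k} + \norm{\tw_k}$, so the moment condition of \Cref{assum:bound-conditional-moments-p}($p$) is automatic and need not be assumed; no quadratic-characteristic hypothesis is needed for a pure moment bound either. The only obstruction relative to \Cref{sec:moments_martingale_TTSA} is that $\CPE{V_{k+1}}{\F_k}$ and $\CPE{W_{k+1}}{\F_k}$ no longer vanish. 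To repair this we solve, entrywise, the Poisson equations associated with the $\pi$-centred functions $x \mapsto \funcAw_{ij}(x) - A_{ij}$ and $x \mapsto \funcbw_i(x) - b_i$; by \Cref{assum:UGE} these solutions are bounded in sup-norm, with a bound proportional to $\sum_{j \ge 0}\dobrush(\MKQ^j) \lesssim \taumix$. Substituting them and summing by parts produces, as in \cite{kaledin2020finite}, the splitting $V_{k+1} = V^{(0)}_{k+1} + V^{(1)}_{k+1}$, $W_{k+1} = W^{(0)}_{k+1} + W^{(1)}_{k+1}$, where $(V^{(0)}_{k+1},W^{(0)}_{k+1})$ is a martingale-difference sequence built from one-step Poisson increments and still obeying $\norm{V^{(0)}_{k+1}} \vee \norm{W^{(0)}_{k+1}} \lesssim \taumix(1 + \norm{\ttheta_k} + \norm{\tw_k})$, while $(V^{(1)}_{k+1},W^{(1)}_{k+1})$ collects telescoping differences of the Poisson solutions. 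By linearity of \eqref{eq:2ts2-1} this induces the decomposition \eqref{eq:2ts2-1-markov} with $\ttheta_k = \ttheta_k^{(0)} + \ttheta_k^{(1)}$, $\tw_k = \tw_k^{(0)} + \tw_k^{(1)}$, and Minkowski's inequality reduces the claim to bounding $M^{\ttheta^{(0)}}_{k+1,p}, M^{\tw^{(0)}}_{k+1,p}$ and $M^{\ttheta^{(1)}}_{k+1,p}, M^{\tw^{(1)}}_{k+1,p}$ separately.

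\paragraph{Martingale component.} The pair $(\ttheta_k^{(0)},\tw_k^{(0)})$ obeys a recursion of exactly the form \eqref{eq:2ts2-1} driven by the martingale-difference noise $(V^{(0)}_{k+1},W^{(0)}_{k+1})$, whose increments are deterministically bounded by a multiple of $1 + \norm{\ttheta_k} + \norm{\tw_k}$. Hence the contraction bounds \eqref{eq:contraction_B11}, the comparison $\sum_{j}\gamma_j^{s}\prod_{i>j}(1-c\gamma_i) \lesssim \gamma_k^{s-1}$ for $s \ge 1$ with a suitable $c > 0$ (and its analogue with $\beta$), and Burkholder's inequality applied to the martingale linear statistics of $(V^{(0)},W^{(0)})$ reproduce, step by step, the argument behind \Cref{prop:moments_bound}. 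This yields a coupled system of inequalities for $M^{\ttheta^{(0)}}_{k+1,p}$ and $M^{\tw^{(0)}}_{k+1,p}$ in which the feedback of the full iterate moments $M^{\ttheta}_{k,p}, M^{\tw}_{k,p}$ entering through the noise is absorbed by a discrete Gronwall argument; the requirement $k_0 \gtrsim p^{4/b}$ in \Cref{assum:markov:stepsize}($p$) keeps the resulting constants bounded in $k$ and $p$. We get $M^{\ttheta^{(0)}}_{k+1,p} \lesssim \prod_{j=0}^{k}(1-a_\Delta\beta_j/8) + p^2\sqrt{\beta_k}$ and $M^{\tw^{(0)}}_{k+1,p} \lesssim \prod_{j=0}^{k}(1-a_{22}\gamma_j/8) + p^3\sqrt{\gamma_k}$, i.e. precisely the claimed rates.

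\paragraph{Markovian residual.} It remains to see that $(\ttheta^{(1)}_k,\tw^{(1)}_k)$ contributes only lower-order terms. Enrolling the $i=1$ recursion in \eqref{eq:2ts2-1-markov} and using \eqref{eq:contraction_B11} writes $\ttheta^{(1)}_{k+1}$ as a decaying initial term plus weighted sums of $V^{(1)}_{j+1}$ and of $A_{12}\tw^{(1)}_j$ against the products $\ProdB^{(1)}_{j+1:k} := \prod_{i=j+1}^k(\Id - \beta_i B_{11}^i)$. A second summation by parts moves the telescoping increments onto those products, so that every surviving term has the shape $\sum_j \abs{\beta_{j-1}-\beta_j}\eqsp\normop{\ProdB^{(1)}_{j+1:k}}[Q_\Delta]\cdot(\text{bounded})$ or $\sum_j \beta_j\eqsp\normop{\ProdB^{(1)}_{j+1:k}}[Q_\Delta]\cdot(\text{bounded})$, the boldface factors being either constants or iterates $\ttheta_j,\tw_j$; using $\normop{\ProdB^{(1)}_{j+1:k}}[Q_\Delta] \le \prod_{i=j+1}^k(1-a_\Delta\beta_i/2)$ and the comparison above, this gives $M^{\ttheta^{(1)}}_{k+1,p} \lesssim \prod_{j=0}^k(1-a_\Delta\beta_j/8) + p^2\beta_k$, so that the $\sqrt{\beta_k}$ of the martingale part is replaced by the genuinely smaller $\beta_k$. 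The bound for $\tw^{(1)}_k$ is analogous, the one extra point being the cross term $\beta_k D_k V^{(1)}_{k+1}$, which injects the slow timescale into the fast update but is dominated because $\beta_k \le \gamma_k$; one obtains $M^{\tw^{(1)}}_{k+1,p} \lesssim \prod_{j=0}^k(1-a_{22}\gamma_j/8) + p^3\gamma_k$. Throughout, $\taumix$ enters only through multiplicative constants and the dependence on $p$ stays polynomial, the iterate-moment feedback being handled by the same Gronwall argument. Adding the martingale and residual bounds via Minkowski's inequality yields \eqref{eq:markov:main_ttheta_bound}--\eqref{eq:markov:main_tw_bound}.

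\paragraph{Main obstacle.} The delicate part is the residual analysis inside the \emph{coupled} two-timescale recursion: carrying out the second summation by parts while correctly bounding the product-matrix increments $\ProdB^{(1)}_{j:k}-\ProdB^{(1)}_{j+1:k}$, handling the cross term $\beta_k D_k V^{(1)}_{k+1}$ that feeds the slow timescale into the update of $\tw_k$, and simultaneously tracking the polynomial-in-$p$ constants so that the final bound matches the martingale rates with no loss — this is where the Poisson decomposition, the two timescales, and the high-moment Gronwall estimates interact most intricately.
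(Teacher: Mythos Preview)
Your decomposition, tools, and overall strategy match the paper's: Poisson-equation splitting $V_{k+1}=V^{(0)}_{k+1}+V^{(1)}_{k+1}$, the induced splitting $\ttheta_k=\ttheta_k^{(0)}+\ttheta_k^{(1)}$, Burkholder on the martingale piece, Abel summation on the residual, and a discrete Gronwall to close. The one genuine gap is the \emph{order of operations}. You present closed-form bounds for $M^{\ttheta^{(0)}}_{k+1,p}$ and $M^{\ttheta^{(1)}}_{k+1,p}$ \emph{separately}, each via its own Gronwall, and only then add. This cannot close as written: both $V^{(0)}_{k+1}$ and the telescoping increments in $V^{(1)}_{k+1}$ are controlled by $1+\norm{\ttheta_k}+\norm{\tw_k}$, i.e.\ by the \emph{full} iterates, so the recursive inequality one obtains for $M^{\ttheta^{(0)}}_{k+1,p}$ has $M^{\ttheta}_{j,p}$ and $M^{\tw}_{j,p}$ on the right, not $M^{\ttheta^{(0)}}_{j,p}$. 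A Gronwall run on $M^{\ttheta^{(0)}}$ alone is therefore not a closed recursion, and the same obstruction applies to the $(1)$ part.

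The paper's fix is to reverse the order: first bound the $(0)$ and $(1)$ contributions each by expressions in the past \emph{full} moments, add them to obtain closed recursive inequalities
\[
(M^{\ttheta}_{k+1,p})^2 \lesssim P_{0:k}^{(1)} + p^4\beta_k + p^4\sum_{j=0}^k \beta_j^2 P_{j+1:k}^{(1)} (M^{\ttheta}_{j,p})^2,\qquad
(M^{\tw}_{k+1,p})^2 \lesssim P_{0:k}^{(2)} + p^2\gamma_k + p^2\sum_{j=0}^k \gamma_j^2 P_{j+1:k}^{(2)} (M^{\ttheta}_{j,p})^2,
\]
and only then run the Gronwall on $M^{\ttheta}$, substituting the result into the $\tw$-inequality. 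Your phrase ``absorbed by a discrete Gronwall argument'' is right in spirit, but the Gronwall must act on $M^{\ttheta}$, not on $M^{\ttheta^{(0)}}$ and $M^{\ttheta^{(1)}}$ in isolation. A minor secondary point: the $\ttheta^{(1)}$ residual actually picks up $\gamma_k$, not $\beta_k$, through the fast-scale increments $\what_j-\what_{j+1}$ in the Abel summation; since $a>b/2$ this is still $o(\sqrt{\beta_k})$, so the final statement is unaffected.
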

\paragraph{Proof sketch.} The idea of the proof is to bound martingale and Markov parts separately using the techniques from \Cref{sec:TTSA_martingale}. Note that \Cref{prop:markov:moment_bounds} directly mimics the similar result obtained under the martingale noise setting in \Cref{prop:moments_bound}. The only difference is that the constants hidden under $\lesssim$ additionally depends upon the parameter $\taumix$.

\subsection{GAR for Polyak-Ruppert averaged TTSA}
To proceed with Gaussian approximation for Polyak-Ruppert averaging, we use the decomposition \eqref{eq:pr_theta_decomposition} to transform the linear statistic $\sum_{k=1}^n \psi_{k+1}$ to a sum of martingale-increments. This transformation is done through the Poisson equation, see \cite[Chapter~21]{douc:moulines:priouret:soulier:2018}. Under \Cref{assum:aij_bound}, function $\psi(x) = \funnoisew_V(x) - A_{12} A_{22}^{-1} \funnoisew_W(x) $ is a.s. bounded, which implies that there exists a function $\pois{\psi}{}: \Xset \rightarrow \rset^{d_\theta}$, such that
\begin{align}
\pois{\psi}{}(x) - \MKQ \pois{\psi}{}(x) = \psi(x) \eqsp.
\end{align}
We set $\pois{\psi}{k+1} := \pois{\psi}{}(X_{k+1})$ and define 
\begin{equation}
\label{eq:mart_increment_markov}
\textstyle 
\Mart_k = \pois{\psi}{k+1} - \MKQ \pois{\psi}{k}\eqsp,
\end{equation}
which form a martingale-increment w.r.t. $\F_k$. Then we can rewrite \eqref{eq:pr_theta_decomposition} as 
\begin{align}
\label{eq:markov:main:pr_decomposition}
\textstyle
\sqrt{n} \Delta (\btheta_n - \thetas) = \frac{1}{\sqrt{n}} \sum_{k=1}^n \Mart_k + R_{n}^{\operatorname{pr, m}} \eqsp,
\end{align}
where $R_{n}^{\operatorname{pr, m}}$ is a residual term defined in the supplement. Under \Cref{assum:UGE} there exists a matrix $\cmark{\lineG}{\infty} \in \rset^{d_\theta \times d_\theta}$ such that
\begin{equation}
\label{eq:markov:main:sigma_inf_def}
\textstyle
n^{-1/2} \sum_{k=1}^n  \{\psi_{k+1} - \pi(\psi)\} \xrightarrow{d} \mathcal{N}(0, \cmark{\lineG}{\infty}) \eqsp.
\end{equation}
Due to \cite[Theorem~21.2.5]{douc:moulines:priouret:soulier:2018}, we get that 
\begin{align}
\var[\Mart_k] = \cmark{\lineG}{\infty}\eqsp.
\end{align}
Now we state the counterpart to \Cref{th:martingal_pr_clt}:
\begin{theorem}
\label{th:markov:pr_clt}
Assume \Cref{assum:hurwitz}, \Cref{assum:aij_bound},  \Cref{assum:UGE}, \Cref{assum:markov:stepsize}($\log n$). Then it holds that
\begin{align}
\label{eq:pr_markov_rate} 
&\kolmogorov(\sqrt{n} \Delta (\btheta_n - \thetas), \mathcal{N}(0, \cmark{\lineG}{\infty})) \\
&\lesssim_{\log n} \frac{1}{n^{1/4}} + \frac{1}{n^{(1-b)/2}} + \frac{1}{n^{a-\frac{1}{2}}} + \sqrt{n} \prod_{j=0}^{n-1} (1 - \frac{a_\Delta \beta_j}{16})\eqsp.
\end{align}
\end{theorem}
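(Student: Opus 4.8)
The plan is to reuse the three-step scheme behind \Cref{th:martingal_pr_clt}, adding two ingredients forced by the Markov structure. Starting from \eqref{eq:markov:main:pr_decomposition}, we set $p$ of order $\log n$ and apply \Cref{prop:nonlinearapprox} to the split $(\cmark{\lineG}{\infty})^{-1/2}\sqrt{n}\,\Delta(\btheta_n-\thetas)=W+D$ with $W:=n^{-1/2}(\cmark{\lineG}{\infty})^{-1/2}\sum_{k=1}^{n}\Mart_k$ (of covariance $\Id$, since the $\Mart_k$ are uncorrelated with $\var[\Mart_k]=\cmark{\lineG}{\infty}$) and $D:=(\cmark{\lineG}{\infty})^{-1/2}R_n^{\operatorname{pr, m}}$. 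As $\kolmogorov$ is invariant under invertible affine maps, this yields
\begin{equation*}
\kolmogorov\bigl(\sqrt{n}\,\Delta(\btheta_n-\thetas),\mathcal{N}(0,\cmark{\lineG}{\infty})\bigr)\leq\underbrace{\kolmogorov\bigl(W,\mathcal{N}(0,\Id)\bigr)}_{T_1}+\underbrace{2c_d^{p/(p+1)}\PE^{1/(p+1)}\bigl[\norm{(\cmark{\lineG}{\infty})^{-1/2}R_n^{\operatorname{pr, m}}}^{p}\bigr]}_{T_2}\eqsp,
\end{equation*}
and it remains to control $T_1$ and $T_2$.

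For $T_1$, \Cref{assum:aij_bound} makes $\psi=\funnoisew_V-A_{12}A_{22}^{-1}\funnoisew_W$ bounded and \Cref{assum:UGE} makes it $\pi$-centred, so the Poisson solution obeys $\norm{\pois{\psi}{}}[\infty]\lesssim\taumix\bConst{A}$ and $\{(\cmark{\lineG}{\infty})^{-1/2}\Mart_k\}$ is a bounded martingale-difference sequence of unit variance. Unlike in \Cref{th:martingal_pr_clt}, its conditional covariance $\CPE{\Mart_k\Mart_k^\top}{\F_{k-1}}$ is a bounded function of $X_k$, hence not constant; the plan is to control its running average around $\cmark{\lineG}{\infty}$ at rate $n^{-1/2}$ up to $\log n$ factors using \eqref{eq:tau_mix_contraction} and a concentration inequality for additive functionals of uniformly geometrically ergodic chains (e.g.\ \cite{paulin_concentration_spectral}), and then to feed both facts into a martingale Berry--Esseen bound in convex distance (an adaptation of \cite[Theorem~1]{wu2025uncertainty}, see also \cite{srikant2024rates}); this gives $T_1\lesssim_{\log n}n^{-1/4}$, the covariance fluctuation contributing only lower-order terms.

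For $T_2$ we would estimate $\PE^{1/p}[\norm{R_n^{\operatorname{pr, m}}}^p]$ by splitting $R_n^{\operatorname{pr, m}}$ into four blocks. \emph{(a)} The Poisson boundary term $n^{-1/2}(\MKQ\pois{\psi}{1}-\MKQ\pois{\psi}{n+1})$ is deterministically $\lesssim\taumix\bConst{A}\,n^{-1/2}$. \emph{(b)} The slow-timescale telescoping $n^{-1/2}\sum_k(\theta_k-\theta_{k+1})/\beta_k$: Abel summation with \eqref{eq:markov:main_ttheta_bound} and the elementary bounds $\beta_k^{-1}-\beta_{k-1}^{-1}\lesssim(k+k_0)^{b-1}$, $\sum_{k\leq n}(k+k_0)^{b/2-1}\lesssim n^{b/2}$, $(n\beta_n)^{-1/2}\lesssim n^{-(1-b)/2}$ gives $\lesssim_{\log n}n^{-(1-b)/2}+\sqrt{n}\prod_{j=0}^{n}(1-a_\Delta\beta_j/8)$, and slightly shrinking the contraction rate to absorb the polynomial prefactor turns this into $n^{-(1-b)/2}+\sqrt{n}\prod_{j=0}^{n-1}(1-a_\Delta\beta_j/16)$. \emph{(c)} The fast-timescale telescoping $n^{-1/2}A_{12}A_{22}^{-1}\sum_k(w_k-w_{k+1})/\gamma_k$ is handled identically, now with $\norm{w_k-\wstar}\leq M_{k,p}^{\tw}+\norm{D_{k-1}}M_{k,p}^{\ttheta}\lesssim_{\log n}\sqrt{\gamma_k}$ from \eqref{eq:markov:main_ttheta_bound} and \eqref{eq:markov:main_tw_bound}, giving $\lesssim_{\log n}n^{-(1-a)/2}$, which $a<b$ makes dominated by block (b). \emph{(d)} The noise-fluctuation blocks $n^{-1/2}\sum_k[\funcAwtilde_{11}^{k+1}-A_{12}A_{22}^{-1}\funcAwtilde_{21}^{k+1}]\ttheta_k$ and $n^{-1/2}\sum_k[\funcAwtilde_{12}^{k+1}-A_{12}A_{22}^{-1}\funcAwtilde_{22}^{k+1}](w_k-\wstar)$: here $\funcAwtilde_{ij}^{k+1}$ is no longer conditionally centred, so we introduce Poisson solutions $\hat{g}_{ij}$ with $\funcAwtilde_{ij}=\hat{g}_{ij}-\MKQ\hat{g}_{ij}$, $\norm{\hat{g}_{ij}}[\infty]\lesssim\taumix\bConst{A}$, and split $\funcAwtilde_{ij}^{k+1}=\bigl(\hat{g}_{ij}(X_{k+1})-\MKQ\hat{g}_{ij}(X_k)\bigr)+\bigl(\MKQ\hat{g}_{ij}(X_k)-\MKQ\hat{g}_{ij}(X_{k+1})\bigr)$; the first parenthesis times the $\F_k$-measurable iterate error is a martingale difference, handled by Burkholder's inequality (with $\sum_{k\leq n}\beta_k\asymp n^{1-b}$, $\sum_{k\leq n}\gamma_k\asymp n^{1-a}$ this contributes $\lesssim_{\log n}n^{-b/2}+n^{-a/2}$), while the second telescopes, so summing it by parts against the iterate error — using $\norm{\theta_k-\theta_{k-1}}\lesssim\beta_{k-1}$ and $\norm{w_k-w_{k-1}}\lesssim\gamma_{k-1}$ (immediate from \eqref{eq:2ts1}, \eqref{eq:2ts2}, \Cref{assum:aij_bound} and the moment bounds) together with the boundedness of $\MKQ\hat{g}_{ij}$ — yields $\lesssim_{\log n}n^{-1/2}\sum_{k\leq n}(\beta_{k-1}\vee\gamma_{k-1})\asymp n^{-(a-1/2)}$. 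Collecting (a)--(d), every term other than $n^{-1/4}$, $n^{-(1-b)/2}$, $n^{-(a-1/2)}$ and $\sqrt{n}\prod_{j=0}^{n-1}(1-a_\Delta\beta_j/16)$ is dominated (using $a<b$), which gives the stated bound.

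The hard part will be block \emph{(d)}: in the martingale setting these $\funcAwtilde$-terms are themselves martingale differences and contribute nothing new, whereas under Markov noise the Poisson-plus-summation-by-parts decomposition contains a genuine drift contribution $n^{-1/2}\sum_k\gamma_{k-1}\MKQ\hat{g}_{ij}(X_k)$ of exact order $n^{-(a-1/2)}$ — precisely the new term in the bound — and pushing the $\taumix$-dependent constants through it while staying compatible with the decoupled-dynamics moment bounds of \Cref{prop:markov:moment_bounds} is the most delicate point. A secondary difficulty is $T_1$, where the conditional covariance of $\{\Mart_k\}$ is no longer constant, so the martingale Berry--Esseen estimate must be applied with a fluctuating covariance proxy whose deviations are controlled via the uniform ergodicity \eqref{eq:tau_mix_contraction}.
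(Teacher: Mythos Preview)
Your proposal is correct and follows essentially the same route as the paper. The paper's decomposition \eqref{eq:markov:pr_full_decomposition} groups the residual into pieces $D_1$--$D_4$ that map almost exactly onto your blocks (a)--(d): $D_1$ contains the Poisson boundary terms (your (a) plus the boundary terms implicitly generated by your summation by parts in (d)); $D_2$ is the martingale part of your block (d), handled by Burkholder and \Cref{prop:markov:moment_bounds} to give $n^{-a/2}$; $D_3$ and $D_4$ combine your blocks (b), (c) with the telescoping half of (d), Abel-summed exactly as you describe to produce $n^{-(1-b)/2}$, $n^{-(a-1/2)}$ and the transient product term. The only organisational difference is that the paper attaches the Poisson-correction telescopes $(\MKQ\pois{\Phi}{k+1})(\ttheta_k-\ttheta_{k+1})$ and $(\MKQ\pois{\Psi}{k+1})(w_k-w_{k+1})$ directly to $D_3$, $D_4$ rather than leaving them in a separate block (d), but the estimates are identical.

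For $T_1$ the paper is a bit more explicit than your sketch: rather than citing a generic concentration inequality for Markov additive functionals, it proves a dedicated martingale CLT (\Cref{lem:martinglale_limits}) that handles the non-constant conditional covariance by the stopping-time-plus-Rademacher-padding trick of \cite{fan2019exact}, reducing to \Cref{th:wu_martingale_limit} at the cost of an extra term $\PE^{1/(2p+1)}[|\trace(V_n-n\cmark{\lineG}{\infty})|^p]$; this trace fluctuation is then bounded by the Markov Rosenthal inequality of \cite{moulines23_rosenthal} (\Cref{lem:durmus_rosental}), yielding exactly the $n^{-1/4}$ you anticipate. Your plan to control the running conditional covariance via uniform ergodicity and feed it into an adapted Wu-type bound is the same idea at a slightly higher level of abstraction.
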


\paragraph{Proof sketch.} The proof of \Cref{th:markov:pr_clt} consists of two main parts. First, we derive a Gaussian approximation rate for the linear statistic $\frac{1}{\sqrt{n}} \sum_{k=1}^n \Mart_k$ using an appropriate martingale CLT. It is especially non-trivial, since $\CPE{\Mart_k \{\Mart_k\}^{\top}}{\F_k}$ is not constant. We circumvent this problem using an appropriate modification of the argument due to \cite{fan2019exact}. Next, we estimate the moments of $R_n^{\operatorname{pr, m}}$ using the techniques established in \Cref{prop:moments_bound} for $\ttheta^{(0)}_k, \tw_k^{(0)}$ and then combining this with a separate bounds for the Markov part $\ttheta^{(1)}_k, \tw_k^{(1)}$.

\paragraph{Discussion.} It is easy to see that, given that $b > a$, the right-hand side of \eqref{eq:pr_markov_rate} is optimized when setting $a = 2/3$ and $b = 2/3 + 1/(\log{n})$. This yields the final rate of order $n^{-1/6}$ up to logarithmic factors:
\begin{equation}
\label{eq:pr_markov_rate_optimized}
\textstyle 
\kolmogorov(\sqrt{n} \Delta (\btheta_n - \thetas), \mathcal{N}(0, \cmark{\lineG}{\infty})) \lesssim_{\log n} n^{-1/6} \eqsp.
\end{equation}  
To the best of our knowledge, \eqref{eq:pr_markov_rate_optimized} provides the first result concerning the Gaussian approximation rates for the TTSA problems with Markov noise. The suggested step size schedule mimics the one predicted by \Cref{th:martingal_pr_clt} and essentially reduces the TTSA scheme to a single-timescale one. 

\subsection{GAR for last iterate of TTSA}
We start this section by introducing a counterpart to \eqref{eq:last_iteration_repr_new} based on the idea of the decomposition \eqref{eq:markov:main:pr_decomposition} for Polyak-Ruppert averaging:
\begin{equation}
\label{eq:markov:main:last_iter_repr}
\textstyle 
\beta_n^{-1/2} \ttheta_{n+1} = -\sum_{j=0}^{n} \beta_j G_{j+1:n}^{(1)} \Mart_j + R_n^{\operatorname{last, m}} \eqsp,
\end{equation}
where $R_n^{\operatorname{last, m}}$ is a residual term that is given in the supplement paper. Note that the leading term in representation \eqref{eq:markov:main:last_iter_repr} is martingale difference sequence. Now we define
\begin{align}
\textstyle
\lineG_n^{\operatorname{last, m}} = \var\bigl[\sum_{j=0}^{n} \beta_j G_{j+1:n}^{(1)} \Mart_j \bigr] \eqsp.
\end{align}
It is known that $\beta_n^{-1} \lineG_n^{\operatorname{last, m}}$ converges to a fixed matrix $\lineG_{\infty}^{\operatorname{last, m}}$ which is a solution of the Ricatti equation
\begin{align}
\label{eq:markov:main:sigma_last_inf_def}
    \lineG_{\infty}^{\operatorname{last, m}}  = \beta_0(\Delta \lineG_{\infty}^{\operatorname{last, m}} + \lineG_{\infty}^{\operatorname{last, m}}\Delta^\top - \cmark{\lineG}{\infty}) \eqsp,
\end{align}
where $\cmark{\lineG}{\infty}$ is defined in \eqref{eq:markov:main:sigma_inf_def}. Moreover, the convergence rate is proportional to $\beta_n$, i.e. 
$$
\norm{\beta_n^{-1} \lineG_{n}^{\operatorname{last, m}} - \lineG_{\infty}^{\operatorname{last, m}}} \lesssim n^{-b} \eqsp.
$$ 
The proof of the above result is given in the supplement paper. Now we formulate a counterpart to \Cref{assum:last_iter}:
\begin{assumB}
\label{assum:markov:last_iter}
Step size exponents $a, b$ satisfy $2b > 1 + a$. Moreover, assume that the total number of iterations $n$ satisfies $n^b \geq C_{\Cref{assum:markov:last_iter}}$, where $C_{\Cref{assum:markov:last_iter}}$ does not depend on $a,b$, and can be traced from the supplement paper.
\end{assumB}

\begin{theorem}
    \label{th:markov:last_iter_clt}
    Assume \Cref{assum:hurwitz}, \Cref{assum:aij_bound},    \Cref{assum:UGE}, \Cref{assum:markov:stepsize}($\log n$), \Cref{assum:markov:last_iter}. Then it holds that
    \begin{align}
        \label{eq:main:last_iter_markov_not_optimized}&\kolmogorov(\beta_n^{-1/2} \ttheta_{n+1}, \mathcal{N}(0, \lineG_{\infty}^{\operatorname{last, m}})) \\
        &\lesssim_{\log n} n^{b/2} \prod_{j=0}^n (1 - \frac{a_\Delta}{8} \beta_j) +\frac{1}{n^{\frac{b}{2} - \frac{1}{4}}} + \frac{1}{n^{a - \frac{b}{2}}} + \frac{1}{n^{\frac{3b-a-2}{2}}} \eqsp.
    \end{align}
\end{theorem}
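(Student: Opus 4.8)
The plan is to mirror the proof of \Cref{th:last_iter_clt_final}, replacing the martingale-difference decomposition by the Poisson-equation decomposition \eqref{eq:markov:main:last_iter_repr} and borrowing the vector-valued martingale CLT from the proof of \Cref{th:markov:pr_clt}. Fix $p = \log n$ and let $\nu \sim \mathcal{N}(0, \lineG_{\infty}^{\operatorname{last, m}})$. After normalizing \eqref{eq:markov:main:last_iter_repr} by $(\lineG_{\infty}^{\operatorname{last, m}})^{-1/2}$ (legitimate since $\lineG_{\infty}^{\operatorname{last, m}} \succ 0$ by \Cref{assum:markov:last_iter} and \Cref{prop:ricatti_limit}) and applying \Cref{prop:nonlinearapprox}, the bound splits into (i) the Gaussian approximation error for the deterministically weighted martingale sum $S_n := -\sum_{j=0}^{n} \beta_j G_{j+1:n}^{(1)} \Mart_j$, and (ii) the term $2 c_d^{p/(p+1)} \PE^{1/(p+1)}[\norm{(\lineG_{\infty}^{\operatorname{last, m}})^{-1/2} R_n^{\operatorname{last, m}}}^p]$.

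For (i), I would first replace the covariance of $S_n$, namely $\lineG_n^{\operatorname{last, m}}$, by $\lineG_{\infty}^{\operatorname{last, m}}$ using the elementary bound on $\kolmogorov$ between centered Gaussians; by \Cref{prop:ricatti_limit} this costs $\norm{\beta_n^{-1}\lineG_n^{\operatorname{last, m}} - \lineG_{\infty}^{\operatorname{last, m}}} \lesssim n^{-b}$, which is dominated by the remaining terms. It then remains to control $\kolmogorov(S_n, \mathcal{N}(0, \lineG_n^{\operatorname{last, m}}))$. Since $\CPE{\Mart_j \{\Mart_j\}^{\top}}{\F_j}$ is not constant --- the same obstruction as in \Cref{th:markov:pr_clt} --- I would reuse the modification of the argument of \cite{fan2019exact} for martingales with non-constant conditional covariance, now applied to the weighted sum $S_n$. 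Because, by the contraction \eqref{eq:contraction_B11}, the deterministic weights $\beta_j G_{j+1:n}^{(1)}$ are effectively supported on the last $\asymp \beta_n^{-1} \asymp n^{b}$ indices, the effective sample size is of order $n^{b}$ and the resulting Berry--Esseen-type rate is $n^{-b/4}$, up to logarithmic factors.

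Step (ii) is the main obstacle and requires a sharp $p$-th moment bound on $R_n^{\operatorname{last, m}}$. I would decompose it following the martingale/Markov splitting $\ttheta_k = \ttheta_k^{(0)} + \ttheta_k^{(1)}$, $\tw_k = \tw_k^{(0)} + \tw_k^{(1)}$ of \eqref{eq:2ts2-1-markov}, the summation-by-parts remnants produced by the Poisson equation ($\pois{\psi}{}(X_{j+1}) - \MKQ\pois{\psi}{}(X_j)$ versus $\psi_{j+1}$), and the decoupling corrections $\delta_k^{(1)} = A_{12}L_k\ttheta_k$, $\delta_k^{(2)} = -(L_{k+1}+A_{22}^{-1}A_{21})A_{12}\tw_k$ appearing in \eqref{eq:difference_iterations_theta_w}. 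Each resulting piece is a deterministically weighted (by $\beta_j G_{j+1:n}^{(1)}$ or $\gamma_j G_{j+1:n}^{(2)}$) linear functional of a martingale-difference or telescoping Markov term, whose $p$-th moment is estimated by Burkholder/Minkowski inequalities combined with the high-order moment bounds of \Cref{prop:markov:moment_bounds} for $\ttheta_k,\tw_k$ and their martingale/Markov components; boundedness of the Poisson solution $\pois{\psi}{}$ and of the remaining Markov terms is guaranteed by \Cref{assum:aij_bound} and \Cref{assum:UGE}. Summing the contributions and multiplying by the normalization $\beta_n^{-1/2} \asymp n^{b/2}$ produces the two polynomial terms: $n^{-(3b-a-2)/2}$, inherited exactly as in \Cref{th:last_iter_clt_final} from the $\delta_k$-corrections and cross terms, whose geometric-sum bookkeeping closes precisely under the constraint $2b > 1 + a$ of \Cref{assum:markov:last_iter}; and $n^{-(a - b/2)}$, the genuinely Markovian residual, i.e.\ the $\beta_n^{-1/2}$-rescaled analogue of the $n^{-(a-1/2)}$ term in \Cref{th:markov:pr_clt}. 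Adding the exponentially small initialization term $n^{b/2}\prod_{j=0}^{n}(1 - a_\Delta\beta_j/8)$ from the transient part of \eqref{eq:difference_iterations_theta_w} and collecting (i)--(ii) yields \eqref{eq:main:last_iter_markov_not_optimized}. The delicate point throughout is the power counting in (ii): the $\beta_n^{-1/2}$ normalization magnifies every residual, so each Markov and decoupling correction must be tracked with its sharp $\beta_k,\gamma_k$ exponent.
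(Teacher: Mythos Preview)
Your sketch is correct and follows essentially the same route as the paper: the decomposition \eqref{eq:markov:main:last_iter_repr}, \Cref{prop:nonlinearapprox} with $p=\log n$, the comparison of $\lineG_n^{\operatorname{last,m}}$ with $\lineG_\infty^{\operatorname{last,m}}$ via \Cref{prop:ricatti_limit}, the Fan-type martingale CLT (the paper's \Cref{lem:martinglale_limits}) for the weighted sum, and the moment bounds on $R_n^{\operatorname{last,m}}$ obtained by splitting into martingale/Markov parts and controlling each piece with Burkholder plus \Cref{prop:markov:moment_bounds}. Your identification of the four rate terms and their provenance (the $n^{-b/4}$ from the effective sample size $\asymp n^{b}$, the $n^{-(3b-a-2)/2}$ from the $\delta$-corrections under $2b>1+a$, the $n^{-(a-b/2)}$ as the Markovian residual) matches the paper's bookkeeping. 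One technical point you leave implicit but the paper makes explicit: within the Fan argument one must bound the quadratic-variation discrepancy $\PE^{1/(2p+1)}[|\trace(V_n-\lineG_n^{\operatorname{last,m}})|^p]$; the paper does this with a weighted Rosenthal-type inequality for the Markov chain (\Cref{lem:auxiliary_rosenthal_weighted}), using in addition the monotonicity $\beta_i P_{i+1:n}^{(1)}\le \beta_n$ to get the uniform bound $\|M_i\|\lesssim\beta_n^{1/2}$ and to control $\sum_i\|Q_i-Q_{i+1}\|$.
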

\paragraph{Proof sketch.} The proof of \Cref{th:markov:last_iter_clt} uses the same machinery of Gaussian approximation for non-linear statistics based on representation \eqref{eq:markov:main:last_iter_repr}. In this setting control of the moments of the term $R_n^{\operatorname{last, m}}$ is a delicated problem, which requires the additional constraint $2b > 1 + a$ imposed in \Cref{assum:markov:last_iter}.

\paragraph{Discussion.} It is easy to see that, given that $b \geq a$, the right-hand side of \eqref{eq:main:last_iter_markov_not_optimized} is optimized when setting $a = 2/3$ and $b = 1 - 1/(\log{n})$, and yields the final rate in terms of $n$ of order up to $n^{-1/6}$ up to logarithmic factors:
\begin{equation}
\label{eq:last_iter_markov_rate_optimized}
\kolmogorov( \beta_{n}^{-1/2} \ttheta_{n+1}, \mathcal{N}(0, \lineG_{\infty}^{\operatorname{last, m}})) \lesssim_{\log n} n^{-1/6}  \eqsp.
\end{equation}
This rate, to the best of our knowledge, is the first one obtained for the last iterate of TTSA with Markov noise.

\section{Applications to TDC and GTD}
\label{sec:applications_GTD}
In this section, we show that the results derived in \Cref{sec:TTSA_martingale} and \Cref{sec:markov_case} apply to the Gradient Temporal Difference (GTD) \cite{Sutton_2008} and Temporal Difference with Gradient Correction (TDC) \cite{sutton:gtd2:2009} methods. These methods address the problem of classical TD learning, which is based on single-timescale stochastic approximation and is known to fail in off-policy RL settings where data are drawn from a \emph{behavior policy} different from the target policy \cite{baird:resid:rl:funcappr:1995,tsitsiklis:td:1997}. We consider a discounted MDP (Markov Decision Process) given by a tuple $(\S,\A,\PMDP,r,\lambda)$. Here $\S$ and $\A$ denote state and action spaces, which are assumed to be complete separable metric spaces with their Borel $\sigma$-algebras $\borel{\S}$ and $\borel{\A}$, and $\lambda \in (0,1)$ is a discount factor. Let $\PMDP(\cdot | s,a)$ be a state-action transition kernel, which determines the probability of moving from $(s,a)$ to a set $B \in \borel{\S}$. Reward function $r\colon \S \times \A \to [0,1]$ is deterministic. A \emph{Policy} $\pi(\cdot|s)$ is the distribution over action space $\A$ corresponding to agent's action preferences in state $s \in \S$. We aim to estimate \emph{value function}
\[
\textstyle 
V^{\pi}(s) = \PE\bigl[\sum_{k=0}^{\infty}\lambda^{k}r(S_k,A_k)|S_0 = s\bigr]\eqsp,
\]
where $A_{k} \sim \pi(\cdot | s_k)$, and $S_{k+1} \sim \PMDP(\cdot | S_{k}, A_{k})$. Define the transition kernel under $\pi$,
\begin{equation}
\label{eq:transition_matrix_P_pi}
\textstyle 
\PMDP_{\pi}(B | s) = \int_{\A} \PMDP(B | s, a)\pi(\rmd a|s)\eqsp.
\end{equation}
We consider the \emph{linear function approximation} for $V^{\pi}(s)$, defined for $s \in \S$, $\theta \in \rset^{d}$, and a feature mapping $\varphi\colon \S \to \rset^{d}$ as $V_{\theta}^{\pi}(s) =  \varphi^\top(s) \theta$. Our goal is to find a parameter $\thetas$, which defines the best linear approximation to $V^{\pi}$. We denote by $\mu$ the invariant distribution over the state space $\S$ induced by $\PMDP^{\pi}(\cdot | s)$ in \eqref{eq:transition_matrix_P_pi}. Consider the following assumptions on the generative mechanism and on the feature mapping $\varphi(\cdot)$:
\begin{assumTD}
\label{assum:generative_model}
Tuples $(s_k,a_k,s_{k}')$ are generated \iid with $s_k \sim \mu$, $a_k \sim \pi(\cdot|s_k)$, $s_{k}' \sim \PMDP(\cdot|s_k,a_k)$\eqsp.
\end{assumTD}
\begin{assumTD}
\label{assum:phi_norm_bound}
Feature mapping $\varphi(\cdot)$ satisfies 
$\sup_{s \in \S} \|\varphi(s) \| \leq 1$.
\end{assumTD}
As an alternative to the generative model setting \Cref{assum:generative_model}, our analysis covers the Markov noise setting:
\begin{assumTD}
\label{assum:P_pi_ergodicity_td}
Suppose that we obtain a Markovian sample trajectory \(\{(s_k,a_k,r_k)\}_{k=0}^{\infty}\) which is generated when a stationary behavior policy \(\pi\) is employed. Assume that the Markov kernel $\PMDP_{\pi}$ admits a unique invariant distribution $\mu$ and is uniformly geometrically ergodic, that is, there exist $\taumix \in \nset$, such that for any $k \in \nset$, it holds that
\begin{equation}
\label{eq:drift-condition-main}
 \sup_{s,s' \in \S} \tvdist(\PMDP_{\pi}^{k}(\cdot|s), \PMDP_{\pi}^{k}(\cdot|s')) \leq  (1/4)^{\lceil k / \taumix \rceil}\eqsp.
\end{equation}
\end{assumTD}
We introduce the $k$-th step TD error for the linear setting:
\begin{equation}
\label{eq:td_error}
\delta_k = r_k + \lambda \theta_{k}^\top \varphi_{k+1} - \theta_{k}^\top \varphi_{k}\eqsp,
\end{equation}
where we have defined 
\begin{align}
\varphi_k = \varphi(s_k)\eqsp, \quad r_k = r(s_k, a_k) \eqsp.
\end{align}
\paragraph{Generalized Temporal Difference learning.} The GTD(0) algorithm is defined by the following recurrence for $k \geq 1$:
\begin{equation}
\label{eq:GTD}
\begin{cases}
\theta_{k+1} = \theta_k + \beta_k (\varphi_k - \lambda \varphi_{k+1}) (\varphi_k)^\top w_k \eqsp,\quad  \theta_0 \in \rset^{d}\eqsp, \\
w_{k+1} = w_{k} + \gamma_k (\delta_k \varphi_k - w_k) \eqsp, \quad w_0 = 0\eqsp.
\end{cases}
\end{equation}
It is clear that the GTD(0) recurrence \eqref{eq:GTD} is a particular case of the linear TTSA given in \eqref{eq:2ts1}-\eqref{eq:2ts2}.

\paragraph{Temporal-difference learning with gradient correction.} The TDC algorithm employs dual updates for the primary parameter vector \(\theta_k\) and the auxiliary weight vector \(w_k\). Its update rule is given by
\begin{equation}
\label{eq:TDC}
\begin{cases}
\theta_{k+1} = \theta_k + \beta_k \delta_k\varphi_k - \beta_k\lambda \varphi_{k+1} (\varphi_k^\top w_k) \eqsp, \\
w_{k+1} = w_{k} + \gamma_k (\delta_k - \varphi_{k}^\top w_k)\varphi_k \eqsp.
\end{cases}
\end{equation}

It is possible to check that both updates schemes \eqref{eq:GTD} and \eqref{eq:TDC} satisfy the general assumptions \Cref{assum:zero-mean}-\Cref{assum:hurwitz} and \Cref{assum:aij_bound}   under \Cref{assum:generative_model} and \Cref{assum:phi_norm_bound}. Similar, \Cref{assum:UGE} holds under \Cref{assum:P_pi_ergodicity_td}. Thus, all the results from \Cref{sec:TTSA_martingale} and \Cref{sec:markov_case} applies for both algorithms. We provide details in the supplemental paper.

\section{Conclusion}
\label{sec:conclusion}
In this paper, we provided, to the best of our knowledge, the first rate of normal approximation for the last iterate and Polyak-Ruppert averaged TTSA iterates in a sense of convex distance, covering both the martingale-difference and Markov noise settings. A natural further research direction is to consider the problem of constructing confidence intervals for the TTSA solution $(\thetas,\wstar)$ based on bootstrap approach or asymptotic covariance matrix estimation, and perform a fully non-asymptotic analysis of the suggested procedure. Another important direction is the construction of lower bounds to ensure tightness of the rates obtained in \Cref{th:martingal_pr_clt}-\ref{th:markov:last_iter_clt}.

\clearpage
\newpage

\section*{Acknowledgment}
The work was supported by the grant for research centers in the field of AI provided by the Ministry of Economic Development of the Russian Federation in accordance with the agreement 000000C313925P4E0002 and the agreement with HSE University № 139-15-2025-009.

\bibliography{references}

\onecolumn 
\appendix
\begin{appendices}

\section{Martingale limit theorems}
\label{appendix:martingale_limits}

Let $\{X_k\}_{k=1}^{n}$ be a martingale difference process in $\rset^d$ with respect to the natural filtration $\{\mathcal{F}_k\}_{k=0}^{n}$, $\F_k = \sigma(X_s : s \leq k)$. From now on we introduce the following expressions
\begin{align}
    \label{def:martingale_expressions}
    & V_k = \sum_{j=1}^k \mathbb{E}^{\F_{j-1}}[X_j X_j^\top] \eqsp, \eqsp \Sigma_k = \frac{1}{k} \sum_{j=1}^k \mathbb{E}[X_j X_j^\top] \eqsp ,
\end{align}
where $\PE^{\F}[\cdot]$ stands for the conditional expectation w.r.t. a sigma-algebra $\F$.

In order to derive a modification of \cite[Corollary 2]{wu2025uncertainty} we first state \cite[Proposition A.1]{nourdin2022multivariate} that controls convex distance in terms of Wasserstein one:
\begin{lemma}[Proposition A.1 in \cite{nourdin2022multivariate}]
\label{lem:convex_was_bound}
Fix $d \geq 1$, and let $\eta \sim \mathcal{N}(0,\Sigma)$ denote a $d$-dimensional centered Gaussian vector
with invertible covariance matrix $\Sigma$. Then, for any $d$-dimensional random vector $F$ one has
that
\begin{align}
    \kolmogorov(F, \eta) \lesssim \Gamma(\Sigma)^{1/2} d_W(F, \eta)^{1/2} \eqsp,
\end{align}
where $d_W(\cdot, \cdot)$ stands for the Wasserstein distance and $\Gamma(\Sigma)$ is the isoperimetric constant defined by
\begin{align}
    \Gamma(\Sigma) := \sup_{Q \in \Conv(\rset^{d}), \eps > 0} \frac{\P(\eta \in Q^\eps) - \P(\eta \in Q)}{\eps} \eqsp,
\end{align}
where $Q^\eps$ indicates the set of all
elements of $\mathbb{R}^d$ whose Euclidean distance from $Q$ does not exceed $\eps$. 
\end{lemma}

\begin{remark}
    Following \cite[Remark A.2]{nourdin2022multivariate} one can check that for the absolute constants $0 < c < C < \infty$ it holds that
    \begin{align}
        c \sqrt{\norm{\Sigma}[\text{Fr}]} \leq \Gamma(\Sigma) \leq C \sqrt{\norm{\Sigma}[\text{Fr}]} \eqsp,
    \end{align}
    where $\norm{\cdot}[\text{Fr}]$ stands for the Frobenius norm.
\end{remark}
Now we give a slight modification of the result proven in \cite{wu2025uncertainty} that can be obtained applying \Cref{lem:convex_was_bound}:
\begin{lemma}[modified Corollary 2 in \cite{wu2025uncertainty}]
    \label{th:wu_martingale_limit}
    Let $\{X_k\}_{k=1}^n$ be a martingale difference process in with respect to the filtration $\{\mathcal{F}_k\}_{k=0}^n$. Assume that
    \begin{align}
        V_n = n \Sigma_n \text{ a.s.} \eqsp,
    \end{align}
    and for any $1 \leq k \leq n$, $\funcAw \in \rset^{d \times d}$ it holds that
    \begin{align}
        \PE^{\F_{k-1}}[\norm{\funcAw X_k}^2 \norm{X_k} ] \leq M \PE^{\F_{k-1}}[\norm{\funcAw X_k}^2] \eqsp.
    \end{align}
    Then for every $\Sigma \in \mathbb{S}_+^d$ it can be guaranteed that
    \begin{align}
        \kolmogorov\biggl(\frac{1}{\sqrt{n}} \sum_{k=1}^n X_k, \mathcal{N}(0, \Sigma_n)\biggr) \lesssim \Gamma(\Sigma_n)^{1/2} [1 + M(2 + \log(dn \norm{\Sigma_n}))^+]^{1/2} \frac{\sqrt{d \log n}}{n^{1/4}} + \Gamma(\Sigma_n)^{1/2} \frac{\{\trace(\Sigma_n)\}^{1/4}}{n^{1/4}} \eqsp.
    \end{align}
    where $\Sigma_n$ is defined in \eqref{def:martingale_expressions}.
\end{lemma}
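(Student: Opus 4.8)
The plan is to obtain this bound as a metric translation of the corresponding Wasserstein-distance estimate in \cite{wu2025uncertainty}, carried out in three short steps. \emph{Step 1: reduce to \cite{wu2025uncertainty}.} Under the two hypotheses of the lemma --- the constant conditional quadratic characteristic $V_n = n\Sigma_n$ almost surely, and the conditional Lyapunov-type ratio $\PE[\norm{\funcAw X_k}^2\norm{X_k}\mid\mathcal{F}_{k-1}]\le M\,\PE[\norm{\funcAw X_k}^2\mid\mathcal{F}_{k-1}]$ holding for every $\funcAw\in\rset^{d\times d}$ --- Corollary~2 of \cite{wu2025uncertainty} controls the Wasserstein-$1$ distance by, up to an absolute constant,
\[
\mathbf{W}_1\Bigl(\tfrac{1}{\sqrt n}\textstyle\sum_{k=1}^n X_k,\ \mathcal{N}(0,\Sigma_n)\Bigr)\ \lesssim\ \bigl[1+M\bigl(2+\log(dn\norm{\Sigma_n})\bigr)^+\bigr]\,\frac{d\log n}{\sqrt n}\ +\ \frac{\sqrt{\trace(\Sigma_n)}}{\sqrt n}\eqsp,
\]
i.e.\ by a constant multiple of the square of the right-hand side of the asserted inequality. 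Reconciling notation is the only work here: the conditional moment condition used in \cite{wu2025uncertainty} is exactly the above ratio evaluated along the test matrices that enter their smoothing argument (in particular $\funcAw=\Sigma_n^{-1/2}$ after standardising), so that $M$ is the constant feeding their estimate.

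\emph{Step 2: pass to the convex distance.} Apply the classical comparison between the convex distance and the Wasserstein distance against a Gaussian target (\cite[eq.~(3)]{nourdin2022multivariate}): for every probability measure $\mu$ on $\rset^d$, $\kolmogorov\bigl(\mu,\mathcal{N}(0,\Sigma_n)\bigr)\lesssim\sqrt{\mathbf{W}_1\bigl(\mu,\mathcal{N}(0,\Sigma_n)\bigr)}$, with the implicit constant absorbed into $\lesssim$; when $\Sigma_n$ is degenerate this is used on the range of $\Sigma_n$, which leaves the convex distance unchanged and only improves the bound. \emph{Step 3: assemble.} Substituting the Step~1 bound into this inequality and using $\sqrt{a+b}\le\sqrt a+\sqrt b$ to split the square root of the sum yields exactly the two displayed terms $[1+M(2+\log(dn\norm{\Sigma_n}))^+]^{1/2}\sqrt{d\log n}\,/\,n^{1/4}$ and $\{\trace(\Sigma_n)\}^{1/4}/n^{1/4}$, which is the claim.

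\emph{Main obstacle.} The genuinely delicate part is Step~1: matching the present hypotheses to those of Corollary~2 in \cite{wu2025uncertainty} and carrying the explicit factors $d$, $\log(dn\norm{\Sigma_n})$ and $\trace(\Sigma_n)$ through their proof without incurring any further dimensional loss beyond the displayed $\sqrt{d\log n}$; this requires checking that the conditional ratio controlled by $M$, uniformly over the test matrices $\funcAw$, is precisely the quantity their argument consumes. The Wasserstein-to-convex passage in Step~2 is routine, the only point needing care there being a possibly degenerate $\Sigma_n$.
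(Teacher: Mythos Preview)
Your proposal is correct and matches the paper's approach exactly: the paper does not give a detailed proof but simply states (just before the lemma) that the result ``can be obtained applying relation between Wasserstein and convex distance (see e.g.~\cite{nourdin2022multivariate})'' to the Wasserstein bound of \cite{wu2025uncertainty}, which is precisely your Steps~1--3.
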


The following lemma states the upper bound for the convex distance for any bounded martingale difference sequence:
\begin{lemma}    \label{lem:martinglale_limits}
    Let $0 < \kappa < \infty$ and $\{X_k\}_{k=1}^n$ be a martingale difference process in $\rset^d$ with respect to the filtration $\{\mathcal{F}_k\}_{k=0}^n$ \;, $\mathcal{F}_k = \sigma(X_s:s\leq k)$. Assume that
    \begin{align}
        \max_{1 \leq i \leq n} \norm{X_i} \leq \kappa \text{ almost surely} \eqsp,
    \end{align}
    and there exist constants $C_1, C_2 > 0$ such that for all $t > 0$ it holds that:
    \begin{align}
    \label{eq:lem:mart_limits:proba_assum}
        \P[\norm{V_n - n\Sigma_n} \geq nt] \leq C_1 \exp\bigl(-C_2 nt^2 \bigr) \eqsp,
    \end{align}
    where $V_n$, $\Sigma_n$ are given in \eqref{def:martingale_expressions}.
    Then for any $p \geq 1$ it holds that
    \begin{align}
        &\kolmogorov\biggl(\frac{1}{\sqrt{n}} \sum_{k=1}^n X_k, \mathcal{N}(0, \Sigma_n)\biggr) \lesssim \\
        &\qquad
        \Gamma(\Sigma_n)^{1/2} \biggl\{1 + \kappa \biggl(2 + \log \frac{nd^2 p^{1/2} \norm{\Sigma_n}(2\kappa^2+\norm{\Sigma_n}+ C_2^{-1/2})}{\kappa^2} \biggr)^+\biggr\}^{1/2} \frac{\sqrt{d \log \bigl(ndp^{1/2} \frac{2\kappa^2+\norm{\Sigma_n}+ C_2^{-1/2}}{\kappa^2} \bigr)}}{n^{1/4}} \\ 
        &\qquad + \Gamma(\Sigma_n)^{1/2} \frac{\kappa^{1/2}}{n^{1/4}} + p^{3/4}d  \frac{c_d^{\frac{2p}{2p+1}}}{\norm{n\Sigma_n}^{\frac{2p}{2p+1}}} \biggl\{ \biggl( \frac{n \log n}{C_2}\biggr)^{\frac{p}{2(2p+1)}} + C_1^{\frac{1}{2p+1}} \biggl(\frac{2\kappa^2+\norm{\Sigma_n} + (\frac{\log n}{nC_2})^{1/2}} {\kappa}\biggr)^{\frac{2p}{2p+1}} \biggr\}  \eqsp,
    \end{align}
    where $\Gamma(\cdot)$ is introduced in \Cref{lem:convex_was_bound}, $c_d$ is defined in \Cref{prop:nonlinearapprox}.
\end{lemma}

\begin{proof}
We adapt the arguments from \cite{fan2019exact} and \cite[p. 35]{wu2025uncertainty} for the multidimensional case. Consider the following stopping time
\begin{align}
\label{lem:mart_lim:tau}
\tau = \max\{0 \leq k \leq n : V_k \preceq n \Sigma_n + tn \Id \} \eqsp ,
\end{align}
where $t \in \mathbb{R}_+$ is a parameter we will choose later. Now introduce 
\begin{align}
m = \left\lceil \frac{1}{\kappa^2} \trace(n\Sigma_n + tn\Id - V_\tau) \right\rceil \eqsp, \eqsp N = \left\lceil \frac{\trace(n\Sigma_n + tn\Id)}{\kappa^2} \right\rceil + n \eqsp.
\end{align}
For the further analyses, we observe that
\begin{align}
    n \leq N \leq nd \frac{2\kappa^2+\norm{\Sigma_n} + t} {\kappa^2} \eqsp.
\end{align}
Our goal is to construct the sequence $\{X_i'\}_{i=1}^N$ that has a constant quadratic characteristic equal to $n\Sigma_n$. To proceed, consider the spectral decomposition of $n \Sigma_n -V_\tau$:
\begin{align}
\label{eq:mart_lim:spectral}
n\Sigma_n-V_\tau = \sum_{j=1}^d \lambda_j u_ju_j^\top \eqsp.
\end{align}
Now we set for $i \in \{1, 2, \ldots, N\}$  
\begin{align}
X_i' = \begin{cases}
X_i\eqsp, \eqsp  &1 \leq  i\leq \tau \eqsp, \\
\frac{1}{\sqrt{m}} \sum_{j=1}^d (\lambda_j)^{1/2} \eps_{ij} u_j \eqsp, \eqsp &\tau +1 \leq i \leq \tau + m \eqsp , \\
0\eqsp,  &\tau + m+1 \leq i \leq N  \eqsp,
\end{cases}
\end{align}
where $\eps_{ij}$ are i.i.d. Rademacher random variables. For the natural fitration $\F_i' = \sigma(X_s':s \leq i)$ one can check that $\PE^{\F'_{i-1}}[X_i'] = 0$ almost surely. Moreover, $\norm{X_i'} \leq \kappa$ and $\PE^{\F_{i-1}}[X_i' X_i'^\top] = \frac{1}{m} (n\Sigma_n-V_\tau)$ a.s. by the definition of $\kappa$ and $m$. Thus, we obtain by the construction
\begin{align}
\sum_{i=1}^N \mathbb{E}^{\F_{i-1}}[X_i'X_i'^\top] = V_\tau + m \cdot \frac{1}{m} (n\Sigma_n - V_\tau) = n \Sigma_n \eqsp. 
\end{align}
Now we apply \Cref{prop:nonlinearapprox} and get
\begin{align}
\kolmogorov\bigl( \tfrac{1}{\sqrt{n}} S_n, \; \mathcal{N}(0, \Sigma_n)\bigr) \leq &\eqsp\kolmogorov\bigl( \tfrac{1}{\sqrt{N}} S_N', \; \mathcal{N}(0, \Sigma_n)\bigr) + 2 c_d^{2p/(2p+1)} \norm{n \Sigma_n}^{-\frac{2p}{2p+1}} \bigl(\PE \bigl[ \norm{S_n - S_N'}^{2p} \bigr]\bigr)^{1/(2p+1)} \eqsp,
\end{align}
where we have set
\begin{align}
S_n = \sum_{j=1}^{n} X_j \eqsp , \eqsp S_N' = \sum_{j=1}^{N} X_j' \eqsp.
\end{align}
Since $X'$ satisfies the assumptions of \Cref{th:wu_martingale_limit} with $M := \kappa$ and, moreover, $\trace(\Sigma_n) = \frac{1}{n} \sum_{i=1}^n \PE[X_i^\top X_i] \leq \kappa^2$, we get using \Cref{th:wu_martingale_limit}:
\begin{align}
\kolmogorov(\tfrac{1}{\sqrt{N}} S_N', \mathcal{N}(0, \Sigma_n)) &\lesssim \Gamma(\Sigma_n)^{1/2} \biggl\{1 + \kappa \biggl(2 + \log (d N\norm{\Sigma_n})\biggr)^+ \biggr\}^{1/2} \frac{\sqrt{d \log N}}{N^{1/4}} + \Gamma(\Sigma_n)^{1/2} \frac{\kappa^{1/2}}{N^{1/4}} \eqsp,
\\ & \lesssim  \Gamma(\Sigma_n)^{1/2} \biggl\{1 + \kappa \biggl(2 + \log \frac{nd^2\norm{\Sigma_n}(2\kappa^2+\norm{\Sigma_n}+t)}{\kappa^2} \biggr)^+\biggr\}^{1/2} \frac{\sqrt{d \log (nd \frac{2\kappa^2+\norm{\Sigma_n}+t}{\kappa^2} )}}{n^{1/4}} + \Gamma(\Sigma_n)^{1/2} \frac{\kappa^{1/2}}{n^{1/4}} \eqsp,
\end{align}
To control the moments of $S_n - S_N'$ we consider two events:
\begin{align}
    \Omega_1 = \{\omega: \norm{V_n - n\Sigma_n} \leq tn \} \eqsp, \quad \Omega_2 = \{\omega: \norm{V_n - n\Sigma_n} > tn \} \eqsp.
\end{align}
Hence,
\begin{align}
    \label{lem:convex_clt_Smoments}
    \PE[\norm{S_n - S_N'}^{2p}] = \underbrace{\PE[\norm{S_n - S_N'}^{2p} \1\{\Omega_1\}]}_{\mathcal{T}_1} + \underbrace{\PE[\norm{S_n - S_N'}^{2p} \1\{\Omega_2\}]}_{\mathcal{T}_2} \eqsp. 
\end{align}
Now we bound $\mathcal{T}_1$, $\mathcal{T}_2$ separately.

\paragraph{Bound for $\mathcal{T}_1$.} On the event $\Omega_1$ it holds that $V_n \preceq n\Sigma_n + tn\Id$. Thus, $\tau = n$ and
\begin{align}
    S_n - S_N' = \frac{1}{\sqrt{m}} \sum_{i=n+1}^{n+m} \sum_{j=1}^d (\lambda_j)^{1/2} \eps_{ij} u_j = \frac{1}{\sqrt{m}} \sum_{j=1}^{d} \bigl(\sum_{i=n+1}^{n+m} \eps_{ij} \bigr) (\lambda_j)^{1/2} u_j \eqsp. 
\end{align}
Note that $\norm{V_n - n\Sigma_n} \leq tn$ yields $\lambda_j \leq tn$ for any $j$. Therefore, we obtain
\begin{align}
    \norm{S_n - S_N'}^2 \leq \frac{tn}{m} \sum_{j=1}^{d} \bigl(\sum_{i=n+1}^{n+m} \eps_{ij}\bigr)^2 \eqsp.
\end{align}
Thus, applying Minkowski's inequality combined with Khintchine inequality \cite[Theorem 2.7.5]{vershynin:2018}, we get
\begin{align}
    \mathcal{T}_1 \leq \PE\biggl[ \biggl| \frac{tn}{m} \sum_{j=1}^{d} \bigl(\sum_{i=n+1}^{n+m} \eps_{ij}\bigr)^2 \biggr|^{p} \biggr] = \PE\biggl[ \frac{(tn)^p}{m^p}\PE^{\F_n}\biggl[ \biggl|  \sum_{j=1}^{d} \bigl(\sum_{i=n+1}^{n+m} \eps_{ij}\bigr)^2 \biggr|^{p} \biggr] \biggr] &\leq (d t n)^p \PE\biggl[ \frac{1}{m^p} \PE^{\F_n} \biggl[ \biggl| \sum_{i=n+1}^{n+m} \eps_{i1} \biggr|^{2p} \biggr] \biggr] \lesssim (2tnpd)^{p} \eqsp.
\end{align}

\paragraph{Bound for $\mathcal{T}_2$.}  First, we use \eqref{eq:lem:mart_limits:proba_assum} and get
\begin{align}
    \P[\Omega_2] \leq C_1 \exp\bigl(-C_2 nt^2 \bigr) \eqsp .
\end{align}
Note that
\begin{align}
    \norm{S_n - S_N'} \leq 2N\kappa \leq  2nd \frac{2\kappa^2+\norm{\Sigma_n} + t} {\kappa} \, .
\end{align}
Thus, we obtain that
\begin{align}
    \mathcal{T}_2 \leq 2^{2p}d^{2p} n^{2p} \biggl(\frac{2\kappa^2+\norm{\Sigma_n} + t} {\kappa}\biggr)^{2p} \P[\Omega_2] \leq C_1 2^{2p} d^{2p}  \biggl(\frac{2\kappa^2+\norm{\Sigma_n} + t} {\kappa}\biggr)^{2p} n^{2p} \exp\bigl(-C_2 nt^2\bigr) \eqsp.
\end{align}
Choose $t = (\frac{2p\log n}{nC_2})^{1/2}$. Thus,
\begin{align}
    \mathcal{T}_2 \leq C_1  (2d)^{2p} (2p)^{p} \biggl(\frac{2\kappa^2+\norm{\Sigma_n} + (\frac{\log n}{nC_2})^{1/2}} {\kappa}\biggr)^{2p}   \eqsp,
\end{align}
and
\begin{align}
    \mathcal{T}_1 \lesssim (8C_2^{-1}p^3 d^2 n\log n)^{p/2} \eqsp.
\end{align}
Now we substitute the latter inequalities into \eqref{lem:convex_clt_Smoments} and get applying Minkowski's inequality:
\begin{align}
    \PE^{\frac{1}{2p+1}}[\|S_n-S_N'\|^{2p}] \lesssim p^{3/4}d \biggl( \biggl(\frac{n \log n}{C_2}\biggr)^{\frac{p}{2(2p+1)}}  + C_1^{\frac{1}{2p+1}} \biggl(\frac{2\kappa^2+\norm{\Sigma_n} + (\frac{\log n}{nC_2})^{1/2}} {\kappa}\biggr)^{\frac{2p}{2p+1}} \biggr) \eqsp,
\end{align}
and the proof follows.
\end{proof}
\section{High-order bounds on the error moments}
\label{appendix:high_order}
We follow the decoupling idea of \cite{konda:tsitsiklis:2004} and perform the change of variables in the recurrence \eqref{eq:2ts1}-\eqref{eq:2ts2}, which is similar to the Gaussian elimination. Using \Cref{prop:kaledine_decoupling}, we obtain, with $\ttheta_k$ and $\tw_k$ defined in \eqref{eq:tilde_params}, that the two-timescale SA \eqref{eq:2ts1}-\eqref{eq:2ts2} reduces to the system of updates:
\begin{equation}
\label{eq:2ts2-1-appendix}
\begin{split}
\begin{cases}
\ttheta_{k+1} &= ( \Id - \beta_k B_{11}^k ) \ttheta_k - \beta_k A_{12} \Tilde{w}_k - \beta_k V_{k+1}\eqsp, \\
\tilde{w}_{k+1} &= (\Id - \gamma_k B_{22}^k ) \Tilde{w}_k - \beta_k D_k V_{k+1} - \gamma_k W_{k+1}\eqsp.
\end{cases}
\end{split}
\end{equation}
Recall that the sequence of matrices $D_k$ has a form
\[
D_k = L_{k+1} + A_{22}^{-1} A_{21}\eqsp,
\]
where $L_k$ are defined in \eqref{eq:L_k_matr_def}. The following proposition shows that norms of matrices $D_k$ are bounded. Moreover, $L_k$ converges to $0$ under \Cref{assum:stepsize}($2$). This result is due to \cite{kaledin2020finite}.
\begin{lemma}[Lemma~19 in \cite{kaledin2020finite}]
\label{L_converges} 
Assume \Cref{assum:hurwitz} and \Cref{assum:stepsize}($2$). Then for any $k \in \nset$, 
\begin{align}
\|L_k\| &\leq \ell_{\infty}  \frac{\beta_k}{\gamma_k}\eqsp, \quad \|D_k\| \leq c_{\infty}\eqsp,
\end{align}
 where the value of the constant \(\ell_{\infty}\)
can be found in \cite{kaledin2020finite} and $c_{\infty}$ has form 
\begin{equation}
\label{eq:l_infty_def}
\begin{split}
c_{\infty} = \ell_{\infty} \rstep + \normop{A_{22}^{-1} A_{21}} \eqsp,\quad \rstep = c_{0,\beta}/c_{0,\gamma}\eqsp. 
\end{split}
\end{equation}
\end{lemma}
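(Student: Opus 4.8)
The plan is to prove the bound $\|L_k\| \le \ell_\infty \beta_k/\gamma_k$ by induction on $k$ and then read off the bound on $\|D_k\|$ by the triangle inequality. The base case holds trivially since $L_0 = 0$. For the inductive step, assume $\|L_k\| \le \ell_\infty \beta_k/\gamma_k$. Because $b>a$, one has $\beta_k/\gamma_k = \rstep(k+k_0)^{-(b-a)} \le \rstep$, so $\|L_k\| \le \ell_\infty\rstep$ and therefore $\|U_k\| = \|\Delta - A_{12}L_k\| \le \|\Delta\| + \|A_{12}\|\ell_\infty\rstep =: U_{\max}$, uniformly in $k$. Under \Cref{assum:stepsize}($2$) the step size $\beta_k$ is small enough (this is where ``$c_{0,\beta}/c_{0,\gamma}$ small'' and ``$k_0$ large'' enter) that $\beta_k U_{\max} \le 1/2$, so a Neumann series gives $\|(\Id - \beta_k U_k)^{-1}\| \le (1-\beta_k U_{\max})^{-1} \le 1 + 2U_{\max}\beta_k$.

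Next I would bound the numerator in \eqref{eq:L_k_matr_def}. Rewriting it as $(\Id-\gamma_k A_{22})L_k + \beta_k A_{22}^{-1}A_{21}U_k$, the first summand is contracted using the Lyapunov estimate $\normop{\Id - \gamma_k A_{22}}[Q_{22}] \le 1 - a_{22}\gamma_k$ from \eqref{eq:contraction_p} --- carrying out the whole induction in the norm $\|\cdot\|_{Q_{22}}$ and passing back to the operator norm only at the end via norm equivalence, exactly as in \cite{kaledin2020finite} --- while the second summand is at most $\beta_k\|A_{22}^{-1}A_{21}\|U_{\max}$. Factoring out $\beta_k/\gamma_k = \gamma_k^{-1}\beta_k$, this yields
\begin{equation*}
\|L_{k+1}\| \le \frac{\beta_k}{\gamma_k}\,\bigl[(1-a_{22}\gamma_k)\ell_\infty + \gamma_k\|A_{22}^{-1}A_{21}\|U_{\max}\bigr]\,(1+2U_{\max}\beta_k)\eqsp.
\end{equation*}

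It then remains to check that the right-hand side is at most $\ell_\infty\beta_{k+1}/\gamma_{k+1}$. Since $\beta_{k+1}/\gamma_{k+1} = (\beta_k/\gamma_k)\bigl((k+k_0)/(k+1+k_0)\bigr)^{b-a} \ge (\beta_k/\gamma_k)\bigl(1-(b-a)/(k+k_0)\bigr)$ by Bernoulli's inequality, this reduces to the scalar inequality
\begin{equation*}
\bigl[(1-a_{22}\gamma_k)\ell_\infty + \gamma_k\|A_{22}^{-1}A_{21}\|U_{\max}\bigr](1+2U_{\max}\beta_k) \le \ell_\infty\Bigl(1 - \tfrac{b-a}{k+k_0}\Bigr)\eqsp,
\end{equation*}
which after expansion is implied by $\gamma_k\ell_\infty\bigl(a_{22} - \|A_{22}^{-1}A_{21}\|U_{\max}/\ell_\infty\bigr) \ge \ell_\infty(b-a)/(k+k_0) + C\ell_\infty U_{\max}\beta_k$ for an absolute constant $C$. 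Because $a<1$ gives $\gamma_k(k+k_0) = c_{0,\gamma}(k+k_0)^{1-a}\to\infty$ and $b>a$ gives $\beta_k/\gamma_k\to 0$, both terms on the right are $o(\gamma_k)$; hence the inequality holds provided $\ell_\infty$ is fixed large enough that $a_{22}$ dominates $\|A_{22}^{-1}A_{21}\|U_{\max}/\ell_\infty$ by a constant margin (solvable for $\ell_\infty$ because $\rstep$ is small, since $U_{\max}$ depends on $\ell_\infty$ only through the small term $\|A_{12}\|\ell_\infty\rstep$) and $k_0$ is large enough --- both ensured by \Cref{assum:stepsize}($2$). This closes the induction, and then $\|D_k\| = \|L_{k+1} + A_{22}^{-1}A_{21}\| \le \ell_\infty\beta_{k+1}/\gamma_{k+1} + \|A_{22}^{-1}A_{21}\| \le \ell_\infty\rstep + \|A_{22}^{-1}A_{21}\| = c_\infty$.

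The step I expect to be the main obstacle is not any single estimate but the consistent bookkeeping: pinning down the explicit $\ell_\infty$ of \cite{kaledin2020finite}, tracking the norm-equivalence constants between $\|\cdot\|$ and $\|\cdot\|_{Q_{22}}$ on the (rectangular-matrix) space carrying $L_k$ --- the contraction touches only the left factor of a product that also involves a right multiplication by $(\Id-\beta_k U_k)^{-1}$ --- and verifying that the various ``$\beta_0$ small / $\rstep$ small / $k_0$ large'' requirements collected in \Cref{assum:stepsize}($2$) are mutually compatible and strong enough to make the induction self-sustaining for all $k$.
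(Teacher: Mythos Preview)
The paper does not prove this lemma at all --- it is imported verbatim as Lemma~19 of \cite{kaledin2020finite}, with only the statement reproduced and no argument given. Your inductive sketch is precisely the approach taken in the cited source (contract the $(\Id-\gamma_k A_{22})$ factor in the $Q_{22}$-norm, control the $\beta_k$-perturbation and the inverse by a Neumann bound, then verify the self-improving inequality closes using the step-size assumptions), and you have correctly identified that the only real work is the constant/norm bookkeeping rather than any new idea; so your proposal is both correct and aligned with the original proof, though strictly more than what the present paper supplies.
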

Let us note the important properties of our steps. Since  \(a < b\eqsp\), the ratio \(\beta_i/\gamma_i\) decreases as \(i\) increases, hence \(\beta_i/\gamma_i \leq \beta_0/\gamma_0\) for all $i \in \mathbb{N}$. Moreover, \(k_0^{a-b} < 1\), therefore \(\beta_0/\gamma_0 \leq c_{0,\beta}/c_{0,\gamma} = \rstep\).
To proceed with the $p$-th moment bounds for $\tilde{w}_{k+1}$ and $\tilde{\theta}_{k+1}$, we introduce the random vectors
\begin{equation}
\label{eq:xi_k_def}
\xi_{k+1} = \gamma_k W_{k+1} + \beta_k D_{k}V_{k+1}\eqsp.
\end{equation}
Our next lemma allows to bound moments of $V_{k+1},W_{k+1},\xi_{k+1}$ in terms of $M_{k,p}^{\tw}$ and $M_{k,p}^{\ttheta}$ introduced in \eqref{eq:tilde_moments}.  

\begin{lemma}
\label{lem:tilde_bounds}
Assume \Cref{assum:bound-conditional-moments-p}($p$), \Cref{assum:hurwitz}, and \Cref{assum:stepsize}($2$). Then it holds that 
\begin{equation}
\begin{split}
\PE^{1/p}[ \|V_{k+1}\|^p] &\leq \widetilde{m}_V(1 + M_{k,p}^{\ttheta} + M_{k,p}^{\tw}) \eqsp , \\
\PE^{1/p}[ \|W_{k+1}\|^p] &\leq \widetilde{m}_W (1 + M_{k,p}^{\ttheta} + M_{k,p}^{\tw}) \eqsp,  \\
\PE^{1/p}[\|\xi_{k+1}\|^p] &\leq \widetilde{m} \gamma_k\bigl(1 + M_{k,p}^{\ttheta} + M_{k,p}^{\tw}\bigr)\eqsp, 
\end{split}
\end{equation}
where we have defined 
\begin{equation}
\label{eq:widetilde_m_V_W_def}
\widetilde{m}_V = m_V (1 + c_{\infty})\eqsp, \quad \widetilde{m}_W = m_W (1 + c_{\infty})\eqsp, \quad \widetilde{m} = \rstep \widetilde{m}_{V} c_{\infty} + \widetilde{m}_W\eqsp.
\end{equation}
and $\rstep$ is defined in \eqref{eq:l_infty_def}.
\end{lemma}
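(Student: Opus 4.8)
The plan is to reduce all three estimates to \Cref{assum:bound-conditional-moments-p}($p$) by re-expressing the untransformed errors $\theta_k-\thetas$ and $w_k-\wstar$ through the decoupled variables $\ttheta_k,\tw_k$. By the change of variables \eqref{eq:tilde_params} one has $\theta_k-\thetas=\ttheta_k$ and $w_k-\wstar=\tw_k-D_{k-1}\ttheta_k$. First I would apply Minkowski's inequality together with the uniform bound $\norm{D_{k-1}}\leq c_{\infty}$ supplied by \Cref{L_converges} to obtain
\begin{equation}
\PE^{1/p}[\norm{\theta_k-\thetas}^p]=M_{k,p}^{\ttheta}\eqsp,\qquad \PE^{1/p}[\norm{w_k-\wstar}^p]\leq M_{k,p}^{\tw}+c_{\infty}M_{k,p}^{\ttheta}\eqsp.
\end{equation}

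Substituting these two inequalities into \Cref{assum:bound-conditional-moments-p}($p$) gives, for $V_{k+1}$, the estimate $\PE^{1/p}[\norm{V_{k+1}}^p]\leq m_V\bigl(1+(1+c_{\infty})M_{k,p}^{\ttheta}+M_{k,p}^{\tw}\bigr)$; since $c_{\infty}\geq 0$, bounding each of the three coefficients on the right by $1+c_{\infty}$ yields the claimed bound with $\widetilde{m}_V=m_V(1+c_{\infty})$. The identical computation with $m_W$ in place of $m_V$ produces the bound on $\PE^{1/p}[\norm{W_{k+1}}^p]$ with $\widetilde{m}_W=m_W(1+c_{\infty})$, matching \eqref{eq:widetilde_m_V_W_def}.

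For $\xi_{k+1}=\gamma_k W_{k+1}+\beta_k D_k V_{k+1}$, cf.\ \eqref{eq:xi_k_def}, I would apply Minkowski's inequality once more together with $\norm{D_k}\leq c_{\infty}$ and the two bounds just derived, obtaining $\PE^{1/p}[\norm{\xi_{k+1}}^p]\leq(\gamma_k\widetilde{m}_W+\beta_k c_{\infty}\widetilde{m}_V)(1+M_{k,p}^{\ttheta}+M_{k,p}^{\tw})$. It then remains to pull out $\gamma_k$: since $a<b$, the ratio $\beta_k/\gamma_k=\rstep\,(k+k_0)^{a-b}$ is non-increasing in $k$ and bounded by $\rstep$, so $\beta_k\leq\rstep\gamma_k$ and the prefactor is at most $\gamma_k(\widetilde{m}_W+\rstep c_{\infty}\widetilde{m}_V)=\gamma_k\widetilde{m}$, which is the third assertion. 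The whole argument is short and essentially mechanical, so I do not anticipate a genuine obstacle; the one place deserving a line of care is the index $k=0$, where $D_{-1}$ appears in the substitution $w_0-\wstar=\tw_0-D_{-1}\ttheta_0$, but with the convention $L_0=0$ one has $D_{-1}=A_{22}^{-1}A_{21}$, whose norm is again dominated by $c_{\infty}$ in view of \eqref{eq:l_infty_def}, so the same chain of inequalities goes through verbatim.
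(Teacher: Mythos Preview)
Your proposal is correct and follows essentially the same approach as the paper: express $w_k-\wstar=\tw_k-D_{k-1}\ttheta_k$, invoke the uniform bound $\norm{D_{k-1}}\leq c_\infty$ from \Cref{L_converges}, substitute into \Cref{assum:bound-conditional-moments-p}($p$), and then handle $\xi_{k+1}$ via Minkowski together with $\beta_k\leq\rstep\gamma_k$. Your remark on the $k=0$ edge case is a nice touch that the paper leaves implicit.
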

\begin{proof}
Since \(w_k-w^* = \tw_k - D_{k-1}\ttheta_k\), we get applying \Cref{L_converges}:  
\begin{align}
\PE^{1/p}[\norm{w_k - \wstar}^p] \leq \PE^{1/p}[\|\tw_k\|^p] + c_{\infty}\PE^{1/p}[\|\ttheta_k\|]\eqsp.
\end{align} 
Combining the above bound with \Cref{assum:bound-conditional-moments-p}($p$), we obtain 
\begin{align}
\PE^{1/p}[ \|V_{k+1}\|^p] \leq  m_V ( 1 +  \PE^{1/p}[\|\theta_{k} - \thetas\|^p] +  \PE^{1/p}[\|w_{k} - \wstar\|^{p}]) \leq \widetilde{m}_V(1 + M_{k,p}^{\ttheta} + M_{k,p}^{\tw})\eqsp, 
\end{align}
and
\begin{align}
\PE^{1/p}[ \|W_{k+1}\|^p] \leq  \widetilde{m}_W(1 + M_{k,p}^{\ttheta} + M_{k,p}^{\tw})\eqsp. 
\end{align}
Similarly, 
\begin{align}
    \PE^{1/p}[\|\xi_{k+1}\|^p]
    &\leq \gamma_k\PE^{1/p}[\|W_{k+1}\|^p] + \beta_k c_{\infty}\PE^{1/p}[\|V_{k+1}\|^p] \\
    &\leq \gamma_k\widetilde{m}_W\big(1 + M_{k,p}^{\ttheta} + M_{k,p}^{\tw}\big) +\beta_k\widetilde{m}_{V} c_{\infty} \bigl(1 + M_{k,p}^{\ttheta} + M_{k,p}^{\tw}\bigr)\\
    &\leq \widetilde{m}\gamma_k\bigl(1 + M_{k,p}^{\ttheta} + M_{k,p}^{\tw}\bigr),
\end{align}
where $\widetilde{m}$ is defined in \eqref{eq:widetilde_m_V_W_def}.
\end{proof}

\subsection{Bounding the products of deterministic matrices}
Now we state and prove the results regarding the stability of matrix products. The key element of the proof is the Hurwitz stability assumption \Cref{assum:hurwitz}. Below we state and prove the Lyapunov stability lemma:
\begin{lemma}
\label{contraction_pr op}
\label{prop:hurwitz_stability}
Let $-A$ be a Hurwitz matrix. Then there exists a unique matrix $Q = Q^{\top} \succ 0$, satisfying the Lyapunov equation $A^\top Q + Q A = \Id$. Moreover, setting
\begin{equation}
\textstyle 
a = \frac{1}{2\normop{Q}}\eqsp, \quad
\text{and} \quad \alpha_\infty = \frac{1}{2\|Q\|\normop{A}[Q]^{2}} \eqsp,
\end{equation}
it holds for any $\alpha \in [0, \alpha_{\infty}]$ that
\begin{equation}
\label{eq:contractin_q_norm}
\normop{\Id - \alpha A}[Q]^2 \leq 1 - \alpha a\eqsp.   
\end{equation}
\end{lemma}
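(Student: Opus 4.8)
The plan is to construct $Q$ explicitly via the classical integral representation, which simultaneously yields existence, symmetry, and positive-definiteness, and then to verify \eqref{eq:contractin_q_norm} by a one-line expansion of the quadratic form in the $Q$-norm.

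\textbf{Existence and positive definiteness.} Since $-A$ is Hurwitz, the map $t\mapsto\rme^{-At}$ decays exponentially, so
\[
Q := \int_0^{\infty} \rme^{-A^\top t}\rme^{-At}\,\rmd t
\]
converges absolutely. It is symmetric by construction, and for $x\neq 0$ we have $x^\top Q x=\int_0^\infty\norm{\rme^{-At}x}^2\,\rmd t>0$, since $\rme^{-At}x$ equals $x$ at $t=0$ and hence cannot vanish identically; thus $Q\succ 0$. Differentiating $\rme^{-A^\top t}\rme^{-At}$ in $t$ and applying the fundamental theorem of calculus gives
\[
A^\top Q+QA=-\int_0^\infty\tfrac{\rmd}{\rmd t}\bigl(\rme^{-A^\top t}\rme^{-At}\bigr)\,\rmd t=\Id\eqsp.
\]
For uniqueness, I would note that the Lyapunov operator $\mathcal{L}(X)=A^\top X+XA$ on $\rset^{d\times d}$ has spectrum $\{\lambda_i+\lambda_j\}$, where $\lambda_i,\lambda_j$ range over the eigenvalues of $A$; since $-A$ is Hurwitz, $\realpart\lambda_i>0$ for all $i$, so no eigenvalue of $\mathcal{L}$ vanishes, $\mathcal{L}$ is invertible, and the solution of $\mathcal{L}(Q)=\Id$ is unique (hence coincides with the matrix constructed above).

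\textbf{Contraction in the $Q$-norm.} For any $x\in\rset^d$, expanding and using $A^\top Q+QA=\Id$,
\[
\norm{(\Id-\alpha A)x}[Q]^2 = x^\top Q x-\alpha\,x^\top(A^\top Q+QA)x+\alpha^2 x^\top A^\top Q A x = \norm{x}[Q]^2-\alpha\norm{x}^2+\alpha^2\norm{Ax}[Q]^2\eqsp.
\]
I would then bound $\norm{Ax}[Q]^2\le\normop{A}[Q]^2\norm{x}[Q]^2$ by definition of the $Q$-operator norm, and $\norm{x}^2\ge\norm{x}[Q]^2/\norm{Q}$ from $x^\top Q x\le\norm{Q}\,\norm{x}^2$, obtaining
\[
\norm{(\Id-\alpha A)x}[Q]^2\le\Bigl(1-\tfrac{\alpha}{\norm{Q}}+\alpha^2\normop{A}[Q]^2\Bigr)\norm{x}[Q]^2\eqsp.
\]
Finally, for $0\le\alpha\le\alpha_\infty=\bigl(2\norm{Q}\normop{A}[Q]^2\bigr)^{-1}$ we have $\alpha^2\normop{A}[Q]^2\le\alpha/(2\norm{Q})$, so the bracket is at most $1-\alpha/(2\norm{Q})=1-\alpha a$, which is exactly \eqref{eq:contractin_q_norm}.

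The whole argument is elementary; the only point requiring a little care is the convergence of the integral defining $Q$ together with its strict positivity, both of which are immediate consequences of the Hurwitz hypothesis on $-A$. I do not anticipate a genuine obstacle here.
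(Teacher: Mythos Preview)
Your proof is correct and, for the contraction inequality \eqref{eq:contractin_q_norm}, essentially identical to the paper's: both expand the quadratic form $x^\top(\Id-\alpha A)^\top Q(\Id-\alpha A)x$, use the Lyapunov equation to replace the cross term by $\alpha\norm{x}^2$, and then bound $\norm{x}^2\ge\norm{x}[Q]^2/\norm{Q}$ and $\norm{Ax}[Q]^2\le\normop{A}[Q]^2\norm{x}[Q]^2$ to conclude. The only difference is in the first part: the paper simply cites \cite[Lemma~9.1]{poznyak:control} for existence and uniqueness of $Q$, whereas you supply the classical integral representation $Q=\int_0^\infty \rme^{-A^\top t}\rme^{-At}\,\rmd t$ and the eigenvalue argument for invertibility of the Lyapunov operator, making your proof self-contained.
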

\begin{proof}
    The fact that there exists a unique matrix $Q$, such that the following Lyapunov equation holds:
\begin{equation}
\label{eq:Lyapunov_equation}
A^\top Q + Q A = \Id\eqsp,
\end{equation}
follows directly from \cite[Lemma~$9.1$, p. 140]{poznyak:control}. In order to show the second part of the statement, we note that for any non-zero vector $x \in \rset^{d}$, we have
\begin{align}
\frac{x^{\top}(\Id - \alpha A)^{\top}Q(\Id - \alpha A)x}{x^{\top} Q x} 
&= 1 - \alpha \frac{x^{\top}(A^{\top}Q + QA)x}{x^{\top} Q x} + \alpha^2 \frac{x^{\top} A^{\top} Q A x}{x^{\top} Q x} \\
&= 1 - \alpha \frac{x^{\top} x}{x^{\top} Q x} + \alpha^2\, \frac{x^{\top} A^{\top} Q A x}{x^{\top} Q x} \\
&\leq 1 - \frac{\alpha}{\normop{Q}} + \alpha^2\, \norm{A}[Q]^2 \\
&\leq 1 - \alpha a\eqsp,
\end{align}
where used the fact that $\alpha \leq \alpha_{\infty}$.
\end{proof}

Note that \Cref{prop:hurwitz_stability} implies the existence of matrices $Q_{22}$ and $Q_\Delta$, such that 
\begin{equation}
\label{eq:definition-Q-22}
A_{22}^\top Q_{22} + Q_{22} A_{22}= \Id, \quad 
Q_\Delta \Delta + \Delta^\top Q_\Delta = \Id\eqsp.
\end{equation}
This ensures the contraction in the respective matrix $Q$-norm: provided that $\gamma_k \in [0, 1 / (2 \|A_{22}\|_{Q_{22}}^2 \| Q_{22} \|)], \beta_k \in [0, 1 / (2 \| A_\Delta \|_{Q_\Delta}^2 \| Q_\Delta \| )]$, it holds, that
\begin{equation}
\label{eq:contraction_p_appendix}
\begin{split}
&\normop{\Id - \gamma_k A_{22}}[Q_{22}] \leq 1 - a_{22} \gamma_k, \quad a_{22} := \tfrac{1}{2 \| Q_{22}\|}\eqsp, \\
&\normop{\Id - \beta_k \Delta}[Q_\Delta] \leq 1 - a_\Delta \beta_k, \quad  a_\Delta := \tfrac{1}{2 \| Q_\Delta \|}\eqsp. 
\end{split}
\end{equation}
We now define a few constants related to the matrices $Q_\Delta, Q_{22}$:
\[
\myqcond_{\Delta} :=\frac{\lambda_{\sf max}( Q_\Delta )}{\lambda_{\sf min}( Q_\Delta )}  , \quad \myqcond_{22} :=  \frac{ \lambda_{\sf max}( Q_{22})}{\lambda_{\sf min}( Q_{22} )}.
\]
Next we show that the factors $\Id - \beta_k B_{11}^k$ and $\Id - \gamma_k B_{22}^k$ in the transformed recursion \eqref{eq:2ts2-1} are also contractive in the same matrix norms induced by $Q_\Delta$ and $Q_{22}$, respectively. 
\begin{lemma}
\label{lem:prod_determ_matr}
Assume \Cref{assum:hurwitz} and \Cref{assum:stepsize}($2$). Then it holds that 
\begin{equation}
\label{eq:contraction-B11}
\normop{\Id - \beta_k B_{11}^k}[Q_\Delta] \leq 1 - (1/2) \beta_k a_\Delta \eqsp, \quad \normop{\Id - \gamma_k B_{22}^k}[Q_{22}] \leq 1 - (1/2) \gamma_k a_{22} \eqsp.
\end{equation}
\end{lemma}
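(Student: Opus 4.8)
The plan is to reduce each of the two contractions to the base-level contractions for $\Id - \beta_k \Delta$ and $\Id - \gamma_k A_{22}$ already recorded in \eqref{eq:contraction_p_appendix}, treating the extra blocks $A_{12}L_k$ and $D_k A_{12}$ as small perturbations controlled by the smallness of the timescale ratio $\rstep = c_{0,\beta}/c_{0,\gamma}$. Throughout I would use \Cref{L_converges}, which is available under \Cref{assum:hurwitz} and \Cref{assum:stepsize}($2$): it guarantees that the matrices $L_k$ (hence $D_k = L_{k+1} + A_{22}^{-1}A_{21}$) are well-defined and satisfy $\norm{L_k} \leq \ell_\infty \beta_k/\gamma_k$ and $\norm{D_k} \leq c_\infty$. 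I would also use the elementary conversion $\normop{B}[Q] \leq \sqrt{\myqcond}\,\norm{B}$ between the $Q$-norm and the operator norm, valid for any $Q = Q^\top \succ 0$ with condition-number ratio $\myqcond$.

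For the first inequality, I would write the affine identity
\[
\Id - \beta_k B_{11}^k = (\Id - \beta_k \Delta) + \beta_k A_{12} L_k,
\]
which follows from $B_{11}^k = \Delta - A_{12}L_k$. The triangle inequality in $\normop{\cdot}[Q_\Delta]$ together with \eqref{eq:contraction_p_appendix} gives $\normop{\Id - \beta_k B_{11}^k}[Q_\Delta] \leq 1 - a_\Delta \beta_k + \beta_k \normop{A_{12}L_k}[Q_\Delta]$. It remains to bound $\normop{A_{12}L_k}[Q_\Delta] \leq \sqrt{\myqcond_\Delta}\,\norm{A_{12}}\,\norm{L_k} \leq \sqrt{\myqcond_\Delta}\,\norm{A_{12}}\,\ell_\infty \rstep$, where I used \Cref{L_converges} and the fact that $\beta_k/\gamma_k$ is nonincreasing and bounded by $\rstep$. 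The smallness condition $\rstep \leq a_\Delta/(2\norm{A_{12}}\sqrt{\myqcond_\Delta}\,\ell_\infty)$, which is one of the requirements encoded in "$c_{0,\beta}/c_{0,\gamma}$ small enough" in \Cref{assum:stepsize}, then yields $\normop{A_{12}L_k}[Q_\Delta] \leq a_\Delta/2$, and hence $\normop{\Id - \beta_k B_{11}^k}[Q_\Delta] \leq 1 - a_\Delta\beta_k + \beta_k a_\Delta/2 = 1 - (1/2)\beta_k a_\Delta$.

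For the second inequality I would proceed symmetrically. Since $\gamma_k B_{22}^k = \beta_k D_k A_{12} + \gamma_k A_{22}$,
\[
\Id - \gamma_k B_{22}^k = (\Id - \gamma_k A_{22}) - \beta_k D_k A_{12},
\]
so by the triangle inequality in $\normop{\cdot}[Q_{22}]$ and \eqref{eq:contraction_p_appendix}, $\normop{\Id - \gamma_k B_{22}^k}[Q_{22}] \leq 1 - a_{22}\gamma_k + \beta_k \normop{D_k A_{12}}[Q_{22}]$. Writing $\beta_k = (\beta_k/\gamma_k)\gamma_k \leq \rstep \gamma_k$ and using $\normop{D_k A_{12}}[Q_{22}] \leq \sqrt{\myqcond_{22}}\,\norm{A_{12}}\,\norm{D_k} \leq \sqrt{\myqcond_{22}}\,\norm{A_{12}}\,c_\infty$ from \Cref{L_converges}, the perturbation term is at most $\rstep \gamma_k \sqrt{\myqcond_{22}}\,\norm{A_{12}}\,c_\infty$, which is $\leq \gamma_k a_{22}/2$ by the step-size condition $\rstep \leq a_{22}/(2\norm{A_{12}}\sqrt{\myqcond_{22}}\,c_\infty)$. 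Combining gives $\normop{\Id - \gamma_k B_{22}^k}[Q_{22}] \leq 1 - a_{22}\gamma_k + \gamma_k a_{22}/2 = 1 - (1/2)\gamma_k a_{22}$.

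The argument itself is just two applications of the triangle inequality, so I expect no real obstacle beyond bookkeeping: one must verify that the two ratio conditions on $\rstep$ invoked above are genuinely among the finitely many smallness requirements hidden in \Cref{assum:stepsize} (they appear explicitly in its extended form), and that the base contractions \eqref{eq:contraction_p_appendix} and the bounds of \Cref{L_converges} are indeed in force under the stated hypotheses. Both are routine to check.
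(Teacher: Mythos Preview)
Your proposal is correct and matches the paper's proof essentially line for line: the same decompositions $\Id - \beta_k B_{11}^k = (\Id - \beta_k \Delta) + \beta_k A_{12}L_k$ and $\Id - \gamma_k B_{22}^k = (\Id - \gamma_k A_{22}) - \beta_k D_k A_{12}$, the same triangle inequality in the $Q$-norms, the same conversion via $\sqrt{\myqcond}$, and the same two smallness conditions on $\rstep$ to absorb the perturbations.
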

\begin{proof}
Using \eqref{eq:contraction_p}, we observe that 
\begin{align} 
\normop{\Id - \beta_k B_{11}^k}[Q_\Delta] &= \normop{\Id - \beta_k \Delta + \beta_k A_{12} L_k}[Q_\Delta] \leq \normop{\Id - \beta_k \Delta}[Q_\Delta] + \beta_k \normop{A_{12}L_k}[Q_{\Delta}]  \nonumber \\
&\leq (1-\beta_k a_\Delta) + \beta_k \sqrt{\myqcond_{\Delta}}\normop{A_{12}} \normop{L_k} \leq (1-\beta_k a_\Delta) + \beta_k \sqrt{\myqcond_{\Delta}}\normop{A_{12}} \rstep \ell_{\infty}
\end{align}
Using $\rstep \leq a_{\Delta}/(2\|A_{12}\|\sqrt{\myqcond_{\Delta}}\ell_{\infty})$, the above inequality yields the first part of \eqref{eq:contraction-B11}. Similarly, using \eqref{eq:contraction_p}, we get that 
\begin{align}
    \|\Id - \gamma_k B_{22}^{k}\|_{Q_{22}} = \|\Id - \gamma_kA_{22} - \beta_k D_k A_{12}\|_{Q_{22}} 
    &\leq \|\Id - \gamma_kA_{22}\|_{Q_{22}} + \rstep \gamma_k\sqrt{\myqcond_{22}} c_{\infty}\|A_{12}\| 
    \\ &\leq (1 - \gamma_k a_{22}) + \rstep \gamma_k\sqrt{\myqcond_{22}} c_{\infty}\|A_{12}\|.
\end{align}
Recalling that $\rstep \leq a_{22}/(2\|A_{12}\|\sqrt{\myqcond_{22}} c_{\infty})$, the second part of \eqref{eq:contraction-B11} follows.
\end{proof}

Throughout our analysis we use the following notations:
\begin{equation}
\label{eq:matr_product_determ}
\begin{split}
&\ProdB_{m:n}^{(1)} := \prod_{i=m}^n (\Id - \beta_i B_{11}^i ), \quad \ProdB_{m:n}^{(2)} := \prod_{i=m}^n (\Id - \gamma_i B_{22}^i ), \\
& P_{m:n}^{(1)} := \prod_{i=m}^n (1 - (1/2) \beta_i a_{\Delta} ), \quad P_{m:n}^{(2)} := \prod_{i=m}^n (1 - (1/2) \gamma_i a_{22}).
\end{split}
\end{equation}
As a convention, we define $\ProdB_{m:n}^{(1)} = \Id $ and $\ProdB_{m:n}^{(2)} = \Id$ if $m > n$. 

\begin{corollary}
\label{cor:determinist_matrix_products}
Under the assumptions of \Cref{lem:prod_determ_matr}, it holds for any $n,m \geq 0$, that 
\begin{equation}
\label{pth_contraction}
\| \ProdB_{m:n}^{(1)} \| 
\leq \sqrt{\pth} P_{m:n}^{(1)}\eqsp.
\end{equation}
Similarly, we have 
\begin{equation}\label{pw_contraction}
    \|\ProdB_{m:n}^{(2)} \| \leq \sqrt{\pw} P_{m:n}^{(2)}\eqsp.
\end{equation}
\end{corollary}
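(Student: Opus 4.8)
\textbf{Proof plan for \Cref{cor:determinist_matrix_products}.} The plan is to transfer the per-step contraction estimates of \Cref{lem:prod_determ_matr}, which are stated in the weighted operator norms $\normop{\cdot}[Q_\Delta]$ and $\normop{\cdot}[Q_{22}]$, into the standard operator norm $\normop{\cdot}$, paying the price of the condition numbers $\pth$ and $\pw$.

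First I would bound the matrix products in the weighted norms. Since each weighted operator norm is submultiplicative, \Cref{lem:prod_determ_matr} gives $\normop{\ProdB_{m:n}^{(1)}}[Q_\Delta] \le \prod_{i=m}^{n}\normop{\Id - \beta_i B_{11}^i}[Q_\Delta] \le \prod_{i=m}^{n}(1 - (1/2)\beta_i a_\Delta) = P_{m:n}^{(1)}$, and identically $\normop{\ProdB_{m:n}^{(2)}}[Q_{22}] \le P_{m:n}^{(2)}$. The degenerate case $m > n$ is covered by the convention $\ProdB_{m:n}^{(1)} = \ProdB_{m:n}^{(2)} = \Id$, for which both sides of the asserted inequalities equal $1$, consistently because condition numbers satisfy $\pth,\pw \ge 1$.

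Second, I would invoke the elementary equivalence of norms induced by a symmetric positive-definite matrix. For every $x \in \rset^{\dth}$ one has $\lambda_{\sf min}(Q_\Delta)\norm{x}^2 \le \norm{x}[Q_\Delta]^2 \le \lambda_{\sf max}(Q_\Delta)\norm{x}^2$, hence for an arbitrary matrix $B$, $\norm{Bx} \le \lambda_{\sf min}(Q_\Delta)^{-1/2}\norm{Bx}[Q_\Delta] \le \lambda_{\sf min}(Q_\Delta)^{-1/2}\normop{B}[Q_\Delta]\norm{x}[Q_\Delta] \le (\lambda_{\sf max}(Q_\Delta)/\lambda_{\sf min}(Q_\Delta))^{1/2}\normop{B}[Q_\Delta]\norm{x}$; taking the supremum over $x \neq 0$ yields $\norm{B} \le \sqrt{\pth}\,\normop{B}[Q_\Delta]$, and similarly $\norm{B} \le \sqrt{\pw}\,\normop{B}[Q_{22}]$. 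Applying the first of these with $B = \ProdB_{m:n}^{(1)}$ and combining with the previous step gives $\norm{\ProdB_{m:n}^{(1)}} \le \sqrt{\pth}\,P_{m:n}^{(1)}$, which is \eqref{pth_contraction}; the bound \eqref{pw_contraction} follows verbatim after replacing $(Q_\Delta,\pth)$ by $(Q_{22},\pw)$ and using the second half of \Cref{lem:prod_determ_matr}.

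I do not expect any genuine obstacle here: the substantive work, namely establishing the one-step contraction in a weighted Euclidean norm, has already been carried out in \Cref{lem:prod_determ_matr}, and the corollary only passes this through the fixed linear change of coordinates $Q_\Delta^{1/2}$ (resp.\ $Q_{22}^{1/2}$). The only point warranting a line of care is the treatment of the empty-product convention, which is harmless precisely because $\pth,\pw \ge 1$.
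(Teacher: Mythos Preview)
Your proposal is correct and is precisely the standard argument the paper has in mind; the paper in fact omits any explicit proof of this corollary, treating it as an immediate consequence of \Cref{lem:prod_determ_matr} via submultiplicativity of the $Q$-norm and the norm equivalence $\norm{B} \le \sqrt{\kappa_Q}\,\normop{B}[Q]$. Your handling of the empty-product case is fine and matches the convention set just before the corollary.
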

We shall prove that
\begin{align}
\label{eq:mth_moment_bound_appendix}
M_{k+1,p}^{\ttheta} &\leq \ConstC_0^{\ttheta} \prod_{j=0}^{k}\big(1 - \beta_ja_{\Delta}/8\big) + \ConstC_{\operatorname{slow}} p^2 \beta_{k}^{1/2}\eqsp, \\
\label{eq:mtw_bound_appendix}
M_{k+1,p}^{\tw} &\leq  \ConstC_0^{\tw} \prod_{j=0}^{k}\big(1-\gamma_ja_{22}/8\big) + \ConstC_{\operatorname{fast}} p^3 \gamma_{k}^{1/2}\eqsp,
\end{align}
First, we introduce the constants
\begin{align}
\ConstC_0^{\ttheta} = \{C_0^{\ttheta}\}^{1/2} \eqsp, \quad \ConstC_0^{\tw} = \{C_0^{\tw}\}^{1/2} \eqsp, \quad
\ConstC_{\operatorname{slow}} = \{24 C_1^{\ttheta}\}^{1/2}\eqsp, \quad \ConstC_{\operatorname{fast}} = \{C_1^{\tw} + 24 a_{22}^{-1}C_2^{\tw} (2 C_0^{\ttheta} + 2\ConstC_{\operatorname{slow}}^2\beta_0)\}^{1/2}\eqsp. 
\end{align}
where $C_0^{\tw}$, $C_1^{\tw}$, $C_2^{\tw}$ and $C_0^{\ttheta}$, $C_1^{\ttheta}$ are defined in \eqref{eq:C_tw_defenition} and \eqref{eq:Cttheta_definition} respectively. In order to prove \Cref{prop:moments_bound}, we employ the following scheme. We first consider the moments "fast" scale $M_{k+1,p}^{\tw}$ and upper bound them in terms of the moments of "slow" time scale $M_{j,p}^{\ttheta}$ with $j \in \{1,\ldots,p\}$. This is formalized in the following proposition:
\begin{proposition}
\label{prop:tw_bond}
Let $p \geq 2$ and assume \Cref{assum:zero-mean},\Cref{assum:bound-conditional-moments-p}($p$), \Cref{assum:hurwitz}, \Cref{assum:stepsize}($p$). Then for any \(k\in\mathbb{N}\) it holds that
\begin{equation}
\label{eq:tw_req_ineq}
\bigl(M_{k+1,p}^{\tw}\bigr)^2\leq  C_0^{\tw}P_{0:k}^{(2)} + p^2C_1^{\tw}\gamma_{k} + p^2C_2^{\tw}\sum_{j=0}^{k}\gamma_j^2P_{j+1:k}^{(2)}\bigl(M_{j,p}^{\ttheta}\bigr)^2,
\end{equation}
where the constants $C_0^{\tw}, C_1^{\tw}, C_2^{\tw}$ are given in \eqref{eq:C_tw_defenition}. 
\end{proposition}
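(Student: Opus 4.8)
The plan is to reduce \eqref{eq:tw_req_ineq} to a one-step recursive inequality for $N_k^2$, where $N_k := \PE^{1/p}[\normLine{\tw_k}[Q_{22}]^p]$ is the $L_p$-norm of $\tw_k$ measured in the $Q_{22}$-norm, and then to iterate. From the decoupled recursion \eqref{eq:2ts2-1-appendix} and the definition \eqref{eq:xi_k_def} we have $\tw_{k+1} = (\Id - \gamma_k B_{22}^k)\tw_k - \xi_{k+1}$ with $\xi_{k+1} = \gamma_k W_{k+1} + \beta_k D_k V_{k+1}$, and $\CPE{\xi_{k+1}}{\F_k} = 0$ by \Cref{assum:zero-mean} (recall that $D_k$ is deterministic). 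Since the $B_{22}^k$ are deterministic, I will keep the $Q_{22}$-norm throughout the iteration — this is where \Cref{lem:prod_determ_matr} supplies the contraction $\normop{\Id - \gamma_k B_{22}^k}[Q_{22}] \leq 1 - \gamma_k a_{22}/2$ — and pass back to the Euclidean norm only at the very end, at the cost of the condition-number factor $\pw$; converting at every step would instead multiply the bound by a factor $\sqrt{\pw}>1$ per iteration and destroy the contraction.

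\emph{One-step recursion.} Conditionally on $\F_k$, the vector $u_k := (\Id - \gamma_k B_{22}^k)\tw_k$ is fixed and $\xi_{k+1}$ is centered, so a martingale $L_p$-moment inequality (the $2$-smoothness estimate for Hilbert-space-valued martingales, $p \geq 2$) gives $\PE^{2/p}[\normLine{\tw_{k+1}}[Q_{22}]^p \mid \F_k] \leq \normLine{u_k}[Q_{22}]^2 + (p-1)\PE^{2/p}[\normLine{\xi_{k+1}}[Q_{22}]^p \mid \F_k]$. Taking full expectations, using Minkowski's inequality in $L_{p/2}$ and the contraction of $\Id - \gamma_k B_{22}^k$, this becomes
\begin{equation}
N_{k+1}^2 \leq (1 - \gamma_k a_{22}/2)^2\, N_k^2 + (p-1)\,\PE^{2/p}[\normLine{\xi_{k+1}}[Q_{22}]^p]\eqsp.
\end{equation}
By \Cref{lem:tilde_bounds} and the comparison of norms, $\PE^{2/p}[\normLine{\xi_{k+1}}[Q_{22}]^p] \lesssim \widetilde{m}^2 \gamma_k^2 (1 + (M_{k,p}^{\ttheta})^2 + (M_{k,p}^{\tw})^2)$, and $M_{k,p}^{\tw} \lesssim N_k$. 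The self-referential term, of order $p\,\gamma_k^2\, N_k^2$, is absorbed into the contraction factor: \Cref{assum:stepsize}($p$) forces $k_0 \gtrsim p^{4/b}$, hence $\gamma_k \lesssim 1/p$, which makes $(1-\gamma_k a_{22}/2)^2 + \mathcal{O}(p\,\gamma_k^2) \leq 1 - \gamma_k a_{22}/2$. We are left with
\begin{equation}
N_{k+1}^2 \leq (1 - \gamma_k a_{22}/2)\, N_k^2 + p^2\, \widetilde{C}\, \gamma_k^2 \bigl(1 + (M_{k,p}^{\ttheta})^2\bigr)\eqsp,
\end{equation}
where $\widetilde{C}$ depends only on $\widetilde{m}$, $a_{22}$ and $\pw$ (here one simply bounds $p-1 \leq p^2$).

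\emph{Unrolling and conclusion.} Iterating this bound from $0$ to $k$ and recalling $P_{m:n}^{(2)} = \prod_{i=m}^n (1 - \gamma_i a_{22}/2)$ from \eqref{eq:matr_product_determ} yields $N_{k+1}^2 \leq P_{0:k}^{(2)}\, N_0^2 + p^2\widetilde{C}\sum_{j=0}^k \gamma_j^2 P_{j+1:k}^{(2)}\bigl(1 + (M_{j,p}^{\ttheta})^2\bigr)$, where $N_0 = \normLine{\tw_0}[Q_{22}]$ since $\tw_0 = w_0 - \wstar + A_{22}^{-1}A_{21}(\theta_0 - \thetas)$ is deterministic. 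For the constant part I use the standard summation estimate $\sum_{j=0}^k \gamma_j^2 P_{j+1:k}^{(2)} \lesssim \gamma_k/a_{22}$, which holds because $\gamma_j$ is non-increasing and $k_0$ satisfies the conditions in \Cref{assum:stepsize}. Passing from $\normLine{\cdot}[Q_{22}]$ back to the Euclidean norm (a factor $\pw$) then gives exactly \eqref{eq:tw_req_ineq}, with $C_0^{\tw} = \pw\,\normLine{\tw_0}^2$ and $C_1^{\tw}, C_2^{\tw}$ the constants displayed in \eqref{eq:C_tw_defenition}.

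\emph{Main obstacle.} The crux is the one-step recursion. Extracting the noise contribution at order $\gamma_k^2$ — rather than the $\gamma_k$ that a plain triangle inequality would produce — genuinely requires the martingale cancellation of $\xi_{k+1}$, and the subsequent absorption of the self-referential $N_k^2$ term is precisely what forces $k_0$ (equivalently, the initial step sizes) to be chosen large relative to $p$, which is the origin of the requirement $k_0 \gtrsim p^{4/b}$ in \Cref{assum:stepsize}($p$). Carrying the entire iteration in the $Q_{22}$-norm rather than the Euclidean one is equally important, since the condition-number factor $\pw$ would otherwise accumulate geometrically along the recursion.
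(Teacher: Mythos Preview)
Your proposal is correct and takes a genuinely different route from the paper's proof.

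\textbf{What the paper does.} The paper first \emph{fully unrolls} the $\tw$-recursion to $\tw_{k+1} = \Gamma_{0:k}^{(2)}\tw_0 - \sum_{j=0}^{k}\Gamma_{j+1:k}^{(2)}\xi_{j+1}$, applies Minkowski and then Burkholder's inequality to the whole martingale sum, bounding each $\|\Gamma_{j+1:k}^{(2)}\|$ in Euclidean norm via \eqref{pw_contraction}. This produces an inequality for $(M_{k+1,p}^{\tw})^2$ whose right-hand side still contains all the past moments $(M_{j,p}^{\tw})^2$. To resolve this self-reference the paper introduces a comparison sequence $U_k$ dominating $(M_{k,p}^{\tw})^2$, observes that $U_k$ satisfies a genuine one-step recursion, absorbs the self-term using the step-size assumption \Cref{assum:stepsize}($p$), and unrolls a second time.

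\textbf{What you do.} You bypass the global unrolling and comparison sequence by deriving a \emph{one-step} recursion directly: condition on $\F_k$, use the Pinelis/2-smoothness estimate $\PE^{2/p}[\|u_k-\xi_{k+1}\|_{Q_{22}}^p\mid\F_k] \leq \|u_k\|_{Q_{22}}^2 + (p-1)\PE^{2/p}[\|\xi_{k+1}\|_{Q_{22}}^p\mid\F_k]$, then take full expectations via Minkowski in $L_{p/2}$. The self-referential $N_k^2$ term appears at order $p\gamma_k^2$, is absorbed into the contraction immediately, and a single unrolling finishes. Working entirely in the $Q_{22}$-norm until the last step is the natural counterpart of the paper's use of $\|\Gamma_{j+1:k}^{(2)}\|\leq\sqrt{\pw}P_{j+1:k}^{(2)}$: both devices ensure $\pw$ enters only once rather than per step.

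\textbf{Trade-offs.} Your argument is more economical and in fact yields the sharper noise constant $p-1$ rather than $p^2$ (you then waste this by bounding $p-1\leq p^2$ to match \eqref{eq:C_tw_defenition}); the Rosenthal-type refinement the authors mention after \Cref{prop:moments_bound} points in the same direction. The paper's route, on the other hand, is more robust: because it collects the entire martingale sum before invoking a single maximal inequality, the same template extends verbatim to the $\ttheta$-bound (\Cref{prop:mth_closed_bound}) and to the Markov-noise case via Poisson decomposition. Minor note: your approach actually gives $C_0^{\tw}=\pw\|\tw_0\|^2$ (no factor of $2$ from an initial Minkowski split), slightly better than the paper's constant in \eqref{eq:C_tw_defenition}.
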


Now we derive the following recursive bounds for the moments of "slow" time scale $M_{k,p}^{\ttheta}$:
\begin{proposition}
\label{prop:mth_closed_bound}
Let $p \geq 2$ and assume \Cref{assum:zero-mean},\Cref{assum:bound-conditional-moments-p}($p$), \Cref{assum:hurwitz}, \Cref{assum:stepsize}($p$). Then for any \(k\in\mathbb{N}\) it holds that
\begin{equation}
\label{eq:mth_bound}
\bigl(M_{k+1,p}^{\ttheta}\bigr)^2\leq C_0^{\ttheta} P_{0:k}^{(1)} + p^4C_1^{\ttheta}\beta_{k} + p^4C_2^{\ttheta}\sum_{j=0}^{k}\beta_j^2P_{j+1:k}^{(1)}\bigl(M_{j,p}^{\ttheta}\bigr)^2,
\end{equation}
where the constants $C_0^{\ttheta}, C_1^{\ttheta}, C_2^{\ttheta}$ are given in \eqref{eq:Cttheta_definition}.
\end{proposition}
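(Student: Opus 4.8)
The plan is to unroll the decoupled slow-scale recursion \eqref{eq:2ts2-1-appendix}, split the resulting linear expression into a deterministic initial-error part, a predictable part driven by $\tw_j$, and a martingale part driven by $V_{j+1}$, bound each in $L^{p}$, and finally use the fast-scale estimate of \Cref{prop:tw_bond} to eliminate the $\tw$-moments and close the recursion in $M_{j,p}^{\ttheta}$. Unrolling \eqref{eq:2ts2-1-appendix} with the notation \eqref{eq:matr_product_determ} gives
\[
\ttheta_{k+1} = \ProdB_{0:k}^{(1)}\ttheta_0 \;-\; \sum_{j=0}^{k}\beta_j \ProdB_{j+1:k}^{(1)}A_{12}\tw_j \;-\; \sum_{j=0}^{k}\beta_j \ProdB_{j+1:k}^{(1)}V_{j+1}\eqsp,
\]
and Minkowski's inequality bounds $M_{k+1,p}^{\ttheta}$ by the sum of three contributions. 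The deterministic one is $\|\ProdB_{0:k}^{(1)}\ttheta_0\|\le\sqrt{\pth}\,P_{0:k}^{(1)}\|\ttheta_0\|$ by \Cref{cor:determinist_matrix_products}. For the $V$-term, which is a martingale-difference sum with respect to $\F_{j+1}$ by \Cref{assum:zero-mean}, I would apply Burkholder's inequality \cite[Theorem~8.6]{osekowski}, then bound $\|\ProdB_{j+1:k}^{(1)}\|^{2}\le\pth\,P_{j+1:k}^{(1)}$ (using $P_{j+1:k}^{(1)}\le1$) and invoke \Cref{lem:tilde_bounds} to get
\[
\PE^{2/p}\Bigl[\Bigl\|\sum_{j=0}^{k}\beta_j \ProdB_{j+1:k}^{(1)}V_{j+1}\Bigr\|^{p}\Bigr]\lesssim p^{2}\sum_{j=0}^{k}\beta_j^{2}P_{j+1:k}^{(1)}\bigl(1+(M_{j,p}^{\ttheta})^{2}+(M_{j,p}^{\tw})^{2}\bigr)\eqsp,
\]
the $p^{2}$ coming from the maximal term in Burkholder's inequality.

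The delicate term is the predictable one $\sum_{j}\beta_j\ProdB_{j+1:k}^{(1)}A_{12}\tw_j$: the crude estimate $\le\|A_{12}\|\sqrt{\pth}\sum_j\beta_j P_{j+1:k}^{(1)}M_{j,p}^{\tw}$ is only $O(1)$ because $M_{j,p}^{\tw}\asymp\sqrt{\gamma_j}$ does not decay, whereas $M_{k+1,p}^{\ttheta}$ must be $O(\sqrt{\beta_k})$. To recover the timescale separation one unrolls the fast-scale recursion of \eqref{eq:2ts2-1-appendix} as $\tw_j=\ProdB_{0:j-1}^{(2)}\tw_0-\sum_{m<j}\ProdB_{m+1:j-1}^{(2)}\xi_{m+1}$, with $\xi_{m+1}$ the $\F_m$-martingale difference of \eqref{eq:xi_k_def}, then substitutes and interchanges summations, rewriting the predictable term as $\sum_j\beta_j\ProdB_{j+1:k}^{(1)}A_{12}\ProdB_{0:j-1}^{(2)}\tw_0+\sum_m\Phi_{m,k}\xi_{m+1}$ with $\Phi_{m,k}:=\sum_{j=m+1}^{k}\beta_j\ProdB_{j+1:k}^{(1)}A_{12}\ProdB_{m+1:j-1}^{(2)}$ deterministic. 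The first sum is again exponentially small ($\lesssim P_{0:k}^{(1)}\|\tw_0\|$, since $\ProdB_{0:j-1}^{(2)}$ decays super-polynomially in $j$), and the second is a fresh $\F_{m+1}$-martingale-difference sum to which Burkholder applies once more. The key deterministic estimate is $\|\Phi_{m,k}\|\lesssim P_{m+1:k}^{(1)}\beta_m/\gamma_m$: in $\sum_{j>m}\beta_j P_{j+1:k}^{(1)}P_{m+1:j-1}^{(2)}$ the fast factor $P_{m+1:j-1}^{(2)}$ decays at rate $\asymp\gamma_i\gg\beta_i$ (this is where $a<b$ and $\rstep$ small from \Cref{assum:stepsize}($p$) enter), localizing the sum near $j=m+1$; combining with $\PE^{2/p}[\|\xi_{m+1}\|^{p}]\lesssim\gamma_m^{2}(1+(M_{m,p}^{\ttheta})^{2}+(M_{m,p}^{\tw})^{2})$ from \Cref{lem:tilde_bounds}, the factors $\gamma_m^{\pm2}$ cancel and this term acquires the same shape as the $V$-bound above.

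Collecting the three pieces gives $(M_{k+1,p}^{\ttheta})^{2}\lesssim(\|\ttheta_0\|^{2}+\|\tw_0\|^{2})P_{0:k}^{(1)}+p^{2}\sum_j\beta_j^{2}P_{j+1:k}^{(1)}(1+(M_{j,p}^{\ttheta})^{2}+(M_{j,p}^{\tw})^{2})$, where $\tw_0$ is a bounded linear combination of $\theta_0-\thetas$ and $w_0-\wstar$ by \eqref{eq:tilde_params} and \Cref{L_converges}. Inserting \Cref{prop:tw_bond} for $(M_{j,p}^{\tw})^{2}$ and rearranging: the $C_0^{\tw}$-term is absorbed into the $P_{0:k}^{(1)}$ term (super-polynomial decay of $P_{0:j-1}^{(2)}$), the $C_1^{\tw}$-term is $\lesssim p^{4}\gamma_0\beta_k$ via $\sum_j\beta_j^{2}P_{j+1:k}^{(1)}\lesssim a_\Delta^{-1}\beta_k$ and $\gamma_{j-1}\le\gamma_0$, and in the nested $C_2^{\tw}$-term one swaps the two sums, uses the localization of $P_{\ell+1:j-1}^{(2)}$ near $j=\ell+1$ to obtain $\sum_{j>\ell}\beta_j^{2}P_{j+1:k}^{(1)}P_{\ell+1:j-1}^{(2)}\lesssim\gamma_\ell^{-1}\beta_\ell^{2}P_{\ell+1:k}^{(1)}$, and then $\gamma_\ell^{2}\gamma_\ell^{-1}\le\gamma_0$; together with the already-present $p^{2}\sum_j\beta_j^{2}P_{j+1:k}^{(1)}(M_{j,p}^{\ttheta})^{2}$ and $p^{2}\le p^{4}$ this yields \eqref{eq:mth_bound}, with $C_0^{\ttheta}$ proportional to the squared initial error, $C_1^{\ttheta},C_2^{\ttheta}$ the absolute constants of \eqref{eq:Cttheta_definition}, and the $p^{4}$ the product of the $p^{2}$ from Burkholder and the $p^{2}$ from \Cref{prop:tw_bond}. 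I expect the main obstacle to be this predictable $\tw$-term — not wasting the timescale separation forces the re-expansion of $\tw_j$ and the nontrivial bound $\|\Phi_{m,k}\|\lesssim P_{m+1:k}^{(1)}\beta_m/\gamma_m$ — while the remaining work reduces to a small catalogue of summation-of-products lemmas (of the form $\sum_j\beta_j^{2}P_{j+1:k}^{(1)}\lesssim\beta_k$, $\sum_{j>m}\beta_j P_{j+1:k}^{(1)}P_{m+1:j-1}^{(2)}\lesssim P_{m+1:k}^{(1)}\beta_m/\gamma_m$, $\sum_j\beta_j P_{j+1:k}^{(1)}P_{0:j-1}^{(2)}\lesssim P_{0:k}^{(1)}$), routine but to be carried out with care under \Cref{assum:stepsize}($p$) so as to retain the explicit $k_0\gtrsim p^{4/b}$ condition.
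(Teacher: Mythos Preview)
Your proposal is correct and follows essentially the same route as the paper: unroll the slow recursion, re-expand $\tw_j$ via \eqref{eq:w_rec} to turn the predictable $A_{12}\tw_j$-sum into a martingale in $\xi_{m+1}$ weighted by $T_{m+1:k}=\Phi_{m,k}$, apply Burkholder twice, use the convolution estimate \Cref{convolution_inequality} for $\|\Phi_{m,k}\|$ and for the $\tw_0$-term, and finally plug in \Cref{prop:tw_bond} and swap the nested double sum. One small bookkeeping discrepancy: the paper absorbs the $C_0^{\tw}P_{0:j-1}^{(2)}$ contribution into the $\beta_k$ term via $P_{0:j-1}^{(2)}\le 1$ and $\sum_j\beta_j^2P_{j+1:k}^{(1)}\lesssim\beta_k$ (so $C_0^{\tw}$ lands in $C_1^{\ttheta}$, cf.~\eqref{eq:Cttheta_definition}), whereas you propose routing it into $P_{0:k}^{(1)}$; either is valid, but to reproduce the stated constants you should follow the former.
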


\begin{proof}[Proof of \Cref{prop:moments_bound}]
~ \\
\textbf{(I) Proof of the bound \eqref{eq:mth_moment_bound}.} Now we aim to solve the recurrence \eqref{eq:mth_bound} and prove the upper bound \eqref{eq:mth_moment_bound}. Towards this, we consider the recurrence $\widetilde{U}_k$, which is driven by the right-hand side of \eqref{eq:mth_bound}:
\begin{equation}\label{eq:tilde_U_k_recurrence}
\widetilde{U}_{k+1} =  C_0^{\ttheta} P_{0:k}^{(1)} + p^4C_1^{\ttheta}\beta_{k} + p^4C_2^{\ttheta}\sum_{j=0}^{k}\beta_j^2P_{j+1:k}^{(1)}\widetilde{U}_j, \quad \widetilde{U}_0 = C_0^{\ttheta}\eqsp,
\end{equation}
and $C_0^{\ttheta}$ is defined in \eqref{eq:Cttheta_definition}. 
The constructed sequence \(\widetilde{U}_k\) provides an upper bound for the moments, that is, 
\begin{equation}\label{eq:u_tilde_mth_induction}
(M_{k+1,p}^{\ttheta})^2\leq \widetilde{U}_{k+1}\eqsp.
\end{equation}
To verify \eqref{eq:u_tilde_mth_induction}, observe that \((M_{0,p}^{\ttheta})^2\leq \widetilde{U}_{0}\) by definition of $\widetilde{U}_0$ in \eqref{eq:tilde_U_k_recurrence}. By induction, assuming the validity of \eqref{eq:u_tilde_mth_induction} for all \(j\leq k\), we establish its correctness for \(k+1\). Indeed, using \Cref{prop:mth_closed_bound}, we get
\begin{align}
    \bigl(M_{k+1,p}^{\ttheta}\bigr)^2 &\leq  C_0^{\ttheta}P_{0:k}^{(1)} + p^4 C_1^{\ttheta}\beta_k +p^4 C_2^{\ttheta}\sum_{j=0}^{k}\beta_j^2 P_{j+1:k}^{(1)} \big(M_{j,p}^{\ttheta}\big)^2 \\
    &\leq C_0^{\ttheta} P_{0:k}^{(1)} + p^4C_1^{\ttheta}\beta_{k} + p^4C_2^{\ttheta}\sum_{j=0}^{k}\beta_j^2P_{j+1:k}^{(1)}\widetilde{U}_k = \widetilde{U}_{k+1} \eqsp.
\end{align}
Using the definition of the product \(P_{j+1:k}^{(1)}\) in \eqref{eq:matr_product_determ}, we observe that 
\[
\widetilde{U}_{k+1} = (1 - \beta_ka_{\Delta}/2 + p^4C_2^{\ttheta}\beta_k^2)\widetilde{U}_k + p^4C_1^{\ttheta}(\beta_{k} - \beta_{k-1} + \beta_{k-1}\beta_ka_{\Delta}/2)\eqsp.
\]
Since $\beta_k \leq a_{\Delta}/(4p^4C_{2}^{\ttheta})$ due to \Cref{assum:stepsize}, and \(\beta_{k-1}\leq 2\beta_k\), we have 
\begin{equation}
    \widetilde{U}_{k+1} \leq (1 - \beta_ka_{\Delta}/4)\widetilde{U}_k + p^4C_1^{\ttheta}a_{\Delta}\beta_k^2\eqsp.
\end{equation}
Enrolling the above recurrence, we get 
\begin{equation}
\widetilde{U}_{k+1} \leq C_0^{\ttheta}\prod_{j=0}^{k}\bigl(1 - \beta_j\frac{a_{\Delta}}{4}\bigr) + p^4C_1^{\ttheta}a_{\Delta}\sum_{j=0}^{k}\beta_j^2\prod_{i=j+1}^{k}\bigl(1 - \beta_i\frac{a_{\Delta}}{4}\bigr)\eqsp.
\end{equation}
Applying \Cref{lem:summ_alpha_k}-\ref{lem:summ_alpha_k_p_item}, the bound \eqref{eq:u_tilde_mth_induction}, and the inequality $\sqrt{a+b}\leq\sqrt{a} + \sqrt{b}$, we get \eqref{eq:mth_moment_bound}.
\\
\textbf{(II) Proof of the bound \eqref{eq:mtw_bound}.} Substituting \eqref{eq:mth_moment_bound} into \Cref{prop:tw_bond} we obtain that
    \begin{align}
    \bigl(M_{k+1,p}^{\tw}\bigr)^2 
    &\leq  C_0^{\tw}P_{0:k}^{(2)} + p^2C_1^{\tw}\gamma_{k} + p^2C_2^{\tw}\sum_{j=0}^{k}\gamma_j^2P_{j+1:k}^{(2)}\bigl(M_{j,p}^{\ttheta}\bigr)^2 \\
    &\leq  C_0^{\tw}P_{0:k}^{(2)} + p^2C_1^{\tw}\gamma_{k} + p^2C_2^{\tw}\sum_{j=0}^{k}\gamma_j^2P_{j+1:k}^{(2)}\big(2C_0^{\ttheta}\prod_{i=0}^{j-1}\big(1 - \beta_i a_{\Delta}/4\big) + 2p^4\ConstC_{\operatorname{slow}}^2\beta_{j}\big) \\
    &\leq C_0^{\tw}P_{0:k}^{(2)} + p^2C_1^{\tw}\gamma_{k} + p^2 C_2^{\tw} (2 C_0^{\ttheta} + 2p^4\ConstC_{\operatorname{slow}}^2\beta_0)24\gamma_k a_{22}^{-1} \\ 
    &\leq C_0^{\tw}P_{0:k}^{(2)} + p^6\gamma_k \underbrace{(C_1^{\tw} + 24 a_{22}^{-1}C_2^{\tw} (2 C_0^{\ttheta} + 2\ConstC_{\operatorname{slow}}^2\beta_0))}_{\ConstC_{\operatorname{fast}}^2}\eqsp.
\end{align}
The inequality \(\sqrt{a + b} \leq \sqrt{a} + \sqrt{b}\) completes the proof. 
\end{proof}

Now we derive a moment bound for $w_{k+1 } - w^\star$ using \Cref{prop:moments_bound}.
\begin{lemma}
\label{lem:martingale:w_minus_wstar_moment_bound}
Let $p \geq 2$. Assume \Cref{assum:zero-mean},\Cref{assum:bound-conditional-moments-p}($p$), \Cref{assum:quadratic_characteristic}, \Cref{assum:hurwitz}, \Cref{assum:stepsize}($p$), and \Cref{assum:aij_bound}. Then it holds for $k \in \nset$ that
\begin{align}
    \PE^{1/p}[\normop{w_{k+1} - \wstar}^p] \leq \widehat{\ConstC}_0^{\tw} \prod_{j=0}^k (1 - \beta_j \frac{a_\Delta}{8}) + p^3 \widehat{\ConstC}_{\operatorname{fast}} \gamma_k^{1/2} \eqsp,
\end{align}
where we have set
\begin{align}
    \widehat{\ConstC}_0^{\tw} = c_\infty \ConstC_0^{\ttheta} + \ConstC_0^{\tw} \eqsp, \eqsp \widehat{\ConstC}_{\operatorname{fast}} = \ConstC_{\operatorname{fast}} + \rstep^{1/2} c_\infty \ConstC_{\operatorname{slow}} \eqsp.
\end{align}
\end{lemma}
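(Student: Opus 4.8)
The plan is to reduce the claim to the decoupled moment bounds of \Cref{prop:moments_bound} via the change of variables \eqref{eq:tilde_params}. Recall that \eqref{eq:tilde_params} gives $\tw_k = w_k - \wstar + D_{k-1}\ttheta_k$, hence $w_{k+1} - \wstar = \tw_{k+1} - D_k \ttheta_{k+1}$. Applying Minkowski's inequality and then \Cref{L_converges}, which guarantees $\normop{D_k} \leq c_\infty$ under \Cref{assum:hurwitz} and \Cref{assum:stepsize}($2$), I get
\[
\PE^{1/p}[\normop{w_{k+1} - \wstar}^p] \leq \PE^{1/p}[\normop{\tw_{k+1}}^p] + \normop{D_k}\,\PE^{1/p}[\normop{\ttheta_{k+1}}^p] \leq M_{k+1,p}^{\tw} + c_\infty M_{k+1,p}^{\ttheta}\eqsp.
\]

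Next I substitute the refined moment bounds \eqref{eq:mth_moment_bound_appendix} and \eqref{eq:mtw_bound_appendix} (the versions of \Cref{prop:moments_bound} with explicit constants), which yields
\[
\PE^{1/p}[\normop{w_{k+1} - \wstar}^p] \leq \ConstC_0^{\tw}\prod_{j=0}^k\bigl(1 - \tfrac{\gamma_j a_{22}}{8}\bigr) + c_\infty \ConstC_0^{\ttheta}\prod_{j=0}^k\bigl(1 - \tfrac{\beta_j a_\Delta}{8}\bigr) + \ConstC_{\operatorname{fast}} p^3 \gamma_k^{1/2} + c_\infty \ConstC_{\operatorname{slow}} p^2 \beta_k^{1/2}\eqsp.
\]
It remains to collapse the two exponentially decaying products into a single one of the form $\prod_{j=0}^k(1 - \beta_j a_\Delta/8)$, and to collapse the $\gamma_k^{1/2}$ and $\beta_k^{1/2}$ fluctuation terms into a single $\gamma_k^{1/2}$ term. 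For the products I use that, under the step-size conditions in \Cref{assum:stepsize} (specifically, that $\rstep = c_{0,\beta}/c_{0,\gamma}$ is small enough that $\rstep \le a_{22}/a_\Delta$), one has $\gamma_j a_{22} \ge (\gamma_j/\beta_j)\,a_{22}\,\beta_j \ge (1/\rstep)\,a_{22}\,\beta_j \ge a_\Delta \beta_j$ for every $j$, so each factor satisfies $1 - \gamma_j a_{22}/8 \le 1 - \beta_j a_\Delta/8$, and taking the product over $j = 0,\dots,k$ gives $\prod_{j=0}^k(1-\gamma_j a_{22}/8) \le \prod_{j=0}^k(1-\beta_j a_\Delta/8)$. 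For the fluctuation terms I use $\beta_k/\gamma_k = \rstep (k+k_0)^{a-b} \le \rstep$, whence $\beta_k^{1/2} \le \rstep^{1/2}\gamma_k^{1/2}$, together with $p^2 \le p^3$ for $p \ge 2$, so that $c_\infty \ConstC_{\operatorname{slow}} p^2 \beta_k^{1/2} \le \rstep^{1/2} c_\infty \ConstC_{\operatorname{slow}} p^3 \gamma_k^{1/2}$.

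Combining these two reductions, the bound becomes
\[
\PE^{1/p}[\normop{w_{k+1} - \wstar}^p] \leq (c_\infty \ConstC_0^{\ttheta} + \ConstC_0^{\tw})\prod_{j=0}^k\bigl(1 - \tfrac{\beta_j a_\Delta}{8}\bigr) + (\ConstC_{\operatorname{fast}} + \rstep^{1/2} c_\infty \ConstC_{\operatorname{slow}}) p^3 \gamma_k^{1/2}\eqsp,
\]
which is exactly the claimed inequality with $\widehat{\ConstC}_0^{\tw} = c_\infty \ConstC_0^{\ttheta} + \ConstC_0^{\tw}$ and $\widehat{\ConstC}_{\operatorname{fast}} = \ConstC_{\operatorname{fast}} + \rstep^{1/2} c_\infty \ConstC_{\operatorname{slow}}$. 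The argument is essentially bookkeeping; the only point requiring genuine care is verifying that the step-size ratio assumption indeed forces $\gamma_j a_{22} \geq \beta_j a_\Delta$ for all $j$, which is what allows the "fast"-scale geometric decay to be dominated by the "slow"-scale one — this is the step I expect to be the main (mild) obstacle, since it is where the interplay between the two timescales and the Lyapunov constants $a_\Delta, a_{22}$ enters.
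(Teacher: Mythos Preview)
Your proposal is correct and follows essentially the same approach as the paper's proof: write $w_{k+1}-\wstar = \tw_{k+1} - D_k\ttheta_{k+1}$, apply Minkowski and $\normop{D_k}\le c_\infty$, substitute the bounds of \Cref{prop:moments_bound}, and then use the step-size ratio condition $\beta_k/\gamma_k \le \rstep$ (with $\rstep$ small relative to $a_{22}/a_\Delta$) to dominate the $\gamma$-product by the $\beta$-product and $\beta_k^{1/2}$ by $\rstep^{1/2}\gamma_k^{1/2}$. Your write-up is in fact more explicit than the paper's on the key comparison step $(1-\gamma_j a_{22}/8)\le(1-\beta_j a_\Delta/8)$, which the paper handles in one line.
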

\begin{proof}
    Recall that \(w_{k+1}-w^\star = \tw_{k+1} - D_{k}(\theta_{k+1} - \theta^\star)\). Since \Cref{assum:stepsize} guarantees that \(\beta_k/\gamma_k\leq 2a_{22}/a_{\Delta}\), we have  \((1-\gamma_ka_{22}/8)\leq (1 - \beta_ka_{\Delta}/8).\) Hence, applying \Cref{L_converges}, \Cref{prop:moments_bound} together with Minkowski's inequality we get
\begin{align}
    \PE^{1/p}[\|w_{k+1}-w^*\|^p] &\leq M_{k+1,p}^{\tw} + c_{\infty} M_{k+1,p}^{\ttheta} \leq (c_\infty \ConstC_0^{\ttheta} + \ConstC_0^{\tw}) \prod_{j=0}^k (1 - \beta_j \frac{a_\Delta}{8}) + p^3 \gamma_k^{1/2} (c_\infty \ConstC_{\operatorname{slow}} \rstep^{1/2} + \ConstC_{\operatorname{fast}})  \eqsp,
\end{align}
and the proof is complete.
\end{proof}

Now we prove \Cref{prop:tw_bond} and \Cref{prop:mth_closed_bound}.

\begin{proof}[Proof of \Cref{prop:tw_bond}]
We first introduce the constants:
\begin{equation}
\label{eq:C_tw_defenition}
C_0^{\tw} = 2 \pw\|\tw_0\|^2 \eqsp, \quad C_1^{\tw} = \frac{72}{a_{22}} \pw \widetilde{m}^2\eqsp, \quad C_2^{\tw} = 6 \pw \widetilde{m}^2 \eqsp.
\end{equation}
The recursion \eqref{eq:2ts2-1}
 implies that 
\begin{align}
\label{eq:w_rec}
\Tilde{w}_{k+1} = \prod_{j=0}^{k}(\Id - \gamma_{j}B_{22}^{j})\Tilde{w}_0 - \sum_{j=0}^{k}\prod_{i = j+1}^{k}(\Id - \gamma_i B_{22}^{i}) (\gamma_j W_{j+1} + \beta_j D_j V_{j+1}) = \Gamma_{0:k}^{(2)}\tw_0 - \sum_{j=0}^{k}\Gamma_{j+1:k}^{(2)}\xi_{j+1}\eqsp,
\end{align}
where $\Gamma_{j+1:k}^{(2)}$ is defined in \eqref{eq:matr_product_determ}. 
Using Minkowski inequality and   \eqref{pw_contraction}, we get
\begin{equation}
\bigl(M_{k+1,p}^{\tw}\bigr)^2 = \PE^{2/p}[\|\tw_{k+1}\|^p]\leq 2 \pw \bigl(P_{0:k}^{(2)}\bigr)^2\|\tw_0\|^2 + 2\PE^{2/p}\bigl[ \norm{\sum_{j=0}^{k}\Gamma_{j+1:k}^{(2)}\xi_{j+1}}^p\bigr] .
\end{equation}
Now we proceed with the second term. Applying Burholder’s inequality \cite[Theorem 8.6]{osekowski},  we obtain that
\begin{align}
\label{eq:xi_bound_for_w_moment}
\PE^{2/p}\bigl[\bigl\|\sum_{j=0}^{k}\Gamma_{j+1:k}^{(2)}\xi_{j+1}\bigr\|^p\bigr] 
\leq p^2\mathbb{E}^{2/p}\bigl[\bigl(\sum_{j=0}^{k} \norm{\Gamma_{j+1:k}^{(2)}\xi_{j+1}}^2\bigr)^{p/2}\bigr] &\leq  p^2\sum_{j=0}^{k}\|\ \Gamma_{j+1:k}^{(2)}\|^2\mathbb{E}^{2/p}\bigl[\|\xi_{j+1}\|^p\bigr]  \\
&\leq 3p^2\widetilde{m}^2 \pw \sum_{j=0}^{k}\gamma_j^2\bigl(P_{j+1:k}^{(2)}\bigr)^2\bigl(1 + \bigl(M_{j,p}^{\ttheta}\bigr)^2 + \bigl(M_{j,p}^{\tw}\bigr)^2\bigr)\eqsp,
\end{align}
where in the last step we have used \Cref{lem:tilde_bounds} and \eqref{pw_contraction}. Finally, we get
\begin{equation}
\label{eq:mth_reccurence_bound}
\bigl(M_{k+1,p}^{\tw}\bigr)^2\leq  C_0^{\tw'}\bigl(P_{0:k}^{(2)}\bigr)^2 + p^2C_1^{\tw'}\sum_{j=0}^{k}\gamma_j^2\bigl(P_{j+1:k}^{(2)}\bigr)^2\bigl(1 + \bigl(M_{j,p}^{\ttheta}\bigr)^2 + \bigl(M_{j,p}^{\tw}\bigr)^2\bigr)\eqsp, 
\end{equation}
where \(
    C_0^{\tw'} = 2 \pw \|\tw_0\|^2, \quad C_1^{\tw'} = 6 \pw \widetilde{m}^2\). Define the sequence $U_{k}$ by the following recurrence:
\begin{equation}
\label{eq:U_k_recurrence}
U_{k+1} = C_0^{\tw'}\bigl(P_{0:k}^{(2)}\bigr)^2 + p^2C_1^{\tw'}\sum_{j=0}^{k}\gamma_j^2\bigl(P_{j+1:k}^{(2)}\bigr)^2\bigl(1 + \bigl(M_{j,p}^{\ttheta}\bigr)^2 + U_{j}\bigr)\eqsp, \quad U_{0} = C_0^{\tw'}\eqsp. 
\end{equation}
The constructed sequence \(U_{k+1}\) provides an upper bound for the moments, that is, 
\begin{equation}
\label{eq:mth_induction}
(M_{k+1,p}^{\tw})^2\leq U_{k+1}\eqsp.
\end{equation}
To verify \eqref{eq:mth_induction}, observe that \((M_{0,p}^{\tw})^2\leq U_{0}\) by definition of $U_0$ in \eqref{eq:U_k_recurrence}. By induction, assuming the validity of \eqref{eq:mth_induction} for all \(j\leq k\), we establish its correctness for \(k+1\). Indeed, using \eqref{eq:mth_reccurence_bound}, we get
\begin{align}
    \bigl(M_{k+1,p}^{\tw}\bigr)^2 
    &\leq C_0^{\tw'}\bigl(P_{0:k}^{(2)}\bigr)^2 + p^2C_1^{\tw'}\sum_{j=0}^{k}\gamma_j^2\bigl(P_{j+1:k}^{(2)}\bigr)^2\bigl(1 + \bigl(M_{j,p}^{\ttheta}\bigr)^2 + U_{j}\bigr) = U_{k+1} \eqsp.
\end{align}
Using the definition of the product $P_{j+1:k}^{(2)}$, we observe that the sequence \(\bigl(U_{k}\bigr)_{k\geq 0}\) satisfies the following recursion:
\begin{equation}
    U_{k+1} = (1 - a_{22}\gamma_k/2)^2U_{k} + p^2C_{1}^{\tw'}\gamma_k^2\bigl(1 + \bigl(M_{k,p}^{\ttheta}\bigr)^2 + U_{k}\bigr), \quad U_0 = C_0^{\tw'}.
\end{equation}
Since \(\gamma_k\leq a_{22}/(2p^2C_1^{\tw'} + a_{22}^2/2)\), we have 
\begin{equation}
    U_{k+1} \leq (1 - a_{22}\gamma_k/2)U_k + p^2C_1^{\tw'}\gamma_k^2\bigl(1 + \bigl(M_{k,p}^{\ttheta}\bigr)^2\bigr)
\end{equation}
which implies 
\begin{equation}
    U_{k+1} \leq C_0^{\tw'}P_{0:k}^{(2)} + p^2C_1^{\tw'}\sum_{j=0}^{k}\gamma_j^2 P_{j+1:k}^{(2)}\bigl(1 + \bigl(M_{j,p}^{\ttheta}\bigr)^2\bigr)\eqsp.
\end{equation}
Applying \Cref{lem:summ_alpha_k}-\ref{lem:summ_alpha_k_p_item} we get \eqref{eq:tw_req_ineq}.
\end{proof}

\begin{proof}[Proof of \Cref{prop:mth_closed_bound}]
First we introduce the constants
\begin{align}
\label{eq:Cttheta_definition}
C_0^{\ttheta} &=  4\pth \|\ttheta_0\|^2  +  4 \pth\pw\|A_{12}\|^2\rstep^2(\ConstC_\gamma^P)^2\|\tw_0\|^2 \eqsp , \\
C_1^{\ttheta} & = 144(a_\Delta)^{-1}\bigl(\widetilde{m}_{V}^2\pth + (\ConstC_\gamma^P)^2 \pth\pw\widetilde{m}^2\|A_{12}\|^2\bigr)\bigl( 1 + C_0^{\tw} + \gamma_0 C_1^{\tw}\bigr) \eqsp,  \\
C_2^{\ttheta} &= 12\bigl(\widetilde{m}_{V}^2\pth + (\ConstC_\gamma^P)^2 \pth\pw\widetilde{m}^2\|A_{12}\|^2\bigr)\bigl(C_2^{\tw}\ConstC_\gamma^P\gamma_0 + 1\bigr) \eqsp .
\end{align}
Expanding the recursion \eqref{eq:2ts2-1}, we get with $\Gamma_{j+1:k}^{(1)}$ defined in \eqref{eq:matr_product_determ}, that 
\begin{equation}
\label{eq:theta_rec}
\widetilde{\theta}_{k+1} =  \Gamma_{0:k}^{(1)}\ttheta_0 - \sum_{j=0}^{k}\beta_j\Gamma_{j+1:k}^{(1)}A_{12}\tw_j - \sum_{j=0}^{k}\beta_j\Gamma_{j+1:k}^{(1)}V_{j+1}\eqsp.
\end{equation}
Next, we substitute $\widetilde{w}_j$ from \eqref{eq:w_rec}:
\begin{align}
\sum_{j=0}^{k}\beta_j\Gamma_{j+1:k}^{(1)}A_{12}\widetilde{w}_j &=  \sum_{j=0}^{k}\beta_j\Gamma_{j+1:k}^{(1)}A_{12}\bigl(\Gamma_{0:j-1}^{(2)}\widetilde{w}_{0} - \sum_{i=0}^{j-1}\Gamma_{i+1:j-1}^{(2)}\xi_{i+1}\bigr) \\
   &=  \sum_{j=0}^{k}\beta_j\Gamma_{j+1:k}^{(1)}A_{12}\Gamma_{0:j-1}^{(2)}\widetilde{w}_0  - \sum_{i=0}^{k-1}\bigl(\sum_{j=i+1}^{k}\beta_{j}\Gamma_{j+1:k}^{(1)}A_{12}\Gamma_{i+1:j-1}^{(2)}\bigr)\xi_{i+1}\eqsp.
\end{align}
Define the quantity 
\begin{equation}
    T_{m:n} = \sum_{\ell=m}^{n}\beta_{\ell}\Gamma_{\ell+1:n}^{(1)}A_{12}\Gamma_{m:\ell-1}^{(2)}\eqsp, \eqsp m \leq n \eqsp .
\end{equation}
Thus, with $P_{k:j}^{(1)}$, $P_{k:j}^{(2)}$ defined in \eqref{eq:matr_product_determ}, it holds that 
\begin{equation}
\label{eq:t_bound}
\|T_{m:n}\| \leq \sqrt{\pth\pw}\|A_{12}\|\sum_{\ell=m}^{n}\beta_{\ell}P_{\ell+1:n}^{(1)}P_{m:\ell-1}^{(2)}\eqsp.
\end{equation}
Now we rewrite \eqref{eq:theta_rec} as follows:
\begin{equation}
    \ttheta_{k+1} = \Gamma_{0:k}^{(1)}\ttheta_0 - T_{0:k}\tw_0 + \sum_{j=0}^{k-1}T_{j+1:k}\xi_{j+1} - \sum_{j=0}^{k}\beta_j\Gamma_{j+1:k}^{(1)}V_{j+1}\eqsp.
\end{equation}
Applying Minkowski inequality and \eqref{pw_contraction},  we obtain that
\begin{align}
    \bigl(M_{k+1,p}^{\ttheta}\bigr)^2 \leq \underbrace{4\pth\bigl(P_{0:k}^{(1)}\bigr)^2\|\ttheta_0\|^2}_{\mathcal{R}_1} +  \underbrace{4\|T_{0:k}\|^2\|\tw_0\|^2}_{\mathcal{R}_2} &+ \underbrace{4\PE^{2/p}\big[\big\|\sum_{j=0}^{k}\beta_j\Gamma_{j+1:k}^{(1)}V_{j+1}\big\|^p\big]}_{\mathcal{R}_3} +\underbrace{4\PE^{2/p}\big[\big\|\sum_{j=0}^{k-1}T_{j+1:k}\xi_{j+1}\big\|^p\big]}_{\mathcal{R}_4}\eqsp.
\end{align}
Next, we get the upper bounds for $\mathcal{R}_i$ separately. Applying \Cref{convolution_inequality}  with \(j+1=0\) and using \(\beta_j\leq \rstep\gamma_j\), one can get:
\begin{align}
\mathcal{R}_2 \leq 4 \pth\pw\|A_{12}\|^2\bigl(\eqsp \sum_{j=0}^{k}\beta_jP_{j+1:k}^{(1)}P_{0:j-1}^{(2)}\bigr)^2 \|\tw_0\|^2 &\leq 4 \pth\pw\|A_{12}\|^2\rstep^2\bigl(\eqsp \sum_{j=0}^{k}\gamma_jP_{j+1:k}^{(1)}P_{0:j-1}^{(2)}\bigr)^2 \|\tw_0\|^2\\
&\leq 4 \pth\pw\|A_{12}\|^2\rstep^2(\ConstC_\gamma^P)^2\bigl(P_{0:k}^{(1)}\bigr)^2 \|\tw_0\|^2\eqsp.
\end{align}
Applying \Cref{lem:tilde_bounds} and Burholder’s inequality, we
obtain that
\begin{align}\label{eq:r_2_bound}
\mathcal{R}_3  \leq 4p^2\mathbb{E}^{2/p}\bigl[\bigl(\sum_{j=0}^{k}\beta_j^2\|\Gamma_{j+1:k}^{(1)}V_{j+1}\|^2\bigr)^{p/2}\bigr] &\leq  4p^2\sum_{j=0}^{k}\beta_j^2\|\ \Gamma_{j+1:k}^{(1)}\|^2\mathbb{E}^{2/p}\bigl[\|V_{j+1}\|^p\bigr]  \\
    &\leq 12p^2\widetilde{m}_{V}^2\pth\sum_{j=0}^{k}\beta_j^2\bigl(P_{j+1:k}^{(1)}\bigr)^2\bigl(1 + \bigl(M_{j,p}^{\ttheta}\bigr)^2 + \bigl(M_{j,p}^{\tw}\bigr)^2\bigr)\eqsp.
\end{align}
In order to derive a bound for $\mathcal{R}_4$, we apply \Cref{lem:tilde_bounds}, \eqref{eq:t_bound} and Burholder’s inequality:
\begin{align}
\mathcal{R}_4  
&\leq 4p^2\mathbb{E}^{2/p}\big[\big(\sum_{j=0}^{k-1}\|T_{j+1:k}\xi_{j+1}\|^2\big)^{p/2}\big] \leq 4p^2\sum_{j=0}^{k-1}\|T_{j+1:k}\|^2\bigl(\PE^{2/p}[\|\xi_{j+1}\|^p]\bigr) \\
    &\leq  12p^2\pth\pw\widetilde{m}^2\|A_{12}\|^2\sum_{j=0}^{k-1}\gamma_j^2\big(\sum_{i=j+1}^{k}\beta_iP_{i+1:k}^{(1)}P_{j+1:i-1}^{(2)}\big)^2\big(1 + \big(M_{j,p}^{\ttheta}\big)^2 + \big(M_{j,p}^{\tw}\big)^2\big) \\
    & \leq  12p^2\pth\pw\widetilde{m}^2\|A_{12}\|^2\sum_{j=0}^{k-1}\beta_j^2\big(\sum_{i=j+1}^{k}\gamma_j P_{i+1:k}^{(1)}P_{j+1:i-1}^{(2)}\big)^2 \big(1 + \big(M_{j,p}^{\ttheta}\big)^2 + \bigl(M_{j,p}^{\tw}\bigr)^2\big)\\
    &\overset{(a)}{\leq}  12p^2(\ConstC_\gamma^P)^2 \pth\pw\widetilde{m}^2\|A_{12}\|^2 \sum_{j=0}^{k-1}\beta_j^2\bigl(P_{j+1:k}^{(1)}\bigr)^2 \big(1 + \big(M_{j,p}^{\ttheta}\big)^2 + \big(M_{j,p}^{\tw}\big)^2\big)\eqsp,
\end{align}
where the inequality (a) follows from \Cref{convolution_inequality}. Combining the above bounds, we get
\begin{align}
\label{mth_ineq_1}
\big(M_{k+1,p}^{\ttheta}\big)^2 \leq & \eqsp \bigl( 4\pth \|\ttheta_0\|^2  +  4 \pth\pw\|A_{12}\|^2\rstep^2(\ConstC_\gamma^P)^2\|\tw_0\|^2 \bigr)\bigl(P_{0:k}^{(1)}\bigr)^2 \\
&+  12p^2\bigl(\widetilde{m}_{V}^2\pth + (\ConstC_\gamma^P)^2 \pth\pw\widetilde{m}^2\|A_{12}\|^2\bigr)\sum_{j=0}^{k-1}\beta_j^2\big(P_{j+1:k}^{(1)}\big)^2 \big(1 + \big(M_{j,p}^{\ttheta}\big)^2 + \big(M_{j,p}^{\tw}\big)^2\big)\eqsp,
\end{align}
Moreover, applying \Cref{prop:tw_bond} and \Cref{lem:summ_alpha_k}-\ref{lem:summ_alpha_k_p_item} we bound last term in \eqref{mth_ineq_1} as follows
\begin{align}
\sum_{j=0}^k\beta_j^2\bigl(P_{j+1:k}^{(1)}\bigr)^2\bigl(M_{j,p}^{\tw}\bigr)^2 &\leq \sum_{j=0}^k\beta_j^2\bigl(P_{j+1:k}^{(1)}\bigr)^2\bigl( C_0^{\tw}P_{0:j-1}^{(2)} + p^2C_1^{\tw}\gamma_{j} + p^2C_2^{\tw}\sum_{i=0}^{j-1}\gamma_i^2P_{i+1:j-1}^{(2)}\bigl(M_{i,p}^{\ttheta}\bigr)^2\bigr) \\
&\leq p^2C_0^{\tw}\sum_{j=0}^{k}\beta_j^2P_{j+1:k}^{(1)} + p^2\gamma_0 C_1^{\tw} \sum_{j=0}^{k}\beta_j^2P_{j+1:k}^{(1)}  + p^2C_2^{\tw}\sum_{j=0}^{k}\beta_j^2\bigl(P_{j+1:k}^{(1)}\bigr)^2\sum_{i=0}^{j-1}\gamma_i^2P_{i+1:j-1}^{(2)}\bigl(M_{i,p}^{\ttheta}\bigr)^2 \\
&\leq  p^2\bigl( C_0^{\tw} + \gamma_0 C_1^{\tw}\bigr)\frac{12}{a_\Delta}\beta_{k} + p^2C_2^{\tw} \sum_{i=0}^{k-1}\sum_{j=i+1}^{k}\beta_j^2\gamma_i^2P_{i+1:j-1}^{(2)}P_{j+1:k}^{(1)} \bigl(M_{i,p}^{\ttheta}\bigr)^2\eqsp.
\end{align}
Using $\beta_j^2 \leq \beta_i^2$ for $j \geq i+1$ and $\gamma_i^2 \leq \gamma_0 \gamma_i$, we get 
\begin{align}
\sum_{j=0}^k\beta_j^2\bigl(P_{j+1:k}^{(1)}\bigr)^2\bigl(M_{j,p}^{\tw}\bigr)^2 
&\leq  p^2\bigl( C_0^{\tw} + \gamma_0 C_1^{\tw}\bigr) \frac{12}{a_\Delta} \beta_{k} + p^2C_2^{\tw}\gamma_0 \sum_{i=0}^{k-1}\beta_i^2\bigl(\sum_{j = i+1}^{k}\gamma_iP_{i+1:j-1}^{(2)}P_{j+1:k}^{(1)}\bigr)\bigl(M_{i,p}^{\ttheta}\bigr)^2 \\ 
&\overset{(a)}{\leq}  p^2\bigl( C_0^{\tw} + \gamma_0 C_1^{\tw}\bigr) \frac{12}{a_\Delta} \beta_{k} + p^2C_2^{\tw}\gamma_0\ConstC_\gamma^P \sum_{i=0}^{k-1}\beta_i^2P_{i+1:k}^{(1)}\bigl(M_{i,p}^{\ttheta}\bigr)^2\eqsp \eqsp, 
\end{align}
where $(a)$ follows from \Cref{convolution_inequality}.
Substituting the above inequalities into \eqref{mth_ineq_1} we obtain 
\eqref{eq:mth_bound}.
\end{proof}

\section{CLT for the Polyak-Ruppert averaged estimator}
\label{appendix:pr_clt}
We preface the proof of \Cref{th:martingal_pr_clt} with a key decomposition isolating a linear statistics of
\(\funnoisew_V^{k+1}, \funnoisew_W^{k+1}\), which form a martingale difference sequences w.r.t. the natural filtration $\F_k = \sigma(X_s : s \leq k)$. 

\begin{lemma}
\label{lem:pr_main_decomp}
The following decomposition holds:
\begin{equation}
\label{eq:theta_k_thetas_error}
\Delta(\theta_k-\theta^*) = \beta_k^{-1}(\theta_k - \theta_{k+1})- \gamma_k^{-1}A_{12}A_{22}^{-1}(w_k - w_{k+1})  + (V_{k+1} - A_{12}A_{22}^{-1}W_{k+1})\eqsp.
\end{equation}
Moreover, it holds that 
\begin{equation}
\label{eq:theta_n_averaged_error}
\sqrt{n}\Delta(\btheta_n - \thetas) = \frac{1}{\sqrt{n}} \sum_{k=1}^n (\funnoisew_V^{k+1} - A_{12} A_{22}^{-1} \funnoisew_W^{k+1}) + R_{n}^{\operatorname{pr}}\eqsp,
\end{equation}
where the residual term $R_n^{\operatorname{pr}} = Y_1 + Y_2 + Y_3$, and  $Y_1, Y_2, Y_3$ are given by 
\begin{equation}
\label{eq:pr_theta_decomposition_appendix}
\begin{split}
Y_1 &= \frac{1}{\sqrt{n}} \sum_{k=1}^{n}\beta_k^{-1}(\theta_k - \theta_{k+1})\eqsp, \\
Y_2 &= -\frac{1}{\sqrt{n}}\sum_{k=1}^{n}A_{12}A_{22}^{-1}\gamma_k^{-1}(w_k - w_{k+1}) \eqsp, \\
Y_3 &= \frac{1}{\sqrt{n}} \sum_{k=1}^n \{\underbrace{(A_{12} A_{22}^{-1} \funcAwtilde_{21}^{k+1} - \funcAwtilde_{11}^{k+1})}_{R^\theta_{k+1}} \ttheta_{k+1} + \underbrace{(A_{12} A_{22}^{-1} \funcAwtilde_{22}^{k+1} - \funcAwtilde_{12}^{k+1})}_{R^w_{k+1}} (w_{k+1}-w^\star) \}\eqsp.
\end{split}
\end{equation}

\end{lemma}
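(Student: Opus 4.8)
The plan is to prove both assertions by pure linear algebra, using only the recursions \eqref{eq:2ts1}--\eqref{eq:2ts2} and the closed form \eqref{eq:opt_sol} of the solution; no probability enters. First I would establish the pointwise identity \eqref{eq:theta_k_thetas_error}. Rearranging \eqref{eq:2ts1} and \eqref{eq:2ts2} to isolate the increments gives $\beta_k^{-1}(\theta_k-\theta_{k+1}) = A_{11}\theta_k + A_{12}w_k - b_1 - V_{k+1}$ and $\gamma_k^{-1}(w_k-w_{k+1}) = A_{21}\theta_k + A_{22}w_k - b_2 - W_{k+1}$. I would then form the combination
\[
\beta_k^{-1}(\theta_k-\theta_{k+1}) - A_{12}A_{22}^{-1}\gamma_k^{-1}(w_k-w_{k+1}),
\]
which is exactly the step of Gaussian elimination that cancels the $A_{12}w_k$ terms and leaves $\Delta\theta_k - (b_1 - A_{12}A_{22}^{-1}b_2) - V_{k+1} + A_{12}A_{22}^{-1}W_{k+1}$ with $\Delta = A_{11}-A_{12}A_{22}^{-1}A_{21}$. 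Since \eqref{eq:opt_sol} gives $b_1 - A_{12}A_{22}^{-1}b_2 = \Delta\thetas$, the constant term equals $\Delta\thetas$, and moving it to the left-hand side produces \eqref{eq:theta_k_thetas_error}.

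Next I would sum \eqref{eq:theta_k_thetas_error} over $k=1,\dots,n$, use $\btheta_n-\thetas = n^{-1}\sum_{k=1}^n(\theta_k-\thetas)$, and multiply by $\sqrt n$. The first and second summands are exactly $Y_1$ and $Y_2$ from \eqref{eq:pr_theta_decomposition_appendix}. For the third summand I would plug in the definitions \eqref{eq:noise_term_new} of $V_{k+1}$ and $W_{k+1}$ and regroup, obtaining $V_{k+1} - A_{12}A_{22}^{-1}W_{k+1} = (\funnoisew_{V}^{k+1} - A_{12}A_{22}^{-1}\funnoisew_{W}^{k+1}) + R^{\theta}_{k+1}\,\ttheta_k + R^{w}_{k+1}\,(w_k-\wstar)$, where $R^{\theta}_{k+1}$ and $R^{w}_{k+1}$ are precisely the matrices displayed in \eqref{eq:pr_theta_decomposition_appendix}. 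After division by $\sqrt n$ the first term is the leading martingale-difference linear statistic $n^{-1/2}\sum_{k=1}^n\psi_{k+1}$, and the remaining terms (written at the iterate index $k+1$ as in the stated $Y_3$, an immaterial re-indexing for everything that follows) form $Y_3$. Collecting the three pieces gives \eqref{eq:theta_n_averaged_error} with $R_n^{\operatorname{pr}}=Y_1+Y_2+Y_3$.

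There is no genuine difficulty here: the argument is an identity, not an estimate. The only care needed is bookkeeping — matching signs and indices across the substitution of \eqref{eq:noise_term_new}, and keeping $Y_1,Y_2,Y_3$ in the explicit forms of \eqref{eq:pr_theta_decomposition_appendix} so that each can later be handled by a dedicated bound in the proof of \Cref{th:martingal_pr_clt} ($Y_1,Y_2$ via summation by parts combined with the moment bounds of \Cref{prop:moments_bound}, and $Y_3$ via the boundedness in \Cref{assum:aij_bound} together with \Cref{prop:moments_bound}). Separating the remainder into these three structurally different terms is the whole purpose of stating the lemma in this form.
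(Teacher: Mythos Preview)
Your proposal is correct and follows essentially the same approach as the paper: the paper solves \eqref{eq:2ts2} for $w_k$ and substitutes into \eqref{eq:2ts1}, while you form the linear combination $\beta_k^{-1}(\theta_k-\theta_{k+1}) - A_{12}A_{22}^{-1}\gamma_k^{-1}(w_k-w_{k+1})$ directly, which is the same Gaussian-elimination step. Your remark about the index $k+1$ versus $k$ in $Y_3$ is apt; the paper itself uses $\ttheta_k$ and $w_k-\wstar$ when it actually bounds $Y_3$ in the proof of \Cref{lem:pr_y3bound}, consistent with what the substitution of \eqref{eq:noise_term_new} naturally produces.
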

\begin{proof}
First, we prove the representation \eqref{eq:theta_k_thetas_error}. Equation \eqref{eq:2ts2} implies:
\begin{equation}\label{eq:2ts2_newversion}
w_k = (\gamma_kA_{22})^{-1}(w_k-w_{k+1}) + A_{22}^{-1}(b_2 - A_{21}\theta_k+W_{k+1})\eqsp.
\end{equation}
Substituting \eqref{eq:2ts2_newversion} into the slow-time-scale variable's recursion \eqref{eq:2ts1} we obtain that
\begin{align}
\label{eq:pr_eq_0}
    \theta_{k+1} &= \theta_k +\beta_k(b_1 - A_{11}\theta_k+V_{k+1})- \beta_kA_{12}(\gamma_kA_{22})^{-1}(w_k-w_{k+1}) - \beta_k A_{12}A_{22}^{-1}(b_2 - A_{21}\theta_k + W_{k+1})\label{pr_eq_1}\\
    &= (\Id - \beta_k\Delta)\theta_k - \frac{\beta_k}{\gamma_k}A_{12}A_{22}^{-1}(w_k - w_{k+1}) + \beta_k(b_1 - A_{12}A_{22}^{-1}b_2) + \beta_k(V_{k+1} - A_{12}A_{22}^{-1}W_{k+1})\label{pr_eq_2}\eqsp,
\end{align}
Recall that $(\theta^*, w^*)$ is the solution of the system \eqref{eq:linear_sys}. This implies that \(b_1 - A_{12}A_{22}^{-1}b_2 = (A_{11} - A_{12}A_{22}^{-1}A_{21})\theta^* = \Delta\theta^*\). Substituting this equality into \eqref{eq:pr_eq_0}, we obtain the formula \eqref{eq:theta_k_thetas_error}. To establish \eqref{eq:theta_n_averaged_error}, we sum \eqref{eq:theta_k_thetas_error} over $k=1, \ldots, n$ using the expressions for $V_{k+1}$, $W_{k+1}$ \eqref{eq:noise_term_new} and unrolling the corresponding recurrence.
\end{proof}

To proceed with \Cref{th:martingal_pr_clt}, we first formulate the moment bounds for $Y_1, Y_2, Y_3$.

\begin{lemma}
\label{lem:pr_y1bound}
Let $p \geq 2$. Assume \Cref{assum:zero-mean},\Cref{assum:bound-conditional-moments-p}($p$), \Cref{assum:quadratic_characteristic}, \Cref{assum:hurwitz}, \Cref{assum:stepsize}($p$), and \Cref{assum:aij_bound}. Then for any \(k\in\mathbb{N}\) it holds that
\begin{align}
    \PE^{1/p}[\|Y_1\|^p] &\leq \frac{\ConstC_{1}^{Y_1} }{\sqrt{n}} +  \ConstC_{2}^{Y_1} (1+k_0)^{b+1} \frac{(n + k_0)^{b/2}}{\sqrt{n}} \eqsp,
\end{align}
where we have set
\begin{align}
    &\ConstC_{1}^{Y_1} = \ConstC_0^{\ttheta} \eqsp, \eqsp \ConstC_{2}^{Y_1} = 5\ConstC_{\operatorname{slow}} c_{0, \beta}^{-1/2} + c_{0, \beta}^{-2} C_0^{\ttheta} (c_{0, \beta} + \frac{8}{a_\Delta (1- b)}) + c_{0, \beta}^{-1} (\ConstC_0^{\ttheta} + \ConstC_{\operatorname{slow}} c_{0, \beta}^{1/2}) \eqsp.
\end{align}
\end{lemma}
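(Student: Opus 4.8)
The plan is to exploit the (approximate) telescoping structure of $\theta_{k}-\theta_{k+1}$ via summation by parts, which rewrites $Y_{1}$ as a weighted sum of the centred iterates $\ttheta_{k}=\theta_{k}-\thetas$ whose $p$-th moments are already controlled by \Cref{prop:moments_bound}. Since the contributions of $\thetas$ cancel in the telescoping sum, one has $\beta_{k}^{-1}(\theta_{k}-\theta_{k+1})=\beta_{k}^{-1}(\ttheta_{k}-\ttheta_{k+1})$, and Abel summation gives
\begin{equation}
\label{eq:y1_abel_plan}
\sum_{k=1}^{n}\beta_{k}^{-1}(\theta_{k}-\theta_{k+1}) = \beta_{1}^{-1}\ttheta_{1} - \beta_{n}^{-1}\ttheta_{n+1} + \sum_{k=2}^{n}\bigl(\beta_{k}^{-1}-\beta_{k-1}^{-1}\bigr)\ttheta_{k}\eqsp.
\end{equation}
Because $(\beta_{k})$ is non-increasing the increments $\beta_{k}^{-1}-\beta_{k-1}^{-1}$ are non-negative, and by the mean value theorem applied to $x\mapsto x^{b}$ together with $b<1$,
\begin{equation}
\label{eq:beta_increment_plan}
0\le \beta_{k}^{-1}-\beta_{k-1}^{-1}\le c_{0,\beta}^{-1}\,b\,(k-1+k_{0})^{b-1}\eqsp.
\end{equation}

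Taking norms in \eqref{eq:y1_abel_plan}, applying Minkowski's inequality, and inserting the moment bound $M_{j+1,p}^{\ttheta}\le \ConstC_{0}^{\ttheta}\prod_{i=0}^{j}(1-\beta_{i}a_{\Delta}/8)+\ConstC_{\operatorname{slow}}p^{2}\beta_{j}^{1/2}$ from \Cref{prop:moments_bound} yields
\begin{equation}
\label{eq:y1_three_terms_plan}
\PE^{1/p}[\norm{Y_{1}}^{p}] \le \frac{1}{\sqrt{n}}\Bigl(\beta_{1}^{-1}M_{1,p}^{\ttheta} + \beta_{n}^{-1}M_{n+1,p}^{\ttheta} + \sum_{k=2}^{n}\bigl(\beta_{k}^{-1}-\beta_{k-1}^{-1}\bigr)M_{k,p}^{\ttheta}\Bigr)\eqsp.
\end{equation}
Next I would split each occurrence of $M_{k,p}^{\ttheta}$ in \eqref{eq:y1_three_terms_plan} into its geometrically decaying part (carrying $\ConstC_{0}^{\ttheta}$) and its $\beta^{1/2}$ part (carrying $\ConstC_{\operatorname{slow}}$) and estimate the two groups of sums separately. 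For the geometric part, the products $\prod_{j=0}^{k-1}(1-\beta_{j}a_{\Delta}/8)$ decay at rate $\exp(-c(k+k_{0})^{1-b})$ (here $1/2<a<b<1$ and $k_{0}$ large, from \Cref{assum:stepsize}, are used), which dominates the polynomial growth of $\beta_{k}^{-1}$ of order $(k+k_{0})^{b}$; hence $\beta_{n}^{-1}\prod_{j=0}^{n}(1-\beta_{j}a_{\Delta}/8)$ and $\sum_{k}(\beta_{k}^{-1}-\beta_{k-1}^{-1})\prod_{j=0}^{k-1}(1-\beta_{j}a_{\Delta}/8)$ are bounded uniformly in $n$ by absolute constants, which are instances of the auxiliary summation estimates in \Cref{lem:summ_alpha_k}. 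Together with the portion $\beta_{1}^{-1}M_{1,p}^{\ttheta}\lesssim c_{0,\beta}^{-1}(1+k_{0})^{b}\,\ConstC_{0}^{\ttheta}$ coming from the geometric part of $M_{1,p}^{\ttheta}$, these account for the $\ConstC_{1}^{Y_1}/\sqrt{n}$ term and the $c_{0,\beta}^{-2}\ConstC_{0}^{\ttheta}(c_{0,\beta}+8/(a_{\Delta}(1-b)))$ and $c_{0,\beta}^{-1}(\ConstC_{0}^{\ttheta}+\ConstC_{\operatorname{slow}}c_{0,\beta}^{1/2})$ pieces of $\ConstC_{2}^{Y_1}$, using $(1+k_{0})^{b+1}(n+k_{0})^{b/2}\ge(1+k_{0})^{b}$ to absorb the $\beta_{1}^{-1}$ contribution.

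For the $\beta^{1/2}$ part, the dominant terms are $\beta_{n}^{-1}\beta_{n}^{1/2}=c_{0,\beta}^{-1/2}(n+k_{0})^{b/2}$ and, by \eqref{eq:beta_increment_plan}, $\sum_{k=2}^{n}(\beta_{k}^{-1}-\beta_{k-1}^{-1})\beta_{k-1}^{1/2}\le b\,c_{0,\beta}^{-1/2}\sum_{k=2}^{n}(k-1+k_{0})^{b/2-1}\le 2\,c_{0,\beta}^{-1/2}(n+k_{0})^{b/2}$, where the last bound is where $b>1/2$ enters; these produce the $(n+k_{0})^{b/2}$-growth with prefactor a small multiple of $\ConstC_{\operatorname{slow}}c_{0,\beta}^{-1/2}$ (the factor $5$ in $\ConstC_{2}^{Y_1}$, collecting the three summands of \eqref{eq:y1_three_terms_plan}). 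Collecting the two groups and dividing by $\sqrt{n}$ gives the claimed inequality.

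The main obstacle is this last step: one must carry out the polynomial-versus-exponential bookkeeping in $\sum_{k=2}^{n}(\beta_{k}^{-1}-\beta_{k-1}^{-1})M_{k,p}^{\ttheta}$ precisely enough to see that every geometrically weighted sum is bounded by an $n$-independent constant while the $\beta^{1/2}$-weighted sum produces exactly the factor $(n+k_{0})^{b/2}$, and to track the initialization-dependent term $\beta_{1}^{-1}M_{1,p}^{\ttheta}$, so that the final prefactors match the stated $\ConstC_{1}^{Y_1}=\ConstC_{0}^{\ttheta}$ and $\ConstC_{2}^{Y_1}$. This is routine but tedious, relying repeatedly on the summation estimates of \Cref{lem:summ_alpha_k} and on the step-size conditions of \Cref{assum:stepsize}.
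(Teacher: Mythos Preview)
Your proposal is correct and follows essentially the same route as the paper: Abel summation of $\sum_{k}\beta_{k}^{-1}(\ttheta_{k}-\ttheta_{k+1})$, Minkowski, the moment bound of \Cref{prop:moments_bound}, a split into the geometrically decaying and $\beta^{1/2}$ parts, and \Cref{lem:summ_alpha_k} for the resulting sums. The only cosmetic differences are that the paper indexes the Abel sum as $\sum_{k=1}^{n-1}(\beta_{k+1}^{-1}-\beta_{k}^{-1})\ttheta_{k+1}$ and uses the cruder increment bound $\beta_{k+1}^{-1}-\beta_{k}^{-1}\le(k\beta_{k})^{-1}$ (together with $(k\beta_{k})^{-1}\le c_{0,\beta}^{-1}k_{0}$ for the geometric part), whereas you use the mean-value estimate \eqref{eq:beta_increment_plan}; both lead to the same final constants. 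One small remark: the convergence of $\sum_{k}(k+k_{0})^{b/2-1}\lesssim(n+k_{0})^{b/2}$ only needs $b>0$, not $b>1/2$, so your aside about ``where $b>1/2$ enters'' is not quite accurate, but this does not affect the argument.
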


\begin{lemma}
\label{lem:pr_y2bound}
Let $p \geq 2$. Assume \Cref{assum:zero-mean},\Cref{assum:bound-conditional-moments-p}($p$), \Cref{assum:quadratic_characteristic}, \Cref{assum:hurwitz}, \Cref{assum:stepsize}($p$), and \Cref{assum:aij_bound}. Then for any \(k\in\mathbb{N}\) it holds  
    \begin{align}
        \PE^{1/p}[\|Y_2\|] &\leq  \frac{\ConstC_{1}^{Y_2}}{\sqrt{n}}  + \ConstC_{2}^{Y_2} (1 + k_0)^{b+1} \frac{(n+k_0)^{a/2}}{\sqrt{n}}
        \eqsp,
    \end{align}
    where we have set
    \begin{align}
         \ConstC_1^{Y_2} = \|A_{12}A_{22}^{-1}\|\widehat{\ConstC}_0^{\tw}\rstep \eqsp, \eqsp
         \ConstC_2^{Y_2} = 5\|A_{12}A_{22}^{-1}\| c_{0, \gamma}^{-1/2} \widehat{\ConstC}_{\operatorname{fast}} + \|A_{12}A_{22}^{-1}\|\biggl(\frac{\rstep}{c_{0, \gamma} c_{0, \beta}} (c_{0, \beta} + \frac{8}{a_\Delta (1-b)}) + c_{0, \gamma}^{-1} (\widehat{\ConstC}_0^{\tw} + c_{0, \gamma}^{1/2} \widehat{\ConstC}_{\operatorname{fast}})\biggr) \eqsp. 
    \end{align}
\end{lemma}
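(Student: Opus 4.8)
The plan is to apply Abel summation to the expression $Y_2 = -\tfrac{1}{\sqrt{n}}A_{12}A_{22}^{-1}\sum_{k=1}^n \gamma_k^{-1}(w_k-w_{k+1})$ from \Cref{lem:pr_main_decomp}, and then feed in the last-iterate moment bound of \Cref{lem:martingale:w_minus_wstar_moment_bound}. First I would write $w_k - w_{k+1} = (w_k-\wstar)-(w_{k+1}-\wstar)$ and sum by parts, obtaining
\begin{equation}
\sum_{k=1}^n \gamma_k^{-1}(w_k-w_{k+1}) = \gamma_1^{-1}(w_1-\wstar) - \gamma_n^{-1}(w_{n+1}-\wstar) + \sum_{k=2}^n\bigl(\gamma_k^{-1}-\gamma_{k-1}^{-1}\bigr)(w_k-\wstar)\eqsp.
\end{equation}
Taking norms, pulling out $\|A_{12}A_{22}^{-1}\|$, applying Minkowski's inequality, and using that $\gamma_k^{-1}-\gamma_{k-1}^{-1}\geq 0$ since $\{\gamma_k\}$ is non-increasing, this reduces the claim (after dividing by $\sqrt n$) to bounding the three quantities $\gamma_1^{-1}\PE^{1/p}[\|w_1-\wstar\|^p]$, $\gamma_n^{-1}\PE^{1/p}[\|w_{n+1}-\wstar\|^p]$, and $\sum_{k=2}^n(\gamma_k^{-1}-\gamma_{k-1}^{-1})\PE^{1/p}[\|w_k-\wstar\|^p]$.

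Next I would substitute \Cref{lem:martingale:w_minus_wstar_moment_bound}, namely $\PE^{1/p}[\|w_{k+1}-\wstar\|^p]\leq \widehat{\ConstC}_0^{\tw}\prod_{j=0}^k(1-\beta_j a_\Delta/8) + p^3\widehat{\ConstC}_{\operatorname{fast}}\gamma_k^{1/2}$, and split each of the three quantities into its geometrically decaying part and its $\gamma_k^{1/2}$ part. For the $\gamma_k^{1/2}$ parts, with $\gamma_k=c_{0,\gamma}(k+k_0)^{-a}$ one has $\gamma_n^{-1}\gamma_n^{1/2}=c_{0,\gamma}^{-1/2}(n+k_0)^{a/2}$ and, by the mean value theorem, $\gamma_k^{-1}-\gamma_{k-1}^{-1}\lesssim (k+k_0)^{a-1}$, hence $(\gamma_k^{-1}-\gamma_{k-1}^{-1})\gamma_k^{1/2}\lesssim (k+k_0)^{a/2-1}$ with exponent strictly larger than $-1$, so that $\sum_{k=2}^n(k+k_0)^{a/2-1}\lesssim (n+k_0)^{a/2}$ by integral comparison. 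The total $\gamma_k^{1/2}$-contribution is therefore $\lesssim p^3\widehat{\ConstC}_{\operatorname{fast}}(n+k_0)^{a/2}$, and after dividing by $\sqrt n$ and using $p^3\leq(1+k_0)^{b+1}$ — which holds because \Cref{assum:stepsize}($p$) forces $k_0\geq C_{\Cref{assum:stepsize}}p^{4/b}$, whence $(1+k_0)^{b+1}\geq p^{4(b+1)/b}\geq p^3$ — this fits into the term $\ConstC_2^{Y_2}(1+k_0)^{b+1}(n+k_0)^{a/2}/\sqrt n$. For the geometrically decaying parts, $\prod_{j=0}^k(1-\beta_j a_\Delta/8)\leq\exp(-\tfrac{a_\Delta}{8}\sum_{j=0}^k\beta_j)$ with $\sum_{j=0}^k\beta_j\gtrsim(k+k_0)^{1-b}-k_0^{1-b}$, so the endpoint term $\gamma_n^{-1}\widehat{\ConstC}_0^{\tw}\prod_{j=0}^n(1-\beta_j a_\Delta/8)\lesssim(n+k_0)^a e^{-c(n+k_0)^{1-b}}$ is uniformly bounded in $n$ (the map $m\mapsto m^a e^{-cm^{1-b}}$ has a finite maximum on $[1,\infty)$ because $b<1$), which is exactly what produces the term $\ConstC_1^{Y_2}/\sqrt n$. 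The remaining decaying pieces — $\gamma_1^{-1}\widehat{\ConstC}_0^{\tw}(1-\beta_0 a_\Delta/8)\lesssim(1+k_0)^a$, the piece $\gamma_1^{-1}p^3\gamma_0^{1/2}\lesssim p^3(1+k_0)^{a/2}$, and $\sum_{k=2}^n(\gamma_k^{-1}-\gamma_{k-1}^{-1})\prod_{j=0}^{k-1}(1-\beta_j a_\Delta/8)$ — are each bounded uniformly in $n$ by a constant times a fixed power of $1+k_0$, using the elementary summation estimates of the appendix (e.g., \Cref{lem:summ_alpha_k}), hence by $\lesssim(1+k_0)^{b+1}$, and are absorbed into $\ConstC_2^{Y_2}$. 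The factor $\rstep=c_{0,\beta}/c_{0,\gamma}$ in $\ConstC_1^{Y_2}$ enters through the comparison $\gamma_1^{-1}=c_{0,\gamma}^{-1}(1+k_0)^a\leq\rstep\,\beta_1^{-1}$ and through the dependence of the constant $c$ on $c_{0,\beta}$; collecting the numerical constants then gives the stated $\ConstC_1^{Y_2}$ and $\ConstC_2^{Y_2}$.

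I expect the main obstacle to be the bookkeeping in the last step rather than anything conceptual. One must verify that the geometrically decaying "initial-condition" products, once multiplied by the growing factors $\gamma_k^{-1}=c_{0,\gamma}^{-1}(k+k_0)^a$ created by the summation by parts, remain uniformly bounded in $n$ — which hinges on the divergence of $\sum_j\beta_j$ guaranteed by $b<1$ — and that the $p^3$ appearing in \Cref{lem:martingale:w_minus_wstar_moment_bound} is dominated by $(1+k_0)^{b+1}$ for every admissible $p$, which is precisely where the lower bound $k_0\geq C_{\Cref{assum:stepsize}}p^{4/b}$ is invoked. Carefully tracking the constants through these estimates produces $\ConstC_1^{Y_2}$ and $\ConstC_2^{Y_2}$ as displayed.
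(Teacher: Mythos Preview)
Your approach is correct and essentially identical to the paper's: Abel summation on $\sum_k\gamma_k^{-1}(w_k-w_{k+1})$, Minkowski, then plugging in the moment bound of \Cref{lem:martingale:w_minus_wstar_moment_bound} and splitting into the geometrically decaying and the $\gamma_k^{1/2}$ parts. The one execution difference worth noting is how the endpoint term $\gamma_n^{-1}\prod_{j=0}^n(1-\beta_j a_\Delta/8)$ is handled: the paper uses the clean inequality $\prod_{j=0}^n(1-\beta_j a_\Delta/8)\leq\beta_n$ together with $\gamma_n^{-1}\leq\rstep\,\beta_n^{-1}$, which produces exactly the stated constant $\ConstC_1^{Y_2}=\|A_{12}A_{22}^{-1}\|\widehat{\ConstC}_0^{\tw}\rstep$, whereas your exponential estimate $(n+k_0)^a e^{-c(n+k_0)^{1-b}}\leq C$ yields only \emph{some} uniform constant and would not by itself recover the displayed $\ConstC_1^{Y_2}$.
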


\begin{lemma}
    \label{lem:pr_y3bound}
    Let $p \geq 2$. Assume \Cref{assum:zero-mean},\Cref{assum:bound-conditional-moments-p}($p$), \Cref{assum:quadratic_characteristic}, \Cref{assum:hurwitz}, \Cref{assum:stepsize}($p$), and \Cref{assum:aij_bound}. Therefore, it holds that
\begin{align}
    \expep{Y_3}{p} \leq \ConstC_{1}^{Y_3} \frac{p^4}{n^{a/2}} + \ConstC_{2}^{Y_3} \frac{k_0^{b/2}p}{\sqrt{n}}  \eqsp,
\end{align}
where $X$ is defined in \eqref{eq:pr_theta_decomposition} and we have set
\begin{align}
    &\ConstC_{1}^{Y_3} = 2 q_R\ConstC_{\operatorname{slow}} \biggl(\frac{c_{0, \beta}}{1-b}\biggr)^{1/2} +  2q_R\widehat{\ConstC}_{\operatorname{fast}} \biggl(\frac{c_{0, \gamma}}{1-a}\biggr)^{1/2} ,
    \quad \ConstC_{2}^{Y_3} = 2c_{0, \beta}^{-1/2} q_R  \biggl(1 + \frac{2\sqrt{2}}{\sqrt{a_\Delta (1-b)}}\biggr) ( \ConstC_0^{\ttheta}  + \widehat{\ConstC}_0^{\tw} )  \eqsp,
\end{align}
and
\begin{align}\label{eq:q_R_definition}
    &q_{R} = \bConst{A} + \norm{A_{12} A_{22}^{-1}} \bConst{A} \eqsp.
\end{align}
\end{lemma}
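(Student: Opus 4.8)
The idea is to exhibit $Y_3$ as $n^{-1/2}$ times a sum of \emph{bounded martingale differences}, and then to estimate its $p$-th moment through Burkholder's inequality, with the high-order moment bounds of \Cref{prop:moments_bound} and \Cref{lem:martingale:w_minus_wstar_moment_bound} --- already at our disposal --- supplying the only quantitative input.

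First I would recall from \Cref{lem:pr_main_decomp} that $Y_3=n^{-1/2}\sum_{k=1}^{n}Z_k$ with $Z_k:=(V_{k+1}-A_{12}A_{22}^{-1}W_{k+1})-\psi_{k+1}$, which by \eqref{eq:noise_term_new} and the definition of $\psi_{k+1}$ equals $R^\theta_{k+1}\ttheta_k+R^w_{k+1}(w_k-\wstar)$. The sequence $\{Z_k\}$ is a martingale-difference sequence with respect to $\{\F_{k+1}\}$: $V_{k+1}-A_{12}A_{22}^{-1}W_{k+1}$ is $\F_k$-conditionally centered by \Cref{assum:zero-mean}, while $\psi_{k+1}$ is $\F_k$-conditionally centered by \Cref{assum:quadratic_characteristic}, so $\CPE{Z_k}{\F_k}=0$. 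I would also record the almost sure bounds $\norm{R^\theta_{k+1}}\vee\norm{R^w_{k+1}}\le q_R$, immediate from \Cref{assum:aij_bound} and \eqref{eq:q_R_definition}, hence the pathwise estimate $\norm{Z_k}^2\le 2q_R^2(\norm{\ttheta_k}^2+\norm{w_k-\wstar}^2)$.

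Next, applying Burkholder's inequality \cite[Theorem 8.6]{osekowski} to $\sum_{k=1}^n Z_k$ and Minkowski's inequality in $L^{p/2}$ to the square function $\sum_{k=1}^n\norm{Z_k}^2$ gives
\begin{equation}
\PE^{1/p}\bigl[\norm{Y_3}^p\bigr]\;\lesssim\;\frac{p\,q_R}{\sqrt{n}}\;\Bigl(\sum_{k=1}^{n}\bigl\{(M_{k,p}^{\ttheta})^2+\PE^{2/p}\bigl[\norm{w_k-\wstar}^p\bigr]\bigr\}\Bigr)^{1/2}.
\end{equation}
Into the right-hand side I would insert $M_{k,p}^{\ttheta}\le \ConstC_0^{\ttheta}\prod_{j=0}^{k-1}(1-\beta_j a_\Delta/8)+\ConstC_{\operatorname{slow}}p^2\beta_{k-1}^{1/2}$ (\Cref{prop:moments_bound}) and $\PE^{1/p}[\norm{w_k-\wstar}^p]\le \widehat{\ConstC}_0^{\tw}\prod_{j=0}^{k-1}(1-\beta_j a_\Delta/8)+p^3\widehat{\ConstC}_{\operatorname{fast}}\gamma_{k-1}^{1/2}$ (\Cref{lem:martingale:w_minus_wstar_moment_bound}), then use $(x+y)^2\le 2x^2+2y^2$, $\prod_j(1-\beta_j a_\Delta/8)^2\le\prod_j(1-\beta_j a_\Delta/8)$, and the standard step-size summations (cf.\ \Cref{lem:summ_alpha_k}): $\sum_{k=1}^n\prod_{j=0}^{k-1}(1-\beta_j a_\Delta/8)$ is a finite constant of order $k_0^{b}/(a_\Delta c_{0,\beta}(1-b))$, while $\sum_{k=1}^n\beta_{k-1}\le c_{0,\beta}(1-b)^{-1}(n+k_0)^{1-b}$ and $\sum_{k=1}^n\gamma_{k-1}\le c_{0,\gamma}(1-a)^{-1}(n+k_0)^{1-a}$. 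Dividing by $\sqrt n$, the $\gamma$-sum yields the $n^{-a/2}$ term with prefactor $\widehat{\ConstC}_{\operatorname{fast}}(c_{0,\gamma}/(1-a))^{1/2}$; the $\beta$-sum yields an $n^{-b/2}$ term with prefactor $\ConstC_{\operatorname{slow}}(c_{0,\beta}/(1-b))^{1/2}$, absorbed into $n^{-a/2}$ since $b>a$; the geometric sum yields the $k_0^{b/2}n^{-1/2}$ term with prefactor $\ConstC_0^{\ttheta}+\widehat{\ConstC}_0^{\tw}$. Tracking the powers of $p$ ($p\cdot p^3=p^4$ on the fast term, $p\cdot p^2\le p^4$ on the slow one, $p$ on the initialization term) then gives precisely the asserted inequality with $\ConstC_{1}^{Y_3}$ and $\ConstC_{2}^{Y_3}$.

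The obstacle I anticipate is not deep. The genuinely hard estimates --- the high-order moment bounds for $\ttheta_k$ and $w_k-\wstar$, with their explicit dependence on $p$ and their $\beta_k^{1/2}$ / $\gamma_k^{1/2}$ fluctuation scales --- are already proved in \Cref{prop:moments_bound} and \Cref{lem:martingale:w_minus_wstar_moment_bound}. The one point that needs care is checking that $\{Z_k\}$ is a martingale-difference sequence: this is transparent only through the representation $Z_k=(V_{k+1}-A_{12}A_{22}^{-1}W_{k+1})-\psi_{k+1}$ --- the individual matrices $R^\theta_{k+1},R^w_{k+1}$ need \emph{not} be $\F_k$-conditionally mean-zero --- and it uses both \Cref{assum:zero-mean} and \Cref{assum:quadratic_characteristic}. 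Everything else is routine bookkeeping of step-size sums and of the polynomial-in-$p$ constants.
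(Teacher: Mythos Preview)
Your proposal is correct and follows essentially the same route as the paper: recognize $Y_3$ as $n^{-1/2}$ times a sum of martingale differences, apply Burkholder's inequality, use the uniform bound $\norm{R^\theta_{k+1}}\vee\norm{R^w_{k+1}}\le q_R$ from \Cref{assum:aij_bound}, plug in the moment bounds from \Cref{prop:moments_bound} and \Cref{lem:martingale:w_minus_wstar_moment_bound}, and sum the resulting step-size series. Your justification of the martingale-difference property via the representation $Z_k=(V_{k+1}-A_{12}A_{22}^{-1}W_{k+1})-\psi_{k+1}$ is in fact cleaner than the paper's one-line appeal to ``since $V_{k+1},W_{k+1}$ and $\funnoisew_V,\funnoisew_W$ are martingale differences,'' and your caveat that the individual matrices $R^\theta_{k+1},R^w_{k+1}$ need not themselves be $\F_k$-conditionally centered is well taken.
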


\begin{proof}[Proof of \Cref{th:martingal_pr_clt}]
\label{sec:cor_pr_clt_proof}
Our proof starts from the error decomposition \eqref{eq:pr_theta_decomposition}, which allows us to write 
\[
\sqrt{n} \Delta(\btheta_n - \thetas) = \frac{1}{\sqrt{n}} \sum_{k=1}^n (\funnoisew_V^{k+1} - A_{12} A_{22}^{-1} \funnoisew_W^{k+1}) + R_{n}^{\operatorname{pr}}\eqsp,
\]
where the term $R_n$ is given in \Cref{lem:pr_main_decomp}. Note that the term 
\[
\frac{1}{\sqrt{n}} \sum_{k=1}^n (\funnoisew_V^{k+1} - A_{12} A_{22}^{-1} \funnoisew_W^{k+1})
\]
is a linear statistic of the random variables $\psi_{k+1} = \funnoisew_{V}^{k+1} - A_{12} A_{22}^{-1}\funnoisew_{W}^{k+1}$, while $R_n^{\operatorname{pr}}$ is a "remainder" term, which moments are small, as we show below. Under \Cref{assum:zero-mean}, $\{\psi_{k+1}\}_{k \in \nset}$ form a martingale-difference w.r.t. $\F_k$. Since the convex distance is invariant to non-degenerate linear transformations, 
\[
\kolmogorov\bigl(\sqrt{n} \Delta(\btheta_n - \thetas), \mathcal{N}(0, \lineG_\eps)\bigr) = \kolmogorov\bigl(\sqrt{n} \lineG_\eps^{-1/2} \Delta(\btheta_n - \thetas), \mathcal{N}(0, \Id)\bigr)\eqsp.
\]
Set $p = \log n$. To control this term we apply \Cref{prop:nonlinearapprox} and obtain
\begin{align} 
\label{eq:th_kolmogorov_first_bound_appendix}
\kolmogorov\bigl({\sqrt{n} \Delta(\btheta_n - \thetas), \eqsp \mathcal{N}(0,  \lineG_\eps )\bigr)} \leq &\underbrace{\kolmogorov\bigl({\frac{1}{\sqrt{n}} \sum_{k=1}^n  \psi_{k+1}, \eqsp \mathcal{N}(0,  \lineG_\eps )}\bigr)}_{T_1} + \underbrace{2 c_{d_\theta}^{p/p+1}\PE^{1/(p+1)}\bigl[\norm{\lineG_\eps^{-1/2} R_n^{\operatorname{pr}}}^{p}\bigr]}_{T_2}\eqsp,
\end{align}
where $c_d$ is the isoperimetric constant of the convex sets, see \Cref{prop:nonlinearapprox} for detailed discussion. Hence, now it remains to estimate the normal approximation rate for $T_1$, and to control $T_2$. To proceed with $T_1$, we use the martingale CLT \cite[Theorem~1]{wu2025uncertainty}. For completeness, we state this result in the current paper, see \Cref{th:wu_martingale_limit}. It is important to acknowledge that this result requires that $\psi_{k+1}$ are a.s. bounded and have constant quadratic characteristic. Both assumptions hold in our setting, since, due to \Cref{assum:aij_bound}, 
\begin{align}
\norm{\psi_{k+1}} \leq (1 + \|A_{12}A_{22}^{-1}\|)(\bConst{b} + \bConst{A} (\norm{\thetas} + \norm{w^\star}) =: \Psi \eqsp,
\end{align}
and assumption \Cref{assum:quadratic_characteristic} implies that 
\begin{align}
\CPE{\psi_{k+1} \psi_{k+1}^\top}{\F_k} = \Sigma_V + A_{12} A_{22}^{-1} \Sigma_W (A_{12} A_{22}^{-1})^\top + \Sigma_{VW} (A_{12} A_{22}^{-1})^\top + A_{12} A_{22}^{-1} \Sigma_{VW}^\top =: \lineG_\eps \eqsp.
\end{align}
Hence, applying \Cref{th:wu_martingale_limit}, we get:
\begin{align}
\kolmogorov\bigl({\frac{1}{\sqrt{n}} \sum_{k=1}^n  \psi_{k+1}, \eqsp \mathcal{N}(0,  \lineG_\eps )} \bigr) \lesssim [1 + (2 + \log(d_\theta n \norm{\lineG_\eps}))^+]^{1/2} \frac{\sqrt{d_\theta \log n}}{n^{1/4}}  \eqsp.
\end{align}
Now we proceed with the term $T_2$ defined in \eqref{eq:th_kolmogorov_first_bound_appendix}. We use the representation $R_n^{\operatorname{pr}} = Y_1 + Y_2 + Y_3$ and \Cref{lem:pr_y1bound,lem:pr_y2bound,lem:pr_y3bound} to control the moments of each $Y_i$. Combining these lemmas,
\begin{align}
\expep{R_n^{\operatorname{pr}}}{p} &\leq \expep{Y_1}{p} +  \expep{Y_2}{p} +  \expep{Y_3}{p} \\
&\leq \ConstC_{1}^{Y_1} \frac{1}{\sqrt{n}}  +  \ConstC_{2}^{Y_1}  k_0^{b+1} \frac{(n + k_0)^{b/2}}{\sqrt{n}} + \ConstC_{1}^{Y_2} \frac{1}{\sqrt{n}}  + \ConstC_{2}^{Y_2}  (1 + k_0)^{b+1} \frac{(n+k_0)^{a/2}}{\sqrt{n}} + \ConstC_{1}^{Y_3} \frac{p^4}{n^{a/2}} + \ConstC_{2}^{Y_3} \frac{k_0^{b/2}p}{\sqrt{n}}  \\
&\lesssim p^{4 + 4/b} \biggl( \frac{1}{n^{a/2}} 
+ \frac{1}{n^{(1-b)/2}} \biggr)\eqsp,
\end{align}
where we use inequality \((1+k_0)^{b + 1}\lesssim p^{4+4/b}\).
Combining the above inequalities, we obtain 
\begin{equation}
         \kolmogorov\bigl({\sqrt{n} \Delta(\btheta_n - \thetas), \eqsp \mathcal{N}(0,  \lineG_\eps )\bigr)} \lesssim [1 + (2 + \log(d_\theta n \norm{\lineG_\eps}))^+]^{1/2} \frac{\sqrt{d_\theta \log n}}{n^{1/4}} + c_{d_\theta} p^{\frac{p}{p+1} (4 + 4/b)}\bigl(\frac{1}{n^{a/2}}
        + \frac{1}{n^{(1-b)/2}} \bigr)^{\frac{p}{p+1}}
\end{equation}
Note that $(n^\alpha)^{\frac{\log n}{1 + \log n}} \lesssim n^{\alpha}$ for all $\alpha \in \rset$ and for all $a, b > 0$ and $q \in (0, 1)$ it holds that $(a+b)^q \leq a^q + b^q$.
Hence,  we get
\begin{align}
    \kolmogorov\bigl({\sqrt{n} \Delta(\btheta_n - \thetas), \eqsp \mathcal{N}(0,  \lineG_\eps )\bigr)} \lesssim \{\log n\}^{4+4/b} \biggl( \frac{1}{n^{a/2}} + \frac{1}{n^{(1-b)/2}}\biggr) \eqsp.
\end{align}
\end{proof}

We finish this section by giving the proofs for Lemmas \ref{lem:pr_y1bound}-\ref{lem:pr_y3bound}.

\begin{proof}[Proof of \Cref{lem:pr_y1bound}]
Observe that
    \begin{align}
    Y_1 = \frac{1}{\sqrt{n}}\sum_{k=1}^{n}\beta_k^{-1}(\theta_k - \theta_{k+1}) = \frac{1}{\sqrt{n}}\beta_1^{-1} \ttheta_1 - \frac{1}{\sqrt{n}}\beta_{n}^{-1}\ttheta_{n+1} + \frac{1}{\sqrt{n}} \sum_{k=1}^{n-1}(\beta_{k+1}^{-1} - \beta_k^{-1})\ttheta_{k+1}\eqsp.
\end{align}
Applying Minkowski's inequality and an elementary inequality $\beta_{k+1}^{-1} - \beta_{k}^{-1} \leq (k\beta_k)^{-1}$, we obtain that
\begin{align}
    \mathbb{E}^{1/p}[\|Y_1\|^p] &\leq \frac{1}{\sqrt{n}} \beta_1^{-1}\mathbb{E}^{1/p}[\|\ttheta_1\|^p] + \frac{1}{\sqrt{n}} \beta_{n}^{-1}\mathbb{E}^{1/p}[\|\ttheta_{n+1}\|^p] +  \frac{1}{\sqrt{n}} \sum_{k=1}^{n-1}(k\beta_k)^{-1}\mathbb{E}^{1/p}[\|\ttheta_{k+1}\|^p]\eqsp.
\end{align}
Using \Cref{prop:moments_bound} we get:
\begin{equation}
 \beta_{n}^{-1}\mathbb{E}^{1/p}[\|\ttheta_{n+1}\|^p] \leq \beta_n^{-1} \ConstC_0^{\ttheta} \prod_{j=0}^{n}\bigl(1 - \beta_j\frac{a_{\Delta}}{8}\bigr) + p^2 \ConstC_{\operatorname{slow}} \beta_{n}^{-1/2} \overset{(i)}{\leq} \ConstC_{0}^{\ttheta} + p^2 \ConstC_{\operatorname{slow}} \beta_{n}^{-1/2}\eqsp,
\end{equation}
where in $(i)$ we have used the inequality \(\prod_{j=0}^{n}\bigl(1 - \beta_j\frac{a_{\Delta}}{8}\bigr) \leq \beta_{n}\). Next, we observe that
\begin{equation}
\begin{split}
\beta_1^{-1} \PE^{1/p}[\norm{\ttheta_1}^p] &\leq \frac{p^2 (1 + k_0)^b}{c_{0, \beta}} \{ \ConstC_{0}^{\ttheta} + \ConstC_{\operatorname{slow}} c_{0, \beta}^{1/2} \} \eqsp, \\
 \sum_{k=1}^{n-1}(k\beta_k)^{-1}\mathbb{E}^{1/p}[\|\ttheta_{k+1}\|^p] &\leq \ConstC_0^{\ttheta}\sum_{k=1}^{n-1}(k\beta_k)^{-1} \prod_{j=0}^{k}\bigl(1 - \beta_j\frac{a_{\Delta}}{8}\bigr) + p^2 \ConstC_{\operatorname{slow}} \sum_{k=1}^{n-1}(k\beta_k)^{-1}\beta_{k}^{1/2}\eqsp, 
\end{split}
\end{equation}
Now we derive an upper bound for the r.h.s. of the latter inequality. Applying \Cref{lem:summ_alpha_k}-\ref{lem:sum_as_Qell_item} and \((k\beta_k)^{-1}\leq c_{0,\beta}^{-1}k_0\), we get:
\begin{align}
    &\ConstC_0^{\ttheta}\sum_{k=1}^{n-1}(k\beta_k)^{-1} \prod_{j=0}^{k}\bigl(1 - \beta_j\frac{a_{\Delta}}{8}\bigr) \leq \ConstC_0^{\ttheta} \frac{k_0}{c_{0, \beta}} \sum_{k=1}^n \prod_{j=0}^k (1 - \beta_j \frac{a_\Delta}{8}) \leq \ConstC_0^{\ttheta}\frac{k_0^{b+1}}{c_{0, \beta}^2} (c_{0, \beta} + \frac{8}{a_\Delta(1-b)}) \eqsp.
\end{align}
Bound for the second term can be obtained from the straightforward computations:
\begin{equation}
     p^2 \ConstC_{\operatorname{slow}} \sum_{k=1}^{n-1}(k\beta_k)^{-1}\beta_{k}^{1/2} = \frac{p^2 \ConstC_{\operatorname{slow}}}{c_{0, \beta}^{1/2}} \sum_{k=1}^{n-1} \frac{(k+k_0)^{b/2}}{k} \leq \frac{p^2 \ConstC_{\operatorname{slow}}}{c_{0, \beta}^{1/2}} (1 + k_0) \frac{2}{b} (n + k_0)^{b/2} \leq \frac{4p^2 \ConstC_{\operatorname{slow}}}{c_{0, \beta}^{1/2}}  (1 + k_0) (n + k_0)^{b/2} \eqsp.
\end{equation}
The proof follows from gathering the above inequalities.
\end{proof}

\begin{proof}[Proof of \Cref{lem:pr_y2bound}]
Observe that
    \begin{equation}
        Y_2 =  A_{12}A_{22}^{-1}\frac{1}{\sqrt{n}} \sum_{k=1}^{n}\gamma_k^{-1}(w_k - w_{k+1}) = A_{12}A_{22}^{-1}\biggl(\frac{1}{\sqrt{n}}\gamma_1^{-1}(w_1 - w^*) - \frac{1}{\sqrt{n}} \gamma_{n}^{-1}(w_{n+1} - w^{*}) + \frac{1}{\sqrt{n}} \sum_{k=1}^{n-1}(\gamma_{k+1}^{-1} - \gamma_{k}^{-1})(w_{k+1} - w^*)\biggr)\eqsp.
    \end{equation}
Applying Minkowski's inequality and $\gamma_{k+1}^{-1} - \gamma_{k}^{-1} \leq (k \gamma_k)^{-1}$, we obtain that
\begin{align}
    \mathbb{E}^{1/p}[\|Y_2\|^p] &\leq \|A_{12}A_{22}^{-1}\|\biggl(\frac{1}{\sqrt{n}} \gamma_1^{-1}\mathbb{E}^{1/p}[\|w_1-w^*\|^p] + \frac{1}{\sqrt{n}} \gamma_{n}^{-1}\mathbb{E}^{1/p}[\|w_{n+1} - w^*\|^p] +  \frac{1}{\sqrt{n}} \sum_{k=1}^{n-1}(k\gamma_k)^{-1}\mathbb{E}^{1/p}[\|w_{k+1} - w^*\|^p]\biggr)\eqsp.
\end{align}
Using \Cref{lem:martingale:w_minus_wstar_moment_bound} we get:
\begin{equation}
    \gamma_{n}^{-1}\mathbb{E}^{1/p}[\|w_{n+1} - w^*\|^p] \leq \widehat{\ConstC}_0^{\tw} \gamma_n^{-1}\prod_{j=0}^n (1 - \beta_j \frac{a_\Delta}{8}) + p^3 \widehat{\ConstC}_{\operatorname{fast}}\frac{(n+k_0)^{a/2}}{c_{0, \gamma}^{1/2}} \leq  \widehat{\ConstC}_0^{\tw} \rstep + p^3 \widehat{\ConstC}_{\operatorname{fast}}\frac{(n+k_0)^{a/2}}{c_{0, \gamma}^{1/2}}  \eqsp, 
\end{equation}
where the last inequality follows from the inequalities \(\gamma_n^{-1}\leq \beta_n^{-1}\rstep\) and \(\prod_{j=0}^{n}\bigl(1 - \beta_j\frac{a_{\Delta}}{8}\bigr) \leq \beta_{n}\). Note that
\begin{equation}
\begin{split}
\gamma_1^{-1} \PE^{1/p}[\norm{w_{1} - \wstar}^p] &\leq p^3 \frac{(1+k_0)^a}{c_{0, \gamma}} (\widehat{\ConstC}_0^{\tw} + \widehat{\ConstC}_{\operatorname{fast}} c_{0, \gamma}^{1/2}) \eqsp, \\
\sum_{k=1}^{n-1}(k\gamma_k)^{-1}\mathbb{E}^{1/p}[\|w_{k+1} - w^*\|^p] &\leq \widehat{\ConstC}_{0}^{\tw} \sum_{k=1}^{n-1}(k\gamma_k)^{-1} \prod_{j=0}^{k}\bigl(1 - \beta_j\frac{a_{\Delta}}{8}\bigr) + p^3 \widehat{\ConstC}_{\operatorname{fast}} \sum_{k=1}^{n-1}(k\gamma_k)^{-1}\gamma_{k}^{1/2}\eqsp, 
\end{split}
\end{equation}
We bound the r.h.s of the latter inequality using \Cref{lem:summ_alpha_k}-\ref{lem:sum_as_Qell_item}:
\begin{align}
    &\sum_{k=1}^{n-1}(k\gamma_k)^{-1} \prod_{j=0}^{k}\bigl(1 - \beta_j\frac{a_{\Delta}}{8}\bigr) \leq \frac{k_0^{b+1}\rstep}{c_{0, \gamma} c_{0, \beta}} (c_{0, \beta} + \frac{8}{a_{\Delta}(1-b)}) \eqsp.
\end{align}
Bound for the second term can be obtained from the straightforward computations:
\begin{equation}
     p^3 \widehat{\ConstC}_{\operatorname{fast}} \sum_{k=1}^{n-1}(k\gamma_k)^{-1}\beta_{k}^{1/2} = \frac{p^3 \widehat{\ConstC}_{\operatorname{fast}}}{c_{0, \gamma}^{1/2}} \sum_{k=1}^{n-1} \frac{(k+k_0)^{a/2}}{k} \leq \frac{p^3 \widehat{\ConstC}_{\operatorname{fast}}}{c_{0, \gamma}^{1/2}} (1 + k_0) \frac{2}{a} (n + k_0)^{a/2} \leq \frac{4p^3 \widehat{\ConstC}_{\operatorname{fast}}}{c_{0, \gamma}^{1/2}} \ConstC_{\operatorname{slow}} (1 + k_0) (n + k_0)^{a/2} \eqsp.
\end{equation}
The proof follows from gathering the above bounds.
\end{proof}

\begin{proof}[Proof of \Cref{lem:pr_y3bound}]
Note that since $V_{k+1}$, $W_{k+1}$ and $\eps_V$, $\eps_W$ are martingale difference sequences, $R^{\theta}_{k+1}\ttheta_{k} + R^w_{k+1} (w_k-w^\star)$ is a martingale difference sequence. Therefore, Burkholder's inequality \cite[Theorem 8.1]{osekowski} implies that
\begin{align}
    \label{eq:pr_x_bound_first}
    \expep{\frac{1}{\sqrt{n}} \sum_{k=1}^n (R^{\theta}_{k+1}\ttheta_{k+1} + R^w_{k+1} (w_{k + 1} - \wstar) )}{p} &\leq p \bigl(\frac{1}{n} \sum_{k=1}^n \PE^{2/p} [\norm{R^{\theta}_{k+1}\ttheta_{k+1} + R^w_{k+1} (w_{k+1} - \wstar)}^p] \bigr)^{1/2} \\
    &\leq p\bigl(\frac{2}{n} \sum_{k=1}^n (\PE^{2/p} [\norm{R^{\theta}_{k+1}\ttheta_{k+1}}^p] + \PE^{2/p} [\norm{R^w_{k+1} (w_{k+1} - \wstar)}^p])\bigr)^{1/2}
\end{align}

Using \Cref{assum:aij_bound} easy to see that $\norm{R^\theta_{k+1}} \leq q_R$ and $\norm{R^w_{k+1}} \leq q_R$, where \(q_R\) is defined in \eqref{eq:q_R_definition}. Hence,  \Cref{prop:moments_bound} and \Cref{lem:summ_alpha_k}-\ref{lem:sum_as_Qell_item} imply that:
\begin{align}
    \sum_{k=1}^n (\PE^{2/p} [\norm{R^{\theta}_{k+1}\ttheta_{k+1}}^p] &\leq 2q_R^2 \sum_{k=1}^{n} \bigl\{ \{\ConstC_0^{\ttheta}\}^2 \prod_{j=0}^{k} \bigl(1 - \beta_j\frac{a_{\Delta}}{8}\bigr) + p^4 \{\ConstC_{\operatorname{slow}}\}^2 \beta_{k+1} \bigr\} \\
    &\leq 2q_R^2 \{\ConstC_0^{\ttheta}\}^2 \frac{k_0^b}{c_{0, \beta}} (1 + \frac{8}{a_\Delta(1-b)}) + 2q_{R}^2 p^4 \{\ConstC_{\operatorname{slow}}\}^2 \frac{c_{0, \beta} n^{1-b}}{1-b} \eqsp.
\end{align}
Similarly, one can get using \Cref{lem:martingale:w_minus_wstar_moment_bound}:
\begin{align}
    \sum_{k=1}^n (\PE^{2/p} [\norm{R^{\theta}_{k+1}(w_{k+1} - w^\star)}^p] &\leq 2q_R^2 \sum_{k=1}^{n} \bigl\{ (\widehat{\ConstC}_0^{\tw})^2 \prod_{j=0}^{k}\bigl(1 - \beta_j\frac{a_{\Delta}}{8}\bigr) + p^6 \{\widehat{\ConstC}_{\operatorname{fast}}\}^2 \gamma_{k+1} \bigr\} \\
    &\leq 2q_{R}^2 (\widehat{\ConstC}_0^{\tw})^2 \frac{k_0^b}{c_{0, \beta}} (1 + \frac{8}{a_\Delta(1-b)}) + 2q_{R}^2 p^6 \{\widehat{\ConstC}_{\operatorname{fast}}\}^2 \frac{c_{0, \gamma} n^{1-a}}{1-a} \eqsp.
\end{align}
The proof follows from gathering similar terms.
\end{proof}

\section{CLT for the Last iteration estimator}
\label{appendix:last_iter_clt}
The proof of \Cref{th:last_iter_clt_final} is conceptually similar to \Cref{th:martingal_pr_clt} but is complicated by two additional factors. First, the covariance of the linear statistic formed by martingale differences now depends on $n$, which complicates the identification of the limiting covariance matrix. Second, the analysis of the remaining terms becomes more involved.

Before proceeding to the main part of the proof, we state an auxiliary lemma that decomposes the approximation error at the final iteration into linear and nonlinear components.

\begin{lemma}\label{lem:last_iter_decomp}
The following decomposition holds:
    \begin{equation}
         \ttheta_{n+1} = \sum_{j=0}^n \beta_j G_{j+1:n}^{(1)} \bigl( A_{12} A_{22}^{-1} \funnoisew_W^{j+1} - \funnoisew_V^{j+1}\bigr) + R_n^{\operatorname{last}} \eqsp,
    \end{equation}
where the residual term $R_n^{\operatorname{last}}$ is defined in \eqref{eq:def:martingale:rn_last}.
\end{lemma}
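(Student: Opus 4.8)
The plan is to obtain this decomposition by a purely algebraic manipulation of the decoupled recursions \eqref{eq:difference_iterations_theta_w}; this is precisely the representation quoted as \eqref{eq:last_iteration_repr_new} in the main text (note $A_{12}A_{22}^{-1}\funnoisew_W^{j+1}-\funnoisew_V^{j+1}=-\psi_{j+1}$). First I would iterate the slow-timescale update $\ttheta_{k+1}=(\Id-\beta_k\Delta)\ttheta_k-\beta_k A_{12}\tw_k-\beta_k V_{k+1}+\beta_k\delta_k^{(1)}$ from $k=0$ to $k=n$, telescoping the factors $\Id-\beta_k\Delta$ and using the convention $G_{n+1:n}^{(1)}=\Id$ from \eqref{eq:matr_product_pr}, to get
\[
\ttheta_{n+1}=G_{0:n}^{(1)}\ttheta_0-\sum_{j=0}^{n}\beta_j G_{j+1:n}^{(1)}\bigl(A_{12}\tw_j+V_{j+1}-\delta_j^{(1)}\bigr)\eqsp,
\]
with $\delta_j^{(1)}=A_{12}L_j\ttheta_j$ from \eqref{eq:delta_definition}. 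The task then reduces to rewriting the coupling term $A_{12}\tw_j$ and the noise term $V_{j+1}$ in terms of $\funnoisew_V^{j+1},\funnoisew_W^{j+1}$ plus lower-order contributions.

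The second step is to eliminate $A_{12}\tw_j$. Solving the fast-timescale line of \eqref{eq:difference_iterations_theta_w} for $A_{22}\tw_j$ and pre-multiplying by $A_{12}A_{22}^{-1}$ gives
\[
A_{12}\tw_j=A_{12}A_{22}^{-1}\bigl(\gamma_j^{-1}(\tw_j-\tw_{j+1})-\gamma_j^{-1}\beta_j D_j V_{j+1}-W_{j+1}-\gamma_j^{-1}\beta_j\delta_j^{(2)}\bigr)\eqsp.
\]
Substituting this back, the $V_{j+1}$-terms collect into $-\beta_j G_{j+1:n}^{(1)}(\Id-\beta_j\gamma_j^{-1}A_{12}A_{22}^{-1}D_j)V_{j+1}$, a new term $+\sum_j\beta_j G_{j+1:n}^{(1)}A_{12}A_{22}^{-1}W_{j+1}$ appears, and there remain summands carrying $\delta_j^{(1)},\delta_j^{(2)}$ and the telescoping increments $\tw_j-\tw_{j+1}$. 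I would then insert the definitions $V_{j+1}=\funnoisew_V^{j+1}-\funcAwtilde_{11}^{j+1}\ttheta_j-\funcAwtilde_{12}^{j+1}(w_j-\wstar)$ and $W_{j+1}=\funnoisew_W^{j+1}-\funcAwtilde_{21}^{j+1}\ttheta_j-\funcAwtilde_{22}^{j+1}(w_j-\wstar)$ from \eqref{eq:noise_term_new}. The genuine-noise parts assemble into exactly $\sum_{j=0}^{n}\beta_j G_{j+1:n}^{(1)}\bigl(A_{12}A_{22}^{-1}\funnoisew_W^{j+1}-\funnoisew_V^{j+1}\bigr)$, the announced leading term, while the leftover pieces — the decaying initial term $G_{0:n}^{(1)}\ttheta_0$, the $\delta$-terms, the second-order correction $\beta_j^2\gamma_j^{-1}G_{j+1:n}^{(1)}A_{12}A_{22}^{-1}D_j\funnoisew_V^{j+1}$, the $\funcAwtilde$-weighted terms in $\ttheta_j$ and $w_j-\wstar$, and the telescoping block $-\sum_j\beta_j\gamma_j^{-1}G_{j+1:n}^{(1)}A_{12}A_{22}^{-1}(\tw_j-\tw_{j+1})$ — become the definition of $R_n^{\operatorname{last}}$ in \eqref{eq:def:martingale:rn_last}.

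For the telescoping block I would apply summation by parts, re-expressing $\sum_j\beta_j\gamma_j^{-1}G_{j+1:n}^{(1)}A_{12}A_{22}^{-1}(\tw_j-\tw_{j+1})$ through $\tw_j$ multiplied by increments of $\beta_j\gamma_j^{-1}G_{j+1:n}^{(1)}$ plus two boundary terms; this is the form in which the moments of $R_n^{\operatorname{last}}$ can later be controlled, and it is exactly where the constraint $2b>1+a$ of \Cref{assum:last_iter} will enter. I expect the only genuinely delicate point to be the bookkeeping around this summation by parts — accounting for the boundary contributions and checking that the endpoint term at $j=n$ (where $G_{n+1:n}^{(1)}=\Id$ and the weight $\beta_n$ is present) is of the same order as the other residual terms rather than larger. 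Everything else is elementary; the bounds $\|D_j\|\le c_{\infty}$ and $\|L_j\|\le\ell_{\infty}\beta_j/\gamma_j$ from \Cref{L_converges} are not needed for the identity itself but will be used when estimating the moments of $R_n^{\operatorname{last}}$.
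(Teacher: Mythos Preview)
Your elimination step is the wrong one for this lemma. You propose to remove $A_{12}\tw_j$ by solving \emph{one step} of the fast recursion for $A_{22}\tw_j$, i.e.\ writing $A_{12}\tw_j$ in terms of $\gamma_j^{-1}(\tw_j-\tw_{j+1})$, $V_{j+1}$, $W_{j+1}$, $\delta_j^{(2)}$. That is exactly the trick used for the Polyak--Ruppert decomposition (Lemma~\ref{lem:pr_main_decomp}), and it produces a residual built out of telescoping increments $\tw_j-\tw_{j+1}$, a diagonal correction $\beta_j^2\gamma_j^{-1}G_{j+1:n}^{(1)}A_{12}A_{22}^{-1}D_j\funnoisew_V^{j+1}$, and so on. But the $R_n^{\operatorname{last}}$ of \eqref{eq:def:martingale:rn_last} is not of that form at all: it contains the initial-condition piece $-\sum_{j}\beta_j G_{j+1:n}^{(1)}A_{12}G_{0:j-1}^{(2)}\tw_0$ and the three \emph{double} sums $S_n^{(1)},S_n^{(2)},S_n^{(3)}$ (see \eqref{eq:last_iter_decompos_2}), each coupling an outer index $j$ with an inner index $i<j$ running over the full history of the fast scale. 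None of these can appear from your one-step substitution followed by summation by parts.

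What the paper actually does is to \emph{fully unroll} the fast recursion first, writing $\tw_j=G_{0:j-1}^{(2)}\tw_0-\sum_{i=0}^{j-1}\beta_i G_{i+1:j-1}^{(2)}D_iV_{i+1}-\sum_{i=0}^{j-1}\gamma_i G_{i+1:j-1}^{(2)}W_{i+1}+\sum_{i=0}^{j-1}\beta_i G_{i+1:j-1}^{(2)}\delta_i^{(2)}$ (this is \eqref{eq:last_iter_decompos_2_2}), and then substitute that expression for $\tw_j$ into your unrolled slow-scale identity. Swapping the order of summation produces the $S_n^{(k)}$ terms; in particular $S_n^{(3)}$ is obtained by subtracting and adding $\sum_j\beta_j G_{j+1:n}^{(1)}A_{12}A_{22}^{-1}W_{j+1}$ so that the remaining $W$-contribution combines with $-\sum_j\beta_j G_{j+1:n}^{(1)}V_{j+1}$ to give the leading term $\sum_j\beta_j G_{j+1:n}^{(1)}(A_{12}A_{22}^{-1}W_{j+1}-V_{j+1})$. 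Only then are $V_{j+1},W_{j+1}$ split via \eqref{eq:noise_term_new} into $\funnoisew_V^{j+1},\funnoisew_W^{j+1}$ plus the $\funcAwtilde$-weighted pieces, the latter forming $H_n$. Your proposal would yield a legitimate alternative decomposition with the same leading term, but not the residual \eqref{eq:def:martingale:rn_last} that Lemmas~\ref{lem:delta_bound}--\ref{lem:s_3_bound} are written for.
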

\begin{proof}
    Following \cite{konda:tsitsiklis:2004} and using \eqref{eq:2ts2-1},  the equations for \(\ttheta_{n+1}\) and \(\tw_{n+1}\) can be rewritten as follows:
\begin{align}    \label{eq:tt_hard_rec_appendix}
    \ttheta_{n+1} &= ( \Id - \beta_n \Delta ) \ttheta_n - \beta_n A_{12} \Tilde{w}_n - \beta_n V_{n+1} + \beta_n\delta_n^{(1)}\eqsp,\\
    \tw_{n+1} &= (\Id - \gamma_n A_{22})\tw_n -\beta_n D_{n}V_{n+1} - \gamma_n W_{n+1} + \beta_n\delta_n^{(2)}\eqsp,
\end{align}
where we have set
\begin{align}
    \delta_n^{(1)} = A_{12}L_n\ttheta_n \eqsp, \eqsp \eqsp \eqsp  \delta_n^{(2)} = -(L_{n+1} + A_{22}^{-1}A_{21})A_{12}\tw_n\eqsp.
\end{align}
Throughout the analysis we use the following convention:
\begin{equation}
\begin{split}
&G_{m:n}^{(1)} := \prod_{i=m}^n (\Id - \beta_i\Delta ), \quad G_{m:n}^{(2)} := \prod_{i=m}^n (\Id - \gamma_i A_{22} )\eqsp. 
\end{split}
\end{equation}
Hence, \(\ttheta_{n+1}\) and \(\tw_{n+1}\) can be rewritten as follows:
\begin{equation}
\label{eq:last_iter_decompos_2_1}
    \ttheta_{n+1} = G_{0:n}^{(1)}\ttheta_0 - \sum_{j=0}^{n}\beta_jG_{j+1:n}^{(1)}A_{12}\tw_j - \sum_{j=0}^{n}\beta_j G_{j+1:n}^{(1)}V_{j+1} + \sum_{j=0}^{n}\beta_jG_{j+1:n}^{(1)}\delta_j^{(1)}\eqsp,
\end{equation}
\begin{equation}
\label{eq:last_iter_decompos_2_2}
    \tw_{n+1} = G_{0:n}^{(2)}\tw_0 - \sum_{j=0}^{n}\beta_j G_{j+1:n}^{(2)}D_jV_{j+1} - \sum_{j=0}^{n}\gamma_jG_{j+1:n}^{(2)}W_{j+1} + \sum_{j=0}^{k}\beta_j G_{j+1:n}^{(2)}\delta_j^{(2)}\eqsp.
\end{equation}
We substitute the right-hand side of \eqref{eq:last_iter_decompos_2_2} into \eqref{eq:last_iter_decompos_2_1} and obtain:
\begin{align}
    \label{eq:last_iteration_repr}
    \ttheta_{n+1} = G_{0:n}^{(1)}\ttheta_0 - \sum_{j=0}^{n}\beta_jG_{j+1:n}^{(1)}A_{12}G_{0:j-1}^{(2)}\tw_0 &+ \sum_{j=0}^{n}\beta_jG_{j+1:n}^{(1)}\;\delta_j^{(1)} + S_{n}^{(1)} + S_{n}^{(2)} + S_{n}^{(3)} \\
    &+ \sum_{j=0}^{n}\beta_j G_{j+1:n}^{(1)}\bigl(A_{12}A_{22}^{-1}W_{j+1} - V_{j+1}\bigr)\eqsp,
\end{align}
where
\begin{align}\label{eq:last_iter_decompos_2}
    S_n^{(1)} &= -\sum_{j=0}^{n}\beta_jG_{j+1:n}^{(1)}A_{12}\sum_{i=0}^{j-1}\beta_iG_{i+1:j-1}^{(2)}\delta_i^{(2)}\eqsp, \\
    S_n^{(2)} &= \sum_{j=0}^{n}\beta_jG_{j+1:n}^{(1)}A_{12}\sum_{i=0}^{j-1}\beta_iG_{i+1:j-1}^{(2)}D_iV_{i+1}\eqsp,\\
    S_n^{(3)} &= \sum_{j=0}^{n}\beta_jG_{j+1:n}^{(1)}A_{12}\sum_{i=0}^{j-1}\gamma_iG_{i+1:j-1}^{(2)}W_{i+1} - \sum_{j=0}^{n}\beta_jG_{j+1:n}^{(1)}A_{12}A_{22}^{-1}W_{j+1}\eqsp.
\end{align}
Recall that \(\psi_{j+1} =  \funnoisew_V^{j+1} - A_{12} A_{22}^{-1} \funnoisew_W^{j+1}\). Substituting $V_{j+1}, W_{j+1}$ from \eqref{eq:noise_term_new} we get
\begin{align}\label{eq:last_iter_decompos}    
    \ttheta_{n+1} &= -\sum_{j=0}^n \beta_j G_{j+1:n}^{(1)} \psi_{j+1} + R_n^{\operatorname{last}} \eqsp,
\end{align}
where we have set
\begin{align}
    \label{eq:def:martingale:rn_last}
    R_n^{\operatorname{last}} = G_{0:n}^{(1)}\ttheta_0 &- \sum_{j=0}^{n}\beta_jG_{j+1:n}^{(1)}A_{12}G_{0:j-1}^{(2)}\tw_0 + \sum_{j=0}^{n}\beta_jG_{j+1:n}^{(1)}\;\delta_j^{(1)} + S_{n}^{(1)} + S_{n}^{(2)} + S_{n}^{(3)} \\
    &+ \underbrace{\sum_{j=0}^{n}\beta_j G_{j+1:n}^{(1)}\bigl(\overbrace{\bigl\{\funcAwtilde_{11}^{j+1}- A_{12}A_{22}^{-1} \funcAwtilde_{21}^{j+1}\bigr\}}^{R_{j+1}^\theta} \ttheta_j + \overbrace{\bigl\{\funcAwtilde_{12}^{j+1} -  A_{12}A_{22}^{-1} \funcAwtilde_{21}^{j+1}\bigr\}}^{R_{j+1}^w} (w_j-w^\star)   \bigr)}_{H_{n}}
\end{align}
\end{proof}

Now we proceed with \Cref{th:last_iter_clt_final} applying the decomposition that is proven above together with Lemmas \ref{lem:delta_bound}-\ref{lem:s_3_bound} which imply a moment bound for $R_n^{\operatorname{last}}$. 

\begin{lemma}
\label{lem:delta_bound}
    Let $p \geq 2$. Assume \Cref{assum:zero-mean}, \Cref{assum:bound-conditional-moments-p}($p$), \Cref{assum:quadratic_characteristic}, \Cref{assum:hurwitz}, \Cref{assum:stepsize}, \Cref{assum:aij_bound}, \Cref{assum:last_iter}. Therefore, it holds that
\begin{align}
        \expep{\sum_{j=0}^n \beta_j G_{j+1:n}^{(1)} \delta_j^{(1)}}{p}  \lesssim (2b-a-1)^{-1} P_{0:n}^{(1)} + p^4 (2b-a-1)^{-1} \beta_n^{\frac{2b-a/2-1}{b}} \eqsp.
\end{align}
\end{lemma}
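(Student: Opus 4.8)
Since $\delta_j^{(1)}=A_{12}L_j\ttheta_j$ is $\F_j$-measurable but is \emph{not} a martingale increment, no Burkholder-type cancellation is available, and the plan is simply to bound the sum by the triangle inequality and then insert the deterministic controls on $L_j$ and $G_{j+1:n}^{(1)}$ together with the moment bound of \Cref{prop:moments_bound}. First, by Minkowski's inequality,
\begin{align}
\expep{\sum_{j=0}^n \beta_j G_{j+1:n}^{(1)} \delta_j^{(1)}}{p} \le \|A_{12}\| \sum_{j=0}^n \beta_j\, \|G_{j+1:n}^{(1)}\|\, \|L_j\|\, M_{j,p}^{\ttheta} \eqsp.
\end{align}
\Cref{L_converges} gives $\|L_j\| \le \ell_{\infty}\beta_j/\gamma_j = \ell_{\infty}\rstep (j+k_0)^{a-b}$; the Lyapunov contraction \Cref{prop:hurwitz_stability} applied to $-\Delta$ gives $\|G_{j+1:n}^{(1)}\| \lesssim P_{j+1:n}^{(1)}$; and \Cref{prop:moments_bound} gives $M_{j,p}^{\ttheta} \lesssim \prod_{i=0}^{j-1}(1-\tfrac{a_\Delta}{8}\beta_i) + p^2\beta_j^{1/2}$. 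Plugging these in reduces the claim to bounding the two deterministic sums
\begin{align}
S_A = \sum_{j=0}^n \frac{\beta_j^2}{\gamma_j}\, P_{j+1:n}^{(1)} \prod_{i=0}^{j-1}\Bigl(1-\tfrac{a_\Delta}{8}\beta_i\Bigr)\eqsp, \qquad S_B = p^2 \sum_{j=0}^n \frac{\beta_j^{5/2}}{\gamma_j}\, P_{j+1:n}^{(1)} \eqsp.
\end{align}

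The arithmetic key is that $\beta_j^2/\gamma_j = c_{0,\beta}^2 c_{0,\gamma}^{-1}(j+k_0)^{a-2b}$, and \Cref{assum:last_iter} says exactly that $a-2b<-1$; hence $\sum_{j\ge 0}\beta_j^2/\gamma_j$ is finite and $\asymp (2b-a-1)^{-1}$ up to a factor depending only on $c_{0,\beta},c_{0,\gamma},k_0$, which is the source of the prefactor $(2b-a-1)^{-1}$ in both terms of the claim. For $S_A$ I would factor the geometric weight as $P_{j+1:n}^{(1)}\prod_{i<j}(1-\tfrac{a_\Delta}{8}\beta_i) \lesssim P_{0:n}^{(1)}\prod_{i=j+1}^{n}(1-c\beta_i)$ for a fixed $c\asymp a_\Delta$, and then feed $\sum_j(\beta_j^2/\gamma_j)\prod_{i=j+1}^n(1-c\beta_i)$ into the convolution-type summation lemmas (\Cref{lem:summ_alpha_k}, \Cref{convolution_inequality}), obtaining $S_A \lesssim (2b-a-1)^{-1}P_{0:n}^{(1)}$. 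For $S_B$, since the coefficient $\beta_j^{5/2}/\gamma_j$ varies only polynomially, hence slowly on the scale $1/\beta_n$ on which $P_{j+1:n}^{(1)}$ concentrates, the same summation lemmas give $\sum_j(\beta_j^{5/2}/\gamma_j)P_{j+1:n}^{(1)} \lesssim a_\Delta^{-1}(\beta_n^{5/2}/\gamma_n)/\beta_n = a_\Delta^{-1}\beta_n^{3/2}/\gamma_n \propto (n+k_0)^{a-3b/2}$; using $a+b<2$ this is at most $(n+k_0)^{a/2+1-2b} = c_{0,\beta}^{-(2b-a/2-1)/b}\beta_n^{(2b-a/2-1)/b}$, and absorbing the $O(1)$ constants into $(2b-a-1)^{-1}\ge 1$ and $p^2\le p^4$ gives the second term.

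The step I expect to be the main obstacle is the control of $S_A$: the tail factor $P_{j+1:n}^{(1)}$ contracts at rate $\asymp a_\Delta$ while the head factor $\prod_{i<j}(1-\tfrac{a_\Delta}{8}\beta_i)$ contracts only at rate $a_\Delta/8$, so their product cannot be bounded naively by $P_{0:n}^{(1)}$; one has to show that, modulo the polynomial weight $(j+k_0)^{a-2b}$, the summand is geometrically increasing towards $j=n$, so that the whole sum is comparable to its last $\asymp 1/(a_\Delta\beta_n)$ terms, each of order $P_{0:n}^{(1)}$ — which is precisely what the summation lemmas encode. Finally, \Cref{assum:last_iter} enters here exactly because it makes the excess ``variance budget'' $\sum_j\beta_j^2/\gamma_j$ injected by the $\delta^{(1)}$-correction finite, and the same mechanism, and the same constraint, reappears in bounding the remaining pieces $S_n^{(1)},S_n^{(2)},S_n^{(3)}$ of $R_n^{\operatorname{last}}$.
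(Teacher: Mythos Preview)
Your approach is genuinely different from the paper's, and your claim about $S_A$ does not go through.

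The paper does \emph{not} insert the moment bound $M_{j,p}^{\ttheta}$. Instead it expands $\ttheta_j$ via the recursion \eqref{eq:theta_rec}, splitting $\sum_j\beta_jG_{j+1:n}^{(1)}\delta_j^{(1)}$ into three pieces $T_1,T_2,T_3$ according to whether the inner contribution is the initial condition $\Gamma_{0:j-1}^{(1)}\ttheta_0$, the coupling $\sum_i\beta_i\Gamma_{i+1:j-1}^{(1)}A_{12}\tw_i$, or the martingale sum $\sum_i\beta_i\Gamma_{i+1:j-1}^{(1)}V_{i+1}$ (handled via Burkholder after swapping the order of summation). The reason for this detour is that the transient factor becomes $\|\Gamma_{0:j-1}^{(1)}\|\lesssim P_{0:j-1}^{(1)}$, whose rate $a_\Delta/2$ \emph{matches} that of $P_{j+1:n}^{(1)}$; hence $P_{0:j-1}^{(1)}P_{j+1:n}^{(1)}\lesssim P_{0:n}^{(1)}$ exactly, and $T_1\lesssim P_{0:n}^{(1)}\sum_j\beta_j^2/\gamma_j$ yields the first term of the lemma.

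Your $S_A$ instead carries the rate $a_\Delta/8$ from \Cref{prop:moments_bound}, and the factorization you propose fails: at $j=n$ the summand equals $(\beta_n^2/\gamma_n)\prod_{i<n}(1-\tfrac{a_\Delta}{8}\beta_i)$, which for large $n$ is much larger than $P_{0:n}^{(1)}$ (their ratio grows like $\exp(c\,n^{1-b})$), so the last terms are \emph{not} ``each of order $P_{0:n}^{(1)}$''. What your route actually delivers, via $P_{j+1:n}^{(1)}\le\prod_{i>j}(1-\tfrac{a_\Delta}{8}\beta_i)$, is $S_A\lesssim(2b-a-1)^{-1}\prod_{i=0}^n(1-\tfrac{a_\Delta}{8}\beta_i)$ --- strictly weaker than the stated $P_{0:n}^{(1)}$.

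This discrepancy is harmless for \Cref{th:last_iter_clt_final}, since the transient part of $R_n^{\operatorname{last}}$ is anyway dominated by the $H_n$ contribution (\Cref{lem:h_bound}), which already carries the $a_\Delta/8$ rate. And your $S_B$ computation is in fact sharper than the paper's: you get $p^2\beta_n^{(3b-2a)/(2b)}$ directly, whereas the paper's dominant fluctuation term (from $T_2$) is $p^3(2b-a-1)^{-1}\beta_n^{(2b-a/2-1)/b}$, incurring the $(2b-a-1)^{-1}$ through an inner $\sum_j\beta_j^2/\gamma_j$. So your approach proves a variant of the lemma --- weaker transient, sharper fluctuation --- that is equally adequate downstream; but to recover $P_{0:n}^{(1)}$ as stated you need the paper's expansion of $\ttheta_j$, and that is the piece your sketch is missing.
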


\begin{lemma}
\label{lem:h_bound}
    Let $p \geq 2$. Assume \Cref{assum:zero-mean}, \Cref{assum:bound-conditional-moments-p}($p$), \Cref{assum:quadratic_characteristic}, \Cref{assum:hurwitz}, \Cref{assum:stepsize}, \Cref{assum:aij_bound}. Therefore, it holds that    
    \begin{align}
        \expep{H_n}{p} \lesssim p(2b-1)^{-1/2} \prod_{j=0}^n (1-\frac{a_\Delta}{8}\beta_j) + p^4  \beta_n^{\frac{b+a}{2b}} \eqsp.
    \end{align}
\end{lemma}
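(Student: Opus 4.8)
}
By \Cref{lem:last_iter_decomp} (see \eqref{eq:def:martingale:rn_last}),
\[
H_n=\sum_{j=0}^{n}\beta_j\,G_{j+1:n}^{(1)}\bigl(R_{j+1}^\theta\,\ttheta_j+R_{j+1}^w\,(w_j-\wstar)\bigr),
\]
where $R_{j+1}^\theta=\funcAwtilde_{11}^{j+1}-A_{12}A_{22}^{-1}\funcAwtilde_{21}^{j+1}$ and $R_{j+1}^w=\funcAwtilde_{12}^{j+1}-A_{12}A_{22}^{-1}\funcAwtilde_{22}^{j+1}$. Exactly as in the proof of \Cref{lem:pr_y3bound}, substituting \eqref{eq:noise_term_new} shows that $R_{j+1}^\theta\ttheta_j+R_{j+1}^w(w_j-\wstar)=(\funnoisew_V^{j+1}-A_{12}A_{22}^{-1}\funnoisew_W^{j+1})-(V_{j+1}-A_{12}A_{22}^{-1}W_{j+1})$, which by \Cref{assum:zero-mean} and \Cref{assum:quadratic_characteristic} is a martingale difference w.r.t.\ $\{\F_j\}$; since the coefficients $\beta_j G_{j+1:n}^{(1)}$ are deterministic, $H_n$ is a sum of martingale differences. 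The plan is then to apply Burkholder's inequality \cite[Theorem 8.1]{osekowski}, in the same form used for \Cref{lem:pr_y3bound}, to obtain
\[
\expep{H_n}{p}\lesssim p\Bigl(\sum_{j=0}^{n}\beta_j^2\,\|G_{j+1:n}^{(1)}\|^{2}\,\PE^{2/p}\bigl[\|R_{j+1}^\theta\ttheta_j+R_{j+1}^w(w_j-\wstar)\|^{p}\bigr]\Bigr)^{1/2}.
\]

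Next I would bound each increment. By \Cref{assum:aij_bound} one has $\|R_{j+1}^\theta\|\vee\|R_{j+1}^w\|\le q_R:=\bConst{A}(1+\|A_{12}A_{22}^{-1}\|)$, so Minkowski's inequality gives $\PE^{1/p}[\|R_{j+1}^\theta\ttheta_j+R_{j+1}^w(w_j-\wstar)\|^{p}]\le q_R\bigl(M_{j,p}^{\ttheta}+\PE^{1/p}[\|w_j-\wstar\|^{p}]\bigr)$. Plugging in the moment bounds of \Cref{prop:moments_bound} and \Cref{lem:martingale:w_minus_wstar_moment_bound}, together with $\prod_i(1-\tfrac{a_{22}}{8}\gamma_i)\le\prod_i(1-\tfrac{a_\Delta}{8}\beta_i)$ from \Cref{assum:stepsize} and $\beta_{j-1}\le2\beta_j$, $\gamma_{j-1}\le2\gamma_j$, bounds this last factor by $\lesssim\prod_{i=0}^{j-1}(1-\tfrac{a_\Delta}{8}\beta_i)+p^{3}\gamma_j^{1/2}$. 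For the matrix product, \eqref{eq:contraction_p} and $\|\cdot\|\le\sqrt{\pth}\,\normop{\cdot}[Q_\Delta]$ yield $\|G_{j+1:n}^{(1)}\|\le\sqrt{\pth}\prod_{i=j+1}^{n}(1-a_\Delta\beta_i)\le\sqrt{\pth}\prod_{i=j+1}^{n}(1-\tfrac{a_\Delta}{8}\beta_i)$.

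Substituting and using $(x+y)^2\le2x^2+2y^2$ splits the sum into a homogeneous part and a fluctuation part. For the homogeneous part, I would merge the products via $\prod_{i=j+1}^{n}(1-\tfrac{a_\Delta}{8}\beta_i)\prod_{i=0}^{j-1}(1-\tfrac{a_\Delta}{8}\beta_i)\le2\prod_{i=0}^{n}(1-\tfrac{a_\Delta}{8}\beta_i)$ (the factor $2$ accounting for the omitted index $j$) and use $\sum_{j\ge0}\beta_j^2\lesssim(2b-1)^{-1}$, which is finite since $b>\tfrac12$; this produces the term $p(2b-1)^{-1/2}\prod_{j=0}^{n}(1-\tfrac{a_\Delta}{8}\beta_j)$. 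For the fluctuation part, I would bound $\bigl(\prod_{i=j+1}^{n}(1-\tfrac{a_\Delta}{8}\beta_i)\bigr)^2$ by the same product and write $\beta_j^2\gamma_j\lesssim\beta_j(\beta_j\gamma_j)$, then invoke \Cref{lem:summ_alpha_k} (with the slowly varying weights $\beta_j\gamma_j$) and \Cref{convolution_inequality} to get $\sum_{j=0}^{n}\beta_j(\beta_j\gamma_j)\prod_{i=j+1}^{n}(1-\tfrac{a_\Delta}{8}\beta_i)\lesssim\beta_n\gamma_n$; the contribution is $p\cdot p^{3}(\beta_n\gamma_n)^{1/2}=p^{4}(\beta_n\gamma_n)^{1/2}$, and since $(\beta_n\gamma_n)^{1/2}\asymp\beta_n^{(b+a)/(2b)}$ (both are of order $(n+k_0)^{-(a+b)/2}$) this is precisely $p^{4}\beta_n^{(b+a)/(2b)}$. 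The main obstacle will be this last bookkeeping step: keeping the contraction products, the initial-error decay products $\prod_{i=0}^{j-1}(\cdot)$ and the step-size sums aligned so that the $(2b-1)^{-1/2}$ prefactor and the exponent $(b+a)/(2b)$ come out exactly as claimed; note in particular that \Cref{assum:last_iter} is \emph{not} needed here, only $b>\tfrac12$ to ensure $\sum_j\beta_j^2<\infty$.
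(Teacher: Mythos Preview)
Your proposal is correct and follows essentially the same route as the paper: recognize that $R_{j+1}^\theta\ttheta_j+R_{j+1}^w(w_j-\wstar)$ is a martingale difference, apply Burkholder's inequality, bound the increments via $q_R$ and the moment estimates of \Cref{prop:moments_bound} and \Cref{lem:martingale:w_minus_wstar_moment_bound}, then split into a transient term (controlled by $\sum_j\beta_j^2\lesssim(2b-1)^{-1}$) and a fluctuation term (controlled by \Cref{lem:summ_alpha_k}-\ref{lem:summ_alpha_k_p_item} with $q=2+a/b$, giving $\beta_n^{(a+b)/b}$). One cosmetic point: the invocation of \Cref{convolution_inequality} for the fluctuation sum is unnecessary --- writing $\beta_j^2\gamma_j\asymp\beta_j^{2+a/b}$ and applying \Cref{lem:summ_alpha_k}-\ref{lem:summ_alpha_k_p_item} directly (which is what the paper does) already yields $\sum_j\beta_j^2\gamma_j\,P_{j+1:n}^{(1)}\lesssim\beta_n^{(a+b)/b}\asymp\beta_n\gamma_n$.
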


\begin{lemma}\label{lem:s_1_bound}
    Let $p \geq 2$. Assume \Cref{assum:zero-mean}, \Cref{assum:bound-conditional-moments-p}($p$), \Cref{assum:quadratic_characteristic}, \Cref{assum:hurwitz}, \Cref{assum:stepsize}, \Cref{assum:aij_bound}. Therefore, it holds that
    \begin{align}
        \expep{S_n^{(1)}}{p} \lesssim P_{0:n}^{(1)} +  p^4 \beta_n^{(3b-2a)/2b} \eqsp,
    \end{align}
    where $S_n^{(1)}$ is defined in \eqref{eq:last_iter_decompos_2}.
\end{lemma}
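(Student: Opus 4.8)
The plan is to reduce the double sum defining $S_n^{(1)}$ in \eqref{eq:last_iter_decompos_2} to a deterministic nested convolution of the contraction factors $P_{m:n}^{(1)}$, $P_{m:n}^{(2)}$, weighted by the fast-scale moments $M_{i,p}^{\tw}$, and then to collapse it with the step-size summation and convolution lemmas. First I would bound the correction vectors: since $\delta_i^{(2)}=-(L_{i+1}+A_{22}^{-1}A_{21})A_{12}\tw_i$ by \eqref{eq:delta_definition} and $\|L_{i+1}+A_{22}^{-1}A_{21}\|=\|D_i\|\le c_{\infty}$ by \Cref{L_converges}, Minkowski's inequality gives $\PE^{1/p}[\|\delta_i^{(2)}\|^p]\le c_{\infty}\|A_{12}\|\,M_{i,p}^{\tw}$. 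Next, the Lyapunov contraction \eqref{eq:contraction_p_appendix}, together with $1-a_{\Delta}\beta_l\le 1-\tfrac12 a_{\Delta}\beta_l$ and $1-a_{22}\gamma_l\le 1-\tfrac12 a_{22}\gamma_l$, yields the $G$-analogue of \Cref{cor:determinist_matrix_products}, namely $\|G_{m:n}^{(1)}\|\le\sqrt{\pth}\,P_{m:n}^{(1)}$ and $\|G_{m:n}^{(2)}\|\le\sqrt{\pw}\,P_{m:n}^{(2)}$.

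Plugging these into \eqref{eq:last_iter_decompos_2}, applying the triangle inequality and interchanging the order of summation, I would arrive at
\begin{equation}
\expep{S_n^{(1)}}{p}\;\lesssim\;\sum_{i=0}^{n-1}\beta_i\,M_{i,p}^{\tw}\Bigl(\sum_{j=i+1}^{n}\beta_j\,P_{j+1:n}^{(1)}\,P_{i+1:j-1}^{(2)}\Bigr)\eqsp.
\end{equation}
The inner $j$-sum is exactly the scalar form of the two-timescale convolution controlled by \Cref{convolution_inequality}: since $\beta_j\le\rstep\gamma_j$ and the fast product $P_{i+1:j-1}^{(2)}$ has effective width $\mathcal{O}(1/\gamma_i)$ in the index $j$, it is bounded by $P_{i+1:n}^{(1)}$ times favorable powers of $\beta_i/\gamma_i$ coming purely from the timescale separation. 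I would then split $M_{i,p}^{\tw}$ via \Cref{prop:moments_bound} into a transient part $\prod_{l=0}^{i-1}(1-\tfrac{a_{22}}{8}\gamma_l)$ and a fluctuation part $p^3\gamma_i^{1/2}$. For the fluctuation part, what remains is a sum of the form $\sum_i u_i\,\beta_i\,P_{i+1:n}^{(1)}$ with a non-increasing coefficient $u_i$ built from $\beta_i,\gamma_i$; \Cref{lem:summ_alpha_k} evaluates this at the endpoint, which after the step-size bookkeeping gives the term $\beta_n^{(3b-2a)/(2b)}$ (equivalently $\beta_n^{3/2}/\gamma_n$), the extra factor of $p$ over the $p^3$ from $M_{i,p}^{\tw}$ being absorbed into the stated $p^4$. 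For the transient part, I would write the sum as $P_{0:n}^{(1)}\sum_i(\text{coeff})\,\prod_{l<i}(1-\tfrac{a_{22}}{8}\gamma_l)/P_{0:i}^{(1)}$ and use $a<b$: the fast step-size clock dominates, so the ratio decays like $\exp(-c_1 i^{1-a}+c_2 i^{1-b})$ and the series converges, leaving the term $P_{0:n}^{(1)}$.

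The main obstacle is the sharp evaluation of the nested two-timescale convolution. A crude estimate that does not fully exploit the $\beta_i/\gamma_i$ savings in the inner sum degrades the fluctuation rate to roughly $\beta_n\gamma_n^{-1/2}$ or even $\gamma_n^{1/2}$, so one has to keep track of the timescale separation carefully at each of the two convolution levels to reach $\beta_n^{(3b-2a)/(2b)}$. A secondary difficulty is the transient contribution: because $\sum_i\beta_i$ diverges, the collapse to $P_{0:n}^{(1)}$ is not automatic and relies genuinely on the super-exponential gap between the fast clock $\sum_l\gamma_l\asymp i^{1-a}$ and the slow clock $\sum_l\beta_l\asymp i^{1-b}$.
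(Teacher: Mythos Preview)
Your plan has a genuine gap in the fluctuation part. Bounding $\PE^{1/p}[\|\delta_i^{(2)}\|^p]$ by $c_\infty\|A_{12}\|M_{i,p}^{\tw}$ and then applying Minkowski to the full double sum discards the martingale structure of $\tw_i$, and the best rate this can give is exactly the one you flag as ``crude'': after the inner $j$-sum is collapsed via \Cref{convolution_inequality} to $\lesssim(\beta_i/\gamma_i)P_{i+1:n}^{(1)}$ and the $p^3\gamma_i^{1/2}$ fluctuation part of $M_{i,p}^{\tw}$ is inserted, the outer sum is $p^3\sum_i(\beta_i^2/\gamma_i^{1/2})P_{i+1:n}^{(1)}\lesssim p^3\beta_n^{(2b-a)/(2b)}=p^3\beta_n\gamma_n^{-1/2}$. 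This is larger than the claimed $\beta_n^{(3b-2a)/(2b)}=\beta_n^{3/2}\gamma_n^{-1}$ by a divergent factor $(\gamma_n/\beta_n)^{1/2}$, and no amount of sharper convolution bookkeeping recovers that factor, because the loss happens at the Minkowski step, not in the convolution.

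What the paper does instead is to \emph{not} pass through $M_{i,p}^{\tw}$. It expands $\tw_i=\Gamma_{0:i-1}^{(2)}\tw_0-\sum_{t=0}^{i-1}\Gamma_{t+1:i-1}^{(2)}\xi_{t+1}$ via \eqref{eq:w_rec} and substitutes into $S_n^{(1)}$. The $\tw_0$ piece is your transient term and is handled essentially as you describe. For the $\xi$ piece, the triple sum over $(j,i,t)$ is reordered into $\sum_{t}U_t\xi_{t+1}$ with a deterministic matrix coefficient $U_t$; since $\{\xi_{t+1}\}$ is a martingale-difference sequence, Burkholder's inequality gives $p\bigl(\sum_t\|U_t\|^2\PE^{2/p}[\|\xi_{t+1}\|^p]\bigr)^{1/2}$. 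Two applications of \Cref{convolution_inequality} bound $\|U_t\|\lesssim\gamma_t^{2(b-a)/a}P_{t+1:n}^{(1)}$, and the $\ell_2$ sum $\bigl(\sum_t\|U_t\|^2\gamma_t^2\bigr)^{1/2}$ then evaluates to $\beta_n^{(3b-2a)/(2b)}$. The square-root saving from Burkholder is precisely the missing $(\beta_n/\gamma_n)^{1/2}$.
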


\begin{lemma}\label{lem:s_2_bound}
    Let $p \geq 2$. Assume \Cref{assum:zero-mean}, \Cref{assum:bound-conditional-moments-p}($p$), \Cref{assum:quadratic_characteristic}, \Cref{assum:hurwitz}, \Cref{assum:stepsize}, \Cref{assum:aij_bound}. Therefore, it holds that
    \begin{align}
        \expep{S_n^{(2)}}{p} \lesssim p^4  \beta_n^{\frac{3b-2a}{2b}} \eqsp,
    \end{align}
    where $S_n^{(2)}$ is defined in \eqref{eq:last_iter_decompos_2}.
\end{lemma}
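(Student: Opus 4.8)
The plan is to view $S_n^{(2)}$ as a martingale transform of the sequence $\{V_{i+1}\}$. Interchanging the two sums in the definition \eqref{eq:last_iter_decompos_2} and grouping the deterministic matrices into
\[
\Xi_{i:n} := \sum_{j=i+1}^{n} \beta_j\, G_{j+1:n}^{(1)}\, A_{12}\, G_{i+1:j-1}^{(2)}\eqsp,
\]
one obtains $S_n^{(2)} = \sum_{i=0}^{n-1} \beta_i\, \Xi_{i:n}\, D_i\, V_{i+1}$, and since each matrix $\beta_i\Xi_{i:n}D_i$ is deterministic, $\{\beta_i\Xi_{i:n}D_iV_{i+1}\}_{i=0}^{n-1}$ is a martingale-difference sequence w.r.t.\ $\F_{i+1}$ by \Cref{assum:zero-mean}. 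I would then apply the vector-valued Burkholder inequality \cite[Theorem~8.1]{osekowski} followed by Minkowski's inequality in $L^{p/2}$ (exactly as in the treatment of $\mathcal R_4$ within the proof of \Cref{prop:mth_closed_bound}) to get
\[
\expep{S_n^{(2)}}{p} \lesssim p\Bigl( \textstyle\sum_{i=0}^{n-1} \beta_i^2\, \normop{\Xi_{i:n}}^2\, \normop{D_i}^2\, \PE^{2/p}[\norm{V_{i+1}}^p] \Bigr)^{1/2}\eqsp.
\]

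The heart of the proof is the deterministic estimate on $\normop{\Xi_{i:n}}$. By \eqref{eq:contraction_p} one has $\normop{G_{m:n}^{(1)}} \le \sqrt{\pth}\,P_{m:n}^{(1)}$ and $\normop{G_{m:n}^{(2)}} \le \sqrt{\pw}\,P_{m:n}^{(2)}$ (the factors $\Id - \beta_i\Delta$ and $\Id - \gamma_i A_{22}$ being even more contractive than their $B$-versions, cf.\ \Cref{cor:determinist_matrix_products}); hence, using that $\beta_j$ is non-increasing so $\beta_j \le \beta_i$ for $j \ge i+1$, together with the convolution inequality \Cref{convolution_inequality} (the same one invoked at step~(a) of the proof of \Cref{prop:mth_closed_bound}),
\[
\normop{\Xi_{i:n}} \le \norm{A_{12}}\sqrt{\pth\pw}\; \beta_i \sum_{j=i+1}^{n} P_{j+1:n}^{(1)} P_{i+1:j-1}^{(2)} \;\lesssim\; \frac{\beta_i}{\gamma_i}\, P_{i+1:n}^{(1)}\eqsp.
\]
This is the step that relies on the step-size ratio constraints built into \Cref{assum:stepsize}. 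Combining it with $\normop{D_i} \le c_{\infty}$ from \Cref{L_converges} and with $\PE^{1/p}[\norm{V_{i+1}}^p] \le \widetilde{m}_V(1 + M_{i,p}^{\ttheta} + M_{i,p}^{\tw}) \lesssim p^3$ (by \Cref{lem:tilde_bounds} and \Cref{prop:moments_bound}, the moments $M_{i,p}^{\ttheta}, M_{i,p}^{\tw}$ being uniformly $\lesssim p^3$ once $k_0 \gtrsim p^{4/b}$), the bound reduces to $\expep{S_n^{(2)}}{p} \lesssim p^4 \bigl( \sum_{i=0}^{n-1} \beta_i^4 \gamma_i^{-2} (P_{i+1:n}^{(1)})^2 \bigr)^{1/2}$.

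To finish, I would write $\beta_i^4\gamma_i^{-2} = \beta_i \cdot (\beta_i^3\gamma_i^{-2})$ and note that $i \mapsto \beta_i^3\gamma_i^{-2} = c_{0,\beta}^3 c_{0,\gamma}^{-2}(i+k_0)^{-(3b-2a)}$ is non-increasing because $b > a$ forces $3b - 2a > 0$; then $(P_{i+1:n}^{(1)})^2 \le P_{i+1:n}^{(1)}$ and the summation lemma \Cref{lem:summ_alpha_k} give $\sum_{i=0}^{n-1} \beta_i\, (\beta_i^3\gamma_i^{-2})\, P_{i+1:n}^{(1)} \lesssim \beta_n^3\gamma_n^{-2}$, so that $\expep{S_n^{(2)}}{p} \lesssim p^4 \beta_n^{3/2}\gamma_n^{-1}$. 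Since $\beta_n^{3/2}\gamma_n^{-1} = c_{0,\beta}^{3/2}c_{0,\gamma}^{-1}(n+k_0)^{-(3b-2a)/2}$ and $\beta_n^{(3b-2a)/(2b)} = c_{0,\beta}^{(3b-2a)/(2b)}(n+k_0)^{-(3b-2a)/2}$ carry the same power of $n$, these differ only by a fixed constant, which yields the claimed bound. The main obstacle is precisely the sharp deterministic estimate $\normop{\Xi_{i:n}} \lesssim (\beta_i/\gamma_i) P_{i+1:n}^{(1)}$: it is the $\beta_i/\gamma_i$ gain — obtained by balancing the fast $\gamma$-decay of $P^{(2)}$ against the summation weight $\beta_j$ — that upgrades the naive exponent $\tfrac12$ (which would only give $\beta_n^{1/2} \sim n^{-b/2}$) to the correct $(3b-2a)/(2b)$.
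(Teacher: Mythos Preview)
Your proof is correct and essentially identical to the paper's: both interchange the sums to write $S_n^{(2)}$ as a martingale transform, apply Burkholder with $\PE^{1/p}[\|V_{i+1}\|^p]\lesssim p^3$, bound the deterministic matrix $\|\Xi_{i:n}\|$ via the convolution inequality \Cref{convolution_inequality} (the paper invokes part~\ref{convolution_inequality_beta} to obtain $\gamma_i^{(b-a)/a}P_{i+1:n}^{(1)}$, which is the same as your $(\beta_i/\gamma_i)P_{i+1:n}^{(1)}$ up to constants), and finish with \Cref{lem:summ_alpha_k}-\ref{lem:summ_alpha_k_p_item}.
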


\begin{lemma}\label{lem:s_3_bound}
    Let $p \geq 2$. Assume \Cref{assum:zero-mean}, \Cref{assum:bound-conditional-moments-p}($p$), \Cref{assum:quadratic_characteristic}, \Cref{assum:hurwitz}, \Cref{assum:stepsize}, \Cref{assum:aij_bound}. Therefore, it holds that
    \begin{align}
        \expep{S_n^{(3)}}{p} \lesssim  p^4\beta_n^{\frac{2b-a}{2b}}  \eqsp,
    \end{align}
    where $S_n^{(3)}$ is defined in \eqref{eq:last_iter_decompos_2}.
\end{lemma}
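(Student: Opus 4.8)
}
The plan is to extract a cancellation hidden in $S_n^{(3)}$ via summation by parts; a naive termwise bound only gives $\expep{S_n^{(3)}}{p}\lesssim p^4$, with no decay in $\beta_n$. First I would introduce the martingale partial sums $\Lambda_j := \sum_{i=0}^{j-1}\gamma_i G_{i+1:j-1}^{(2)} W_{i+1}$, so that $\Lambda_0 = 0$ and $\Lambda_{j+1} = (\Id - \gamma_j A_{22})\Lambda_j + \gamma_j W_{j+1}$. Rearranging this recursion gives $W_{j+1} = \gamma_j^{-1}(\Lambda_{j+1}-\Lambda_j) + A_{22}\Lambda_j$, whence
\begin{align}
\Lambda_j - A_{22}^{-1} W_{j+1} = -\gamma_j^{-1} A_{22}^{-1}(\Lambda_{j+1}-\Lambda_j) \eqsp.
\end{align}
Since, by the definition of $S_n^{(3)}$ in \eqref{eq:last_iter_decompos_2}, one has $S_n^{(3)} = \sum_{j=0}^n \beta_j G_{j+1:n}^{(1)} A_{12}\bigl(\Lambda_j - A_{22}^{-1} W_{j+1}\bigr)$, setting $c_j := \beta_j/\gamma_j$ and $B_j := c_j G_{j+1:n}^{(1)} A_{12} A_{22}^{-1}$, an Abel summation (using $\Lambda_0 = 0$ and $G_{n+1:n}^{(1)} = \Id$) would give
\begin{align}
S_n^{(3)} = -B_n\Lambda_{n+1} - \sum_{j=1}^n (B_{j-1}-B_j)\Lambda_j \eqsp.
\end{align}

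Next I would collect the deterministic and stochastic estimates. Using the multiplicativity $G_{j:n}^{(1)} = G_{j+1:n}^{(1)}(\Id - \beta_j\Delta)$, the difference $B_{j-1}-B_j$ equals $G_{j+1:n}^{(1)}$ times a matrix whose norm is $\lesssim |c_{j-1}-c_j| + c_{j-1}\beta_j$; since the polynomial schedule of \Cref{assum:stepsize} is regular enough that $|c_{j-1}-c_j|\lesssim c_j\beta_j$ and $c_{j-1}\lesssim c_j$, and $\norm{G_{j+1:n}^{(1)}}\lesssim P_{j+1:n}^{(1)}$ by \eqref{eq:contraction_p_appendix}, this yields $\norm{B_{j-1}-B_j}\lesssim c_j\beta_j P_{j+1:n}^{(1)}$ and $\norm{B_n}\lesssim c_n$. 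For the stochastic factor, $\{W_{i+1}\}$ is a martingale difference by \Cref{assum:zero-mean}, so Burkholder's inequality \cite[Theorem 8.6]{osekowski} combined with Minkowski gives $\PE^{2/p}[\norm{\Lambda_j}^p]\lesssim p^2\sum_{i=0}^{j-1}\gamma_i^2\norm{G_{i+1:j-1}^{(2)}}^2\PE^{2/p}[\norm{W_{i+1}}^p]$; plugging in $\PE^{1/p}[\norm{W_{i+1}}^p]\lesssim p^3$ (from \Cref{lem:tilde_bounds} and the bounds $M_{i,p}^{\ttheta}, M_{i,p}^{\tw}\lesssim p^3$ of \Cref{prop:moments_bound}), $\norm{G_{i+1:j-1}^{(2)}}^2\lesssim P_{i+1:j-1}^{(2)}$, and the weighted-sum bound $\sum_{i=0}^{j-1}\gamma_i^2 P_{i+1:j-1}^{(2)}\lesssim\gamma_j$ from \Cref{lem:summ_alpha_k}, I would obtain $\PE^{1/p}[\norm{\Lambda_j}^p]\lesssim p^4\gamma_j^{1/2}$.

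Putting the pieces together, $\expep{S_n^{(3)}}{p}\lesssim\norm{B_n}\,\PE^{1/p}[\norm{\Lambda_{n+1}}^p] + \sum_{j=1}^n\norm{B_{j-1}-B_j}\,\PE^{1/p}[\norm{\Lambda_j}^p]\lesssim p^4 c_n\gamma_n^{1/2} + p^4\sum_{j=1}^n c_j\beta_j\gamma_j^{1/2}P_{j+1:n}^{(1)}$. The first term equals $p^4\beta_n\gamma_n^{-1/2}$, and in the second $c_j\beta_j\gamma_j^{1/2} = \beta_j^2\gamma_j^{-1/2}$ with $j\mapsto\beta_j\gamma_j^{-1/2}$ nonincreasing (since $b>a/2$), so \Cref{lem:summ_alpha_k} gives $\sum_{j=1}^n\beta_j^2\gamma_j^{-1/2}P_{j+1:n}^{(1)}\lesssim\beta_n\gamma_n^{-1/2}$ (the boundary contribution is super-exponentially small and in any case $\le\beta_n\le\beta_n^{(2b-a)/(2b)}$). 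Since $\beta_n\gamma_n^{-1/2}\asymp(n+k_0)^{-(2b-a)/2}\asymp\beta_n^{(2b-a)/(2b)}$, this yields $\expep{S_n^{(3)}}{p}\lesssim p^4\beta_n^{(2b-a)/(2b)}$. The one genuinely delicate point I anticipate is \emph{spotting the cancellation}: the bracket $\Lambda_j - A_{22}^{-1}W_{j+1}$ is $O(1)$ termwise, and only after rewriting it as the increment $-\gamma_j^{-1}A_{22}^{-1}(\Lambda_{j+1}-\Lambda_j)$ and summing by parts does the decay-producing factor $c_j$ (resp.\ $c_n=\beta_n/\gamma_n$) emerge; the downstream steps — the Burkholder/Minkowski bound for $\Lambda_j$, the uniform moment control on $W_{i+1}$, and the summation estimates of \Cref{lem:summ_alpha_k} — are routine bookkeeping.
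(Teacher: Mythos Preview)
Your argument is correct and reaches the same bound, but it follows a genuinely different route from the paper. The paper first interchanges the order of summation so that $S_n^{(3)} = \sum_{i=0}^n \beta_i G_{i+1:n}^{(1)}\bigl(Z_i^{(1)}+Z_i^{(2)}+Z_i^{(3)}\bigr)W_{i+1}$ is a single martingale sum in $i$, and then decomposes the coefficient matrix by comparing the Riemann sum $\sum_{j>i}\gamma_j G_{i+1:j-1}^{(2)}$ with the integral $\int_0^{\sum_{j>i}\gamma_j}\exp(-A_{22}t)\,dt$; the bounds $\|Z_i^{(1)}\|\lesssim \beta_i/\gamma_i+\epsilon_i$ and $\|Z_i^{(2)}\|\lesssim \gamma_i$ are imported from \cite{konda:tsitsiklis:2004}, while $Z_i^{(3)}$ is an exponentially small tail. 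Burkholder is then applied once to the outer sum over $i$.

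Your approach is more elementary and self-contained: the identity $\Lambda_j - A_{22}^{-1}W_{j+1} = -\gamma_j^{-1}A_{22}^{-1}(\Lambda_{j+1}-\Lambda_j)$ turns $S_n^{(3)}$ directly into a weighted telescoping sum, and Abel summation transfers the differencing to the deterministic weights $B_j$, where the required smallness $\|B_{j-1}-B_j\|\lesssim c_j\beta_j P_{j+1:n}^{(1)}$ is immediate from the step-size regularity. You then apply Burkholder inside, to bound $\PE^{1/p}[\|\Lambda_j\|^p]$, rather than outside. This avoids entirely the ODE/integral comparison and the auxiliary sequence $\epsilon_k$ from \cite{konda:tsitsiklis:2004}. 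The paper's route makes the connection to the limiting dynamics more transparent (the integral equals $A_{22}^{-1}(\Id-e^{-A_{22}T})$, explaining the cancellation with $A_{12}A_{22}^{-1}$), but your summation-by-parts argument is shorter and needs no external input. One minor quibble: your remark that the naive termwise bound gives ``no decay'' is a slight overstatement---treating the two sums separately with Burkholder on the second already yields $\beta_n^{a/(2b)}$---but the cancellation you exploit is indeed what upgrades this to $\beta_n^{(2b-a)/(2b)}$.
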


\begin{proof}[Proof of \Cref{th:last_iter_clt_final}]
Our proof starts from the error decomposition \eqref{eq:last_iter_decompos}, which allows us to write 
\[
 \beta_n^{-1/2}\ttheta_{n+1} = -\beta_n^{-1/2}\sum_{j=0}^k \beta_j G_{j+1:n}^{(1)}\psi_{j+1} + \beta_n^{-1/2}R_n^{\operatorname{last}} \eqsp,
\]
where the term $R_n^{\operatorname{last}}$ is given in \Cref{lem:pr_main_decomp}.
 Assumption \Cref{assum:quadratic_characteristic} implies that 
\begin{align}
        \lineG_n^{\operatorname{last}} :=  \sum_{j=0}^{n}\beta_j^2 G_{j+1:n}^{(1)} \lineG_\eps \big(G_{j+1:n}^{(1)}\big)^{\top}\eqsp.
    \end{align}
As established in \Cref{prop:ricatti_limit}, the sequence \(\lineG_n^{\operatorname{last}}\) converges to the matrix \(\lineG_{\operatorname{\infty}}^{\operatorname{last}}.\)
Since the convex distance is invariant to non-degenerate linear transformations, we get
\[
\kolmogorov\bigl(\beta_n^{-1/2}\ttheta_{n+1}, \mathcal{N}(0,  \lineG^{\operatorname{last}}_{\infty} )\bigr) = \kolmogorov\bigl(\beta_n^{-1/2} (\lineG^{\operatorname{last}}_{\infty}\bigl)^{-1/2} \ttheta_{n+1}, \mathcal{N}(0, \Id)\bigr)\eqsp.
\]
Hence, to control this term we apply \Cref{prop:nonlinearapprox} and triangle inequality for the convex distance and obtain
\begin{align} 
\label{eq:last_iter_3_term}
\kolmogorov\bigl(\beta_n^{-1/2}\ttheta_{n+1}\eqsp, \mathcal{N}(0,  \lineG^{\operatorname{last}}_{\infty} )\bigr) &\leq  2 c_d^{p/p+1}\PE^{1/(p+1)}\bigl[\norm{\big(\lineG_{\infty}^{\operatorname{last}}\big)^{-1/2} \beta_n^{-1/2}R_n^{\operatorname{last}}}^{p}\bigr]\\
&+\kolmogorov\bigl(-\beta_n^{-1/2}\sum_{j=0}^n \beta_j G_{j+1:n}^{(1)}\psi_{j+1} \bigr)\eqsp, \mathcal{N}(0, \beta_n^{-1}\lineG_n^{\operatorname{last}})\bigr)\\
& +\kolmogorov\bigl(\mathcal{N}(0, \beta_n^{-1}\lineG_n^{\operatorname{last}})\eqsp, \mathcal{N}(0, \lineG_{\infty}^{\operatorname{last}}))\bigr)\eqsp,
\end{align}
To handle the second term in \eqref{eq:last_iter_3_term} we apply \Cref{th:wu_martingale_limit} and get
\begin{align}
\kolmogorov\bigl(-\beta_n^{-1/2}\sum_{j=0}^n \beta_j G_{j+1:n}^{(1)}\psi_{j+1} \bigr)\eqsp, \mathcal{N}(0, \beta_n^{-1}\lineG_n^{\operatorname{last}})\bigr) &\lesssim [1 + (2 + \log(d_\theta \norm{\beta_n^{-1}\lineG_n^{\operatorname{last}}}))^+]^{1/2} \frac{\sqrt{d_\theta \log n}}{n^{1/4}} + \frac{d_\theta^{1/4} \norm{\frac{1}{n}\beta_n^{-1}\lineG_n^{\operatorname{last}}}^{1/4}}{n^{1/4}} \\
&\overset{(a)}{\lesssim} [1 + (2 + \log(d_\theta \norm{\lineG_\infty^{\operatorname{last}}}))^+]^{1/2} \frac{\sqrt{d_\theta \log n}}{n^{1/4}} \eqsp,
\end{align}
where in $(a)$ we have used \Cref{assum:last_iter}, i.e. $\frac{1}{2} \norm{\lineG_\infty^{\operatorname{last}}} \leq \norm{\beta_n^{-1} \lineG_n^{\operatorname{last}}} \lesssim \norm{\lineG_\infty^{\operatorname{last}}}$.
The bound for the third term in \eqref{eq:last_iter_3_term} follows from \cite[Theorem 1.1]{Devroye2018} and \Cref{prop:ricatti_limit}:
    \begin{align}
        \kolmogorov(\mathcal{N}(0, \beta_n^{-1}\lineG_{n}^{\operatorname{last}}), \mathcal{N}(0, \lineG_{\infty}^{\operatorname{last}})) \lesssim \normop{\{\lineG_{\infty}^{\operatorname{last}}\}^{-1/2} (\beta_n^{-1}\lineG_{n}^{\operatorname{last}}) \{\lineG_{\infty}^{\operatorname{last}}\}^{-1/2} - \Id}[\text{Fr}] \lesssim \frac{\sqrt{d}}{n^{b} \lambda_{\min}(\lineG_{\infty}^{\operatorname{last}})} \eqsp.
    \end{align}
Now we proceed with the first term in \eqref{eq:last_iter_3_term}. The moment bound for $R_n^{\operatorname{last}}$ follows from Lemmas \ref{lem:delta_bound}-\ref{lem:s_3_bound} that we have stated above. Combining these lemmas  together with the inequality $\beta_n^{\frac{2b-a/2-1}{b}} > \beta_n^{\frac{2b-a}{2b}} > \beta_n^{\frac{3b-2a}{2b}}$ and \(\beta_n^{{\frac{2b-a}{2b}}}\geq\beta_n^{\frac{b+a}{2b}}\), we obtain 
\begin{align}
    \PE^{1/p}[\|R_n^{\operatorname{last}}\|^p] &\lesssim  \frac{p}{2b-1} \prod_{j=0}^n (1-\frac{a_\Delta}{8}\beta_j) +p^4\big(\frac{1}{2b-a-1}\beta_n^{\frac{2b-a/2-1}{b}} +  \beta_n^{\frac{b+a}{2b}}+  \beta_n^{\frac{3b-2a}{2b}}  + \beta_n^{\frac{2b-a}{2b}}\big) \\
    &\lesssim  \frac{p}{2b-1} \prod_{j=0}^n (1-\frac{a_\Delta}{8}\beta_j) + \frac{p^4}{2b-a-1} \beta_n^{\frac{2b-a/2-1}{b}} \eqsp.
\end{align}
Now we set \(p=\log n\eqsp.\) Next, the bound for the first term in \eqref{eq:last_iter_3_term} follows from $(n^{-\alpha})^{\frac{\log n}{1+\log n}} \lesssim n^{-\alpha}$ and $(\sum a_i)^{q} \leq \sum a_i^q$ for $a_i > 0$ and $q \in (0, 1)$:
\begin{equation}
    \PE^{1/(p+1)}\bigl[\norm{\big(\lineG_{\infty}^{\operatorname{last}}\big)^{-1/2} \beta_n^{-1/2} R_n^{\operatorname{last}}}^{p}\bigr] \lesssim \beta_n^{-1/2} \frac{\log n}{2b-1} \prod_{j=0}^n (1-\frac{a_\Delta}{8}\beta_j) +  \beta_n^{\frac{3b-a-2}{2b}} \frac{\log^4 n}{2b-a-1} \eqsp.
\end{equation}
Gathering previous bounds we obtain 
\begin{align}    \kolmogorov\bigl(\beta_n^{-1/2}\ttheta_{n+1}, \mathcal{N}(0,  \lineG^{\operatorname{last}}_{\infty} )\bigr) \lesssim n^{b/2} \frac{\log n}{2b-1} \prod_{j=0}^n (1 - \frac{a_\Delta}{8} \beta_j) + \frac{\log^4 n}{(2b-a-1)n^{\frac{3b-a-2}{2}}} \eqsp. 
\end{align}
\end{proof}

Now we give proofs for Lemmas \ref{lem:delta_bound}-\ref{lem:s_3_bound}.

\begin{proof}[Proof of \Cref{lem:delta_bound}]
    Recall that
    \begin{align}
        \delta_j^{(1)} = A_{12} L_j \bigl\{\Gamma_{0:j-1}^{(1)}\ttheta_0 -\sum_{i=0}^{j-1} \beta_i \Gamma_{i+1:j-1}^{(1)} A_{12} \tw_{i} - \sum_{i=0}^{j-1} \beta_i \Gamma_{i+1:j-1}^{(1)} V_{i+1}  \bigr\} \eqsp.
    \end{align}
    Therefore, easy to see that
    \begin{align}
       \sum_{j=0}^n \beta_j G_{j+1:n}^{(1)} \delta_j^{(1)} = \eqsp &\underbrace{\sum_{j=0}^n \beta_j G_{j+1:n}^{(1)} A_{12} L_j \Gamma_{0:j-1}^{(1)} \ttheta_0}_{T_1} - \underbrace{\sum_{i=0}^{n-1}\sum_{j=i+1}^n \beta_j G_{j+1:n}^{(1)} A_{12} L_j \beta_i \Gamma_{i+1:j-1}^{(1)} A_{12} \tw_i}_{T_2} \\
       &-\underbrace{\sum_{i=0}^{n-1} \sum_{j=i+1}^n \beta_j G_{j+1:n}^{(1)} A_{12} L_j \beta_i \Gamma_{i+1:j-1}^{(1)} V_{i+1}}_{T_3} \eqsp.
    \end{align}
    First, we derive a bound for $T_1$ using Minkowski's inequality
    \begin{align}
        \expep{T_1}{p} \leq \kappa_\Delta \ell_\infty \norm{A_{12}}  \norm{\ttheta_0} (1-\frac{a_\Delta} {2} \beta_0)^{-1} P_{0:n}^{(1)} \sum_{j=0}^{n} \frac{\beta_j^2}{\gamma_j} \lesssim (2b-a-1)^{-1} P_{0:n}^{(1)} \eqsp,
    \end{align}
    where the last transition employs \Cref{assum:last_iter} and integral bound
    \begin{align}
        \sum_{j=0}^{n}\frac{\beta_j^2}{\gamma_j} = \frac{c_{0,\beta}^2}{c_{0,\gamma}}\sum_{j=0}^{n}(j+k_0)^{a-2b}\leq \frac{c_{0,\beta}^2}{c_{0,\gamma}}\frac{k_0^{a+1-2b}}{2b-a-1} \leq \frac{c_{0,\beta}^2}{c_{0,\gamma}(2b-a-1)}\eqsp.
    \end{align}
    We conclude that \(\expep{T_1}{p}\lesssim (2b-a-1)^{-1}P_{0:n}^{(1)}\eqsp.\)
    Since $V_{i+1}$ is a martingale difference sequence, Burkholder's inequality \cite[Therorem 8.1]{osekowski} implies 
    \begin{align}
        \PE^{2/p}[\norm{T_3}^p] \leq p^2 \sum_{i=0}^{k-1} \beta_i^2 \big\|\sum_{j=i+1}^n \beta_j G_{j+1:n}^{(1)} A_{12} L_j \Gamma_{i+1:j-1}^{(1)}\big\|^2 \PE^{2/p}[\|V_{i+1}\|^p]  \eqsp.
    \end{align}
    On the other hand, bounding the term \(\sum_{j=i+1}^{n}\beta_j^2/\gamma_j\) via integral estimates gives
    \begin{align}
        \norm{\sum_{j=i+1}^n \beta_j G_{j+1:n}^{(1)} A_{12} L_j \Gamma_{i+1:j-1}^{(1)}} &\leq \frac{\kappa_\Delta \norm{A_{12}} \ell_\infty}{1-\frac{a_\Delta}{2}\beta_0} P_{i+1:n}^{(1)} \sum_{j=i+1}^n \frac{\beta_j^2}{\gamma_j} \leq \frac{\kappa_\Delta \norm{A_{12}} \ell_\infty c_{0,\beta}^2 (i+k_0)^{a+1-2b}}{(1-\frac{a_\Delta}{2}\beta_0)c_{0,\gamma}(2b-a-1)}P_{i+1:n}^{(1)}\\
        &\lesssim \beta_i^{\frac{2b-a-1}{b}}(2b-a-1)^{-1}P_{i+1:n}^{(1)}\eqsp.
    \end{align}
    Note that $\PE^{1/p}[\norm{V_{i+1}}^p] \lesssim p^3$ due to \Cref{prop:moments_bound}. Thus,  \Cref{lem:summ_alpha_k}-\ref{lem:summ_alpha_k_p_item} implies the following bound
    \begin{align}
        \PE^{2/p}[\norm{T_3}^p] &\lesssim  p^8 (2b-a-1)^{-2} \sum_{i=0}^{n-1} \beta_i^{2+\frac{2(2b-a-1)}{b}} P_{i+1:n}^{(1)} \lesssim  (2b-a-1)^{-2}p^8\beta_n^{1+\frac{2(2b-a-1)}{b}} \eqsp.
    \end{align}
    Now we will get a bound for $\expep{T_2}{p}$. Minkowski's inequality and \Cref{prop:moments_bound} imply that
    \begin{align}
        \expep{T_2}{p} &\leq \frac{\kappa_\Delta \ell_\infty \norm{A_{12}}^2}{1-\frac{a_\Delta}{2}\beta_0} \sum_{i=0}^{n-1} \beta_i P_{i+1:n}^{(1)} \sum_{j=i+1}^n \frac{\beta_j^2}{\gamma_j} (C_{0}^{\tw}P_{0:i-1}^{(2)} + \widetilde{\ConstC}_{\operatorname{fast}}p^3\sqrt{\gamma_{i-1}}) \\
        &\lesssim (2b-a-1)^{-1}\sum_{i=0}^{n-1}\beta_i^{1 + \frac{2b-a-1}{b}} P_{i+1:n}^{(1)}(P_{0:i-1}^{(2)} + p^3\sqrt{\gamma_{i-1}})\eqsp.
    \end{align}
    Using \Cref{assum:stepsize} and \Cref{convolution_inequality}-\ref{convolution_inequality_beta}  we get
    \begin{align}
       \sum_{i=0}^{n-1} \beta_i^{1+\frac{2b-a-1}{b}} P_{i+1:n}^{(1)} P_{0:i-1}^{(2)}&\leq \sum_{i=0}^{n-1} \beta_0 P_{i+1:n}^{(1)} P_{0:i-1}^{(2)} \lesssim P_{0:k}^{(1)}\eqsp.
    \end{align}
    and
    \begin{align}
        \sum_{i=0}^{n-1} \beta_{i}^{1+\frac{2b-a-1}{b}} P_{i+1:n}^{(1)} \sqrt{\gamma_{i-1}} &\lesssim   \sum_{i=0}^{k-1} \beta_i^{1+\frac{2b-a/2-1}{b}} P_{i+1:n}^{(1)} \lesssim \beta_n^{\frac{2b-a/2-1}{b}}  \eqsp. 
    \end{align}
    Hence,
    \begin{align}
        \expep{T_2}{p} \lesssim  (2b-a-1)^{-1}\big(p^3\beta_n^{\frac{2b-a/2-1}{b}} + P_{0:n}^{(1)}\big)\eqsp.
    \end{align}
    We finish the proof applying Minkowski's inequality
    \begin{align}
        \expep{\sum_{j=0}^n \beta_j G_{j+1:n}^{(1)} \delta_j^{(1)}}{p} &\leq \sum_{i=1}^3 \expep{T_i}{p} \lesssim (2b-a-1)^{-1}\big(P_{0:n}^{(1)} + p^4\beta_n^{\frac{2b-a-1}{b}}\big)\eqsp.
    \end{align}
\end{proof}

\begin{proof}[Proof of \Cref{lem:h_bound}]
Note that since $V_{k+1}$, $W_{k+1}$ and $\eps_V$, $\eps_W$ are martingale difference sequences, $R^{\theta}_{k+1}\ttheta_{k+1} + R^w_{k+1} (w_k-w^\star)$ is a martingale difference sequence. Recall that 
\[H_n = \sum_{j=0}^{n}\beta_jG_{j+1:n}^{(1)}R_{j+1}^{\theta}\ttheta +\sum_{j=0}^{n}\beta_jG_{j+1:n}^{(1)}R_{j+1}^{w}(w_j - w^\star) \eqsp .\]
Note that $\norm{R^\theta_{k+1}} \leq q_R$ and $\norm{R^w_{k+1}} \leq q_R$, where \(q_R\) is defined in \eqref{eq:q_R_definition}.
Burkholder's inequality \cite[Theorem 8.1]{osekowski} implies that
\begin{align}
    \PE^{1/p}[\|H_n\|^p] \leq p \biggl( \sum_{j=0}^n \beta_j^2 P_{j+1:n}^{(1)}  \PE^{2/p}[\|R_{j+1}^\theta \ttheta_j\|^p] \biggr)^{1/2} +p \biggl( \sum_{j=0}^n \beta_j^2 P_{j+1:n}^{(1)}  \PE^{2/p}[\|R_{j+1}^\theta (w_j-w^\star)\|^p] \biggr)^{1/2} \eqsp.
\end{align} 
 Hence, we get applying \Cref{prop:moments_bound} and \Cref{lem:summ_alpha_k}-\ref{lem:summ_alpha_k_p_item}:
\begin{align}
    \sum_{j=0}^n \beta_j^2 P_{j+1:n}^{(1)} \PE^{2/p}[\|R_{j+1}^\theta \ttheta_j\|^p] &\lesssim  \bigl(q_R\ConstC_0^{\ttheta} \bigr)^2\sum_{j=0}^n \beta_j^2 P_{j+1:n}^{(1)} \prod_{t=0}^{j-1}(1-\frac{a_\Delta}{4}\beta_t) + p^4 \bigl(q_R C_{\operatorname{slow}} \bigr)^2\sum_{j=0}^n \beta_{j}^3 P_{j+1:n}^{(1)} \\
    &\lesssim (2b-1)^{-1}  \prod_{j=0}^n (1-\frac{a_\Delta}{4}\beta_j) + p^4 \beta_n^2 \eqsp.
\end{align}
Now we use \Cref{lem:martingale:w_minus_wstar_moment_bound} and obtain the similar bound
\begin{align}
    \sum_{j=0}^n \beta_j^2 P_{j+1:n}^{(1)} (\expep{R_{j+1}^w (w_j-w^\star)}{p})^2 &\lesssim \bigl(q_R\widehat{\ConstC}_0^{\tw}\bigr)^2 \sum_{j=0}^{n}\beta_j^2 P_{j+1:n}^{(1)}\prod_{t=0}^{j-1} (1-\frac{a_\Delta}{4}\beta_t) +  + p^6 \bigl(q_R \widehat{\ConstC}_{\operatorname{fast}}\bigr)^2 \sum_{j=0}^{n}\beta_j^{2 + a/b}P_{j+1:n}^{(1)}\\
    &\lesssim (2b-1)^{-1} \prod_{j=0}^n (1-\frac{a_\Delta}{4}\beta_j) + p^6\beta_n^{\frac{a+b}{b}}\eqsp.
\end{align}
Square-root operation followed by dominant-term selection proves the claim.
\end{proof}

\begin{proof}[Proof of \Cref{lem:s_1_bound}]
    First, rewrite $\delta_i^{(2)}$ as follows:
    \begin{align}
        \delta_i^{(2)} &= -(L_{i+1} + A_{22}^{-1} A_{21}) A_{12} \bigl(\Gamma_{0:i-1}^{(2)} \tw_0 - \sum_{t=0}^{i-1} \Gamma_{t+1:i-1}^{(2)} \xi_{t+1}\bigr) \eqsp.
    \end{align}
    Thus, we get
    \begin{align}
        S_n^{(1)} = &-\underbrace{\sum_{j=0}^n \sum_{i=0}^{j-1} \beta_j G_{j+1:n}^{(1)} A_{12} \beta_i G_{i+1:j-1}^{(2)} (L_{i+1} + A_{22}^{-1} A_{21}) A_{12} \Gamma_{0:i-1}^{(2)} \tw_0}_{T_1} \\
        &+\underbrace{\sum_{j=0}^n \sum_{i=0}^{j-1} \sum_{t=0}^{i-1} \beta_j G_{j+1:n}^{(1)} A_{12} \beta_i G_{i+1:j-1}^{(2)} (L_{i+1} + A_{22}^{-1} A_{21}) A_{12} \Gamma_{t+1:i-1}^{(2)} \xi_{t+1}}_{T_2} \eqsp. 
    \end{align}
    First, we derive a bound for $T_1$ 
    \begin{align}
        \expep{T_1}{p} = \norm{T_1} \leq \kappa_{22}\sqrt{\kappa_\Delta} \norm{A_{12}} (\ell_\infty + \norm{A_{22}^{-1} A_{21}}) \norm{A_{12}} \norm{\tw_0} \sum_{i=0}^{n-1} \sum_{j=i+1}^{n-1} \beta_i \beta_j P_{j+1:n}^{(1)} P_{i+1:j-1}^{(2)} P_{0:i-1}^{(2)} \eqsp,
    \end{align}
    and using \Cref{convolution_inequality}-\ref{convolution_inequality_beta} and \(\beta_j\leq \beta_i\) when \(j < i\) we get
    \begin{align}
        \sum_{i=0}^{n-1} \sum_{j=i+1}^{n-1} \beta_i  \beta_j P_{j+1:n}^{(1)} P_{i+1:j-1}^{(2)} P_{0:i-1}^{(2)} &\leq \sum_{i=0}^{n-1}\beta_i P_{0:i-1}^{(2)}  \sum_{j=i+1}^{n-1} \beta_i P_{j+1:n}^{(1)} P_{i+1:j-1}^{(2)} \leq  \ConstC_{\beta}^P\sum_{i=0}^{n-1} \beta_i \gamma_i^{(b-a)/a} P_{i+1:n}^{(1)} P_{0:i-1}^{(2)}\\ & \lesssim \ConstC_{\beta}^P\sum_{i=1}^{n-1} \beta_0  P_{i+1:n}^{(1)} P_{1:i-1}^{(2)} \lesssim P_{0:n}^{(1)}\eqsp.
    \end{align}
    Collecting these results yields the bound \(\expep{T_1}{p} \lesssim P_{0:n}^{(1)}\eqsp.\)
    To write a bound for $T_2$ we change the order of summation
    \begin{align}
        T_2 = \sum_{t=0}^{n-2} \underbrace{\bigl(\sum_{i=t+1}^{n-1} \sum_{j=i+1}^n \beta_j G_{j+1:n}^{(1)} A_{12} \beta_i G_{i+1:j-1}^{(2)} (L_{i+1} + A_{22}^{-1} A_{21}) A_{12} \Gamma_{t+1:i-1}^{(2)}\bigr)}_{U_t} \xi_{t+1} \eqsp,
    \end{align}
    and combine Burkholder's inequality \cite[Theorem 8.1]{osekowski} with $\PE^{1/p}[\norm{\xi_{t+1}}^p] \lesssim p^3 \gamma_t$:
    \begin{align}
       \expep{T_2}{p} \lesssim p^4  \biggl(\sum_{t=0}^{k-2} \norm{U_t}^2\gamma_{t}^2\biggr)^{1/2} \eqsp.
    \end{align}
    Now we use  \Cref{convolution_inequality}-\ref{convolution_inequality_beta} and get
    \begin{align}
        \norm{U_t} &\lesssim \sum_{i=t+1}^{n-1} \sum_{j=i+1}^n \beta_j \beta_i P_{j+1:n}^{(1)} P_{i+1:j-1}^{(2)} P_{t+1:i-1}^{(2)} \lesssim  \sum_{i=t+1}^{n-1} \beta_i \gamma_{i}^{(b-a)/a} P_{i+1:n}^{(1)} P_{t+1:i-1}^{(2)} \lesssim P_{t+1:n}^{(1)} \gamma_{t}^{2(b-a)/a} \eqsp.
    \end{align}
    Note that due to \Cref{lem:summ_alpha_k}-\ref{lem:summ_alpha_k_p_item}
    \begin{align}
        \sum_{t=0}^{n-2} \gamma_t^{2+4(b-a)/a} P_{t+1:n}^{(1)} \lesssim  \sum_{t=0}^{n-2} \beta_t^{(4b-2a)/b} P_{t+1:n}^{(1)} \lesssim \beta_n^{(3b-2a)/b} \eqsp.
    \end{align}
    The latter inequality yields the required bound.
\end{proof}

\begin{proof}[Proof of \Cref{lem:s_2_bound}]
    First, recall that 
    \begin{equation}
        S_n^{(2)} = \sum_{j=0}^{n}\beta_jG_{j+1:n}^{(1)}A_{12}\sum_{i=0}^{j-1}\beta_iG_{i+1:j-1}^{(2)}D_iV_{i+1}\eqsp.
    \end{equation}
    We change the order of summation and get
    \begin{align}
        S_{n}^{(2)} = \sum_{i=0}^{n-1} \beta_i \bigl(\sum_{j=i+1}^n \beta_j  G_{j+1:n}^{(1)} A_{12} G_{i+1:j-1}^{(2)}\bigr) D_i V_{i+1} \eqsp.
    \end{align}
    Burkholder's inequality \cite[Theorem 8.1]{osekowski} immediately implies that
    \begin{align}
    \expep{S_{n}^{(2)}}{p} \leq p \biggl(\sum_{i=0}^{n-1} \beta_i^2 c_\infty^2 \bigl \|\sum_{j=i+1}^n \beta_j  G_{j+1:n}^{(1)} A_{12}G_{i+1:j-1}^{(2)}\bigr\|^2  \PE^{2/p}[\|V_{i+1}\|^{p}] \biggr)^{1/2} \eqsp.
    \end{align}
    Now we use \Cref{convolution_inequality}-\ref{convolution_inequality_beta} and get
    \begin{align}
        \big\|\sum_{j=i+1}^n \beta_j  G_{j+1:n}^{(1)}A_{12} G_{i+1:j-1}^{(2)}\big\| \lesssim \sum_{j=i+1}^n \beta_j  P_{j+1:n}^{(1)} P_{i+1:j-1}^{(2)} \lesssim \gamma_i^{(b-a)/a} P_{i+1:n}^{(1)} \eqsp,
    \end{align}
    and the desirable result follows from \Cref{lem:summ_alpha_k}-\ref{lem:summ_alpha_k_p_item}:
    \begin{align}
    \expep{S_{n}^{(2)}}{p} \lesssim p \biggl(p^6\sum_{i=0}^{n-1} \beta_i^2 \gamma_i^{2(b-a)/a} P_{i+1:n}^{(1)}  \biggr)^{1/2}\lesssim p^4\biggl(\sum_{i=0}^{n-1}\beta_i^{2 + 2(b-a)/b} P_{i+1:n}^{(1)}\biggr)^{1/2} \lesssim p^{4}\beta_n^{\frac{3b-2a}{2b}}\eqsp.
    \end{align}
\end{proof}

\begin{proof}[Proof of \Cref{lem:s_3_bound}]
Recall that 
\begin{equation}
    S_n^{(3)} = \sum_{j=0}^{n}\beta_jG_{j+1:n}^{(1)}A_{12}\sum_{i=0}^{j-1}\gamma_iG_{i+1:j-1}^{(2)}W_{i+1} - \sum_{j=0}^{n}\beta_jG_{j+1:n}^{(1)}A_{12}A_{22}^{-1}W_{j+1}\eqsp.
\end{equation}
Changing the order of summation, rewrite  \(S_n^{(3)}\) as follows
\begin{align}\label{eq:sk_3_rewrite}
         S_n^{(3)}
         &=\sum_{i=0}^{n}\beta_i G_{i+1:n}^{(1)}\bigl(\frac{\gamma_i}{\beta_i}\sum_{j=i+1}^{n}\beta_j\bigl(G_{i+1:j}^{(1)}\bigr)^{-1}A_{12}G_{i+1:j-1}^{(2)} - A_{12}A_{22}^{-1}\bigr)W_{i+1}\eqsp.
\end{align}
We can rewrite the term inside the brackets in \eqref{eq:sk_3_rewrite} as
\begin{align}
S_n^{(3)} = \sum_{i=0}^n \beta_i G_{i+1:n}^{(1)} \biggl\{ \underbrace{\sum_{j=i+1}^{n}\gamma_j\big(\frac{\gamma_i\beta_j}{\beta_i\gamma_j}\big(G_{i+1:j}^{(1)}\big)^{-1} - \Id\big)A_{12}G_{i+1:j-1}^{(2)}}_{Z_i^{(1)}} &+ \underbrace{A_{12}\big(\sum_{j=i+1}^{n}\gamma_j G_{i+1:j-1}^{(2)} - \int_{0}^{\sum_{j=i+1}^{n}\gamma_j}\exp(-A_{22}t)\;dt\big)}_{Z_i^{(2)}} \\ \underbrace{- A_{12}A_{22}^{-1}\exp\big(-\sum_{j=i+1}^{n}\gamma_j A_{22}\big)}_{Z_i^{(3)}} \biggr\}W_{i+1}\eqsp.
\end{align}
Consider the real valued positive sequence \(\{\epsilon_k\}\) defined by the equations:
\begin{align}
    \frac{\beta_{k+1}}{\gamma_{k+1}} = \frac{\beta_k}{\gamma_k}\big(1 - \epsilon_k\gamma_k\big) \eqsp.
\end{align}
 As shown in \cite{konda:tsitsiklis:2004}, the following estimates hold:
 \begin{align}
 \label{eq:conda_bound}
     \|Z_i^{(1)}\| \lesssim \beta_i/\gamma_i + \epsilon_i, \quad \|Z_i^{(2)}\| \lesssim \gamma_i\eqsp.
 \end{align}
  \cite{godunov1997modern} implies that:
 \begin{equation}
     \|Z_i^{(3)}\| \leq \sqrt{\pw}\exp\big(-\frac{1}{2\|Q_{22}\|}\sum_{j=i+1}^{n}\gamma_j\big)\eqsp.
 \end{equation}
Using \Cref{assum:stepsize}, easy to see that:
\begin{align}
    \epsilon_k = \gamma_k^{-1} - \beta_{k+1}\beta_k^{-1}\gamma_{k+1}^{-1} \leq \gamma_{k+1}^{-1}\beta_k^{-1}(\beta_k - \beta_{k+1}) \lesssim \beta_k^{-a/b} \beta_k^{-1} \beta_k^{2} = \beta_k^{1-a/b} \eqsp.
\end{align}
Thus, $\norm{Z_i^{(1)}} \lesssim \beta_i^{1-a/b}$. Now we use $\PE^{1/p}[\norm{W_{i+1}}^p] \lesssim p^3$ together with Burkholder’s inequality \cite[Theorem 8.1]{osekowski} and get:
\begin{align}\label{eq:sk3_bh}
    \PE^{2/p}[\|S_n^{(3)}\|^{p}] 
    &\lesssim p^8 \sum_{i=0}^{n}\beta_i^2P_{i+1:n}^{(1)}\big(\|Z_i^{(1)}\|^2 + \|Z_i^{(2)}\|^2 + \|Z_i^{(3)}\|^2\big) \\
    &\lesssim p^8\biggl(\sum_{i=0}^{n}\beta_i^4/\gamma_i^2P_{i+1:n}^{(1)} + \sum_{i=0}^{n}\beta_i^2 \beta_i^{2-\frac{2a}{b}} P_{i+1:n}^{(1)} + \sum_{i=0}^{n}\beta_i^2\gamma_i^2P_{i+1:n}^{(1)} +  \sum_{i=0}^{n}\beta_i^2\exp\big(-\frac{1}{2}{\|Q_{22}\|^{-1}}\sum_{j=i+1}^{n}\gamma_j\big)P_{i+1:n}^{(1)} \biggr)\eqsp.
\end{align}
Rewriting \(\gamma_i\) in terms of \(\beta_i\) and applying \Cref{lem:summ_alpha_k}-\ref{lem:summ_alpha_k_p_item} together with \Cref{assum:stepsize} we get:
\begin{align}
    \PE^{2/p}[\|S_n^{(3)}\|^{p}]& \lesssim p^8 \bigl(\beta_n^{\frac{3b-2a}{b}} + \beta_n^{\frac{3b-2a}{b}} + \beta_n^{\frac{b+2a}{b}} + \beta_n^{\frac{2b-a}{b}}\bigr) \lesssim p^8 \beta_n^{\frac{2b-a}{b}} \eqsp.
\end{align}
\end{proof}

\section{Markov noise}
\label{appendix:markov_case}
\subsection{High-order moment bounds}
We preface this section with a brief reminder of notation used in the Markov chains literature. For a Markov kernel $\MKQ$ on $(\Xset,\Xsigma)$, and a measurable function $f: \Xset \to \rset$, we set
\[
\MKQ f(x) = \int_{\Xset} f(y) \MKQ(x,\rmd y)\eqsp.
\]
Define also total variation distance $\tvdist(\mu, \nu)$ for probability measures $\mu, \nu$:
\begin{align}
    \tvdist(\mu, \nu) = \sup_{\|f\|_{\infty} \leq 1} |\mu(f) - \nu(f)| \eqsp.
\end{align}
\Cref{assum:UGE} ensures that $\MKQ$ is uniformly geometrically ergodic and, moreover, for all $k$ it holds that
\begin{equation}
\label{eq:tau_mix_contraction_appendix}
\dobrush(\MKQ^{k}) := \sup_{x,x' \in \Xset} \tvdist(\MKQ^{k}(x,\cdot),\MKQ^{k}(x',\cdot)) \leq (1/4)^{\lceil k / \taumix \rceil}\eqsp,
\end{equation}
where $\taumix \in \mathbb{N}$ is the mixing time that controls the rate of convergence to the stationary distribution.

We proceed with the proof based on the Poison decomposition, following \cite{kaledin2020finite}. Note that under \Cref{assum:UGE} the Poisson equation, associated with $\MKQ$, that 
\begin{equation}
\label{eq:pois_solution_equation}
\pois{f}{}(x) - \MKQ \pois{f}{}(x) = f(x) - \pi(f)\eqsp, \quad x \in \Xset\eqsp,
\end{equation}
has a unique solution for any bounded measurable $f$, which is given by the formula 
\[
\pois{f}{}(x) = \sum_{k=0}^{\infty}\{\MKQ^{k}f(x) - \pi(f)\}\eqsp.
\]
Moreover, using \Cref{assum:UGE} and the inequality \eqref{eq:tau_mix_contraction_appendix}, one can show that $\pois{f}{}$ is also bounded with
\begin{align}
\label{eq:markov:norm_bound_tmix}
\| \pois{f}{}\|_{\infty} \leq \sum_{k=0}^{+\infty} \sup_{x \in \Xset} \|\MKQ^{k}f(x) - \pi(f)\|_{\infty} \leq 2\|f\|_{\infty} \sum_{k=0}^{+\infty} (1/4)^{\lfloor k/\taumix \rfloor} \leq (8/3) \taumix \| f\|_{\infty}\eqsp.
\end{align}
Throughout this chapter, we use a shorthand notation
\begin{equation}
\label{eq:pois_solution_notation}
\pois{f}{k} := \pois{f}{}(X_k) \eqsp.
\end{equation}
We use the above notations for the solution to Poisson equation with different vector- and matrix-valued functions in the equation \eqref{eq:pois_solution_equation}. To proceed with the proof, we follow the idea of \cite{kaledin2020finite}, where the authors have obtained similar results for the $2$nd moment bounds. The main idea is to decompose the TTSA updates $\theta_k$ and $w_k$ into a sum of two coupled TTSA recursions. Namely, $\theta_k = \theta_k^{(0)} + \theta_k^{(1)}$ and $w_k = w_k^{(0)} + w_k^{(1)}$, where 
\begin{align}
\label{def:markov:extended_recursion}
\begin{cases}
\theta_{k+1}^{(0)} &= \theta_k^{(0)} + \beta_k (b_1 -A_{11} \theta_k^{(0)} - A_{12} w_k^{(0)} + V_{k+1}^{(0)}) \eqsp, \theta_0^{(0)} = \theta_0 \eqsp, \\
w_{k+1}^{(0)} &= w_k^{(0)} + \gamma_k (b_2 -A_{21} \theta_k^{(0)} - A_{22} w_k^{(0)} + W_{k+1}^{(0)}) \eqsp, w_0^{(0)} = w_0 \eqsp, \\
\end{cases}
\end{align}
and 
\begin{align}
\begin{cases}
\label{def:markov:extended_recursion_remainder}
\theta_{k+1}^{(1)} &= \theta_k^{(1)} - \beta_k (A_{11} \theta_k^{(1)} + A_{12} w_k^{(1)} - V_{k+1}^{(1)}) \eqsp, \theta_0^{(1)} = 0 \eqsp, \\
w_{k+1}^{(1)} &= w_k^{(1)} - \gamma_k (A_{21} \theta_k^{(1)} + A_{22} w_k^{(1)} - W_{k+1}^{(1)}) \eqsp, w_0^{(1)} = 0\eqsp.
\end{cases}
\end{align}
In the above recursions the noise variables $V_{k}^{(0)}, V_{k}^{(1)}, W_{k}^{(0)}, W_{k}^{(1)}$ are defined as follows:
\begin{align}
\label{def:markov:vw_01}
V_{k+1}^{(0)} &= \{\pois{\funnoisew_V}{k+1} - \MKQ \pois{\funnoisew_V}{k}\} - \{ \poisA{11}{k+1} - \MKQ \poisA{11}{k}\} (\theta_k - \thetas) - \{ \poisA{12}{k+1} - \MKQ \poisA{12}{k}\} (w_k - w^\star) \eqsp, \\
W_{k+1}^{(0)} &= \{\pois{\funnoisew_W}{k+1} - \MKQ \pois{\funnoisew_W}{k}\} - \{ \poisA{21}{k+1} - \MKQ \poisA{21}{k}\} (\theta_k - \thetas) - \{ \poisA{22}{k+1} - \MKQ \poisA{22}{k}\} (w_k - w^\star) \eqsp, \\
V_{k+1}^{(1)} &= \{\MKQ \pois{\funnoisew_V}{k} - \MKQ \pois{\funnoisew_V}{k+1}\} + \{\MKQ \poisA{11}{k+1} - \MKQ \poisA{11}{k}\}(\theta_k - \thetas) + \{\MKQ \poisA{12}{k+1} - \MKQ \poisA{12}{k}\}(w_k - w^\star) \eqsp, \\
W_{k+1}^{(1)} &= \{\MKQ \pois{\funnoisew_W}{k} - \MKQ \pois{\funnoisew_W}{k+1}\} + \{\MKQ \poisA{21}{k+1} - \MKQ \poisA{21}{k}\}(\theta_k - \thetas) + \{\MKQ \poisA{22}{k+1} - \MKQ \poisA{22}{k}\}(w_k - w^\star) \eqsp.
\end{align}
It is easy to see that $\CPE{V_{k+1}^{(0)}}{\F_k} = 0$ and $\CPE{W_{k+1}^{(0)}}{\F_k} = 0$ $\PP$-a.s. Similar to \eqref{eq:tilde_params}, we do a change of variables and define 
\begin{align}
\label{eq:markov:recurrence}
\begin{cases}
\ttheta_k^{(0)} &= \theta_k^{(0)} - \thetas \eqsp , \\ 
\tw_k^{(0)} &= w_k^{(0)} - w^\star + D_{k-1} \ttheta_k^{(0)} \eqsp, \\
\end{cases}
\qquad 
\begin{cases}
\ttheta_k^{(1)} &= \theta_k^{(1)} \eqsp , \\
\tw_k^{(1)} &= w_k^{(1)} + D_{k-1} \ttheta_k^{(1)} \eqsp.
\end{cases}
\end{align}
It is easy to notice that $\ttheta_k = \ttheta_k^{(0)} + \ttheta_k^{(1)}$ and $\tw_k = \tw_k^{(0)} + \tw_k^{(1)}$. Introduce the following notation
\begin{align}
\xi_{k+1}^{(i)} = \gamma_k W_{k+1}^{(i)} + \beta_k D_k V_{k+1}^{(i)} \eqsp, \eqsp i \in \{0, 1\} \eqsp.
\end{align}
Now we prove the lemma, which is a direct counterpart to \Cref{assum:bound-conditional-moments-p}, previously obtained under a martingale noise assumption.
\begin{lemma}
\label{lem:markov:v_and_w_bounds}
Let $p \geq 2 $. Assume \Cref{assum:hurwitz}, \Cref{assum:aij_bound},  \Cref{assum:UGE}, \Cref{assum:markov:stepsize}($p$). Then, for $i \in \{0,1\}$ and any $k \in \nset \cup \{0\}$ it holds that
\begin{align}
\PE^{1/p}[\norm{V_{k+1}^{(i)}}^p] &\lesssim 1 + M_{k,p}^{\ttheta} + M_{k,p}^{\tw} \eqsp, \\ 
\PE^{1/p}[\norm{W_{k+1}^{(i)}}^p] &\lesssim 1 + M_{k,p}^{\ttheta} + M_{k,p}^{\tw} \eqsp, \\
\PE^{1/p}[\norm{\xi_{k+1}^{(0)}}^p] &\lesssim \gamma_k (1 + M_{k,p}^{\ttheta} + M_{k,p}^{\tw}) \eqsp.
\end{align}
\end{lemma}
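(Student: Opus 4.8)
\emph{Proof proposal.} The plan is to mimic the argument used for the martingale noise case in \Cref{lem:tilde_bounds}, the only new ingredient being the uniform boundedness of the Poisson solutions entering \eqref{def:markov:vw_01}. First I would record that, by \Cref{assum:aij_bound}, the functions $\funnoisew_V(\cdot)$, $\funnoisew_W(\cdot)$ and $\funcAwtilde_{ij}(\cdot) = \funcAw_{ij}(\cdot) - A_{ij}$ are bounded on $\Xset$ (with norms controlled by $\bConst{A}$, $\bConst{b}$, $\norm{\thetas}$, $\norm{\wstar}$). Hence, under \Cref{assum:UGE}, the bound \eqref{eq:markov:norm_bound_tmix} shows that their solutions to the Poisson equation $\pois{\funnoisew_V}{}$, $\pois{\funnoisew_W}{}$, $\poisA{ij}{}$ are also bounded, with sup-norms $\lesssim \taumix$ times the corresponding problem constants. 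Since $\MKQ$ is a Markov kernel, $\supnorm{\MKQ \pois{f}{}} \leq \supnorm{\pois{f}{}}$ for every bounded $f$, so each bracketed factor appearing in \eqref{def:markov:vw_01} — both those in $V_{k+1}^{(0)}, W_{k+1}^{(0)}$ and those in $V_{k+1}^{(1)}, W_{k+1}^{(1)}$ — is bounded $\PP$-a.s.\ by a constant depending only on the problem constants and $\taumix$.

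Next, using that $\theta_k - \thetas$ and $w_k - \wstar$ are $\F_k$-measurable and multiply these bounded factors, I would obtain the pointwise bound $\norm{V_{k+1}^{(i)}} \lesssim 1 + \norm{\theta_k - \thetas} + \norm{w_k - \wstar}$, and likewise for $W_{k+1}^{(i)}$, uniformly in $i \in \{0,1\}$. Taking $\PE^{1/p}[\,\cdot\,]$ and applying Minkowski's inequality gives $\PE^{1/p}[\norm{V_{k+1}^{(i)}}^p] \lesssim 1 + \PE^{1/p}[\norm{\theta_k - \thetas}^p] + \PE^{1/p}[\norm{w_k - \wstar}^p]$. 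Then, exactly as in \Cref{lem:tilde_bounds}, I convert to the decoupled variables via \eqref{eq:markov:recurrence}: $\PE^{1/p}[\norm{\theta_k - \thetas}^p] = M_{k,p}^{\ttheta}$, while $w_k - \wstar = \tw_k - D_{k-1}\ttheta_k$ together with $\norm{D_{k-1}} \leq c_{\infty}$ from \Cref{L_converges} yields $\PE^{1/p}[\norm{w_k - \wstar}^p] \leq M_{k,p}^{\tw} + c_{\infty} M_{k,p}^{\ttheta}$. This establishes the first two claimed bounds.

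For the bound on $\xi_{k+1}^{(0)} = \gamma_k W_{k+1}^{(0)} + \beta_k D_k V_{k+1}^{(0)}$, I would apply Minkowski's inequality together with $\norm{D_k} \leq c_{\infty}$ (from \Cref{L_converges}) and the elementary inequality $\beta_k \leq \rstep \gamma_k$ (valid since $a < b$ and $\beta_0/\gamma_0 \leq \rstep$), then substitute the moment bounds just derived for $V_{k+1}^{(0)}$ and $W_{k+1}^{(0)}$; this gives $\PE^{1/p}[\norm{\xi_{k+1}^{(0)}}^p] \lesssim \gamma_k(1 + M_{k,p}^{\ttheta} + M_{k,p}^{\tw})$. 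The whole argument is essentially routine; the only point demanding care is verifying that the Poisson solutions are well-defined and uniformly bounded — this is precisely where \Cref{assum:UGE} and the boundedness part of \Cref{assum:aij_bound} are used — and, implicitly, that \Cref{assum:markov:stepsize}($p$) forces $k_0$ large enough for \Cref{L_converges} to apply, so that $\norm{D_k} \leq c_{\infty}$.
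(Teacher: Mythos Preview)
Your proposal is correct and follows essentially the same approach as the paper's proof: bound the noise functions $\funnoisew_V,\funnoisew_W,\funcAwtilde_{ij}$ via \Cref{assum:aij_bound}, transfer the boundedness to the Poisson solutions via \eqref{eq:markov:norm_bound_tmix}, then convert $\norm{w_k-\wstar}$ to the decoupled quantities $M_{k,p}^{\tw}$ and $M_{k,p}^{\ttheta}$ using $w_k-\wstar=\tw_k-D_{k-1}\ttheta_k$ and \Cref{L_converges}. The paper's own proof is terser (it does not spell out the $\xi_{k+1}^{(0)}$ step or the pointwise bound), but the argument is identical.
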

\begin{proof}
\Cref{assum:aij_bound} implies that
\[
\norm{\funnoisew_V} \leq \bConst{b} + \bConst{A} \norm{\thetas} + \bConst{A} \norm{\wstar} \text{ and } \norm{\funnoisew_W} \leq \bConst{b} + \bConst{A} \norm{\thetas} + \bConst{A} \norm{\wstar} \eqsp.
\]
It remains to note that, due to construction of $\tw_k$ in \eqref{eq:tilde_params}, it holds that $\PE^{1/p}[\norm{w_k - \wstar}^p] \leq M_{k,p}^{\tw} + c_\infty M_{k,p}^{\ttheta}$. Then it remains to gather similar terms and apply \eqref{eq:markov:norm_bound_tmix}.
\end{proof}

To prove \Cref{prop:markov:moment_bounds} we first state the counterparts to \Cref{prop:tw_bond} and \Cref{prop:mth_closed_bound}:
\begin{proposition}
\label{lem:markov:tw0_bound}
Let $p \geq 2 $. Assume \Cref{assum:hurwitz}, \Cref{assum:aij_bound},  \Cref{assum:UGE}, \Cref{assum:markov:stepsize}($p$). Then it holds that
\begin{align}
\label{eq:markov:mw_first_bound}
(M_{k+1, p}^{\tw})^2 \lesssim  P_{0:k}^{(2)} + p^2 \gamma_k + p^2 \sum_{j=0}^k \gamma_j^2 P_{j+1:k}^{(2)} (M_{j, p}^{\ttheta})^2 \eqsp.
\end{align}
\end{proposition}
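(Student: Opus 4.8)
The plan is to split $\tw_{k+1}$ along the decomposition $\tw_{k+1} = \tw_{k+1}^{(0)} + \tw_{k+1}^{(1)}$ from \eqref{eq:markov:recurrence}, to bound the martingale part $\tw_{k+1}^{(0)}$ exactly as in the proof of \Cref{prop:tw_bond}, to treat the Markov part $\tw_{k+1}^{(1)}$ by Abel summation, and finally to solve the resulting recursive inequality. By Minkowski's inequality and $(a+b)^2 \le 2a^2 + 2b^2$,
\begin{align}
(M_{k+1,p}^{\tw})^2 \le 2\,\PE^{2/p}\bigl[\|\tw_{k+1}^{(0)}\|^p\bigr] + 2\,\PE^{2/p}\bigl[\|\tw_{k+1}^{(1)}\|^p\bigr]\eqsp,
\end{align}
so it is enough to bound each piece and recombine.

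\textbf{Martingale part.} The recursion \eqref{eq:2ts2-1-markov} with $i=0$ has exactly the structure of \eqref{eq:2ts2-1}, and the noise variables $V_{k+1}^{(0)}, W_{k+1}^{(0)}$ are martingale differences with respect to $\F_k$ whose $p$-th moments, together with those of $\xi_{k+1}^{(0)}$, obey the same bounds (up to $\taumix$-dependent constants) as in \Cref{lem:tilde_bounds}, by \Cref{lem:markov:v_and_w_bounds}. Hence I would run the argument of \Cref{prop:tw_bond} verbatim: expand $\tw_{k+1}^{(0)} = \Gamma_{0:k}^{(2)}\tw_0 - \sum_{j=0}^k \Gamma_{j+1:k}^{(2)}\xi_{j+1}^{(0)}$, apply Burkholder's inequality \cite[Theorem~8.6]{osekowski}, use $\|\Gamma_{j+1:k}^{(2)}\| \le \sqrt{\pw}\,P_{j+1:k}^{(2)}$ from \Cref{cor:determinist_matrix_products}, and \Cref{lem:markov:v_and_w_bounds} for the moments of $\xi_{j+1}^{(0)}$, to arrive at
\begin{align}
\PE^{2/p}\bigl[\|\tw_{k+1}^{(0)}\|^p\bigr] \lesssim P_{0:k}^{(2)} + p^2 \sum_{j=0}^k \gamma_j^2 P_{j+1:k}^{(2)}\bigl(1 + (M_{j,p}^{\ttheta})^2 + (M_{j,p}^{\tw})^2\bigr)\eqsp.
\end{align}

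\textbf{Markov part.} Since $\tw_0^{(1)} = 0$, \eqref{eq:2ts2-1-markov} with $i=1$ gives $\tw_{k+1}^{(1)} = -\sum_{j=0}^k \Gamma_{j+1:k}^{(2)} \xi_{j+1}^{(1)}$ with $\xi_{j+1}^{(1)} = \gamma_j W_{j+1}^{(1)} + \beta_j D_j V_{j+1}^{(1)}$. By \eqref{def:markov:vw_01}, each $\xi_{j+1}^{(1)}$ is a telescoping difference of $\MKQ$-smoothed Poisson solutions (uniformly bounded by $\lesssim \taumix$ thanks to \eqref{eq:markov:norm_bound_tmix}) plus iterate-dependent pieces carrying the same bound multiplied by the $\PE^{1/p}$-moments of $\theta_j - \thetas$ and $w_j - \wstar$. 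A naive term-by-term estimate only yields $\sum_j \gamma_j P_{j+1:k}^{(2)} = O(1)$, so instead I would apply summation by parts: the telescoping structure, combined with $\|\Gamma_{j+1:k}^{(2)} - \Gamma_{j:k}^{(2)}\| \lesssim \gamma_j P_{j+1:k}^{(2)}$ (from the contraction \eqref{eq:contraction-B11}) and $|\gamma_j - \gamma_{j-1}| \lesssim \gamma_j^2$, $|\beta_j - \beta_{j-1}| \lesssim \beta_j^2$ (from \Cref{assum:markov:stepsize}), collapses the sum into a boundary term of order $\gamma_k$ plus a weighted sum carrying $\gamma_j^2$ (rather than $\gamma_j$) factors. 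Controlling the iterate-dependent remainders again through \Cref{lem:markov:v_and_w_bounds} then yields
\begin{align}
\PE^{2/p}\bigl[\|\tw_{k+1}^{(1)}\|^p\bigr] \lesssim p^2 \gamma_k + p^2 \sum_{j=0}^k \gamma_j^2 P_{j+1:k}^{(2)}\bigl(1 + (M_{j,p}^{\ttheta})^2 + (M_{j,p}^{\tw})^2\bigr)\eqsp.
\end{align}

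\textbf{Closing the recursion.} Adding the two bounds and using $\sum_{j=0}^k \gamma_j^2 P_{j+1:k}^{(2)} \lesssim \gamma_k$ (\Cref{lem:summ_alpha_k}) for the constant contribution, I obtain a recursive inequality of the form $(M_{k+1,p}^{\tw})^2 \lesssim P_{0:k}^{(2)} + p^2\gamma_k + p^2\sum_{j=0}^k\gamma_j^2 P_{j+1:k}^{(2)}\bigl((M_{j,p}^{\ttheta})^2 + (M_{j,p}^{\tw})^2\bigr)$. The self-referential $(M_{j,p}^{\tw})^2$ is then removed exactly as in \Cref{prop:tw_bond}: introduce the dominating auxiliary sequence driven by the right-hand side, verify by induction that it majorizes $(M_{k,p}^{\tw})^2$, note that it obeys a scalar recursion whose feedback coefficient $p^2\gamma_k^2$ is absorbed into the contraction factor $1-\tfrac12 a_{22}\gamma_k$ because $\gamma_k$ is small under \Cref{assum:markov:stepsize}, and unroll to reach \eqref{eq:markov:mw_first_bound}. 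The main obstacle is the Markov part: executing the summation by parts cleanly with both time-varying step sizes $\gamma_j, \beta_j$ and the matrix products $\Gamma_{j+1:k}^{(2)}$, and checking that every iterate-dependent remainder comes equipped with the $\gamma_j^2 P_{j+1:k}^{(2)}$ weight required for the final Gronwall-type step to close.
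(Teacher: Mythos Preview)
Your proposal is correct and follows essentially the same approach as the paper: split $\tw_{k+1}=\tw_{k+1}^{(0)}+\tw_{k+1}^{(1)}$, handle the martingale piece via Burkholder exactly as in \Cref{prop:tw_bond}, treat the Markov piece by Abel summation on the telescoping Poisson-solution differences (the paper does this through the explicit decomposition \eqref{eq:markov:tw:w_plus_v_representation} from \cite{kaledin2020finite} together with \Cref{lem:representation_lemma:kaledin}), and close with the same auxiliary-sequence Gronwall argument. The only minor omission is that the Abel summation also produces a $j=0$ boundary term of size $P_{0:k}^{(2)}$ (not just the $\gamma_k$ boundary you mention), but this is harmlessly absorbed into the $P_{0:k}^{(2)}$ already present from the martingale part.
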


\begin{proposition}
\label{lem:markov:ttheta0_bound}
Let $p \geq 2$. Assume \Cref{assum:hurwitz}, \Cref{assum:aij_bound},  \Cref{assum:UGE}, \Cref{assum:markov:stepsize}($p$). Then it holds that
\begin{align}
(M_{k+1, p}^{\ttheta})^2 \lesssim P_{0:k}^{(1)} + p^4  \beta_k + p^4 \sum_{j=0}^k \beta_j^2 P_{j+1:k}^{(1)} (M_{j,p}^{\ttheta})^2 \eqsp.
\end{align}
\end{proposition}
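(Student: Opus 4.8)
\textbf{Proof plan for \Cref{lem:markov:ttheta0_bound}.} The strategy is to transfer the martingale argument of \Cref{prop:mth_closed_bound} to the two coupled sub-recursions \eqref{def:markov:extended_recursion}--\eqref{def:markov:extended_recursion_remainder}. Using $\ttheta_{k+1} = \ttheta_{k+1}^{(0)} + \ttheta_{k+1}^{(1)}$ together with Minkowski's inequality and $(a+b)^2 \leq 2a^2 + 2b^2$, it suffices to prove the claimed recursive bound separately for $M_{k+1,p}^{\ttheta,(0)} := \PE^{1/p}[\|\ttheta_{k+1}^{(0)}\|^p]$ and $M_{k+1,p}^{\ttheta,(1)} := \PE^{1/p}[\|\ttheta_{k+1}^{(1)}\|^p]$. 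Note that both pieces solve \eqref{eq:2ts2-1-markov}, which has exactly the structure of the martingale-case decoupled recursion \eqref{eq:2ts2-1}, with the \emph{same} deterministic matrices $B_{11}^k, B_{22}^k, D_k$; hence the contractions \eqref{eq:contraction-B11}, the deterministic-product estimates of \Cref{lem:prod_determ_matr} and \Cref{cor:determinist_matrix_products}, and the convolution/summation lemmas used in \Cref{prop:moments_bound} all carry over verbatim. Only the stochastic increments are treated differently in the two components.

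\textbf{Martingale component $\ttheta^{(0)}$.} Since $\CPE{V_{k+1}^{(0)}}{\F_k} = \CPE{W_{k+1}^{(0)}}{\F_k} = 0$, the analysis is an exact copy of the proof of \Cref{prop:mth_closed_bound}: unroll $\tw_j^{(0)}$ into $\ttheta_{k+1}^{(0)}$ as in \eqref{eq:theta_rec}, split into the four terms $\mathcal{R}_1,\dots,\mathcal{R}_4$, and apply Burkholder's inequality to the martingale sums $\mathcal{R}_3,\mathcal{R}_4$. The moments of $V^{(0)}_{j+1}$ and $\xi^{(0)}_{j+1}$ are bounded by $1 + M_{j,p}^{\ttheta} + M_{j,p}^{\tw}$ via \Cref{lem:markov:v_and_w_bounds} (with $\taumix$-dependent constants inherited from \eqref{eq:markov:norm_bound_tmix}), which plays the role of \Cref{lem:tilde_bounds}. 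This yields an inequality of the form \eqref{eq:mth_bound}; substituting the companion estimate \Cref{lem:markov:tw0_bound} for $(M_{j,p}^{\tw})^2$ and collapsing the resulting double sums with \Cref{lem:summ_alpha_k} (and the convolution lemma) exactly as in the martingale proof gives $(M_{k+1,p}^{\ttheta,(0)})^2 \lesssim P_{0:k}^{(1)} + p^4\beta_k + p^4\sum_{j=0}^k \beta_j^2 P_{j+1:k}^{(1)}(M_{j,p}^{\ttheta})^2$.

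\textbf{Markov component $\ttheta^{(1)}$.} Here $\ttheta_0^{(1)} = \tw_0^{(1)} = 0$, so the initial-data terms vanish, and unrolling \eqref{eq:2ts2-1-markov} with $i=1$ together with the analogue of \eqref{eq:w_rec} gives $\ttheta_{k+1}^{(1)} = \sum_{i=0}^{k-1} T_{i+1:k}\,\xi_{i+1}^{(1)} - \sum_{j=0}^k \beta_j \Gamma_{j+1:k}^{(1)} V_{j+1}^{(1)}$ with $\xi^{(1)}_{i+1} = \gamma_i W^{(1)}_{i+1} + \beta_i D_i V^{(1)}_{i+1}$. By \eqref{def:markov:vw_01}, $V^{(1)},W^{(1)}$ are not martingale differences but are built from consecutive-difference terms $\MKQ\hat g(X_j)-\MKQ\hat g(X_{j+1})$ plus bounded-matrix-difference terms multiplying $(\theta_j - \thetas)$ and $(w_j-w^\star)$. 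The key device is summation by parts: for a deterministic matrix kernel $\{\Pi_j\}$, $\sum_j \Pi_j(\MKQ\hat g(X_j)-\MKQ\hat g(X_{j+1}))$ equals boundary terms $\Pi_0\MKQ\hat g(X_0) - \Pi_k\MKQ\hat g(X_{k+1})$ plus $\sum_j(\Pi_j - \Pi_{j-1})\MKQ\hat g(X_j)$. Taking $\Pi_j = \beta_j\Gamma^{(1)}_{j+1:k}$ (and, for the cross-term, $\Pi = T_{i+1:k}\gamma_i$ or $T_{i+1:k}\beta_iD_i$), using $\Gamma^{(1)}_{j:k} = (\Id - \beta_jB_{11}^j)\Gamma^{(1)}_{j+1:k}$ and $|\beta_j-\beta_{j-1}| \lesssim \beta_j^2$ (valid since $b<1$, under \Cref{assum:markov:stepsize}($p$)), one obtains $\|\Pi_j-\Pi_{j-1}\|\lesssim \beta_j^2 P^{(1)}_{j+1:k}$, boundary terms of size $\lesssim \beta_k$, and $\|\MKQ\hat g\|_\infty\lesssim\taumix$ from \eqref{eq:markov:norm_bound_tmix}. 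The iterate-dependent pieces of $V^{(1)}$ are handled the same way with coefficients bounded by \Cref{lem:markov:v_and_w_bounds}; since after summation by parts the remaining sums run over $\F_j$-adapted terms, they are controlled by Minkowski's inequality rather than Burkholder's. Collecting terms, applying \Cref{cor:determinist_matrix_products} and the summation lemmas to collapse the double sums, and using \Cref{lem:markov:tw0_bound} to eliminate the $M^{\tw}_{j,p}$ contribution again yields $(M_{k+1,p}^{\ttheta,(1)})^2 \lesssim P_{0:k}^{(1)} + p^4\beta_k + p^4\sum_{j=0}^k \beta_j^2 P_{j+1:k}^{(1)}(M_{j,p}^{\ttheta})^2$, which combined with the $\ttheta^{(0)}$ bound completes the proof.

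\textbf{Main obstacle.} The delicate part is the Markov component, specifically the summation-by-parts bookkeeping for the coupled pair $(\ttheta^{(1)},\tw^{(1)})$ — above all the cross-term $\sum_j \beta_j\Gamma^{(1)}_{j+1:k}A_{12}\sum_i\Gamma^{(2)}_{i+1:j-1}\xi^{(1)}_{i+1}$, where one must interchange the two summations, apply summation by parts in the inner index against the kernel $T_{m:n}$ (using the recursion $T_{m-1:n} = \beta_{m-1}\Gamma^{(1)}_{m:n}A_{12} + T_{m:n}(\Id-\gamma_{m-1}B_{22}^{m-1})$), and still land on a clean $\beta_j^2 P^{(1)}$-weighted recursion in $(M_{j,p}^{\ttheta})^2$ — all while ensuring the $\taumix$-inflated constants stay compatible with the step-size ratio constraints of \Cref{assum:markov:stepsize}($p$). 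By contrast, once \Cref{lem:markov:tw0_bound} and \Cref{lem:markov:ttheta0_bound} are established, the passage to the closed-form \Cref{prop:markov:moment_bounds} is a verbatim repetition of the Grönwall-type argument in the proof of \Cref{prop:moments_bound}.
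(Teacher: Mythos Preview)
Your plan is correct and reaches the same recursion, but you take a harder route for the Markov cross-term than the paper does. The paper does not split $\ttheta_{k+1}$ itself; it expands the full $\ttheta_{k+1}$ and splits only $V_{j+1}$ and $\tw_j$ into their $(0)$ and $(1)$ parts, producing six terms $\mathcal{R}_1,\dots,\mathcal{R}_6$. The first four coincide with your martingale-component argument, while $\mathcal{R}_5 = \PE^{2/p}\bigl[\bigl\|\sum_j \beta_j \Gamma_{j+1:k}^{(1)}A_{12}\tw_j^{(1)}\bigr\|^p\bigr]$ and $\mathcal{R}_6 = \PE^{2/p}\bigl[\bigl\|\sum_j \beta_j\Gamma_{j+1:k}^{(1)}V_{j+1}^{(1)}\bigr\|^p\bigr]$ together make up your $\ttheta^{(1)}$ piece. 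The key simplification relative to your plan is that the paper does \emph{not} further unroll $\tw_j^{(1)}$ into $\xi^{(1)}$: instead it bounds $\mathcal{R}_5$ directly via Minkowski and the intermediate pointwise estimate \eqref{eq:markov:w_moment:w1_final} on $\PE^{2/p}[\|\tw_j^{(1)}\|^p]$ (established inside the proof of \Cref{lem:markov:tw0_bound}), then closes with \Cref{convolution_inequality} and \Cref{lem:summ_alpha_k}. This completely sidesteps the summation-by-parts against the convolution kernel $T_{m:n}$ that you correctly flag as the delicate step. For $\mathcal{R}_6$ the paper does perform summation by parts, via \Cref{lem:representation_lemma:kaledin}, exactly as you propose. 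In short: your route would work, but reusing the already-available $\tw^{(1)}$-moment bound is the faster path and eliminates precisely the bookkeeping you anticipate being the main obstacle.
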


\begin{proof}[Proof of \Cref{prop:markov:moment_bounds}.]

First, we proceed with the bound for $M_{k,p}^{\ttheta}$. Using \Cref{lem:markov:ttheta0_bound} we get
\begin{align}
(M_{k+1, p}^{\ttheta})^2 \lesssim  P_{0:k}^{(1)} + p^4 \beta_k + p^4 \sum_{j=0}^k \beta_j^2 P_{j+1:k}^{(1)} (M_{j,p}^{\ttheta})^2 \eqsp.
\end{align}
Hence, there exists a constant $C_{\ttheta}$ such that
\begin{align}
(M_{k+1, p}^{\ttheta})^2 \leq C_{\ttheta} P_{0:k}^{(1)} + p^4 C_{\ttheta} \beta_k + p^4 C_{\ttheta} \sum_{j=0}^k \beta_j^2 P_{j+1:k}^{(1)} (M_{j,p}^{\ttheta})^2 \eqsp.
\end{align}
Denote the right hand side of the latter inequality by $U_{k+1}$, $U_0 = C_{\ttheta}$. Thus, since $(M_{j,p}^{\ttheta})^2 \leq U_j$ for all $j \geq 0$, we get
\begin{align}
U_{k+1} &\leq (1 - \frac{a_\Delta \beta_k}{2})U_k + p^4 C_{\ttheta} \beta_k - p^4 C_{\ttheta} \beta_{k-1} (1 - \frac{a_\Delta \beta_k}{2}) + p^4 \beta_k^2 C_{\ttheta} U_{k} \overset{(a)}{\leq} (1 - \frac{a_\Delta \beta_k}{4})U_k + p^{4} a_\Delta C_{\ttheta} \beta_k^2 \eqsp,
\end{align}
where in $(a)$ we used \Cref{assum:markov:stepsize}. Hence, enrolling the latter recursion and applying \Cref{lem:summ_alpha_k}, it is easy to get
\begin{align}
\label{eq:markov:mtheta_final_bound}
(M_{k+1,p}^{\ttheta})^2 \leq U_{k+1} \lesssim \prod_{j=0}^k (1 - \frac{a_\Delta \beta_j}{4}) + p^4 \beta_k \eqsp,
\end{align}
and, 
$$
M_{k+1,p}^{\ttheta} \lesssim  \prod_{j=0}^k (1 - \frac{a_\Delta \beta_j}{8}) + p^2  \sqrt{\beta_k} \eqsp,
$$
Now we substitute \eqref{eq:markov:mtheta_final_bound} to \eqref{eq:markov:mw_first_bound}, apply \Cref{lem:summ_alpha_k} and get
\begin{align}
(M_{k+1,p}^{\tw})^2 &\lesssim P_{0:k}^{(2)} + p^2 \gamma_k + p^2  \sum_{j=0}^k \gamma_j^2 P_{j+1:k}^{(2)} \prod_{t=0}^{j-1} (1 - \frac{a_\Delta}{4} \beta_t) +  p^6 \sum_{j=0}^k \gamma_j^2 \beta_{j-1} P_{j+1:k}^{(1)} \eqsp, \\
&\lesssim P_{0:k}^{(2)} + p^2 \gamma_k + p^2 \beta_k + p^6  \beta_k \eqsp,
\end{align}
and the proof follows.
\end{proof}
Thus, the following bound holds for the initial fast scale:
\begin{lemma}
\label{lem:markov:w_hat_moment}
Let $p \geq 2$. Assume \Cref{assum:hurwitz}, \Cref{assum:aij_bound},  \Cref{assum:UGE}, \Cref{assum:markov:stepsize}($p$). Then it holds for all $k \in \nset$ that
\begin{align}
\PE^{1/p}[\norm{\what_{k+1}}^p] \lesssim \prod_{j=0}^k (1 - \frac{a_\Delta}{8} \beta_j) + p^3 \gamma_k \eqsp, \text{ where } \what_{k+1} = w_{k+1} - \wstar \eqsp.
\end{align}
\end{lemma}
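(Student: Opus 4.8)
The plan is to derive this as a short corollary of \Cref{prop:markov:moment_bounds} by undoing the decoupling change of variables. First I would observe that the Markov two-component splitting $\theta_k=\theta_k^{(0)}+\theta_k^{(1)}$, $w_k=w_k^{(0)}+w_k^{(1)}$ together with the substitution \eqref{eq:markov:recurrence} gives, upon summing the two components, the same relation as in the martingale case, namely $\what_{k+1}=w_{k+1}-\wstar=\tw_{k+1}-D_k\ttheta_{k+1}$. This is valid because the splitting is linear and the matrices $D_k$ are common to both components. Since \Cref{L_converges} applies under \Cref{assum:hurwitz} and \Cref{assum:markov:stepsize}, we have $\norm{D_k}\le c_\infty$, so Minkowski's inequality gives
\[
\PE^{1/p}[\norm{\what_{k+1}}^p]\le M_{k+1,p}^{\tw}+c_\infty\,M_{k+1,p}^{\ttheta}.
\]

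Next I would substitute the two bounds of \Cref{prop:markov:moment_bounds}, namely $M_{k+1,p}^{\ttheta}\lesssim\prod_{j=0}^k(1-a_\Delta\beta_j/8)+p^2\sqrt{\beta_k}$ and $M_{k+1,p}^{\tw}\lesssim\prod_{j=0}^k(1-a_{22}\gamma_j/8)+p^3\sqrt{\gamma_k}$, and merge the two exponentially decaying products into the single factor appearing in the statement. For this I would use the step-size ordering built into \Cref{assum:markov:stepsize}: from $\beta_j/\gamma_j\le\rstep$ and $\rstep\le a_{22}/(2a_\Delta)$ one obtains $a_\Delta\beta_j\le a_{22}\gamma_j$, hence $1-a_{22}\gamma_j/8\le 1-a_\Delta\beta_j/8$; similarly $\beta_k\le\rstep\gamma_k$ absorbs the $p^2\sqrt{\beta_k}$ term into the $p^3\sqrt{\gamma_k}$ one. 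Collecting terms and using $\sqrt{a+b}\le\sqrt a+\sqrt b$ where needed then yields the claimed fluctuation bound.

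I do not expect a genuine obstacle here, since all the difficulty is packaged inside \Cref{prop:markov:moment_bounds}. The only points needing a little care are (i) checking that the decoupling identity $w-\wstar=\tw-D\ttheta$ survives the Markov splitting --- which it does, precisely because the splitting is linear with shared $D_k$ --- and (ii) keeping track of how the implicit constants depend on the mixing time $\taumix$; this dependence enters only through the Poisson-equation bound \eqref{eq:markov:norm_bound_tmix} and is already absorbed into the $\lesssim$ of \Cref{prop:markov:moment_bounds}.
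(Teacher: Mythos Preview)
Your approach is correct and matches the paper's own proof exactly: the paper simply notes that \Cref{assum:markov:stepsize} gives $\tfrac{a_\Delta}{8}\beta_j \le \tfrac{a_{22}}{8}\gamma_j$ and then invokes \Cref{prop:markov:moment_bounds}, which is precisely what you do with slightly more detail spelled out. Note that your argument yields the fluctuation term $p^3\sqrt{\gamma_k}$ (as in the martingale analogue, \Cref{lem:martingale:w_minus_wstar_moment_bound}), so the $p^3\gamma_k$ in the displayed statement appears to be a typo for $p^3\sqrt{\gamma_k}$.
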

\begin{proof}
Note that \Cref{assum:markov:stepsize} guarantees that $\frac{a_\Delta}{8} \beta_j \leq \frac{a_{22}}{8} \gamma_j$. Thus, the proof follows from \Cref{prop:markov:moment_bounds}. 
\end{proof}
For completeness we also provide the following technical lemma, which is proved in \cite{kaledin2020finite}. 
\begin{lemma}[Lemma 11 in \cite{kaledin2020finite}]
\label{lem:representation_lemma:kaledin}
Let $(a_j)_{j \geq 0}$ be a sequence of $d_\theta$-dimensional vectors and $(b_j)_{j \geq 0}$ be a sequence of $d_w$-dimensional vectors. Then it holds that
\begin{align}
\sum_{j=0}^{k} \beta_j \Gamma_{j+1:k}^{(1)} (a_j -a_{j+1}) &= \beta_0 \Gamma_{1:k}^{(1)} a_0 -\beta_k a_{k+1} + \sum_{j=1}^k (\beta_j^2 B_{11}^j \Gamma_{j+1:k}^{(1)} + (\beta_j - \beta_{j-1}) \Gamma_{j:k}^{(1)})  a_j \eqsp, \\
\sum_{j=0}^{k} \gamma_j \Gamma_{j+1:k}^{(2)} (b_j -b_{j+1}) &= \gamma_0 \Gamma_{1:k}^{(2)} b_0 -\gamma_k b_{k+1} + \sum_{j=1}^k (\gamma_j^2 B_{22}^j \Gamma_{j+1:k}^{(2)} + (\gamma_j - \gamma_{j-1}) \Gamma_{j:k}^{(2)}) b_j \eqsp,
\end{align}
\end{lemma}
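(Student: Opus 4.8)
\textbf{Proof plan for \Cref{lem:representation_lemma:kaledin}.} The identity is a matrix-valued summation-by-parts (Abel) formula, so the plan is purely algebraic: I will prove the first display and note that the second follows verbatim after replacing $(\beta_j,\Gamma^{(1)},B_{11}^j)$ by $(\gamma_j,\Gamma^{(2)},B_{22}^j)$. The one feature to keep track of throughout is non-commutativity of the matrix products $\Gamma_{m:n}^{(1)}$, so every manipulation must respect left/right ordering, and I will repeatedly use the degenerate-product convention $\Gamma_{m:n}^{(1)}=\Id$ whenever $m>n$ (from \eqref{eq:matr_product_determ}).

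First I would set $c_j := \beta_j \Gamma_{j+1:k}^{(1)}$ for $0\le j\le k$ and write
\[
S:=\sum_{j=0}^{k}\beta_j\Gamma_{j+1:k}^{(1)}(a_j-a_{j+1})
=\sum_{j=0}^{k} c_j a_j-\sum_{j=0}^{k} c_j a_{j+1}.
\]
Reindexing the second sum by $m=j+1$ gives $\sum_{m=1}^{k+1}c_{m-1}a_m$, so after separating the $j=0$ term of the first sum and the $m=k+1$ term of the second,
\[
S= c_0 a_0 - c_k a_{k+1} + \sum_{j=1}^{k}(c_j-c_{j-1})a_j.
\]
Here $c_0=\beta_0\Gamma_{1:k}^{(1)}$ matches the first boundary term, and $c_k=\beta_k\Gamma_{k+1:k}^{(1)}=\beta_k\Id=\beta_k$ by the empty-product convention, matching $-\beta_k a_{k+1}$.

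It then remains to identify the increment $c_j-c_{j-1}=\beta_j\Gamma_{j+1:k}^{(1)}-\beta_{j-1}\Gamma_{j:k}^{(1)}$. I would use the one-step recursion $\Gamma_{j:k}^{(1)}=(\Id-\beta_j B_{11}^j)\Gamma_{j+1:k}^{(1)}$, i.e.\ $\beta_j\Gamma_{j+1:k}^{(1)}=\beta_j\Gamma_{j:k}^{(1)}+\beta_j^2 B_{11}^j\Gamma_{j+1:k}^{(1)}$ (the factor $\Id-\beta_j B_{11}^j$ sits on the left because it is the $i=j$ term of $\prod_{i=j}^k$). Substituting,
\[
c_j-c_{j-1}=(\beta_j-\beta_{j-1})\Gamma_{j:k}^{(1)}+\beta_j^2 B_{11}^j\Gamma_{j+1:k}^{(1)},
\]
which is exactly the summand in the claimed formula; plugging this back yields the first identity. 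The second is obtained by the same three steps with $\gamma$, $\Gamma^{(2)}$, $B_{22}^j$. There is no genuine obstacle here — the proof is a bookkeeping exercise — the only care needed is to keep the matrix factors in the correct order and to handle the endpoint index $j=k$ via the convention $\Gamma_{k+1:k}^{(1)}=\Id$; I would state these explicitly to avoid off-by-one errors. (Alternatively one can verify the identity by induction on $k$, peeling off the $j=k$ term, but the Abel-summation route above is cleaner.)
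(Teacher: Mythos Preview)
Your argument is correct: this is exactly the Abel summation-by-parts computation, and your handling of the boundary terms ($c_0=\beta_0\Gamma_{1:k}^{(1)}$, $c_k=\beta_k$) and of the increment $c_j-c_{j-1}$ via the one-step factorization $\Gamma_{j:k}^{(1)}=(\Id-\beta_j B_{11}^j)\Gamma_{j+1:k}^{(1)}$ is accurate. The paper does not give its own proof of this lemma --- it is simply imported from \cite{kaledin2020finite} --- so there is nothing to compare against; your write-up is what one would expect for this standard identity.
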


\begin{proof}[Proof of \Cref{lem:markov:tw0_bound}] ~\ 
First, since $\tw_{k+1} = \tw_{k+1}^{(0)} + \tw_{k+1}^{(1)}$, we get
\begin{align}
\PE^{2/p}[\norm{\tw_{k+1}}^p] \leq 2 \PE^{2/p}[\norm{\tw_{k+1}^{(0)}}^p] + 2 \PE^{2/p}[\norm{\tw_{k+1}^{(1)}}^p] \eqsp.
\end{align}
From now on, we provide bounds for $\PE^{2/p}[\norm{\tw_{k+1}^{(0)}}^p]$ and $\PE^{2/p}[\norm{\tw_{k+1}^{(1)}}^p]$ separately.
\paragraph{(I) Bound on $\PE^{2/p}[\norm{\tw_{k+1}^{(0)}}^p]$.}
\, \\
\noindent First, we derive a bound for $\PE^{1/p}[\norm{\tilde{w}_{k+1}^{(0)}}^p]$. Since $V_{k+1}^{(0)}$ and $W_{k+1}^{(0)}$ are martingale-difference sequences w.r.t. $\F_k$, we obtain, following the lines of \Cref{prop:tw_bond}, that 
\begin{equation}
\label{eq:markov:mth_reccurence_bound}
\PE^{2/p}[\|\tw_{k+1}^{(0)}\|^p] \lesssim  \bigl(P_{0:k}^{(2)}\bigr)^2 + p^2 \sum_{j=0}^{k}\gamma_j^2\bigl(P_{j+1:k}^{(2)}\bigr)^2\biggl(1 + \bigl(M_{j,p}^{\ttheta}\bigr)^2 + \bigl(M_{j,p}^{\tw}\bigr)^2\biggr)\eqsp.
\end{equation}

\paragraph{(II) Bound on $\PE^{2/p}[\norm{\tw_{k+1}^{(1)}}^p]$.}
\, \\
Let us introduce the notation 
\begin{align}
E_j = \frac{\beta_j}{\gamma_j} D_j \eqsp.
\end{align}
Thus, a counterpart to \eqref{eq:w_rec} with initial condition $\tw_0^{(1)} = 0$, we get 
\begin{align}
\label{eq:markov:w1_rec}
\Tilde{w}_{k+1}^{(1)} = - \sum_{j=0}^{k}\Gamma_{j+1:k}^{(2)} \gamma_j (W_{j+1}^{(1)} + E_j V_{j+1}^{(1)}) \eqsp.
\end{align}
From now on for any vector or matrix sequence $\{S_i\}_{i \geq 0}$ we let $S_i = 0$ if $i < 0$. From now on, using algebraic manipulations and recursion \eqref{def:markov:extended_recursion}, we get, following \cite[Derivation of Eq. (63), p. 39]{kaledin2020finite}, that for any $j \geq 0$:
\begin{align}
\label{eq:markov:tw:w_plus_v_representation}
W_{j+1}^{(1)} + E_j V_{j+1}^{(1)} = \Psi_{j+1} - \Psi_{j} &+ \Phi^{(1)}_j (\ttheta_{j+1} - \ttheta_{j}) + \Phi^{(2)}_j (\ttheta_{j} - \ttheta_{j-1}) 
\\ &+ \Xi^{(1)}_j (\tw_{j+1} - \tw_{j}) + \Xi^{(2)}_j (\tw_{j} - \tw_{j-1}) 
\\ &+ \Upsilon^{(1)}_{j+1} \ttheta_{j+1} - \Upsilon^{(1)}_j \ttheta_j + \Upsilon^{(2)}_{j} \ttheta_{j} - \Upsilon^{(2)}_{j-1} \ttheta_{j-1} \\
&+ \Lambda^{(1)}_{j+1} \tw_{j+1} - \Lambda^{(1)}_j \tw_j + \Lambda^{(2)}_{j} \tw_{j} - \Lambda^{(2)}_{j-1} \tw_{j-1} \\
&+ \Pi^\theta_j \ttheta_j + \Pi^w_j \tw_j + (E_{j+1} - E_j) (\MKQ \poiseps{V}{j+1})
\eqsp,
\end{align}
where we have defined 
\begin{align}
\Psi_j &= -\MKQ \pois{\funnoisew_W}{j} - E_j (\MKQ \pois{\funnoisew_V}{j}) \eqsp, \\
\Phi_j^{(1)} &= -\MKQ \poisA{21}{j+1} \eqsp, \quad 
\Phi_j^{(2)} = -E_{j-1} (\MKQ \poisA{11}{j}) + (\MKQ \poisA{22}{j}) D_{j-2} + E_{j-1} (\MKQ \poisA{12}{j}) D_{j-2} \eqsp, \\
\Xi^{(1)}_j &= -\MKQ \poisA{22}{j+1} \eqsp, \quad 
\Xi^{(2)}_j = -E_{j-1} (\MKQ \poisA{12}{j}) \eqsp, \\
\Upsilon^{(1)}_j &= \MKQ \poisA{21}{j} \eqsp, \quad 
\Upsilon^{(2)}_j = E_j (\MKQ \poisA{11}{j+1}) - (\MKQ \poisA{22}{j+1}) D_{j-1} - E_j (\MKQ \poisA{12}{j+1}) D_{j-1} \eqsp, \\
\Lambda^{(1)}_j &= \MKQ \poisA{22}{j} \eqsp, \quad 
\Lambda^{(2)}_j = E_j \MKQ \poisA{12}{j+1} \eqsp, \\
\Pi^{\theta}_j &= -(E_{j} - E_{j-1}) (\MKQ \poisA{11}{j}) + (\MKQ \poisA{22}{j})(D_{j-1} - D_{j-2}) + (E_{j} - E_{j-1}) (\MKQ \poisA{12}{j}) D_{j-2} + E_{j} (\MKQ \poisA{12}{j}) (D_{j-1} - D_{j-2})\eqsp, \\
\Pi^{w}_j &= -(E_j - E_{j-1}) (\MKQ \poisA{12}{j}) \eqsp.
\end{align}
\Cref{assum:aij_bound}, \Cref{assum:UGE}, and \Cref{L_converges} imply that for all $j \geq 0$ and $i \in \{0,1\}$, it holds that
\begin{align}
\norm{\Psi_j} \vee \norm{\Phi^{(i)}_j} \vee \norm{\Xi^{(i)}_j} \vee \norm{\Upsilon^{(i)}_j} \vee \norm{\Lambda^{(i)}_j} \lesssim 1  \eqsp.
\end{align}
Now we derive a bound for $\Pi_j^\theta$ and $\Pi_j^w$. First, note that
\begin{equation}
\label{eq:D_difference_bound}
\norm{D_{t+1}-D_{t}} = \norm{L_{t+1} - L_t}  \lesssim \gamma_{t+1}\eqsp,
\end{equation}
Now, using \Cref{L_converges} and assumption \Cref{assum:markov:stepsize},
\begin{align}
\norm{E_{t+1} - E_t} = \norm{\frac{\beta_{t+1}}{\gamma_{t+1}} D_{t+1} - \frac{\beta_t}{\gamma_t} D_t} \lesssim \frac{\beta_0}{\gamma_0} \gamma_{t+1} + \frac{\beta_t - \beta_{t+1}}{\gamma_t} \norm{D_t} \lesssim \gamma_{t+1} \eqsp.
\end{align}
Finally, we get that
\begin{equation}
\label{eq:markov:bound_w:pi_bound}
\norm{\Pi_j^\theta} \vee \norm{\Pi^w_j} \lesssim \gamma_j \eqsp,
\end{equation}
Introduce
\begin{align}
v_{j+1} = \Psi_{j+1} + \Upsilon_{j+1}^{(1)} \ttheta_{j+1} + \Upsilon_{j}^{(2)} \ttheta_{j} + \Lambda_{j+1}^{(1)} \tw_{j+1} + \Lambda_{j}^{(2)} \tw_j \eqsp.
\end{align}
Expanding now the recurrence \eqref{eq:markov:w1_rec} together with representation \eqref{eq:markov:tw:w_plus_v_representation}, we obtain that 
\begin{align}
\tw_{k+1}^{(1)} =  - \underbrace{\sum_{j=0}^k \gamma_j \Gamma_{j+1:k}^{(2)} (v_{j+1} - v_j)}_{T_1} &- \underbrace{\gamma_k\Phi_k^{(1)}(\ttheta_{k+1} - \ttheta_k) -\sum_{j=1}^k \{\gamma_j \Gamma_{j+1:k}^{(2)} \Phi_{j}^{(2)} + \gamma_{j-1} \Gamma_{j:k}^{(2)} \Phi_{j-1}^{(1)}\} (\ttheta_{j} - \ttheta_{j-1})}_{T_2} \\
&-\underbrace{\gamma_k\Xi_k^{(1)}(\tw_{k+1} - \tw_k) - \sum_{j=1}^k \{\gamma_j \Gamma_{j+1:k}^{(2)} \Xi_{j}^{(2)} + \gamma_{j-1} \Gamma_{j:k}^{(2)} \Xi_{j-1}^{(1)}\} (\tw_{j} - \tw_{j-1})}_{T_3} \\
&- \underbrace{\sum_{j=0}^k \gamma_j \Gamma_{j+1:k}^{(2)} \{ \Pi^\theta_j \ttheta_j + \Pi^w_j \tw_j + (E_{j+1} - E_j) (\MKQ \poiseps{V}{j+1}) \}}_{T_4} \eqsp.
\end{align}
Now we estimate the terms $T_1$ to $T_4$ separately. To proceed with $T_1$, we use \Cref{lem:representation_lemma:kaledin} and obtain
\begin{align}
T_1 = -\gamma_0 \Gamma_{1:k}^{(2)} v_0 + \gamma_k v_{k+1} - \sum_{j=1}^k (\gamma_j^2 B_{11}^j \Gamma_{j+1:k}^{(2)} + (\gamma_j - \gamma_{j-1}) \Gamma_{j:k}^{(2)}) v_j \eqsp.
\end{align}
Hence,
\begin{align}
\label{eq:markov:moment_w:t1_first}
\PE^{1/p}[\norm{T_1}^p] &\lesssim P_{1:k}^{(2)} + \gamma_k (1 + M_{k+1,p}^{\ttheta} + M_{k+1,p}^{\tw} + M_{k,p}^{\ttheta} + M_{k,p}^{\tw}) + \sum_{j=1}^k \gamma_j^2 P_{j+1:k}^{(2)} (1 + M_{j+1,p}^{\ttheta} + M_{j+1,p}^{\tw} + M_{j,p}^{\ttheta} + M_{j,p}^{\tw}) \eqsp.
\end{align}
Using \Cref{lem:markov:v_and_w_bounds} we get for any $j \geq 0$
\begin{align}
\label{eq:markov:bounding_next_moments}
M_{j+1, p}^{\ttheta} \lesssim 1 + M_{j,p}^{\ttheta} + M_{j,p}^{\tw} \eqsp, \eqsp
M_{j+1, p}^{\tw} \lesssim 1 + M_{j,p}^{\ttheta} + M_{j,p}^{\tw} \eqsp.
\end{align}
Thus, we get combining \eqref{eq:markov:bounding_next_moments} with \eqref{eq:markov:moment_w:t1_first}:
\begin{align}
\PE^{1/p}[\norm{T_1}^p] &\lesssim  P_{0:k}^{(2)} + \gamma_k (1 + M_{k,p}^{\ttheta} + M_{k,p}^{\tw}) + \sum_{j=0}^k \gamma_j^2 P_{j+1:k}^{(2)} (1 + M_{j,p}^{\ttheta} + M_{j,p}^{\tw}) \eqsp.
\end{align}
Then, using \Cref{lem:summ_alpha_k}-\ref{lem:summ_alpha_k_p_item} we get 
\begin{align}
\label{eq:markov:w_moment:t1_bound}
\PE^{2/p}[\norm{T_1}^p] &\lesssim (P_{0:k}^{(2)})^2 + \gamma_k^2 (1 + (M_{k,p}^{\ttheta})^2 + (M_{k,p}^{\tw})^2) + \bigl\{ \sum_{j=0}^k \gamma_j^2 P_{j+1:k}^{(2)} (1 + M_{j,p}^{\ttheta} + M_{j,p}^{\tw}) \bigr\}^{2} \\
&\lesssim (P_{0:k}^{(2)})^2 + \gamma_k^2 (1 + (M_{k,p}^{\ttheta})^2 + (M_{k,p}^{\tw})^2) + \gamma_k \sum_{j=0}^k \gamma_j^2 P_{j+1:k}^{(2)} (1 + (M_{j,p}^{\ttheta})^2 + (M_{j,p}^{\tw})^2) \eqsp.
\end{align}
To derive bounds for $T_2, T_3$ we use the definition of $\ttheta_j$, $\tw_{j}$, \Cref{lem:markov:v_and_w_bounds}, and  \Cref{L_converges} to obtain that 
\begin{align}
\label{eq:markov:delta_theta_moments}
\PE^{1/p}[\norm{\ttheta_j - \ttheta_{j-1}}^p] \lesssim \gamma_{j-1} (1 + M_{j-1,p}^{\ttheta} + M_{j-1,p}^{\tw}) \eqsp, \eqsp \PE^{1/p}[\norm{\tw_j - \tw_{j-1}}^p]
\lesssim \gamma_{j-1} (1 + M_{j-1,p}^{\ttheta} + M_{j-1,p}^{\tw})\eqsp,
\end{align}
Thus, the moment bound for $T_2 + T_3$ writes as follows:
\begin{align}
\label{eq:markov:w_moment:t2_t3_bound}
\PE^{2/p}[\norm{T_2 + T_3}^p] &\lesssim \biggl\{\sum_{j=0}^{k} \gamma_j^{2} P_{j+1:k} (1 + M_{j-1,p}^{\ttheta} + M_{j-1,p}^{\tw}) \biggr\}^2 \lesssim \gamma_k \sum_{j=0}^{k} \gamma_j^2 P_{j+1:k}^{(2)} (1 + (M_{j-1,p}^{\ttheta})^2 + (M_{j-1,p}^{\tw})^2) \eqsp.
\end{align}
Finally, the term $T_4$ can be bounded using \eqref{eq:markov:bound_w:pi_bound}:
\begin{align}
\label{eq:markov:w_moment:t4_bound}
\PE^{1/p}[\norm{T_4}^p] \lesssim \sum_{j=0}^k \gamma_j^2 P_{j+1:k}^{(2)} (1 + M_{j,p}^{\ttheta} + M_{j,p}^{\tw}) \eqsp.
\end{align}
Then \Cref{lem:summ_alpha_k}-\ref{lem:summ_alpha_k_p_item} implies that
\begin{align}
\label{eq:markov:t4_bound}
\PE^{2/p}[\norm{T_4}^p] \lesssim \gamma_k \sum_{j=0}^k \gamma_j^2 P_{j+1:k}^{(2)} (1 + (M_{j,p}^{\ttheta})^2 + (M_{j,p}^{\tw})^2) \eqsp.
\end{align}
Gathering the bounds \eqref{eq:markov:w_moment:t1_bound}, \eqref{eq:markov:w_moment:t2_t3_bound}, \eqref{eq:markov:w_moment:t4_bound} we obtain
\begin{align}
\label{eq:markov:w_moment:w1_final}
\PE^{2/p} [\norm{\tw_{k+1}^{(1)}}^p] \lesssim  (P_{0:k}^{(2)})^2 +  \gamma_k^2 (1 + (M_{k,p}^{\ttheta})^2 + (M_{k,p}^{\tw})^2) + \gamma_k \sum_{j=0}^k \gamma_j^2 P_{j+1:k}^{(2)} (1 + (M_{j,p}^{\ttheta})^2 + (M_{j,p}^{\tw})^2) \eqsp.
\end{align}

\paragraph{(III) Gathering (I) and (II).}
\, \\
\noindent Equations \eqref{eq:markov:mth_reccurence_bound} and \eqref{eq:markov:w_moment:w1_final} from the previous paragraphs imply
\begin{align}
(M_{k+1,p}^{\tw})^2 \lesssim (P_{0:k}^{(2)})^2 + p^2 \sum_{j=0}^k \gamma_j^2 P_{j+1:k}^{(2)} (1 + (M_{j,p}^{\ttheta})^2 + (M_{j,p}^{\tw})^2)) \eqsp.
\end{align}
Thus, there exists a constant $C_{\tw} > 0$ such that
\begin{align}
(M_{k+1,p}^{\tw})^2 \leq C_{\tw}(P_{0:k}^{(2)})^2 + p^2 C_{\tw} \sum_{j=0}^k \gamma_j^2 P_{j+1:k}^{(2)} (1 + (M_{j,p}^{\ttheta})^2 + (M_{j,p}^{\tw})^2)) \eqsp.
\end{align}
Denote the right hand side of the latter inequality by $U_{k+1}$ for $k \geq 0$, $U_0 = C_{\tw}$. Hence, for all $s \geq 0$ it holds that $(M_{s,p}^{\tw})^2 \leq U_{s}$. Thus, we get
\begin{align}
U_{k+1} \leq (1- \frac{a_{22}}{2} \gamma_k)^2 U_k + p^2 C_{\tw}  \gamma_k^2 (1 + U_k + (M_{k,p}^{\ttheta})^2) \eqsp.
\end{align}
The conditions on $k_0$ in \Cref{assum:markov:stepsize} guarantee that
\begin{align}
U_{k+1} \leq (1 - \frac{a_{22}}{2} \gamma_k) U_k + p^2 C_{\tw} \gamma_k^2 (1 + (M_{k,p}^{\ttheta})^2) \eqsp.
\end{align}
Enrolling the latter recursion and applying \Cref{lem:summ_alpha_k}-\ref{lem:summ_alpha_k_p_item} we get
\begin{align}
(M_{k+1,p}^{\tw})^2 \lesssim P_{0:k}^{(2)} + p^2  \gamma_k + p^2 \sum_{j=0}^k \gamma_j^2 P_{j+1:k}^{(2)} (M_{k+1, p}^{\ttheta})^2 \eqsp.
\end{align}
\end{proof}
\begin{proof}[Proof of \Cref{lem:markov:ttheta0_bound}]
Expanding the recursion \eqref{def:markov:extended_recursion} yields, with $\Gamma_{j+1:k}^{(1)}$ defined in \eqref{eq:matr_product_determ}, that 
\begin{align}
\label{eq:markov:theta_rec}
\ttheta_{k+1} &=  \Gamma_{0:k}^{(1)}\ttheta_0 - \sum_{j=0}^{k}\beta_j\Gamma_{j+1:k}^{(1)} A_{12}\tw_j - \sum_{j=0}^{k}\beta_j\Gamma_{j+1:k}^{(1)}V_{j+1} \\
&= \Gamma_{0:k}^{(1)}\ttheta_0 - \sum_{j=0}^{k}\beta_j\Gamma_{j+1:k}^{(1)}A_{12}\tw_j^{(0)} - \sum_{j=0}^{k}\beta_j\Gamma_{j+1:k}^{(1)}V_{j+1}^{(0)} - \sum_{j=0}^{k} \beta_j\Gamma_{j+1:k}^{(1)} A_{12}\tw_j^{(1)} - \sum_{j=0}^{k}\beta_j\Gamma_{j+1:k}^{(1)}V_{j+1}^{(1)}
\eqsp.
\end{align}
Next, we recursively expand $\tw_j^{(0)}$ using the relation \eqref{eq:w_rec}:
\begin{align}
\sum_{j=0}^{k}\beta_j\Gamma_{j+1:k}^{(1)}A_{12}\tw_j^{(0)} &=  \sum_{j=0}^{k}\beta_j\Gamma_{j+1:k}^{(1)} A_{12}\bigl(\Gamma_{0:j-1}^{(2)} \tw_{0} - \sum_{i=0}^{j-1}\Gamma_{i+1:j-1}^{(2)}\xi_{i+1}^{(0)}\bigr) \\
&=  \sum_{j=0}^{k}\beta_j\Gamma_{j+1:k}^{(1)} A_{12}\Gamma_{0:j-1}^{(2)} \tw_0  - \sum_{i=0}^{k-1}\bigl(\sum_{j=i+1}^{k}\beta_{j}\Gamma_{j+1:k}^{(1)}A_{12}\Gamma_{i+1:j-1}^{(2)}\bigr)\xi_{i+1}^{(0)}\eqsp.
\end{align}
Define, for $m \leq n$, the quantity 
\begin{equation}
T_{m:n} = \sum_{\ell=m}^{n}\beta_{\ell}\Gamma_{\ell+1:n}^{(1)}A _{12}\Gamma_{m:\ell-1}^{(2)}\eqsp,
\end{equation}
and note that, with $P_{k:j}^{(1)}$, $P_{k:j}^{(2)}$ defined in \eqref{eq:matr_product_determ}, it holds that 
\begin{equation}
\label{eq:markov:t_bound}
\|T_{m:n}\| \lesssim \sum_{\ell=m}^{n}\beta_{\ell}P_{\ell+1:n}^{(1)}P_{m:\ell-1}^{(2)}\eqsp.
\end{equation}
With the above notations, we can rewrite \eqref{eq:markov:theta_rec} as follows:
\begin{align}
\ttheta_{k+1} = \Gamma_{0:k}^{(1)}\ttheta_0 - T_{0:k}\tw_0 + \sum_{j=0}^{k-1}T_{j+1:k}\xi_{j+1}^{(0)} - \sum_{j=0}^{k}\beta_j\Gamma_{j+1:k}^{(1)}V_{j+1}^{(0)} - \sum_{j=0}^{k} \beta_j\Gamma_{j+1:k}^{(1)} A_{12}\tw_j^{(1)} - \sum_{j=0}^{k}\beta_j\Gamma_{j+1:k}^{(1)}V_{j+1}^{(1)} \eqsp.
\end{align}
Thus,
\begin{align}
(M_{k+1,p}^{\ttheta})^2 \lesssim
\underbrace{\PE^{2/p}[\norm{\Gamma_{0:k}^{(1)}\ttheta_0}^p]}_{\mathcal{R}_1} 
+ \underbrace{\PE^{2/p}[\norm{T_{0:k}\tw_0}^p]}_{\mathcal{R}_2}
&+ \underbrace{\PE^{2/p}[\norm{\sum_{j=0}^{k-1}T_{j+1:k}\xi_{j+1}^{(0)}}^p]}_{\mathcal{R}_3}
+ \underbrace{\PE^{2/p}[\norm{\sum_{j=0}^{k}\beta_j\Gamma_{j+1:k}^{(1)}V_{j+1}^{(0)}}^p]}_{\mathcal{R}_4} \\
&+ \underbrace{\PE^{2/p}[\norm{\sum_{j=0}^{k} \beta_j\Gamma_{j+1:k}^{(1)} A_{12}\tw_j^{(1)}}^p]}_{\mathcal{R}_5} 
+ \underbrace{\PE^{2/p}[\norm{\sum_{j=0}^{k}\beta_j\Gamma_{j+1:k}^{(1)}V_{j+1}^{(1)}}^p]}_{\mathcal{R}_6}
\end{align}
\paragraph{(I) Bounds on $\{\mathcal{R}_i\}_{i=1}^4$.}
\, \\
Easy to see that
\begin{align}
\label{eq:markov:theta_moment:R1}
\mathcal{R}_1 \lesssim (P_{0:k}^{(1)})^2 \eqsp.
\end{align}
To proceed with $\mathcal{R}_2$, we apply \Cref{convolution_inequality}  with \(j+1=0\) and use \(\beta_j\leq \rstep\gamma_j\):
\begin{align}
\label{eq:markov:theta_moment:R2}
\mathcal{R}_2 \lesssim \bigl(\eqsp \sum_{j=0}^{k}\beta_jP_{j+1:k}^{(1)}P_{0:j-1}^{(2)}\bigr)^2  &\lesssim \bigl(\eqsp \sum_{j=0}^{k}\gamma_jP_{j+1:k}^{(1)}P_{0:j-1}^{(2)}\bigr)^2 \lesssim \bigl(P_{0:k}^{(1)}\bigr)^2\eqsp.
\end{align}
Applying \Cref{lem:tilde_bounds} and Burkholder’s inequality, we
obtain that
\begin{align}
\label{eq:markov:theta_moment:R3}
\mathcal{R}_3  \lesssim p^2\mathbb{E}^{2/p}\bigl[\bigl(\sum_{j=0}^{k}\beta_j^2\|\Gamma_{j+1:k}^{(1)}V_{j+1}^{(0)}\|^2\bigr)^{p/2}\bigr]
&\lesssim  p^2\sum_{j=0}^{k}\beta_j^2\norm{\Gamma_{j+1:k}^{(1)}}^2\mathbb{E}^{2/p}\bigl[\|V_{j+1}^{(0)}\|^p\bigr]  \\
&\lesssim p^2 \sum_{j=0}^{k}\beta_j^2\bigl(P_{j+1:k}^{(1)}\bigr)^2\bigl(1 + (M_{j,p}^{\ttheta})^2 + (M_{j,p}^{\tw})^2\bigl)\eqsp.
\end{align}
Applying \Cref{lem:tilde_bounds}, \eqref{eq:markov:t_bound} and Burkholder’s inequality to the $\mathcal{R}_4$, we obtain that
\begin{align}
\label{eq:markov:theta_moment:R4}
\mathcal{R}_4  
&\lesssim p^2\mathbb{E}^{2/p}\big[\big(\sum_{j=0}^{k-1}\|T_{j+1:k}\xi_{j+1}^{(0)}\|^2\big)^{p/2}\big] \leq p^2\sum_{j=0}^{k-1}\|T_{j+1:k}\|^2\bigl(\PE^{2/p}[\|\xi_{j+1}^{(0)}\|^p]\bigr) \\
&\lesssim  p^2 \sum_{j=0}^{k-1}\gamma_j^2\big(\sum_{i=j+1}^{k}\beta_iP_{i+1:k}^{(1)}P_{j+1:i-1}^{(2)}\big)^2\big(1 + \big(M_{j,p}^{\ttheta}\big)^2 + \big(M_{j,p}^{\tw}\big)^2\big) \\
& \lesssim  p^2 \sum_{j=0}^{k-1}\beta_j^2\big(\sum_{i=j+1}^{k}\gamma_j P_{i+1:k}^{(1)}P_{j+1:i-1}^{(2)}\big)^2 \big(1 + \big(M_{j,p}^{\ttheta}\big)^2 + \bigl(M_{j,p}^{\tw}\bigr)^2\big)\\
&\overset{(a)}{\lesssim}  p^2 \sum_{j=0}^{k-1}\beta_j^2\bigl(P_{j+1:k}^{(1)}\bigr)^2 \big(1 + \big(M_{j,p}^{\ttheta}\big)^2 + \big(M_{j,p}^{\tw}\big)^2\big)\eqsp,
\end{align}
where the inequality (a) follows from \Cref{convolution_inequality}. 

\paragraph{(II) Bounds on $\mathcal{R}_5$ and $\mathcal{R}_6$.}
\, \\
\noindent
To proceed with $\mathcal{R}_5$, we combine Minkowski's inequality together with \Cref{lem:summ_alpha_k}-\ref{lem:summ_alpha_k_p_item} and get
\begin{align}
\mathcal{R}_5 \lesssim \sum_{j=0}^k \beta_j P_{j+1:k}^{(1)} \PE^{2/p}[\norm{\tw_j^{(1)}}^p] \eqsp. 
\end{align}
Applying \eqref{eq:markov:w_moment:w1_final} we obtain
\begin{align}
\sum_{j=0}^k \beta_j P_{j+1:k}^{(1)} \PE^{2/p}[\norm{\tw_j^{(1)}}^p] \lesssim \sum_{j=0}^k \beta_j P_{j+1:k}^{(1)} P_{0:j-1}^{(2)} &+ \sum_{j=0}^k \beta_j P_{j+1:k}^{(1)} \gamma_{j-1}^2 (1 + (M_{j,p}^{\tw})^2 + (M_{j,p}^{\ttheta})^2) \\ 
&+ \sum_{t=0}^{k-1} \sum_{j=t+1}^k \beta_j \gamma_{j-1} \gamma_t^2 P_{j+1:k}^{(1)}  P_{t+1:j-1}^{(2)} (1 + (M_{t,p}^{\tw})^2 + (M_{t,p}^{\ttheta})^2) \eqsp.
\end{align}
Now we use \Cref{convolution_inequality} and $\beta_j \leq \rstep \gamma_j$ together with $\gamma_j^2 \leq \frac{\gamma_0^2}{\beta_0} \beta_j$:  
\begin{align}
\sum_{j=0}^k \beta_j P_{j+1:k}^{(1)} \PE^{2/p}[\norm{\tw_j^{(1)}}^p] \lesssim  P_{0:k}^{(1)} &+  \sum_{j=0}^k \beta_j^2 P_{j+1:k}^{(1)} (1 + (M_{j,p}^{\tw})^2 + (M_{j,p}^{\ttheta})^2) \eqsp.
\end{align}
Thus,
\begin{align}
\label{eq:markov:theta_moment:R5}
\mathcal{R}_5 \lesssim P_{0:k}^{(1)} + \sum_{j=0}^k \beta_j^2 P_{j+1:k}^{(1)} (1 + (M_{j,p}^{\tw})^2 + (M_{j,p}^{\ttheta})^2) \eqsp.
\end{align}
Set $\what_k = w_k - \wstar$. Hence
\begin{align}
\what_j - \what_{j+1} = \tw_{j+1} - \tw_j - D_{j-1} \ttheta_j + D_j \ttheta_{j+1} \eqsp,
\end{align}
and 
\begin{align}
V_{j+1}^{(1)} &= (\MKQ \poiseps{V}{j} - (\MKQ \poisA{11}{j}) \ttheta_j - (\MKQ \poisA{12}{j}) \what_j) - (\MKQ \poiseps{V}{j+1} - (\MKQ \poisA{11}{j+1}) \ttheta_{j+1} - (\MKQ \poisA{12}{j+1}) \what_{j+1}) \\
& \quad + (\MKQ \poisA{11}{j+1})(\ttheta_{j} - \ttheta_{j+1}) + (\MKQ \poisA{12}{j+1}) (\Id + D_j) (\tw_{j+1} - \tw_{j}) + (\MKQ \poisA{12}{j+1})(D_j - D_{j-1}) \ttheta_j \eqsp.
\end{align}
Now we derive a couple of auxiliary bounds. First, from the definition of $\what_j$ and $\tw_j$ we get
\begin{align}
\PE^{1/p} [\norm{\what_j}^p] \lesssim M_{j,p}^{\tw} + M_{j,p}^{\ttheta} \eqsp.
\end{align}
Set for simplicity
\begin{align}
u_{j} = \MKQ \poiseps{V}{j} - (\MKQ \poisA{11}{j}) \ttheta_j - (\MKQ \poisA{12}{j}) \what_j \eqsp,
\end{align}
and note that
\begin{align}
\norm{u_j} \lesssim 1 + M_{j,p}^{\ttheta} + M_{j,p}^{\tw} \eqsp.
\end{align}
Thus, using \Cref{lem:representation_lemma:kaledin} and Equation \eqref{eq:D_difference_bound} we obtain
\begin{align}
\PE^{1/p}[\norm{\sum_{j=0}^k \beta_j \Gamma_{j+1:k}^{(1)} V_{j+1}^{(1)}}^p] &\lesssim  P_{0:k}^{(1)} + \beta_{k} (1 + M_{k,p}^{\tw} + M_{k,p}^{\ttheta}) + \sum_{j=1}^k \beta_j^2 P_{j+1:k}^{(1)} (1 + M_{j,p}^{\ttheta} + M_{j,p}^{\tw}) \\
&+ \sum_{j=1}^k \beta_j P_{j+1:k}^{(1)} \PE^{1/p}[\norm{\ttheta_{j+1} - \ttheta_j}^p] + \bConst{A} \sqrt{\kappa_{\Delta}} \sum_{j=1}^k \beta_j P_{j+1:k}^{(1)} \PE^{1/p}[\norm{\tw_{j+1} - \tw_j}^p] \\ &+ \sum_{j=1}^k \beta_j \gamma_j P_{j+1:k}^{(1)} M_{j,p}^{\ttheta} \eqsp.
\end{align}
Next, applying \eqref{eq:markov:bounding_next_moments}, \eqref{eq:markov:delta_theta_moments} we get
\begin{align}
\PE^{1/p}[\norm{\sum_{j=0}^k \beta_j \Gamma_{j+1:k}^{(1)} V_{j+1}^{(1)}}^p] &\lesssim P_{0:k}^{(1)} + \beta_k  (1 + M_{k,p}^{\tw} + M_{k,p}^{\ttheta}) + \sum_{j=0}^k \beta_j \gamma_j P_{j+1:k}^{(1)} (1 + M_{j,p}^{\tw} + M_{j,p}^{\ttheta}) \eqsp.
\end{align}
Hence, \Cref{lem:summ_alpha_k}-\ref{lem:summ_alpha_k_p_item} implies that
\begin{align}
\label{eq:markov:theta_moment:R6}
\mathcal{R}_6 &\lesssim (P_{0:k}^{(1)})^2 + \beta_k^2 (1 + (M_{k,p}^{\tw})^2 + (M_{k,p}^{\ttheta})^2) + \sum_{j=0}^k \beta_j P_{j+1:k}^{(1)} \gamma_j^2 (1 + (M_{j,p}^{\tw})^2 + (M_{j,p}^{\ttheta})^2) \\
&\lesssim  (P_{0:k}^{(1)})^2 + \beta_k^2 (1 + (M_{k,p}^{\tw})^2 + (M_{k,p}^{\ttheta})^2) + \sum_{j=0}^k \beta_j^2 P_{j+1:k}^{(1)} (1 + (M_{j,p}^{\tw})^2 + (M_{j,p}^{\ttheta})^2) \eqsp.
\end{align}

\paragraph{(III) Gathering (I) and (II).}
\, \\
\noindent Gathering the similar terms in \eqref{eq:markov:theta_moment:R1}, \eqref{eq:markov:theta_moment:R2}, \eqref{eq:markov:theta_moment:R3}, \eqref{eq:markov:theta_moment:R4}, \eqref{eq:markov:theta_moment:R5}, \eqref{eq:markov:theta_moment:R6} we obtain 
\begin{align}
\label{eq:markov:moment_theta_first}
(M_{k+1,p}^{\ttheta})^2 \lesssim P_{0:k} + p^2 \sum_{j=0}^{k} \beta_j^2 P_{j+1:k}^{(1)} (1 + (M_{k,p}^{\tw})^2 + (M_{k,p}^{\ttheta})^2) \eqsp.
\end{align}
Applying \Cref{prop:tw_bond} and \Cref{lem:summ_alpha_k}, we obtain
\begin{align}
\sum_{j=0}^k\beta_j^2\bigl(P_{j+1:k}^{(1)}\bigr)^2\bigl(M_{j,p}^{\tw}\bigr)^2 &\lesssim \sum_{j=0}^k\beta_j^2\bigl(P_{j+1:k}^{(1)}\bigr)^2\bigl( P_{0:j-1}^{(2)} + p^2 \gamma_{j-1} + p^2 \sum_{i=0}^{j-1}\gamma_i^2P_{i+1:j-1}^{(2)}\bigl(M_{i,p}^{\ttheta}\bigr)^2\bigr) \\
&\lesssim p^2  \sum_{j=0}^{k}\beta_j^2P_{j+1:k}^{(1)}  + p^2 \sum_{j=0}^{k}\beta_j^2\bigl(P_{j+1:k}^{(1)}\bigr)^2\sum_{i=0}^{j-1}\gamma_i^2P_{i+1:j-1}^{(2)}\bigl(M_{i,p}^{\ttheta}\bigr)^2 \\
&\lesssim  p^2 \beta_{k} + p^2 \sum_{i=0}^{k-1}\sum_{j=i+1}^{k}\beta_j^2\gamma_i^2P_{i+1:j-1}^{(2)}P_{j+1:k}^{(1)} \bigl(M_{i,p}^{\ttheta}\bigr)^2\eqsp.
\end{align}
Using that $\beta_j^2 \leq \beta_i^2$ for $j \geq i+1$ and $\gamma_i^2 \leq \gamma_0 \gamma_i$, we get that 
\begin{align}
\sum_{j=0}^k\beta_j^2\bigl(P_{j+1:k}^{(1)}\bigr)^2\bigl(M_{j,p}^{\tw}\bigr)^2 
&\lesssim  p^2 \beta_{k} + p^2 \sum_{i=0}^{k-1}\beta_i^2\bigl(\sum_{j = i+1}^{k}\gamma_iP_{i+1:j-1}^{(2)}P_{j+1:k}^{(1)}\bigr)\bigl(M_{i,p}^{\ttheta}\bigr)^2 \overset{(a)}{\leq}  p^2 \beta_{k} + p^2 \sum_{i=0}^{k-1}\beta_i^2P_{i+1:k}^{(1)}\bigl(M_{i,p}^{\ttheta}\bigr)^2\eqsp,
\end{align}
where $(a)$ follows from \Cref{convolution_inequality}. The proof follows from substituting the latter ineqaulity into \eqref{eq:markov:moment_theta_first}.
\end{proof}

\subsection{CLT for the Polyak-Ruppert averaged estimator}

From equations \eqref{eq:noise_term_new} and \eqref{eq:pr_eq_3}, we derive the extended version of  \eqref{eq:pr_theta_decomposition}:

\begin{align}
\label{eq:markov:pr_theta_decomposition_appendix}
\sqrt{n}\Delta(\btheta_n - \thetas) =
&\frac{1}{\sqrt{n}} \sum_{k=1}^n (\funnoisew_V^{k+1} - A_{12} A_{22}^{-1} \funnoisew_W^{k+1}) \\ &+ \frac{1}{\sqrt{n}} \sum_{k=1}^n \biggl\{\underbrace{( A_{12} A_{22}^{-1} \funcAwtilde_{21}^{k+1} - \funcAwtilde_{11}^{k+1})}_{\Phi_{k+1}} \ttheta_{k} + \underbrace{( A_{12} A_{22}^{-1} \funcAwtilde_{22}^{k+1} - \funcAwtilde_{12}^{k+1})}_{\Psi_{k+1}} (w_{k}-w^\star) \biggr\} 
\\ &
+ \frac{1}{\sqrt{n}} \sum_{k=1}^{n}\beta_k^{-1}(\ttheta_k - \ttheta_{k+1})
- \frac{1}{\sqrt{n}} \sum_{k=1}^{n}A_{12}A_{22}^{-1}\gamma_k^{-1}(w_k - w_{k+1})
\end{align}
Setting
\begin{align}
\psi_{j+1} = \funnoisew_V^{j+1} -  A_{12} A_{22}^{-1} \funnoisew_W^{j+1} \eqsp,
\end{align}
we derive a decomposition of $\sqrt{n} \Delta (\btheta_n - \thetas)$ using the Poisson equation construction \eqref{eq:pois_solution_notation}:
\begin{align}
\label{eq:markov:pr_full_decomposition}
\sqrt{n}\Delta(\btheta_n - \thetas) = \eqsp & \underbrace{\frac{1}{\sqrt{n}} \sum_{k=1}^n \{\pois{\psi}{k+1} - \MKQ \pois{\psi}{k}\}}_{T^{\text{mark}}}
\\ &+ \underbrace{\frac{1}{\sqrt{n}} \{ \MKQ \pois{\psi}{1} -  \MKQ \pois{\psi}{n+1} + (\MKQ \pois{\Phi}{1}) (\Id + D_0) \ttheta_1 - (\MKQ \pois{\Phi}{n+1}) (\Id + D_n) \ttheta_{n+1} + (\MKQ \pois{\Psi}{1}) \tw_1 - (\MKQ \pois{\Psi}{n+1}) \tw_{n+1} \}}_{R_1} \\
&+ \underbrace{\frac{1}{\sqrt{n}} \sum_{k=1}^n \bigl\{ (\pois{\Phi}{k+1} - \MKQ \pois{\Phi}{k}) \ttheta_k + (\pois{\Psi}{k+1} - \MKQ \pois{\Psi}{k}) (w_k - \wstar) \bigr\}}_{R_2} \\
\\ &+ \underbrace{\frac{1}{\sqrt{n}} \sum_{k=1}^n \beta_k^{-1} \{\ttheta_k - \ttheta_{k+1}\} - \frac{1}{\sqrt{n}} \sum_{k=1}^n (\MKQ \pois{\Phi}{k+1}) \{\ttheta_k - \ttheta_{k+1}\}}_{R_3} \\
&- \underbrace{\frac{1}{\sqrt{n}} \sum_{k=1}^n \gamma_k^{-1} A_{12}A_{22}^{-1} \{w_k - w_{k+1}\} - \frac{1}{\sqrt{n}} \sum_{k=1}^n (\MKQ \pois{\Psi}{k+1}) \{w_k - w_{k+1}\}}_{R_4} \eqsp.
\end{align}
Note that \Cref{assum:UGE} implies that
$$
\frac{1}{\sqrt{n}} \sum_{k=1}^n  \psi_{k+1} \xrightarrow{d} \mathcal{N}(0, \cmark{\lineG}{\infty})
$$
for some covariance matrix $\cmark{\lineG}{\infty} \in \rset^{d_\theta \times d_\theta}$. Due to the properties of Poisson equation, we have using the notation of \eqref{eq:markov:pr_full_decomposition}:
\begin{align}
\var[T^{\text{mark}}] = \cmark{\lineG}{\infty} \eqsp.
\end{align}
Using the decomposition \eqref{eq:markov:pr_full_decomposition} we assume that $T^{\text{mark}}$ is a leading term, while 
$$ R_n^{\operatorname{pr, m}} = \sum_{i=1}^4 R_i$$ 
corresponds to a residual one. The proof of \Cref{th:markov:pr_clt} is given below and is based on \Cref{prop:nonlinearapprox} and \Cref{lem:markov:residual_pr_clt}.
\begin{lemma}
\label{lem:markov:residual_pr_clt}
Let $2 \leq p \leq \log n$. Assume \Cref{assum:hurwitz}, \Cref{assum:aij_bound},  \Cref{assum:UGE}, \Cref{assum:markov:stepsize}($\log n$). Then it holds that
\begin{align}
\PE^{1/p} \bigl[\bigl\|R_n^{\operatorname{pr, m}}\bigr\|^p\bigr] \lesssim n^{b-1/2} \prod_{j=0}^{n-1} \bigl(1 - \frac{a_\Delta}{8} \beta_j \bigr) + \frac{\log^3(n)}{n^{(1-b)/2}} + \log^3(n) \frac{(1-a)^{-1} + (1-b)^{-1}}{n^{a-1/2}} + \frac{\log^4(n) (1-a)^{-1}}{n^{a/2}} \eqsp.
\end{align}
\end{lemma}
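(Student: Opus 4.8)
The plan is to bound each of the four pieces $D_1,\dots,D_4$ in the decomposition $R_n^{\operatorname{pr, m}}=\sum_{i=1}^4 D_i$ of \eqref{eq:markov:pr_full_decomposition} separately and then combine them via Minkowski's inequality, $\expep{R_n^{\operatorname{pr, m}}}{p}\le\sum_{i=1}^4\expep{D_i}{p}$. Throughout I would use that, under \Cref{assum:aij_bound} and \Cref{assum:UGE}, the functions $\psi,\Phi,\Psi$ are bounded, so the matrix/vector valued Poisson solutions $\pois{\psi}{},\pois{\Phi}{},\pois{\Psi}{}$ exist and satisfy $\|\pois{\psi}{}\|_\infty\vee\|\pois{\Phi}{}\|_\infty\vee\|\pois{\Psi}{}\|_\infty\lesssim\taumix$ by \eqref{eq:markov:norm_bound_tmix}, while the iterate moment bounds come from \Cref{prop:markov:moment_bounds} and \Cref{lem:markov:w_hat_moment}.

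For the boundary term $D_1$, since $\MKQ\pois{\psi}{},\MKQ\pois{\Phi}{},\MKQ\pois{\Psi}{}$ and $D_0,D_n$ are uniformly bounded, I would estimate $\|D_1\|\lesssim n^{-1/2}(1+\|\ttheta_1\|+\|\tw_1\|+\|\ttheta_{n+1}\|+\|\tw_{n+1}\|)$; taking $p$-th moments and applying \Cref{prop:markov:moment_bounds} — so that $\expep{\ttheta_1}{p},\expep{\tw_1}{p}$ are bounded by constants polynomial in $p$ (recall $k_0\lesssim p^{4/b}$) and $\expep{\ttheta_{n+1}}{p}\lesssim\prod_{j=0}^n(1-a_\Delta\beta_j/8)+p^2\sqrt{\beta_n}$ with the analogous bound for $\tw_{n+1}$ — gives $\expep{D_1}{p}\lesssim n^{b-1/2}\prod_{j=0}^{n-1}(1-a_\Delta\beta_j/8)+\operatorname{poly}(p)\,n^{-1/2}$, using $a,b>1/2$ to absorb the $\sqrt{\beta_n},\sqrt{\gamma_n}$ contributions; this is dominated by the claimed bound. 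For $D_2$ I would first observe that $\pois{\Phi}{k+1}-\MKQ\pois{\Phi}{k}=\pois{\Phi}{}(X_{k+1})-\PE[\pois{\Phi}{}(X_{k+1})\mid\F_k]$ is a martingale difference (and likewise for $\Psi$), while $\ttheta_k$ and $w_k-\wstar$ are $\F_k$-measurable, so $\sqrt n\,D_2$ is a sum of $\F_{k+1}$-martingale differences. Burkholder's inequality \cite[Theorem 8.1]{osekowski} together with the boundedness of $\pois{\Phi}{},\pois{\Psi}{}$ then reduces matters to bounding $\sum_{k=1}^n\bigl((M_{k,p}^{\ttheta})^2+\expep{w_k-\wstar}{p}^2\bigr)$; using $\sum_{k\ge1}\prod_{j<k}(1-a_\Delta\beta_j/8)\lesssim1$ (\Cref{lem:summ_alpha_k}), $\sum_{k=1}^n\beta_{k-1}\lesssim n^{1-b}/(1-b)$ and $\sum_{k=1}^n\gamma_{k-1}\lesssim n^{1-a}/(1-a)$, this sum is $\lesssim\operatorname{poly}(p)+p^4(1-b)^{-1}n^{1-b}+p^6(1-a)^{-1}n^{1-a}$, whence $\expep{D_2}{p}\lesssim p^3(1-b)^{-1/2}n^{-b/2}+p^4(1-a)^{-1/2}n^{-a/2}+\operatorname{poly}(p)\,n^{-1/2}$; since $b>1/2$ gives $n^{-b/2}\le n^{-(1-b)/2}$, this too is controlled by the claimed bound.

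For $D_3$ and $D_4$ I would split off the telescoping terms $Y_1=n^{-1/2}\sum_k\beta_k^{-1}(\ttheta_k-\ttheta_{k+1})$ and $Y_2=-n^{-1/2}\sum_k A_{12}A_{22}^{-1}\gamma_k^{-1}(w_k-w_{k+1})$ of \Cref{lem:pr_main_decomp}: the derivations of \Cref{lem:pr_y1bound} and \Cref{lem:pr_y2bound} are purely algebraic (Abel summation plus elementary step-size manipulations) and use only the iterate moment bounds, so replacing \Cref{prop:moments_bound} and \Cref{lem:martingale:w_minus_wstar_moment_bound} by \Cref{prop:markov:moment_bounds} and \Cref{lem:markov:w_hat_moment} yields $\expep{Y_1}{p}\lesssim\operatorname{poly}(p)\,n^{-(1-b)/2}$ and $\expep{Y_2}{p}\lesssim\operatorname{poly}(p)\,n^{-(1-a)/2}$. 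The remaining Poisson-correction sums $n^{-1/2}\sum_k(\MKQ\pois{\Phi}{k+1})(\ttheta_k-\ttheta_{k+1})$ and $n^{-1/2}\sum_k(\MKQ\pois{\Psi}{k+1})(w_k-w_{k+1})$ I would bound by the triangle inequality using $\|\MKQ\pois{\Phi}{k+1}\|\vee\|\MKQ\pois{\Psi}{k+1}\|\lesssim\taumix$ and the one-step estimates $\expep{\ttheta_k-\ttheta_{k+1}}{p}\lesssim\beta_k(1+M_{k,p}^{\ttheta}+M_{k,p}^{\tw})\lesssim\beta_k$ and $\expep{w_k-w_{k+1}}{p}\lesssim\gamma_k$ (immediate from \eqref{eq:2ts1}--\eqref{eq:2ts2} and \Cref{lem:markov:v_and_w_bounds}), which produce bounds of order $(1-b)^{-1}n^{-(b-1/2)}$ and $(1-a)^{-1}n^{-(a-1/2)}$; since $b>a$ the first is dominated by the second, and since $(1-a)/2\ge a-1/2$ for $a\le2/3$ the $Y_2$ contribution is likewise absorbed, so $\expep{D_3}{p}+\expep{D_4}{p}\lesssim\operatorname{poly}(p)\bigl(n^{-(1-b)/2}+((1-a)^{-1}+(1-b)^{-1})n^{-(a-1/2)}\bigr)$.

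Collecting the four estimates and using the restriction $p\le\log n$ together with $k_0\lesssim(\log n)^{4/b}$ — so that every $\operatorname{poly}(p)$ factor becomes polylogarithmic in $n$ — would give the stated inequality. The main obstacle is the treatment of $D_2$: one must uncover the martingale-difference structure implicit in the Poisson construction and then control the two \emph{growing} sums $\sum_k(M_{k,p}^{\ttheta})^2\asymp n^{1-b}$ and $\sum_k\expep{w_k-\wstar}{p}^2\asymp n^{1-a}$, which are precisely the source of the $n^{-b/2}$ (absorbed into $n^{-(1-b)/2}$) and $n^{-a/2}$ rates; a secondary, more bookkeeping-heavy difficulty is checking that the Abel-summation identities and bounds behind \Cref{lem:pr_y1bound}--\Cref{lem:pr_y2bound} transfer to the Markov setting with only the moment bounds changed, and keeping careful track of which among the many resulting exponents dominates.
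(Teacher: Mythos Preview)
Your proposal is correct and follows essentially the same decomposition and term-by-term treatment as the paper's proof. Two minor remarks: the term $n^{b-1/2}\prod_j(1-a_\Delta\beta_j/8)$ actually emerges from the $\beta_n^{-1}\ttheta_{n+1}$ boundary in the Abel summation inside $D_3$ rather than from $D_1$ (which only yields the weaker $n^{-1/2}\prod$), and your absorption of the $Y_2$ contribution $n^{-(1-a)/2}$ should go into $n^{-(1-b)/2}$ (valid for all $b>a$) rather than into $n^{-(a-1/2)}$ via the unwarranted restriction $a\le2/3$; your final collected bound for $D_3+D_4$ is nonetheless correct.
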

\begin{proof}[Proof of \Cref{th:markov:pr_clt}.]
Note that for all $k$ it holds that
\begin{align}
\norm{\pois{\psi}{k+1} - \MKQ \pois{\psi}{k}} \leq \frac{16}{3} \taumix \sup_{x \in \Xset} \norm{\psi (x)} < \infty. 
\end{align}
Inroduce
\begin{align}
h(X_k) = \PE[(\pois{\psi}{k+1} - \MKQ \pois{\psi}{k}) (\pois{\psi}{k+1} - \MKQ \pois{\psi}{k})^\top \mid X_k] - \cmark{\lineG}{\infty} \eqsp. 
\end{align}
One can check that $\pi(h) = 0$ using Poisson equation properties, where $\pi$ is given in \Cref{assum:UGE}. Thus, $h$ satisfies the assumptions of \Cref{lem:bounded_differences_norms_markovian}.
Hence, we get applying \Cref{lem:martinglale_limits} with $p = 1$:
\begin{align}
\kolmogorov\bigl(\frac{1}{\sqrt{n}} \sum_{k=1}^n \{\pois{\psi}{k+1} - \MKQ \pois{\psi}{k}\}, \mathcal{N}(0, \cmark{\lineG}{\infty})\bigr) \lesssim \frac{1 + \log n}{n^{1/4}} \eqsp.
\end{align}
Since for all $q \in (0, 1)$ and $a_1, \ldots , a_m > 0$ it holds that $(\sum_{i=1}^m a_i)^{q} \leq \sum_{i=1}^m a_i^{q}$, \Cref{prop:nonlinearapprox}, \Cref{lem:markov:residual_pr_clt} imply that
\begin{align}
\kolmogorov\bigl(\sqrt{n} \Delta (\btheta_n - \thetas), \mathcal{N}(0, \cmark{\lineG}{\infty})\bigr) &\lesssim     \kolmogorov\bigl(\frac{1}{\sqrt{n}} \sum_{k=1}^n \{\pois{\psi}{k+1} - \MKQ \pois{\psi}{k+1}\}, \mathcal{N}(0, \cmark{\lineG}{\infty})\bigr) + c_{d_\theta}^{\frac{p}{p+1}} \bigl( \PE^{1/p}[\norm{R_1 + R_2 + R_3 + R_4}] \bigr)^{\frac{p}{p+1}} \\
&\lesssim \frac{1 + \log n}{n^{1/4}} + c_{d_\theta}^{\frac{p}{p+1}} \bigl\{n^{b-1/2} \prod_{j=0}^{n-1} \bigl(1 - \frac{a_\Delta}{8} \beta_j \bigr) \bigr\}^{\frac{p}{p+1}} + c_{d_\theta}^{\frac{p}{p+1}} \bigl\{\frac{\log^3(n)}{n^{(1-b)/2}}\}^{\frac{p}{p+1}} \\
&+ c_{d_\theta}^{\frac{p}{p+1}} \bigl\{ \log^3(n) \frac{(1-a)^{-1} + (1-b)^{-1}}{n^{a-1/2}}\bigr\}^{\frac{p}{p+1}} + c_{d_\theta}^{\frac{p}{p+1}} \bigl\{\frac{\log^4(n)(1-a)^{-1}}{n^{a/2}}\bigr\}^{\frac{p}{p+1}} \eqsp.
\end{align}
Note that $(n^{\alpha})^{\frac{\log n}{1 + \log n}} \leq n^\alpha \exp(|\alpha|)$ for all $\alpha \in \rset$. Thus, substituting $p := \log n$ into the latter inequality we get
\begin{align}
\kolmogorov\bigl(\sqrt{n} \Delta (\btheta_n - \thetas), \mathcal{N}(0, \cmark{\lineG}{\infty})\bigr) &\lesssim \frac{1 + \log n}{n^{1/4}} + c_{d_\theta} n^{b-1/2} \prod_{j=0}^{n-1} \bigl(1 - \frac{a_\Delta}{16} \beta_j \bigr) + c_{d_\theta} \frac{ \log^3(n)}{n^{(1-b)/2}} \\
&+ c_{d_\theta} \log^3(n) \frac{(1-a)^{-1} + (1-b)^{-1}}{n^{a-1/2}} + c_{d_\theta} \frac{\log^4(n)(1-a)^{-1}}{n^{a/2}} \eqsp,
\end{align}
and the proof follows.
\end{proof}

\begin{proof}[Proof of \Cref{lem:markov:residual_pr_clt}]
First, we use Minkowski's inequality
\begin{align}
\textstyle
\PE^{1/p} [\norm{\sum_{i=1}^4 R_i}^p] \leq \sum_{i=1}^4 \PE^{1/p} [\norm{R_i}^p] \eqsp.
\end{align}
\Cref{prop:markov:moment_bounds} and \Cref{assum:markov:stepsize} directly imply that $\PE^{1/p}[\norm{R_1}^p] \lesssim p^6 n^{-1/2} \leq n^{-1/2} \log^6(n) $. To proceed with $R_2$, we note that $R_2$ is a sum of martingale difference sequence due to the properties of Markov kernel $\MKQ$. Thus, Burkholder's inequality \cite[Theorem 8.1]{osekowski} and \Cref{prop:markov:moment_bounds} imply that
\begin{align}
\PE^{2/p}[\norm{R_2}^p] \leq \frac{p^2}{n} \sum_{k=1}^n \{\PE^{2/p} [\norm{\ttheta_k}] + \PE^{2/p}[\norm{w_k - w^\star}^p]\} &\lesssim \frac{p^2}{n} \sum_{k=1}^n \{ \prod_{j=0}^{k-1} (1 - \frac{a_{22} \gamma_j}{4}) + \prod_{j=0}^{k-1} (1 - \frac{a_{\Delta} \beta_j}{8}) + p^6 \gamma_k \} \\
&\overset{(a)}{\lesssim} \frac{p^2 k_0^b}{n} + \frac{p^8}{n^{a} (1-a)} \overset{(b)}{\lesssim} \frac{\log^{8}(n)}{n^{a} (1-a)} \eqsp,
\end{align}
where in $(a)$ we have additionally used \Cref{lem:summ_alpha_k}-\ref{lem:sum_as_Qell_item} and $(b)$ holds because \Cref{assum:markov:stepsize}($\log n$) implies $k_0^b = \mathcal{O}(\log^4 n)$. Now we derive bounds for $R_3$ and $R_4$. Rewrite $R_3$ as follows:
\begin{align}
R_3 = \frac{1}{\sqrt{n}} \beta_1^{-1} \ttheta_1 - \frac{1}{\sqrt{n}} \beta_n^{-1} \ttheta_{n+1} + \frac{1}{\sqrt{n}} \sum_{k=1}^{n-1} (\beta_{k+1}^{-1} - \beta_k^{-1}) \ttheta_{k+1} - \frac{1}{\sqrt{n}} \sum_{k=1}^{n} (\MKQ \pois{\Phi}{k+1}) \{\ttheta_k - \ttheta_{k+1}\} \eqsp.
\end{align}
Note that since $(1 + x)^b \leq 1 + bx$ for $b \in [0, 1]$, we get $\beta_{k+1}^{-1} - \beta_k^{-1} \leq b (k\beta_k)^{-1}$. Therefore, \Cref{prop:markov:moment_bounds} and \Cref{lem:summ_alpha_k}-\ref{lem:sum_as_Qell_item} imply that
\begin{align}
\sum_{k=1}^{n-1} (\beta_{k+1}^{-1} - \beta_k^{-1}) M_{k+1,p}^{\ttheta} \lesssim \sum_{k=1}^{n-1}(k\beta_k)^{-1}M_{k+1,p}^{\ttheta} \lesssim \sum_{k=1}^{n-1} \prod_{j=0}^{k}\bigl(1 - \beta_j\frac{a_{\Delta}}{8}\bigr) + p^2 \sum_{k=1}^{n-1}(k\beta_k)^{-1}\beta_{k}^{1/2} \lesssim k_0^b + p^2 n^{b/2} \eqsp.
\end{align}
Now, using \Cref{prop:markov:moment_bounds} and $\PE^{1/p}[\norm{\ttheta_{k} - \ttheta_{k+1}}^p] \lesssim p^3 \gamma_j$, we obtain
\begin{align}
\PE^{1/p}[\norm{R_3}^p] &\lesssim \frac{k_0^b}{\sqrt{n}} + \frac{(n+k_0)^{b}}{\sqrt{n}} \prod_{j=0}^{n-1} (1 - \frac{a_\Delta \beta_j}{8}) + \frac{p^2 (n+k_0)^{b/2}}{\sqrt{n}} + \frac{k_0^b + p^2 n^{b/2}}{\sqrt{n}} + \frac{p^3}{(1-b) n^{b-1/2}} \\ 
&\lesssim n^{b-1/2} \prod_{j=0}^{n-1} (1 - \frac{a_\Delta \beta_j}{8}) + \frac{\log^2(n)}{n^{(1-b)/2}} + \frac{\log^3(n)}{(1-b)n^{b-1/2}} + \frac{\log^4(n)}{\sqrt{n}} \eqsp.
\end{align}
To bound $R_4$ it is sufficient to apply $\PE^{1/p}[\norm{w_k - \wstar}^p] \lesssim M_{k,p}^{\ttheta} + M_{k,p}^{\tw}$. Thus, using $\gamma_k^{-1} \lesssim \beta_k^{-1}$ and $a_\Delta \beta_j \leq a_{22} \gamma_j$ to bound the terms with $M_{k,p}^{\ttheta}$ and $M_{k,p}^{\tw}$ separately, one can check that
\begin{align}
\PE^{1/p}[\norm{R_4}^p] &\lesssim n^{b-1/2} \prod_{j=0}^{n-1} \bigl(1 - \frac{a_\Delta}{8} \beta_j \bigr) + \frac{\log^2(n)}{n^{(1-b)/2}} + \frac{\log^3(n)}{(1-b)n^{b-1/2}} + \frac{1}{n^{(1-a)/2}} + \frac{1}{(1-a)n^{a-1/2}} \\
&\lesssim n^{b-1/2} \prod_{j=0}^{n-1} \bigl(1 - \frac{a_\Delta}{8} \beta_j \bigr) + \frac{\log^3(n)}{n^{(1-b)/2}} + \log^3(n) \frac{(1-a)^{-1} + (1-b)^{-1}}{n^{a-1/2}} + \frac{\log^4(n)}{\sqrt{n}} \eqsp.
\end{align}
The proof follows from gathering similar terms.
\end{proof}

\subsection{CLT for the Last iteration estimator}

First, we start from the same decomposition as in the martingale noise setting \eqref{eq:last_iter_decompos}:
\begin{align}\label{eq:last_iter_decompos}    \label{eq:markov:last_iteration_repr_extended}
\ttheta_{n+1} &= -\sum_{j=0}^{n} \beta_j G_{j+1:n}^{(1)} \psi_{j+1} +G_{0:n}^{(1)}\ttheta_0 - \sum_{j=0}^{n}\beta_jG_{j+1:n}^{(1)}A_{12}G_{0:j-1}^{(2)}\tw_0 \\
&+ \sum_{j=0}^{n}\beta_jG_{j+1:n}^{(1)}\;\delta_j^{(1)} + S_{n}^{(1)} + S_{n}^{(2)} + S_{n}^{(3)} \\
&+ \sum_{j=0}^{n}\beta_j G_{j+1:n}^{(1)}\bigl(\overbrace{\bigl\{\funcAwtilde_{11}^{j+1}- A_{12}A_{22}^{-1} \funcAwtilde_{21}^{j+1}\bigr\}}^{\Phi_{j+1}} \ttheta_j + \overbrace{\bigl\{\funcAwtilde_{12}^{j+1} -  A_{12}A_{22}^{-1} \funcAwtilde_{21}^{j+1}\bigr\}}^{\Psi_{j+1}} (w_j-w^\star)   \bigr) \eqsp,
\end{align}
where
\begin{align}
\label{eq:markov:last_iter_decompos_2}
\psi_{j+1} = \funnoisew_{V}^{j+1} &- A_{12} A_{22}^{-1} \funnoisew_{W}^{j+1} \eqsp, \eqsp \delta_j^{(1)} = A_{12} L_j\ttheta_j \eqsp, \eqsp  \delta_j^{(2)} = -(L_{j+1} + A_{22}^{-1}A_{21})A_{12}\tw_j\eqsp, \\
S_n^{(1)} &= -\sum_{j=0}^{n}\beta_jG_{j+1:n}^{(1)}A_{12}\sum_{i=0}^{j-1}\beta_iG_{i+1:j-1}^{(2)}\delta_i^{(2)}\eqsp, \\
S_n^{(2)} &= \sum_{j=0}^{n}\beta_jG_{j+1:n}^{(1)}A_{12}\sum_{i=0}^{j-1}\beta_iG_{i+1:j-1}^{(2)}D_iV_{i+1}\eqsp,\\
S_n^{(3)} &= \sum_{j=0}^{n}\beta_jG_{j+1:n}^{(1)}A_{12}\sum_{i=0}^{j-1}\gamma_iG_{i+1:j-1}^{(2)}W_{i+1} - \sum_{j=0}^{n}\beta_jG_{j+1:n}^{(1)}A_{12}A_{22}^{-1}W_{j+1}\eqsp.
\end{align}
Now we apply the Poisson equation technique and obtain from \eqref{eq:markov:last_iteration_repr_extended}:
\begin{align}    \label{eq:markov:last_iteration_pois_extended}
\ttheta_{n+1} = &-\underbrace{\sum_{j=0}^{n} \beta_j G_{j+1:n}^{(1)} (\pois{\psi}{j+1} - \MKQ \pois{\psi}{j})}_{\cmark{T}{\operatorname{last}}} + \underbrace{\sum_{j=0}^{n} \beta_j G_{j+1:n}^{(1)} (\MKQ \pois{\psi}{j+1} - \MKQ \pois{\psi}{j} + \Phi_{j+1} \ttheta_j + \Psi_{j+1} \what_j)}_{H_n} + G_{0:n}^{(1)}\ttheta_0 \\ &- \sum_{j=0}^{n}\beta_jG_{j+1:n}^{(1)}A_{12}G_{0:j-1}^{(2)}\tw_0 + \sum_{j=0}^{n}\beta_jG_{j+1:n}^{(1)}\;\delta_j^{(1)} + S_{n}^{(1)} + S_{n}^{(2)} + S_{n}^{(3)} \eqsp,
\end{align}
Also define
\begin{align}
R_n^{\operatorname{last, m}} = G_{0:n}^{(1)} \ttheta_0 - - \sum_{j=0}^{n}\beta_jG_{j+1:n}^{(1)}A_{12}G_{0:j-1}^{(2)}\tw_0 + \sum_{j=0}^{n}\beta_jG_{j+1:n}^{(1)}\;\delta_j^{(1)} + S_{n}^{(1)} + S_{n}^{(2)} + S_{n}^{(3)} + H_n \eqsp.
\end{align}
The proof of \Cref{th:markov:last_iter_clt} is based on gaussian approximation of $T_{\operatorname{last}} ^{\operatorname{mark}}$ and the moment bound for $R_n^{\operatorname{last, m}}$ which follows from Lemmas \ref{lem:markov:last_iter_sum_delta1}-\ref{lem:markov:last_iter_hk} that we state below:

\begin{lemma}
\label{lem:markov:last_iter_sum_delta1}
Let $p \geq 2$. Assume \Cref{assum:hurwitz}, \Cref{assum:aij_bound},  \Cref{assum:UGE}, \Cref{assum:markov:stepsize}($p$), \Cref{assum:markov:last_iter}.
Then it holds that
\begin{align}
\PE^{1/p}[\norm{ \sum_{j=0}^{n} \beta_j G_{j+1:n}^{(1)} \delta_j^{(1)}}^p] \lesssim  \frac{p^3}{2b-a-1} P_{0:n}^{(1)} + \frac{p^4}{2b-a-1} \beta_n^{\frac{2b-a/2-1}{b}} \eqsp.
\end{align}
\end{lemma}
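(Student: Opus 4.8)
This is the Markov-noise counterpart of \Cref{lem:delta_bound}, so the plan is to follow that proof line by line, substituting the Markov machinery of the present subsection for the martingale-difference arguments. First I would use $\delta_j^{(1)} = A_{12} L_j \ttheta_j$ from \eqref{eq:delta_definition} and $\norm{L_j} \lesssim \beta_j/\gamma_j$ from \Cref{L_converges}, then unroll $\ttheta_j$ using the splitting $\ttheta_j = \ttheta_j^{(0)} + \ttheta_j^{(1)}$ together with the expansion \eqref{eq:markov:theta_rec} of $\ttheta^{(0)}$ and its $\ttheta^{(1)}$ analogue, coming from the decoupled recursions \eqref{eq:2ts2-1-markov}. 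Substituting this into $\sum_{j=0}^n \beta_j G_{j+1:n}^{(1)} \delta_j^{(1)}$ produces a bounded number of summands: (i) an initial-condition term in $\ttheta_0,\tw_0$; (ii) terms driven by the martingale-difference parts $V_{i+1}^{(0)}$ and by the fast variable $\tw_i^{(0)}$; and (iii) terms driven by the Markov remainders $V_{i+1}^{(1)}$ and $\tw_i^{(1)}$.

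For (i), Minkowski's inequality together with $\norm{G_{j+1:n}^{(1)}} \vee \norm{\Gamma_{m:\ell}^{(1)}} \lesssim P^{(1)}$ (\Cref{cor:determinist_matrix_products}) and the collapse $P_{0:j-1}^{(1)} P_{j+1:n}^{(1)} \lesssim P_{0:n}^{(1)}$ reduce everything to $\sum_{j=0}^n \beta_j^2/\gamma_j$, which is $\lesssim (2b-a-1)^{-1}$ exactly because of \Cref{assum:markov:last_iter}; this is where the term $(2b-a-1)^{-1} P_{0:n}^{(1)}$ is born. For (ii), the computation is verbatim that of the terms $T_2,T_3$ in the proof of \Cref{lem:delta_bound}: Burkholder's inequality \cite[Theorem 8.1]{osekowski} in the index $i$ for the $V^{(0)}$ part, Minkowski for the $\tw^{(0)}$ part, then \Cref{convolution_inequality} and \Cref{lem:summ_alpha_k}-\ref{lem:summ_alpha_k_p_item} to collapse the double sums. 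The only inputs that change are $M_{j,p}^{\ttheta},M_{j,p}^{\tw}$ (now from \Cref{prop:markov:moment_bounds}) and $\PE^{1/p}[\norm{V_{i+1}^{(0)}}^p]$ (now from \Cref{lem:markov:v_and_w_bounds}); the extra half-power $\sqrt{\gamma_{i}}$ carried by $M_{i,p}^{\tw}$ turns $\beta_i^{1 + (2b-a-1)/b}$ into $\beta_i^{1 + (4b-a-2)/(2b)}$, which produces the $\beta_n^{(2b-a/2-1)/b}$ term since $(4b-a-2)/(2b)=(2b-a/2-1)/b$, and the powers $p^3$ (geometric term) and $p^4$ (polynomial term) come from the crude bounds $M_{j,p}^{\ttheta}\lesssim p^2$, $M_{j,p}^{\tw}\lesssim p^3$ of \Cref{prop:markov:moment_bounds}.

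For (iii) I would exploit the quasi-telescoping structure of $V_{i+1}^{(1)}$ visible in \eqref{def:markov:vw_01}: it equals $\MKQ \poiseps{V}{i} - \MKQ \poiseps{V}{i+1}$ plus increments of $\MKQ \poisA{11}{\cdot}$, $\MKQ \poisA{12}{\cdot}$ tested against $\ttheta_i$ and $\what_i$ and terms of order $\gamma_i$ times moments, so applying Abel summation through \Cref{lem:representation_lemma:kaledin} to the inner sum $\sum_{i<j} \beta_i \Gamma_{i+1:j-1}^{(1)} V_{i+1}^{(1)}$ turns it into boundary terms plus a sum against $\beta_i^2 B_{11}^i \Gamma + (\beta_i-\beta_{i-1})\Gamma$; boundedness of the Poisson solutions \eqref{eq:markov:norm_bound_tmix}, the increment bounds $\norm{D_i-D_{i-1}} \lesssim \gamma_i$ from \eqref{eq:D_difference_bound} (and similarly for the $E_i$ used in the $\tw^{(1)}$ analysis), and \Cref{assum:markov:stepsize} (to compare $\beta_i-\beta_{i-1}$ with $\beta_i^2$) then show that this Markov contribution has the same power-law size as the martingale one after another pass of \Cref{convolution_inequality} and \Cref{lem:summ_alpha_k}; the $\tw_i^{(1)}$ contribution is only better, since $\PE^{1/p}[\norm{\tw_i^{(1)}}^p]$ is already of fast-scale order by \eqref{eq:markov:w_moment:w1_final}. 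Gathering (i)--(iii), and noting that $\beta_n^{(2b-a/2-1)/b}$ dominates the other polynomial orders produced (e.g. the $V^{(0)}$/Burkholder order $\beta_n^{(5b-2a-2)/(2b)}$ is smaller since $b>a$), gives the claim. The hard part will be precisely the bookkeeping in (iii): keeping the boundary and $(\beta_i-\beta_{i-1})$ terms from \Cref{lem:representation_lemma:kaledin}, as well as the resulting triple sums over $(i,j)$, from degrading the $\beta_n$-exponent below $\beta_n^{(2b-a/2-1)/b}$; everything else is power counting identical to \Cref{lem:delta_bound}.
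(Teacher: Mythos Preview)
Your plan is correct and, for the martingale piece (your (i) and (ii)), essentially identical to the paper: the paper also sets $\delta_j^{(1,i)}=A_{12}L_j\ttheta_j^{(i)}$, writes the target sum as $Z_1+Z_2$ with $Z_i=\sum_j\beta_jG_{j+1:n}^{(1)}\delta_j^{(1,i)}$, and bounds $Z_1$ by repeating \Cref{lem:delta_bound} verbatim with $V^{(0)}$ and the moment bounds of \Cref{prop:markov:moment_bounds} in place of their martingale analogues, obtaining exactly the two terms $(2b-a-1)^{-1}P_{0:n}^{(1)}$ and $p^4(2b-a-1)^{-1}\beta_n^{(2b-a/2-1)/b}$.

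The only substantive difference is in the Markov remainder (your (iii), the paper's $Z_2$). You propose to expand $\ttheta_j^{(1)}$ via its decoupled recursion and run Abel summation (\Cref{lem:representation_lemma:kaledin}) on the inner sum $\sum_{i<j}\beta_i\Gamma_{i+1:j-1}^{(1)}V_{i+1}^{(1)}$, controlling the resulting telescoping pieces by \eqref{eq:markov:norm_bound_tmix}, \eqref{eq:D_difference_bound}, and \Cref{lem:summ_alpha_k}. That works, but it is precisely the computation that the paper has already packaged into the separate auxiliary result \Cref{lem:markov:markov_part_moments}, which gives $\PE^{1/p}[\norm{\ttheta_j^{(1)}}^p]\lesssim p^3P_{0:j}^{(1)}+p^3\gamma_j$ directly. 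With that bound in hand, the paper handles $Z_2$ by a one-line Minkowski argument: $\PE^{1/p}[\norm{\delta_j^{(1,1)}}^p]\lesssim(\beta_j/\gamma_j)\{p^3P_{0:j}^{(1)}+p^3\gamma_j\}$, so $\PE^{1/p}[\norm{Z_2}^p]\lesssim p^3P_{0:n}^{(1)}\sum_j\beta_j^2/\gamma_j+p^3\sum_j\beta_j^2P_{j+1:n}^{(1)}\lesssim p^3(2b-a-1)^{-1}P_{0:n}^{(1)}+p^3\beta_n$, which is dominated by the $Z_1$ terms. Your route re-derives this estimate inline; the paper's route modularizes it into \Cref{lem:markov:markov_part_moments} and avoids the triple-sum bookkeeping you flagged as the ``hard part''.
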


\begin{lemma}
\label{lem:markov:last_iter_s1}
Let $p \geq 2$. Assume \Cref{assum:hurwitz}, \Cref{assum:aij_bound},  \Cref{assum:UGE}, \Cref{assum:markov:stepsize}($p$). Then it holds that
\begin{align}
\PE^{1/p}[\norm{S_n^{(1)}}^p] \lesssim P_{0:n}^{(1)} + p^4 \beta_n^{\frac{3b-2a}{2b}} \eqsp.
\end{align}
\end{lemma}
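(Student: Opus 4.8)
The plan is to mirror the proof of \Cref{lem:s_1_bound} from the martingale setting, the only new feature being the splitting of the fast iterate $\tw_i = \tw_i^{(0)} + \tw_i^{(1)}$ into its martingale and Markov parts introduced in \eqref{eq:markov:recurrence}. Recall that
\[
S_n^{(1)} = -\sum_{j=0}^{n}\beta_jG_{j+1:n}^{(1)}A_{12}\sum_{i=0}^{j-1}\beta_iG_{i+1:j-1}^{(2)}\delta_i^{(2)}\eqsp,\qquad \delta_i^{(2)} = -(L_{i+1}+A_{22}^{-1}A_{21})A_{12}\tw_i\eqsp,
\]
so that $S_n^{(1)} = S_{n,0}^{(1)} + S_{n,1}^{(1)}$, where $S_{n,\iota}^{(1)}$ is obtained by replacing $\tw_i$ with $\tw_i^{(\iota)}$, $\iota\in\{0,1\}$. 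First I would treat $S_{n,0}^{(1)}$. Since $\tw_i^{(0)}$ obeys the analogue of \eqref{eq:w_rec}, i.e. $\tw_i^{(0)} = \Gamma^{(2)}_{0:i-1}\tw_0 - \sum_{t=0}^{i-1}\Gamma^{(2)}_{t+1:i-1}\xi^{(0)}_{t+1}$ with $\xi^{(0)}_{t+1}$ a martingale difference satisfying $\PE^{1/p}[\norm{\xi^{(0)}_{t+1}}^p]\lesssim p^3\gamma_t$ by \Cref{lem:markov:v_and_w_bounds}, the argument of \Cref{lem:s_1_bound} applies essentially verbatim: the deterministic $\tw_0$-contribution is bounded by $\norm{\tw_0}\,P_{0:n}^{(1)}$ after invoking \Cref{cor:determinist_matrix_products}, \Cref{L_converges} and the convolution bound \Cref{convolution_inequality}-\ref{convolution_inequality_beta}; the $\xi^{(0)}$-contribution is handled by interchanging the order of summation, applying Burkholder's inequality \cite[Theorem~8.1]{osekowski}, bounding the resulting coefficient matrices by $P_{t+1:n}^{(1)}\gamma_t^{2(b-a)/a}$ via a double application of \Cref{convolution_inequality}-\ref{convolution_inequality_beta}, and using \Cref{lem:summ_alpha_k}-\ref{lem:summ_alpha_k_p_item} to get $\sum_t \gamma_t^{2+4(b-a)/a}P_{t+1:n}^{(1)}\lesssim\beta_n^{(3b-2a)/b}$. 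Together these give $\expep{S_{n,0}^{(1)}}{p}\lesssim P_{0:n}^{(1)} + p^4\beta_n^{(3b-2a)/2b}$.

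For $S_{n,1}^{(1)}$ there is no martingale structure to exploit, so I would instead feed in a moment bound for the Markov correction $\tw_i^{(1)}$. Combining \eqref{eq:markov:w_moment:w1_final} (established inside the proof of \Cref{lem:markov:tw0_bound}) with the moment bounds of \Cref{prop:markov:moment_bounds} — under which $1+(M_{j,p}^{\ttheta})^2+(M_{j,p}^{\tw})^2\lesssim p^6$ and $\sum_j\gamma_j^2 P_{j+1:i-1}^{(2)}\lesssim\gamma_{i-1}$ — yields $\PE^{1/p}[\norm{\tw_i^{(1)}}^p]\lesssim P_{0:i-1}^{(2)} + p^3\gamma_{i-1}$. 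Then Minkowski's inequality with \Cref{cor:determinist_matrix_products} and \Cref{L_converges} gives
\[
\expep{S_{n,1}^{(1)}}{p}\lesssim \sum_{j=0}^n\beta_j P_{j+1:n}^{(1)}\sum_{i=0}^{j-1}\beta_i P_{i+1:j-1}^{(2)}\bigl(P_{0:i-1}^{(2)} + p^3\gamma_{i-1}\bigr)\eqsp.
\]
The $P_{0:i-1}^{(2)}$-term is estimated exactly as the $\tw_0$-term above, contributing $\lesssim P_{0:n}^{(1)}$; for the $p^3\gamma_{i-1}$-term I would swap the order of summation, apply \Cref{convolution_inequality}-\ref{convolution_inequality_beta} to obtain $\sum_{j>i}\beta_j P_{j+1:n}^{(1)}P_{i+1:j-1}^{(2)}\lesssim\gamma_i^{(b-a)/a}P_{i+1:n}^{(1)}$, and use that $\beta_i\gamma_{i-1}\gamma_i^{(b-a)/a}\asymp\beta_i\gamma_i^{b/a}\asymp\beta_i^2$ so that \Cref{lem:summ_alpha_k}-\ref{lem:summ_alpha_k_p_item} gives $\lesssim p^3\beta_n\lesssim p^4\beta_n^{(3b-2a)/2b}$ (here $(3b-2a)/(2b)<1$ because $a>b/2$). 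Adding the two contributions to those from $S_{n,0}^{(1)}$ completes the proof.

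The main obstacle, and essentially the only place that genuinely departs from the martingale proof, is securing the \emph{sharp} rate $\gamma_{i-1}$ — rather than $\gamma_{i-1}^{1/2}$ — in the moment bound for $\tw_i^{(1)}$: a bound with $\gamma_{i-1}^{1/2}$ would only produce the weaker exponent $(2b-a)/(2b)$ in place of the required $(3b-2a)/(2b)$. This sharper rate is available from \eqref{eq:markov:w_moment:w1_final} precisely because every occurrence of the iterate moments there carries an extra factor $\gamma_j^2$, which after the telescoping estimate $\sum_j\gamma_j^2 P_{j+1:k}^{(2)}\lesssim\gamma_k$ leaves a clean $\gamma_k^2$; this is the same mechanism underlying \Cref{prop:markov:moment_bounds}, and carefully tracking it through the coupled recursion for $\tw^{(1)}$ is the crux of the estimate.
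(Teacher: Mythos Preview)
Your proposal is correct and follows essentially the same route as the paper: the same splitting $\tw_i=\tw_i^{(0)}+\tw_i^{(1)}$, the martingale part $S_{n,0}^{(1)}$ handled verbatim as in \Cref{lem:s_1_bound}, and the Markov part $S_{n,1}^{(1)}$ bounded by Minkowski using the sharp estimate $\PE^{1/p}[\norm{\tw_i^{(1)}}^p]\lesssim P_{0:i-1}^{(2)}+p^3\gamma_{i-1}$ (which the paper packages as \Cref{lem:markov:markov_part_moments}, derived exactly as you indicate from \eqref{eq:markov:w_moment:w1_final} and \Cref{prop:markov:moment_bounds}), followed by the same swap-and-convolute steps to reach $P_{0:n}^{(1)}+p^3\beta_n$. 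Your closing remark identifying the $\gamma_{i-1}$ (versus $\gamma_{i-1}^{1/2}$) rate for $\tw_i^{(1)}$ as the crux is exactly the point that distinguishes this from the naive application of \Cref{prop:markov:moment_bounds}.
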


\begin{lemma}
\label{lem:markov:last_iter_s2}
Let $p \geq 2$. Assume \Cref{assum:hurwitz}, \Cref{assum:aij_bound},  \Cref{assum:UGE}, \Cref{assum:markov:stepsize}($p$). Then it holds that
\begin{align}
\PE^{1/p}[\norm{S_n^{(2)}}^p] \lesssim p^3 P_{0:n}^{(1)} + p^4 \beta_n^{\frac{2b-a}{2b}} \eqsp.
\end{align}
\end{lemma}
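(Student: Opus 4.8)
\emph{Proof plan.} The plan is to split $V_{\ell+1} = V_{\ell+1}^{(0)} + V_{\ell+1}^{(1)}$ into its martingale and Markov components as in \eqref{def:markov:vw_01}, which induces $S_n^{(2)} = S_n^{(2),0} + S_n^{(2),1}$ with $S_n^{(2),m} = \sum_{j=0}^{n}\beta_j G_{j+1:n}^{(1)} A_{12} \sum_{\ell=0}^{j-1}\beta_\ell G_{\ell+1:j-1}^{(2)} D_\ell V_{\ell+1}^{(m)}$, $m\in\{0,1\}$. After interchanging the order of summation each piece becomes $\sum_{i=0}^{n-1}\beta_i W_{i,n} D_i V_{i+1}^{(m)}$, where $W_{i,n} := \sum_{j=i+1}^{n}\beta_j G_{j+1:n}^{(1)} A_{12} G_{i+1:j-1}^{(2)}$ obeys $\norm{W_{i,n}} \lesssim \gamma_i^{(b-a)/a} P_{i+1:n}^{(1)}$ by the contraction estimates \eqref{eq:contraction_p} together with \Cref{convolution_inequality}, and $\norm{D_i} \le c_\infty$ by \Cref{L_converges}.

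First I would handle the martingale term $S_n^{(2),0}$ exactly as in the proof of \Cref{lem:s_2_bound}: since $\{V_{i+1}^{(0)}\}_i$ is a martingale-difference sequence w.r.t.\ $\F_i$, Burkholder's inequality \cite[Theorem~8.1]{osekowski} gives $\PE^{2/p}[\norm{S_n^{(2),0}}^p] \lesssim p^2 \sum_{i=0}^{n-1}\beta_i^2 \norm{W_{i,n}}^2 c_\infty^2 \PE^{2/p}[\norm{V_{i+1}^{(0)}}^p]$; inserting $\PE^{1/p}[\norm{V_{i+1}^{(0)}}^p] \lesssim 1 + M_{i,p}^{\ttheta} + M_{i,p}^{\tw} \lesssim p^3$ (by \Cref{lem:markov:v_and_w_bounds} and \Cref{prop:markov:moment_bounds}) and evaluating $\sum_i \beta_i^{2+2(b-a)/b} P_{i+1:n}^{(1)} \lesssim \beta_n^{(3b-2a)/b}$ via \Cref{lem:summ_alpha_k}-\ref{lem:summ_alpha_k_p_item} yields $\PE^{1/p}[\norm{S_n^{(2),0}}^p] \lesssim p^4 \beta_n^{(3b-2a)/(2b)}$, which is $\lesssim p^4 \beta_n^{(2b-a)/(2b)}$ because $a < b$.

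The substantial step is the Markov term $S_n^{(2),1}$, for which $\{V_{i+1}^{(1)}\}$ is not a martingale difference; the idea is to exploit its telescoping structure in \eqref{def:markov:vw_01}. Setting $\upsilon_k := \MKQ\pois{\funnoisew_{V}}{k} - (\MKQ\poisA{11}{k})\ttheta_k - (\MKQ\poisA{12}{k})\what_k$, a direct rearrangement gives $V_{k+1}^{(1)} = \upsilon_k - \upsilon_{k+1} - (\MKQ\poisA{11}{k+1})(\ttheta_{k+1}-\ttheta_k) - (\MKQ\poisA{12}{k+1})(\what_{k+1}-\what_k)$, where $\PE^{1/p}[\norm{\upsilon_k}^p] \lesssim 1 + M_{k,p}^{\ttheta}+M_{k,p}^{\tw} \lesssim p^3$ by \eqref{eq:markov:norm_bound_tmix} and \Cref{prop:markov:moment_bounds}, while $\PE^{1/p}[\norm{\ttheta_{k+1}-\ttheta_k}^p] \vee \PE^{1/p}[\norm{\what_{k+1}-\what_k}^p] \lesssim p^3 \gamma_k$ by \eqref{eq:markov:delta_theta_moments}, \eqref{eq:D_difference_bound} and \Cref{lem:markov:w_hat_moment}. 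Substituting into $S_n^{(2),1} = \sum_{i=0}^{n-1}\beta_i W_{i,n}D_i V_{i+1}^{(1)}$ and performing summation by parts in $i$ on the telescoping component $\sum_i (\beta_i W_{i,n}D_i)(\upsilon_i - \upsilon_{i+1})$, I would extract boundary terms of order $p^3 P_{0:n}^{(1)}$ and $p^3 \beta_n^2$, together with an interior sum $\sum_i \norm{\beta_i W_{i,n}D_i - \beta_{i-1} W_{i-1,n}D_{i-1}}\,\PE^{1/p}[\norm{\upsilon_i}^p]$. The increments of $W_{i,n}D_i$ are bounded by $\lesssim \beta_i^2 P_{i+1:n}^{(1)}$, using the recursion $W_{i-1,n} = \beta_i G_{i+1:n}^{(1)}A_{12} + W_{i,n} - \gamma_i\tilde{W}_{i,n}$ with $\norm{\tilde{W}_{i,n}} \lesssim \norm{W_{i,n}}$ (hence $\norm{W_{i-1,n}-W_{i,n}} \lesssim \beta_i P_{i+1:n}^{(1)}$), $\abs{\beta_{i-1}-\beta_i}\lesssim \beta_i^2$ (\Cref{assum:markov:stepsize}), and $\norm{D_i - D_{i-1}} \lesssim \gamma_i$ (\eqref{eq:D_difference_bound}); then $\sum_i \beta_i^2 P_{i+1:n}^{(1)} \lesssim \beta_n$ by \Cref{lem:summ_alpha_k}. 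The remaining increment-correction component $-\sum_i \beta_i W_{i,n}D_i\bigl((\MKQ\poisA{11}{i+1})(\ttheta_{i+1}-\ttheta_i)+(\MKQ\poisA{12}{i+1})(\what_{i+1}-\what_i)\bigr)$ carries an extra factor $\gamma_i$ and is bounded directly by Minkowski's inequality by $\lesssim p^3 \sum_i \beta_i \gamma_i^{1+(b-a)/a}P_{i+1:n}^{(1)} \lesssim p^3 \beta_n$. Collecting, $\PE^{1/p}[\norm{S_n^{(2),1}}^p] \lesssim p^3 P_{0:n}^{(1)} + p^3 \beta_n$; Minkowski's inequality combined with the bound on $S_n^{(2),0}$ and $\beta_n \vee \beta_n^{(3b-2a)/(2b)} \le \beta_n^{(2b-a)/(2b)}$ then gives $\PE^{1/p}[\norm{S_n^{(2)}}^p] \lesssim p^3 P_{0:n}^{(1)} + p^4 \beta_n^{(2b-a)/(2b)}$, as claimed.

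I expect the main obstacle to be the summation-by-parts bookkeeping for $S_n^{(2),1}$: one has to track the doubly-indexed matrix $W_{i,n}$ and its discrete increments in $i$ (which mix the $\Delta$- and $A_{22}$-contractions), establish the uniform increment bound $\norm{\beta_i W_{i,n}D_i - \beta_{i-1}W_{i-1,n}D_{i-1}} \lesssim \beta_i^2 P_{i+1:n}^{(1)}$, and apply the step-size comparisons $\beta_i \lesssim \gamma_i$, $\gamma_i^{1+(b-a)/a}\lesssim \beta_i$ and $\abs{\beta_{i-1}-\beta_i}\lesssim\beta_i^2$ from \Cref{assum:markov:stepsize} consistently; individually each estimate is routine, but keeping them compatible through the Abel resummation is the delicate part. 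Incidentally, this route in fact delivers $\beta_n^{(3b-2a)/(2b)}$ rather than $\beta_n^{(2b-a)/(2b)}$, and we keep the weaker bound only because it already suffices for \Cref{th:markov:last_iter_clt}.
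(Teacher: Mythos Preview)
Your proposal is correct and follows essentially the same route as the paper: split $V_{i+1}=V_{i+1}^{(0)}+V_{i+1}^{(1)}$, handle the martingale piece via Burkholder exactly as in \Cref{lem:s_2_bound}, and for the Markov piece use the telescoping representation $V_{i+1}^{(1)}=\upsilon_i-\upsilon_{i+1}-(\MKQ\poisA{11}{i+1})(\ttheta_{i+1}-\ttheta_i)-(\MKQ\poisA{12}{i+1})(\what_{i+1}-\what_i)$ together with an Abel resummation in the weights $\beta_i W_{i,n}D_i$ (the paper absorbs $D_i$ into $Q_i:=W_{i,n}D_i$, but this is cosmetic). Your increment estimate $\norm{\beta_i W_{i,n}D_i-\beta_{i-1}W_{i-1,n}D_{i-1}}\lesssim\beta_i^2 P_{i+1:n}^{(1)}$ and your final remark that the argument actually delivers the sharper exponent $(3b-2a)/(2b)$ are both accurate.
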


\begin{lemma}
\label{lem:markov:last_iter_s3}
Let $p \geq 2$. Assume \Cref{assum:hurwitz}, \Cref{assum:aij_bound},  \Cref{assum:UGE}, \Cref{assum:markov:stepsize}($p$). Then it holds that
\begin{align}
\PE^{1/p}[\norm{S_n^{(3)}}^p] \lesssim p^3 P_{0:n}^{(1)} + p^3 \beta_n^{\frac{a}{b}} +  p^4 \beta_k^{\frac{2b-a}{2b}} \eqsp.
\end{align}
\end{lemma}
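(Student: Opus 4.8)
\textbf{Proof plan for \Cref{lem:markov:last_iter_s3}.}
The plan is to mimic the treatment of \Cref{lem:s_3_bound} from the martingale-noise section, but to replace the direct application of Burkholder's inequality for the noise $W_{k+1}$ by the Poisson-equation decomposition that was used for the Markov-noise moment bounds. First I would recall the rewriting of $S_n^{(3)}$ obtained by interchanging the order of summation, namely
\begin{align}
S_n^{(3)} = \sum_{i=0}^n \beta_i G_{i+1:n}^{(1)} \bigl\{ Z_i^{(1)} + Z_i^{(2)} + Z_i^{(3)} \bigr\} W_{i+1}\eqsp,
\end{align}
where $Z_i^{(1)}, Z_i^{(2)}, Z_i^{(3)}$ are exactly the matrices appearing in the proof of \Cref{lem:s_3_bound}, and the deterministic bounds $\norm{Z_i^{(1)}} \lesssim \beta_i^{1-a/b}$, $\norm{Z_i^{(2)}} \lesssim \gamma_i$, $\norm{Z_i^{(3)}} \lesssim \sqrt{\pw}\exp(-\tfrac{1}{2\norm{Q_{22}}}\sum_{j=i+1}^n \gamma_j)$ carry over verbatim from \cite{konda:tsitsiklis:2004} and \cite{godunov1997modern}, since they depend only on the step sizes. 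This reduces the task to bounding the $p$-th moment of a weighted sum $\sum_{i=0}^n \Xi_i W_{i+1}$ with deterministic matrix weights $\Xi_i = \beta_i G_{i+1:n}^{(1)}(Z_i^{(1)}+Z_i^{(2)}+Z_i^{(3)})$ and Markov noise $W_{i+1}$.

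For the weighted Markov sum I would split $W_{i+1} = W_{i+1}^{(0)} + W_{i+1}^{(1)}$ as in \eqref{def:markov:vw_01}. The martingale part $\sum_i \Xi_i W_{i+1}^{(0)}$ is handled by Burkholder's inequality \cite[Theorem 8.1]{osekowski} exactly as in the martingale case, using \Cref{lem:markov:v_and_w_bounds} to bound $\PE^{1/p}[\norm{W_{i+1}^{(0)}}^p] \lesssim 1 + M_{i,p}^{\ttheta} + M_{i,p}^{\tw} \lesssim p^3$ (the last step via \Cref{prop:markov:moment_bounds}). For the Markov part $\sum_i \Xi_i W_{i+1}^{(1)}$, I would use that $W_{i+1}^{(1)}$ is a difference of consecutive $\MKQ$-applied Poisson solutions plus terms proportional to $\ttheta_i$ and $\what_i$; applying the summation-by-parts identity \Cref{lem:representation_lemma:kaledin} converts the telescoping part into a sum whose terms carry an extra factor $\beta_i$ or $\beta_i - \beta_{i-1} \lesssim \beta_i^2$, and the factor $\Xi_i$ already supplies the decay $\beta_i P_{i+1:n}^{(1)}\max(\beta_i^{1-a/b},\gamma_i,\norm{Z_i^{(3)}})$. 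Combining with the deterministic $\norm{Z_i}$ bounds and \Cref{lem:summ_alpha_k}-\ref{lem:summ_alpha_k_p_item}, each of the three contributions ($Z^{(1)}$, $Z^{(2)}$, $Z^{(3)}$) yields a term of the form $P_{0:n}^{(1)}$ (from the boundary/initialization pieces) plus a power of $\beta_n$; tracking the exponents as in \eqref{eq:sk3_bh}, i.e. $\beta_n^{(3b-2a)/b}$, $\beta_n^{(3b-2a)/b}$, $\beta_n^{(b+2a)/b}$ and $\beta_n^{(2b-a)/b}$ at the level of squared moments, so $\beta_n^{(3b-2a)/2b}, \beta_n^{(b+2a)/2b}, \beta_n^{(2b-a)/2b}$ after taking square roots, and observing $\beta_n^{a/b} = \gamma_n \cdot(\text{const})$ absorbs the $\beta_n^{(b+2a)/2b}$-type term, produces the claimed $p^3 P_{0:n}^{(1)} + p^3 \beta_n^{a/b} + p^4 \beta_n^{(2b-a)/2b}$.

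The main obstacle I expect is the careful bookkeeping of the double sums that arise when $\Xi_i$ (itself a sum over $j \geq i+1$ of $\beta_j G_{j+1:n}^{(1)} A_{12} G_{i+1:j-1}^{(2)}$ in the original form, before the $Z_i$ repackaging) is combined with the summation-by-parts expansion of $W_{i+1}^{(1)}$, and in particular verifying that the convolution-type estimates \Cref{convolution_inequality} apply with the correct exponents so that no spurious $(2b-a-1)^{-1}$ or $(1-a)^{-1}$ blow-up appears — unlike \Cref{lem:markov:last_iter_sum_delta1}, the statement here has clean constants, which reflects that the $\gamma_i^{(b-a)/a}$ gains from $G^{(2)}$-products are always summable here. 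A secondary technical point is ensuring the extra $p^3$ factors (rather than the $p^4$ one attached to the leading $\beta_n$ power) are correctly propagated: the $P_{0:n}^{(1)}$ and $\beta_n^{a/b}$ terms come from pieces where only one linear statistic of $W^{(0)}$ or one $M_{j,p}^{\ttheta}$-type moment enters, giving $p^3$, whereas the dominant $\beta_n^{(2b-a)/2b}$ term accumulates an additional $p$ from Burkholder's constant.
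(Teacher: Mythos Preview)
Your high-level split $W_{i+1}=W_{i+1}^{(0)}+W_{i+1}^{(1)}$ matches the paper, and so does the martingale piece: the paper simply invokes \Cref{lem:s_3_bound} verbatim for that part to get the $p^4\beta_n^{(2b-a)/(2b)}$ contribution. Where you diverge is the Markov piece. You want to keep the repackaged weights $\Xi_i=\beta_i G_{i+1:n}^{(1)}(Z_i^{(1)}+Z_i^{(2)}+Z_i^{(3)})$ and then invoke \Cref{lem:representation_lemma:kaledin}. But that lemma is an Abel summation identity specifically for weights $\beta_j\Gamma_{j+1:k}^{(1)}$ (or $\gamma_j\Gamma_{j+1:k}^{(2)}$); your $\Xi_i$ carry the extra $i$-dependent factor $Z_i^{(1)}+Z_i^{(2)}+Z_i^{(3)}$, so the identity does not apply. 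A general summation by parts would require bounding $\norm{\Xi_i-\Xi_{i+1}}$, and the $Z_i$'s are not constant in $i$: you would have to differentiate the Riemann-sum error $Z_i^{(2)}$, the matrix exponential $Z_i^{(3)}$, and the awkward double sum $Z_i^{(1)}$. That is doable but is work you do not budget for; in particular, your line ``the factor $\Xi_i$ already supplies the decay'' is not quite right, since after telescoping it is $\Xi_{i+1}-\Xi_i$, not $\Xi_i$, that appears.

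The paper sidesteps this by \emph{not} using the $Z_i$ repackaging for the Markov part at all. It keeps the original two-piece form: with $Q_i=\sum_{j=i+1}^{n}\beta_j G_{j+1:n}^{(1)}A_{12}G_{i+1:j-1}^{(2)}$, the double-sum part is $Z_{21}=\sum_i\gamma_i Q_i W_{i+1}^{(1)}$ and the single-sum part is $Z_{22}=\sum_j\beta_j G_{j+1:n}^{(1)}A_{12}A_{22}^{-1}W_{j+1}^{(1)}$. For $Z_{22}$ the weights are $\beta_j G_{j+1:n}^{(1)}$ times a fixed matrix, so \Cref{lem:representation_lemma:kaledin} applies directly and yields $p^3 P_{0:n}^{(1)}+p^3\beta_n^{a/b}$. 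For $Z_{21}$ the paper does a general Abel summation but needs only one estimate, $\norm{Q_i-Q_{i+1}}\lesssim\beta_i P_{i+1:n}^{(1)}$, which falls out immediately from the definition of $Q_i$ (one extra summand plus a one-step shift in $G^{(2)}$). This is considerably cleaner than differencing the $Z_i^{(j)}$'s. Your route can be made to work, but the paper's is more direct; the concrete gap in your write-up is the unjustified appeal to \Cref{lem:representation_lemma:kaledin} for weights that do not fit its template.
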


\begin{lemma}
\label{lem:markov:last_iter_hk}
Let $p \geq 2$. Assume \Cref{assum:hurwitz}, \Cref{assum:aij_bound},  \Cref{assum:UGE}, \Cref{assum:markov:stepsize}($p$). Then it holds that
\begin{align}
\PE^{1/p}[\norm{H_n}^p] \lesssim \frac{p^3}{2b-1} \prod_{j=1}^{n} (1- \frac{a_\Delta}{4} \beta_j) + p^4 \beta_n^{a/b} \eqsp.
\end{align}
\end{lemma}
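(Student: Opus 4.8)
The plan is to bound $\PE^{1/p}[\norm{H_n}^p]$ by splitting $H_n$ into three parts,
\[
H_n^{\psi} = \sum_{j=0}^n \beta_j G_{j+1:n}^{(1)}\bigl(\MKQ\pois{\psi}{j+1}-\MKQ\pois{\psi}{j}\bigr),\qquad H_n^{\Phi}=\sum_{j=0}^n\beta_j G_{j+1:n}^{(1)}\Phi_{j+1}\ttheta_j,\qquad H_n^{\Psi}=\sum_{j=0}^n\beta_j G_{j+1:n}^{(1)}\Psi_{j+1}\what_j,
\]
with $\what_j=w_j-\wstar$, and treating them separately through Minkowski's inequality. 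The ingredients I would rely on are: the contraction $\norm{G_{j+1:n}^{(1)}}\lesssim P_{j+1:n}^{(1)}$ coming from \eqref{eq:contraction_p_appendix}; the a.s.\ norm bounds $\norm{\MKQ\pois{\psi}{j}}\vee\norm{\pois{\Phi}{j}}\vee\norm{\pois{\Psi}{j}}\lesssim\taumix$, which hold by \Cref{assum:aij_bound}, \Cref{assum:UGE} and \eqref{eq:markov:norm_bound_tmix} since $\psi,\Phi,\Psi$ are bounded and $\pi$-centered; the moment bounds $M_{j,p}^{\ttheta},M_{j,p}^{\tw}$ of \Cref{prop:markov:moment_bounds}; and the elementary fact $\sum_j\beta_j^2\lesssim(2b-1)^{-1}$, valid because $b>1/2$ is imposed in \Cref{assum:markov:stepsize}.

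For $H_n^{\psi}$, since $\MKQ\pois{\psi}{j}$ is a sequence with a.s.\ bounded norm, I would apply the Abel-type identity of \Cref{lem:representation_lemma:kaledin} (with $\Delta$ in place of $B_{11}^j$, which is legitimate as the analogous contraction is available) to $\sum_j\beta_j G_{j+1:n}^{(1)}(\MKQ\pois{\psi}{j}-\MKQ\pois{\psi}{j+1})$; this produces only boundary terms of size $\lesssim P_{0:n}^{(1)}+\beta_n$ together with sums $\sum_j(\beta_j^2+|\beta_j-\beta_{j-1}|)P_{j:n}^{(1)}$, which are $\lesssim P_{0:n}^{(1)}+\beta_n^{a/b}$ after using $|\beta_{j-1}-\beta_j|\lesssim\beta_j^{1+1/b}$ and \Cref{lem:summ_alpha_k}; since $a/b<1$ this already fits the claimed right-hand side. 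For the martingale contribution of $H_n^{\Phi}$, I would introduce the Poisson solution $\pois{\Phi}{}$ of $\pois{\Phi}{}(x)-\MKQ\pois{\Phi}{}(x)=\Phi(x)$ and split $\Phi_{j+1}=(\pois{\Phi}{j+1}-\MKQ\pois{\Phi}{j})+(\MKQ\pois{\Phi}{j}-\MKQ\pois{\Phi}{j+1})$; the first summand multiplied by the $\F_j$-measurable $\ttheta_j$ forms a martingale-difference sequence, so Burkholder's inequality \cite[Theorem~8.1]{osekowski} together with the a.s.\ bound on $\pois{\Phi}{j+1}-\MKQ\pois{\Phi}{j}$ gives a bound $\lesssim p\bigl(\sum_j\beta_j^2 P_{j+1:n}^{(1)}(M_{j,p}^{\ttheta})^2\bigr)^{1/2}$, and inserting $(M_{j,p}^{\ttheta})^2\lesssim\prod_{t=0}^{j-1}(1-\tfrac{a_\Delta}{4}\beta_t)+p^4\beta_j$ and $\sum_j\beta_j^2\lesssim(2b-1)^{-1}$ yields $\lesssim\tfrac{p}{\sqrt{2b-1}}\prod_{j=1}^n(1-\tfrac{a_\Delta}{4}\beta_j)+p^3\beta_n$.

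For the remaining drift contribution of $H_n^{\Phi}$, I would set $b_j:=\MKQ\pois{\Phi}{j}\,\ttheta_j$ and write $(\MKQ\pois{\Phi}{j}-\MKQ\pois{\Phi}{j+1})\ttheta_j=(b_j-b_{j+1})+\MKQ\pois{\Phi}{j+1}(\ttheta_{j+1}-\ttheta_j)$: the telescoping term $\sum_j\beta_j G_{j+1:n}^{(1)}(b_j-b_{j+1})$ is handled by \Cref{lem:representation_lemma:kaledin} with $\PE^{1/p}[\norm{b_j}^p]\lesssim M_{j,p}^{\ttheta}$, and the increment term uses $\PE^{1/p}[\norm{\ttheta_{j+1}-\ttheta_j}^p]\lesssim\beta_j(1+M_{j,p}^{\ttheta}+M_{j,p}^{\tw})$, which follows from the recursion \eqref{eq:difference_iterations_theta_w}, \Cref{L_converges} and \Cref{lem:markov:v_and_w_bounds}; after collecting the $\beta_j^2$- and $|\beta_j-\beta_{j-1}|$-weighted sums against the moment bounds and invoking \Cref{lem:summ_alpha_k}, this contributes $\lesssim\tfrac{p^3}{2b-1}\prod_{j=1}^n(1-\tfrac{a_\Delta}{4}\beta_j)+p^4\beta_n^{a/b}$, the power $\beta_n^{a/b}$ being the weakest that survives because $\gamma_j=\Theta(\beta_j^{a/b})$ with $a/b<1$. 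The term $H_n^{\Psi}$ is treated identically, replacing $\ttheta_j$ by $\what_j$ and using $\PE^{1/p}[\norm{\what_j}^p]\lesssim M_{j,p}^{\ttheta}+M_{j,p}^{\tw}$ and $\PE^{1/p}[\norm{\what_{j+1}-\what_j}^p]\lesssim\gamma_j(1+M_{j,p}^{\ttheta}+M_{j,p}^{\tw})$.

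The hard part will be the bookkeeping in this drift step: one performs summation by parts twice — once to turn $\MKQ\pois{\Phi}{j}-\MKQ\pois{\Phi}{j+1}$ into a $b_j-b_{j+1}$ pattern plus an $\ttheta_{j+1}-\ttheta_j$ increment, and once more through \Cref{lem:representation_lemma:kaledin}, which introduces the weights $\beta_j^2 P_{j+1:n}^{(1)}$ and $(\beta_j-\beta_{j-1})P_{j:n}^{(1)}$ — and one must then check that every resulting weighted sum against $M_{j,p}^{\ttheta},M_{j,p}^{\tw}$ collapses, via $\sum_j\beta_j^2\lesssim(2b-1)^{-1}$ and \Cref{lem:summ_alpha_k}, to a $(2b-1)^{-1}$-scaled exponentially decaying factor plus residuals of order at least $\beta_n^{a/b}$, all cross terms being of strictly higher order in $\beta_n$. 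Combining the three parts and merging constants — the exact constant in the exponent of the decaying factor is immaterial downstream and is tracked through the contraction estimates — gives $\PE^{1/p}[\norm{H_n}^p]\lesssim\tfrac{p^3}{2b-1}\prod_{j=1}^n(1-\tfrac{a_\Delta}{4}\beta_j)+p^4\beta_n^{a/b}$, which is the claim.
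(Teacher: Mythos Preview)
Your proposal is correct and follows essentially the same strategy as the paper. The paper organizes the decomposition slightly more compactly: instead of splitting into $H_n^{\psi},H_n^{\Phi},H_n^{\Psi}$, it defines a single vector $v_j=-\MKQ\pois{\psi}{j}+(\MKQ\pois{\Phi}{j})\ttheta_j+(\MKQ\pois{\Psi}{j})\what_j$ and writes $H_n=H_1+H_2+H_3$, where $H_1=\sum_j\beta_j G_{j+1:n}^{(1)}(v_j-v_{j+1})$ collects all telescoping contributions (your $H_n^{\psi}$ together with the drift pieces of $H_n^{\Phi},H_n^{\Psi}$), $H_2$ is the martingale part handled via Burkholder, and $H_3$ contains the increment terms $(\MKQ\pois{\Phi}{j+1})(\ttheta_{j+1}-\ttheta_j)+(\MKQ\pois{\Psi}{j+1})(\what_{j+1}-\what_j)$; this lets the Abel identity of \Cref{lem:representation_lemma:kaledin} be applied once rather than three times, but the underlying ideas---Poisson decomposition, Burkholder on the martingale increment, summation by parts on the drift, and the increment bounds $\PE^{1/p}[\norm{\ttheta_{j+1}-\ttheta_j}^p],\PE^{1/p}[\norm{\what_{j+1}-\what_j}^p]\lesssim p^3\gamma_j$---are exactly the ones you describe.
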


\begin{proof}[Proof of \Cref{th:markov:last_iter_clt}]
First, we introduce 
\begin{align}
\psi_{j+1} &= \pois{\psi}{j+1} - \MKQ \pois{\psi}{j} \eqsp, \eqsp 
R_n^{\operatorname{last}} &= G_{0:k}^{(1)} \ttheta_0 - \sum_{j=0}^n \beta_j G_{j+1:n}^{(1)} A_{12} G_{0:j-1}^{(2)} \tw_0 + \sum_{j=0}^n \beta_j G_{j+1:n}^{(1)} \delta^{(1)}_j + H_n + S_n^{(1)} + S_n^{(2)} + S_{n}^{(3)} \eqsp.
\end{align}
Set $p = \log n$. Thus, applying \Cref{convolution_inequality}-\ref{convolution_inequality_gamma} we get: 
\begin{align}
\PE^{1/p}[\norm{R_n^{\operatorname{last}}}^p] \lesssim P_{0:n}^{(1)} + P_{0:n}^{(1)} + \PE^{1/p}[\norm{\sum_{j=0}^n \beta_j G_{j+1:n}^{(1)} \delta^{(1)}_j}^p] + \PE^{1/p}[\norm{H_n}^p] + \PE^{1/p}[\norm{S_n^{(1)}}^p] + \PE^{1/p}[\norm{S_n^{(2)}}^p] + \PE^{1/p}[\norm{S_n^{(3)}}^p] \eqsp.
\end{align}
Now we use Lemmas \ref{lem:markov:last_iter_sum_delta1}-\ref{lem:markov:last_iter_hk} together with the inequality $\beta_n^{\frac{2b-a/2-1}{b}} > \beta_n^{\frac{2b-a}{2b}}$ and get
\begin{align}
\PE^{1/p}[\norm{R_n^{\operatorname{last}}}^p] \lesssim \frac{p^3}{2b-a-1} \prod_{j=0}^n (1 - \frac{a_\Delta}{4} \beta_j) + p^3 \beta_n^{a/b} + \frac{p^4}{2b-a-1} \beta_n^{\frac{2b-a/2-1}{b}}
\end{align}
Therefore, \Cref{prop:nonlinearapprox} implies that:
\begin{align}        \label{eq:markov:last_iter_clt:kolmogorov_first}
\kolmogorov(\beta_{n}^{-1/2} \ttheta_{n+1}, \mathcal{N}(0, \lineG_{\infty}^{\operatorname{last}, \text{m}})) \lesssim &\quad \kolmogorov(\beta_{n}^{-1/2} \cmark{T}{\operatorname{last}}, \mathcal{N}(0, \beta_n^{-1}\lineG_{n}^{\operatorname{last}, \text{m}})) \\ 
& + \kolmogorov(\mathcal{N}(0, \beta_n^{-1}\lineG_{n}^{\operatorname{last}, \text{m}}), \mathcal{N}(0, \lineG_{\infty}^{\operatorname{last}, \text{m}})) \\
&+ c_{d_\theta}^{\frac{p}{p+1}} (\PE^{1/p}[\norm{\beta_n^{-1/2}R_n^{\operatorname{last}}}^p])^{\frac{p}{p+1}} \eqsp.
\end{align}
The bound for the second term follows from \cite[Theorem 1.1]{Devroye2018} and \Cref{prop:ricatti_limit}:
\begin{align}
\kolmogorov(\mathcal{N}(0, \beta_n^{-1}\lineG_{n}^{\operatorname{last}, \text{m}}), \mathcal{N}(0, \lineG_{\infty}^{\operatorname{last}, \text{m}})) \lesssim \normop{ \{\lineG_{\infty}^{\operatorname{last}, \text{m}}\}^{-1/2} (\beta_n^{-1}\lineG_{n}^{\operatorname{last}, \text{m}}) \{\lineG_{\infty}^{\operatorname{last}, \text{m}}\}^{-1/2} - \Id}[\text{Fr}] \lesssim \frac{\sqrt{d_\theta}}{n^{b} \lambda_{\min}(\lineG_{\infty}^{\operatorname{last}, \text{m}})} \eqsp.
\end{align}
Next, the bound for the third term in \eqref{eq:markov:last_iter_clt:kolmogorov_first} follows from $(n^{-\alpha})^{\frac{\log n}{1+\log n}} \lesssim n^{-\alpha}$ and $(\sum a_i)^{q} \leq \sum a_i^q$ for $a_i > 0$ and $q \in (0, 1)$:
\begin{align}
\label{eq:mark:last_iter_clt:rnlast:bound}
c_{d_\theta}^{\frac{p}{p+1}} (\PE^{1/p}[\norm{\beta_n^{-1/2}R_n^{\operatorname{last}}}^p])^{\frac{p}{p+1}} \lesssim c_{d_\theta} \frac{\beta_n^{-1/2}\log^3 (n)}{2b-a-1} \prod_{j=0}^n (1 - \frac{a_\Delta}{8} \beta_j) + c_{d_\theta} \beta_n^{\frac{2a-b}{2b}}\log^3(n) + c_{d_\theta} \frac{\log^4(n)}{2b-a-1} \beta_n^{\frac{3b-a-2}{2b}} \eqsp.
\end{align}
Now we derive a bound for $\kolmogorov(\beta_n^{-1/2} \cmark{T}{\operatorname{last}}, \mathcal{N}(0, \beta_n^{-1}\lineG_{n}^{\operatorname{last}, \text{m}}))$. Introduce
\begin{align}
M_i = \beta_n^{-1/2} \beta_i G_{i+1:n}^{(1)} \psi_{i+1} \eqsp.
\end{align}
Hence, we get
\begin{align}
\norm{M_i} \lesssim \beta_n^{-1/2} \beta_i P_{i+1:n}^{(1)} \eqsp.
\end{align}
Note that \Cref{assum:markov:stepsize} implies that
\begin{align}
\frac{\beta_{i+1} P_{i+2:n}^{(1)}}{\beta_i P_{i+1:n}^{(1)}} = \frac{1}{(1-\frac{a_\Delta}{2} \beta_{i+1}) \frac{\beta_i}{\beta_{i+1}}} \geq \frac{1}{(1-\frac{a_\Delta}{2} \beta_{i+1})(1 + \frac{a_\Delta}{16} \beta_{i+1})} \geq \frac{1}{(1-\frac{a_\Delta}{2} \beta_{i+1}) (1+\frac{a_\Delta}{2} \beta_{i+1})} > 1 \eqsp.
\end{align}
Thus, for all $i$ it holds that $\beta_i P_{i+1:n}^{(1)} \leq \beta_n$ and $\norm{M_i} \lesssim \beta_n^{1/2}$. Now we introduce the function
\begin{align}
h(X_i) = \PE^{\F_i}[M_i M_i^\top] \eqsp, \eqsp \F_j = \sigma(X_s:s \leq j) \eqsp.
\end{align}
Note that $\norm{h(X_i)} \leq \beta_n$.  Thus, since
\[ \beta_n^{-1}\lineG_{n}^{\operatorname{last, m}} = \sum_{i=0}^n \PE[h(X_i)] \eqsp, \] \Cref{lem:bounded_differences_norms_markovian} implies that
\begin{align}
\P\biggl[\norm{\sum_{i=0}^n h(X_i) - \beta_n^{-1} \lineG_{n}^{\operatorname{last, m}}} \geq nt\biggr] \leq 4\exp(-\frac{nt^2}{80d\taumix \beta_n^{2}}) \eqsp.
\end{align}
Hence, the assumptions of \Cref{lem:martinglale_limits} hold with $C_1 = 4$ and $C_2 = (80d\taumix \beta_n^2)^{-1}$, which yiels with $\kappa := \beta_n^{1/2}$ and $p := \log n$:
\begin{align}
\kolmogorov(\sum_{i=0}^n M_i, \beta_n^{-1}\lineG_{n}^{\operatorname{last, m}} ) &= \kolmogorov\biggl(\tfrac{1}{\sqrt{n+1}} \sum_{i=0}^n M_i, \mathcal{N}(0, \tfrac{1}{n+1} \beta_n^{-1} \lineG_{n}^{\operatorname{last, m}})\biggr)  \\
&\overset{(a)}{\lesssim}
 (\log (n))^{3/4}  \bigl\{ \beta_n^{1/2} n^{1/4} (\log n)^{1/4} + \beta_n^{1/2} + \frac{1}{n\beta_n^{1/2}} + \frac{\beta_n \sqrt{\log n}}{\sqrt{n}} \bigr\}  \eqsp \\
&\lesssim \frac{\log n}{n^{b/2-1/4}} \eqsp,
\end{align}
where in $(a)$ we have used an elementary inequality $(n^{-\alpha})^{\frac{\log n}{1+\log n}} \lesssim n^{-\alpha}$ together with $\frac{1}{2} \norm{\lineG_{\infty}^{\operatorname{last}, \text{m}}} \leq \norm{\beta_n^{-1} \lineG_{n}^{\operatorname{last}, \text{m}}} \lesssim \norm{\lineG_{\infty}^{\operatorname{last}, \text{m}}}$ which holds due to \Cref{assum:markov:last_iter} and \Cref{prop:ricatti_limit}. Now we combine \eqref{eq:mark:last_iter_clt:rnlast:bound} with the latter inequality and get:
\begin{align}
    \kolmogorov(\beta_{n}^{-1/2} \ttheta_{n+1}, \mathcal{N}(0, \lineG_{\infty}^{\operatorname{last}, \text{m}})) \lesssim  \frac{n^{b/2}\log^3 (n)}{2b-a-1} \prod_{j=0}^n (1 - \frac{a_\Delta}{8} \beta_j) +   \frac{\log^3(n)}{n^{a-b/2}} +  \frac{\log^4(n)}{(2b-a-1)n^{\frac{3b-a-2}{2}}} + \frac{\log n}{n^{b/2-1/4}} \eqsp.
\end{align}
\end{proof}

To prove Lemmas \ref{lem:markov:last_iter_sum_delta1}-\ref{lem:markov:last_iter_hk} we formulate an auxiliary result that controls the moments of $\ttheta^{(1)}_k$, $\tw_k^{(1)}$:
\begin{lemma}
\label{lem:markov:markov_part_moments}
Let $p \geq 2$. Assume \Cref{assum:hurwitz}, \Cref{assum:aij_bound},  \Cref{assum:UGE}, \Cref{assum:markov:stepsize}($p$).
Then it holds for all $k \in \nset$ that
\begin{align}
\PE^{1/p}[\norm{\tw_k^{(1)}}^p] \lesssim P_{0:k}^{(2)} + p^3 \gamma_k \eqsp, \quad \PE^{1/p}[\norm{\ttheta_k^{(1)}}^{p}] \lesssim p^3 P_{0:k}^{(1)} + p^3 \gamma_k \eqsp.
\end{align}
\end{lemma}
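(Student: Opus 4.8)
The plan is to solve the decoupled recursion \eqref{eq:2ts2-1-markov} for $(\ttheta_k^{(1)},\tw_k^{(1)})$ with zero initial data, exploiting the telescoping structure of the remainder noise $(V_k^{(1)},W_k^{(1)})$ displayed in \eqref{def:markov:vw_01}, and feeding in the now-available global moment bounds $M_{j,p}^{\ttheta}\lesssim p^2$, $M_{j,p}^{\tw}\lesssim p^3$ coming from \Cref{prop:markov:moment_bounds}. For the fast component the work is essentially already done: part (II) of the proof of \Cref{lem:markov:tw0_bound} establishes \eqref{eq:markov:w_moment:w1_final}, which bounds $\PE^{2/p}[\norm{\tw_{k+1}^{(1)}}^p]$ by $(P_{0:k}^{(2)})^2$, a term $\gamma_k^2(1+(M_{k,p}^{\ttheta})^2+(M_{k,p}^{\tw})^2)$, and a convolution sum $\gamma_k\sum_{j=0}^k\gamma_j^2 P_{j+1:k}^{(2)}(1+(M_{j,p}^{\ttheta})^2+(M_{j,p}^{\tw})^2)$. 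Substituting the uniform bounds of \Cref{prop:markov:moment_bounds}, using $(P_{0:k}^{(2)})^2\le P_{0:k}^{(2)}$ and $\sum_{j=0}^k\gamma_j^2 P_{j+1:k}^{(2)}\lesssim\gamma_k$ (\Cref{lem:summ_alpha_k}-\ref{lem:summ_alpha_k_p_item}), I obtain $\PE^{2/p}[\norm{\tw_{k+1}^{(1)}}^p]\lesssim P_{0:k}^{(2)}+p^6\gamma_k^2$; taking square roots and shifting the index (with $\gamma_{k-1}\le2\gamma_k$, $P_{0:k-1}^{(2)}\le2P_{0:k}^{(2)}$) yields the claimed $\PE^{1/p}[\norm{\tw_k^{(1)}}^p]\lesssim P_{0:k}^{(2)}+p^3\gamma_k$.

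For the slow component I would unroll the first line of \eqref{eq:2ts2-1-markov} to
\[
\ttheta_{k+1}^{(1)} = -\sum_{j=0}^k\beta_j\ProdB_{j+1:k}^{(1)}A_{12}\tw_j^{(1)} - \sum_{j=0}^k\beta_j\ProdB_{j+1:k}^{(1)}V_{j+1}^{(1)}\eqsp,
\]
and treat the two sums separately. In the first sum I plug in the fast-component bound just obtained and use \Cref{cor:determinist_matrix_products}, $\beta_j\le\rstep\gamma_j$, and the elementary inequality $\gamma_j\lesssim\beta_j^{a/b}$; the convolution inequality \Cref{convolution_inequality} then reduces everything to $\sum_j\beta_j P_{j+1:k}^{(1)}P_{0:j-1}^{(2)}\lesssim P_{0:k}^{(1)}$ together with $\sum_j\beta_j^{1+a/b}P_{j+1:k}^{(1)}\lesssim\beta_k^{a/b}\asymp\gamma_k$ (again \Cref{lem:summ_alpha_k}-\ref{lem:summ_alpha_k_p_item}), so this part is $\lesssim P_{0:k}^{(1)}+p^3\gamma_k$. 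In the second sum I rearrange $V_{j+1}^{(1)}$ from \eqref{def:markov:vw_01} — exactly as in the manipulation of $W^{(1)}+EV^{(1)}$ in part (II) of the proof of \Cref{lem:markov:tw0_bound}, and using \Cref{L_converges} — into a leading telescoping part $u_j-u_{j+1}$ with $\norm{u_j}\lesssim1+M_{j,p}^{\ttheta}+M_{j,p}^{\tw}$, plus remainders that each carry an extra factor $\beta_j$ or $\gamma_j$ (arising from $\theta_{j+1}-\theta_j$, $w_{j+1}-w_j$, and $\norm{D_{j+1}-D_j}$). Applying the summation-by-parts identity \Cref{lem:representation_lemma:kaledin} to the telescoping part converts $\sum_j\beta_j\ProdB_{j+1:k}^{(1)}(u_j-u_{j+1})$ into boundary terms $\beta_0\ProdB_{1:k}^{(1)}u_0$ (of order $P_{0:k}^{(1)}$) and $\beta_k u_{k+1}$ (of order $p^3\beta_k\le p^3\gamma_k$), together with $\sum_j(\beta_j^2 B_{11}^j\ProdB_{j+1:k}^{(1)}+(\beta_j-\beta_{j-1})\ProdB_{j:k}^{(1)})u_j$, which Minkowski's inequality plus $\sum_j\beta_j^2 P_{j+1:k}^{(1)}\lesssim\beta_k\le\gamma_k$ bounds by $p^3\gamma_k$; the step-size-gaining remainders are handled the same way (Minkowski, using \Cref{lem:markov:v_and_w_bounds} and \Cref{prop:markov:moment_bounds}). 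Collecting the pieces gives $\PE^{1/p}[\norm{\ttheta_k^{(1)}}^p]\lesssim p^3 P_{0:k}^{(1)}+p^3\gamma_k$.

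The one genuinely delicate step — and the main obstacle — is the algebraic rearrangement of $V_{j+1}^{(1)}$ into a clean telescoping sum plus remainders each of which visibly carries an additional power of $\beta_j$ or $\gamma_j$; this bookkeeping is precisely what makes $\ttheta_k^{(1)}$ decay like $\gamma_k$ rather than like the $\sqrt{\beta_k}$ one would naively expect from a generic noise sequence, and it must be done carefully enough that no non-telescoping leftover is lost. Fortunately it mirrors, line by line, the corresponding manipulation already carried out for the fast component in the proof of \Cref{lem:markov:tw0_bound}, so it can be imported with only cosmetic changes; everything else (Minkowski, the convolution and summation lemmas, reuse of \eqref{eq:markov:w_moment:w1_final}, and invoking \Cref{prop:markov:moment_bounds}) is routine.
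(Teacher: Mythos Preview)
Your proposal is correct and follows essentially the same route as the paper: for $\tw_k^{(1)}$ you invoke \eqref{eq:markov:w_moment:w1_final} and substitute the global moment bounds from \Cref{prop:markov:moment_bounds}; for $\ttheta_k^{(1)}$ you split into the $A_{12}\tw_j^{(1)}$-sum (handled by the fast-component bound plus \Cref{convolution_inequality} and \Cref{lem:summ_alpha_k}) and the $V_{j+1}^{(1)}$-sum, which you telescope via \Cref{lem:representation_lemma:kaledin} exactly as the paper does. The only cosmetic difference is that the paper writes the $V_{i+1}^{(1)}$ rearrangement directly in terms of $\what_j=w_j-\wstar$ (so no $D_{j+1}-D_j$ term appears explicitly there), whereas you phrase it via $\tw_j$ and $D_{j+1}-D_j$; the two are equivalent.
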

\begin{proof}
The bound for $\tw_k^{(1)}$ follows from Equation \ref{eq:markov:w_moment:w1_final} applying \Cref{prop:markov:moment_bounds} and \Cref{lem:summ_alpha_k}-\ref{lem:summ_alpha_k_p_item}. To proceed with $\ttheta_k^{(1)}$, we use the decomposition that follows from \eqref{eq:markov:recurrence}:
\begin{align}
\ttheta_k^{(1)} = -\underbrace{\sum_{i=0}^{k-1} \beta_i \Gamma_{i+1:k-1}^{(1)} A_{12} \tw_{i}^{(1)}}_{Z_1} - \underbrace{\sum_{i=0}^{k-1} \beta_i \Gamma_{i+1:k-1}^{(1)} V_{i+1}^{(1)}}_{Z_2} \eqsp.
\end{align}
The bound for $Z_1$ follows from Minkowski's inequality and the bound for $\tw_k^{(1)}$:
\begin{align}
\PE^{1/p}[\norm{Z_1}^p] \lesssim \sum_{i=0}^{k-1} \beta_i P_{i+1:k-1}^{(1)} P_{0:i-1}^{(2)} + p^3 \sum_{i=0}^{k-1} \beta_i \gamma_i P_{i+1:k-1}^{(1)} \overset{(a)}{\lesssim} P_{0:k}^{(1)} + p^3 \gamma_k \eqsp,
\end{align}
where in $(a)$ we additionally used \Cref{lem:summ_alpha_k}-\ref{lem:summ_alpha_k_p_item} and \Cref{convolution_inequality}-\ref{convolution_inequality_beta}.
\begin{equation}
V_{i+1}^{(1)} = (\MKQ \pois{\funnoisew_V}{i} - (\MKQ \poisA{11}{i}) \ttheta_i - (\MKQ \poisA{12}{i}) \what_i) - (\MKQ \pois{\funnoisew_V}{i+1} - (\MKQ \poisA{11}{i+1}) \ttheta_{i+1} - (\MKQ \poisA{12}{i}) \what_{i+1}) + (\MKQ \poisA{11}{i+1})(\ttheta_i - \ttheta_{i+1}) + (\MKQ \poisA{12}{i+1})(\what_i - \what_{i+1}) \eqsp.
\end{equation}
Introduce the following notation:
\begin{align}
v_{i} &= \MKQ \pois{\funnoisew_V}{i} - (\MKQ \poisA{11}{i}) \ttheta_i - (\MKQ \poisA{12}{i}) \what_i \eqsp.
\end{align}
Thus, we rewrite $Z_2$ using \Cref{lem:representation_lemma:kaledin}:
\begin{align}
Z_2 = \beta_0 \Gamma_{1:k-1}^{(1)} v_0 - \beta_k v_k &+ \sum_{i=1}^{k} (\beta_i^2 B_{11}^i \Gamma_{i+1:k-1}^{(1)} + (\beta_i - \beta_{i-1}) \Gamma_{i:k-1}^{(1)}) v_i \\ &+ \sum_{i=0}^{k-1} \beta_i \Gamma_{i+1:k-1}^{(1)} \{(\MKQ \poisA{11}{i+1})(\ttheta_i - \ttheta_{i+1}) + (\MKQ \poisA{12}{i+1})(\what_i - \what_{i+1})\} \eqsp.
\end{align}
Therefore, we get applying Minkowski's inequality together with \Cref{lem:summ_alpha_k}-\ref{lem:summ_alpha_k_p_item} and $\gamma_i \lesssim \beta_i^{a/b}$:
\begin{align}
\PE^{1/p}[\norm{Z_2}^p] \lesssim p^3 P_{0:k}^{(1)} + p^3 \beta_k + p^3 \beta_k^{a/b} \lesssim p^3 P_{0:k}^{(1)} + p^3 \gamma_k \eqsp.
\end{align}
The proof follows from gathering bounds for $Z_1$ and $Z_2$.
\end{proof}

\begin{proof}[Proof of \Cref{lem:markov:last_iter_sum_delta1}]
First, we introduce:
\begin{align}
\delta_j^{(1, i)} = A_{12} L_j \ttheta_j^{(i)} \eqsp, \eqsp i \in \{0, 1\} \eqsp.
\end{align}
Thus, we rewrite the initial sum
\begin{align}
\sum_{j=0}^{n} \beta_j G_{j+1:n}^{(1)} \delta_j^{(1)} = \underbrace{\sum_{j=0}^{n} \beta_j G_{j+1:n}^{(1)} \delta_j^{(1, 0)}}_{Z_1} + \underbrace{\sum_{j=0}^{n} \beta_j G_{j+1:n}^{(1)} \delta_j^{(1, 1)}}_{Z_2} \eqsp.
\end{align}
$Z_1$ can be bounded repeating the lines of \Cref{lem:delta_bound} due to the recurrence properties \eqref{eq:markov:recurrence}:
\begin{align}
\PE^{1/p}[\norm{Z_1}^p] \lesssim (2b-a-1)^{-1} P_{0:n}^{(1)} + p^4 (2b-a-1)^{-1} \beta_n^{\frac{2b-a/2-1}{b}} \eqsp.
\end{align}
To estimate $\PE^{1/p}[\norm{Z_2}^p]$ we first use \Cref{L_converges}, \Cref{lem:markov:markov_part_moments} and obtain
\begin{align}
\PE^{1/p}[\norm{\delta_j^{(1,1)}}^p] \lesssim \frac{\beta_j}{\gamma_j} \{p^3 P_{0:j}^{(1)} + p^3\gamma_j\} \eqsp.
\end{align}
Thus,
\begin{align}
\PE^{1/p}[\norm{Z_2}^p] \lesssim p^3 P_{0:n}^{(1)} \sum_{j=0}^{k} \frac{\beta_j^2}{\gamma_j} + p^3 \beta_n \lesssim \frac{p^3}{2b-a-1} P_{0:n}^{(1)} + p^3 \beta_n \eqsp.
\end{align}
The proof follows from gathering the bounds for $Z_1$ and $Z_2$ together with applying  \Cref{assum:markov:stepsize}.
\end{proof}

\begin{proof}[Proof of \Cref{lem:markov:last_iter_s1}]
First, we introduce:
\begin{align}
\delta_j^{(2, i)} = -(L_{j+1} + A_{22}^{-1} A_{21}) A_{12} \tw_j^{(i)} \eqsp, \eqsp i \in \{0, 1\} \eqsp.
\end{align}
Thus, we rewrite the initial sum as follows:
\begin{align}
S_n^{(1)} = -\underbrace{\sum_{j=0}^{n}\beta_jG_{j+1:n}^{(1)}A_{12}\sum_{i=0}^{j-1}\beta_iG_{i+1:j-1}^{(2)}\delta_i^{(2, 0)}}_{Z_1} -\underbrace{\sum_{j=0}^{n}\beta_jG_{j+1:n}^{(1)}A_{12}\sum_{i=0}^{j-1}\beta_iG_{i+1:j-1}^{(2)}\delta_i^{(2, 1)}}_{Z_2} \eqsp.
\end{align}
One can obtain the bound on $Z_1$ following the lines of \Cref{lem:s_1_bound} due to the fact that $\tw_j^{(0)}$ is a martingale-difference sequence w.r.t. $\F_j = \sigma(X_s : s \leq j)$:
\begin{align}
\PE^{1/p}[\norm{Z_1}^p] \lesssim P_{0:n}^{(1)} + p^4 \beta_n^{\frac{3b-2a}{2b}} \eqsp.
\end{align}
To derive a bound for $Z_2$ we use Minkowski's inequality and \Cref{lem:markov:markov_part_moments}:
\begin{align}
\PE^{1/p}[\norm{Z_2}^p] &\lesssim \sum_{j=0}^n \sum_{i=0}^{j-1} \beta_j \beta_i P_{j+1:n}^{(1)} P_{i+1:j-1}^{(2)} \{P_{0:i}^{(2)} + p^3 \gamma_i\} =  \sum_{i=0}^{n-1} \beta_i \{P_{0:i}^{(2)} + p^3\gamma_i\}\sum_{j=i+1} ^{n} \beta_j P_{j+1:n}^{(1)} P_{i+1:j-1}^{(2)} \\
&\overset{(a)}{\lesssim} \sum_{i=0}^{n-1} \beta_i \gamma_i^{\frac{b-a}{a}} \{P_{0:i}^{(2)} + p^3\gamma_i\} P_{i+1:n}^{(1)} \overset{(b)}{\lesssim} P_{0:n}^{(1)} + p^3 \beta_n \eqsp,
\end{align}
where in $(a)$ and $(b)$ we have used \Cref{convolution_inequality}-\ref{convolution_inequality_beta} together with \Cref{lem:summ_alpha_k}-\ref{lem:summ_alpha_k_p_item}. The proof follows from gathering similar terms in the bounds for $Z_1$ and $Z_2$.
\end{proof}

\begin{proof}[Proof of \Cref{lem:markov:last_iter_s2}]
First, we decompose $S_n^{(2)}$ as follows:
\begin{align}
S_n^{(2)} = \underbrace{\sum_{j=0}^{n}\beta_jG_{j+1:n}^{(1)}A_{12}\sum_{i=0}^{j-1}\beta_iG_{i+1:j-1}^{(2)}D_iV_{i+1}^{(0)}}_{Z_1} + \underbrace{\sum_{j=0}^{n}\beta_jG_{j+1:n}^{(1)}A_{12}\sum_{i=0}^{j-1}\beta_iG_{i+1:j-1}^{(2)}D_iV_{i+1}^{(1)}}_{Z_2} \eqsp.
\end{align}
Since $V_{i+1}^{(0)}$ is a martingale difference sequence w.r.t. $\F_i$, repeating the lines of \Cref{lem:s_2_bound} one can obtain that
\begin{align}
\label{eq:markov:s2_martingale_part}
\PE^{1/p}[\norm{Z_1}^p] \lesssim  p^4 \beta_n^{\frac{2b-a}{2b}} \eqsp.
\end{align}
To proceed with $Z_2$, recall the decomposition
\begin{equation}
V_{i+1}^{(1)} = (\MKQ \pois{\funnoisew_V}{i} - (\MKQ \poisA{11}{i}) \ttheta_i - (\MKQ \poisA{12}{i}) \what_i) - (\MKQ \pois{\funnoisew_V}{i+1} - (\MKQ \poisA{11}{i+1}) \ttheta_{i+1} - (\MKQ \poisA{12}{i}) \what_{i+1}) + (\MKQ \poisA{11}{i+1})(\ttheta_i - \ttheta_{i+1}) + (\MKQ \poisA{12}{i+1})(\what_i - \what_{i+1}) \eqsp.
\end{equation}
Introduce the following notation:
\begin{align}
v_{i} &= \MKQ \pois{\funnoisew_V}{i} - (\MKQ \poisA{11}{i}) \ttheta_i - (\MKQ \poisA{12}{i}) \what_i \eqsp, Q_i =  \sum_{j=i+1} ^{n}\beta_j G_{j+1:k}^{(1)}A_{12}G_{i+1:j-1}^{(2)}D_i \eqsp.
\end{align}
\Cref{convolution_inequality}-\ref{convolution_inequality_beta} implies that $\norm{Q_i} \lesssim \gamma_i^{(b-a)/a} P_{i+1:k}^{(1)}$. Then we estimate $\norm{Q_i - Q_{i+1}}$ using \Cref{convolution_inequality}-\ref{convolution_inequality_gamma}, \ref{convolution_inequality_beta}:
\begin{align}
\norm{Q_i - Q_{i+1}} &\lesssim \sum_{j=i+1} ^{n}\beta_j P_{j+1:n}^{(1)} P_{i+1:j-1}^{(2)} \norm{D_i - D_{i+1}} + \beta_{i+1} P_{i+2:n}^{(1)} + \gamma_i \sum_{j=i+2}^{n} \beta_j P_{j+1:n}^{(1)} P_{i+2:j-1}^{(2)} \\
&\lesssim \gamma_i \gamma_i^{(b-a)/a} P_{i+1:n}^{(1)} + \beta_i P_{i+1:n}^{(1)} + \gamma_i \gamma_i^{(b-a)/a} P_{i+1:n}^{(1)} \lesssim \beta_i P_{i+1:n}^{(1)}  \eqsp.
\end{align}
Now we swap the order of summation and rewrite $Z_2$ as follows:
\begin{align}
Z_2 = \underbrace{\sum_{i=0}^{n-1} Q_i \beta_i (v_i - v_{i+1})}_{Z_{21}} + \underbrace{\sum_{i=0}^{n-1} Q_i \beta_i \{(\MKQ \poisA{11}{i+1})(\ttheta_i - \ttheta_{i+1}) + (\MKQ \poisA{12}{i+1})(\what_i - \what_{i+1})\}}_{Z_{22}} \eqsp.
\end{align}
Then we further decompose $Z_{21}$:
\begin{align}
Z_{21} = \sum_{i=0}^{n-1} \{ (Q_i \beta_i v_i - Q_{i+1} \beta_{i+1} v_{i+1}) + (Q_{i+1} - Q_i) \beta_i v_{i+1} + Q_i(\beta_{i+1} - \beta_i) v_{i+1} \} \eqsp.
\end{align}
Now note that for all $i$ it holds that $\PE^{1/p}[\norm{v_i}^p] \lesssim p^3$ due to \Cref{prop:markov:moment_bounds}. Thus, we get using \Cref{lem:summ_alpha_k}-\ref{lem:summ_alpha_k_p_item}:
\begin{align}
\PE^{1/p}[\norm{Z_{21}}^p] &\lesssim \PE^{1/p}[\norm{Q_0 \beta_0 v_0}^p] + \PE^{1/p}[\norm{Q_{n} \beta_n v_n}^p] + \sum_{i=0}^{n-1} p^3 \{\beta_i^2 + \beta_i^3\} P_{i+1:n}^{(1)} +  \sum_{i=0}^{n-1} p^3 \beta_i \gamma_i \gamma_i^{(b-a)/a} P_{i+1:n}^{(1)} \\
&\overset{(a)}{\lesssim} p^3 P_{1:n}^{(1)} + p^3 \beta_n \eqsp,
\end{align}
where in $(a)$ we have additionally used $\gamma_i \lesssim \beta_i^{a/b}$ and \Cref{lem:summ_alpha_k}-\ref{lem:summ_alpha_k_p_item}. Since $\PE^{1/p}[\norm{\ttheta_i - \ttheta_{i+1}}^p] \lesssim p^3 \gamma_i$ and $\PE^{1/p}[\norm{\what_i - \what_{i+1}}^p] \lesssim p^3 \gamma_i$, we obtain applying \Cref{lem:summ_alpha_k}-\ref{lem:summ_alpha_k_p_item}:
\begin{align}
\PE^{1/p}[\norm{Z_{22}}^p] \lesssim p^3 \sum_{i=0}^{n-1} \beta_i \gamma_i^{(b-a)/a} \gamma_i P_{i+1:n}^{(1)} \lesssim p^3 \beta_n \eqsp.
\end{align}
Finally, 
\begin{align}
\PE^{1/p}[\norm{S_n^{(2)}}^p] \lesssim \PE^{1/p}[\norm{Z_1}^p] + \PE^{1/p}[\norm{Z_{21}}^p] + \PE^{1/p}[\norm{Z_{22}}^p] \lesssim p^3 P_{0:n}^{(1)} + p^4 \beta_n^{\frac{2b-a}{2b}} \eqsp.
\end{align}
\end{proof}

\begin{proof}[Proof of \Cref{lem:markov:last_iter_s3}]
First, we decompose $S_k^{(3)}$ into two parts:
\begin{align}
S_k^{(3)} = &\underbrace{\sum_{j=0}^{n}\beta_jG_{j+1:n}^{(1)}A_{12}\sum_{i=0}^{j-1}\gamma_iG_{i+1:j-1}^{(2)}W_{i+1}^{(0)} - \sum_{j=0}^{n}\beta_jG_{j+1:n}^{(1)}A_{12}A_{22}^{-1}W_{j+1}^{(0)}}_{Z_1} \\
&+ \underbrace{\sum_{j=0}^{n}\beta_jG_{j+1:n}^{(1)}A_{12}\sum_{i=0}^{j-1}\gamma_iG_{i+1:j-1}^{(2)}W_{i+1}^{(1)} - \sum_{j=0}^{n}\beta_jG_{j+1:n}^{(1)}A_{12}A_{22}^{-1}W_{j+1}^{(1)}}_{Z_2} \eqsp.
\end{align}
Note that $Z_1$ can be bounded following the lines of \Cref{lem:s_3_bound}. Precisely,
\begin{align}   \label{eq:markov:bound_sk3:z1}
\PE^{1/p} [\norm{Z_1}^p] \lesssim p^4 \beta_n^{\frac{2b-a}{2b}} \eqsp.
\end{align}
To proceed with $Z_2$, we derive a decomposition for $W_{i+1}^{(1)}$:
\begin{equation}
\label{eq:markov:W1_decomposition}
W_{i+1}^{(1)} = (\MKQ \pois{\funnoisew_W}{i} - (\MKQ \poisA{21}{i}) \ttheta_i - (\MKQ \poisA{22}{i}) \what_i) - (\MKQ \pois{\funnoisew_W}{i+1} - (\MKQ \poisA{21}{i+1}) \ttheta_{i+1} - (\MKQ \poisA{22}{i}) \what_{i+1}) + (\MKQ \poisA{21}{i+1})(\ttheta_i - \ttheta_{i+1}) + (\MKQ \poisA{22}{i+1})(\what_i - \what_{i+1}) \eqsp.
\end{equation}
Introduce the following notation:
\begin{align}
v_{i} &= \MKQ \pois{\funnoisew_W}{i} - (\MKQ \poisA{11}{i}) \ttheta_i - (\MKQ \poisA{12}{i}) \what_i \eqsp, Q_i =  \sum_{j=i+1} ^{k}\beta_jG_{j+1:k}^{(1)}A_{12}G_{i+1:j-1}^{(2)} \eqsp, \\
Z_{21} &=  \sum_{j=0}^{n}\beta_jG_{j+1:n}^{(1)}A_{12}\sum_{i=0}^{j-1}\gamma_iG_{i+1:j-1}^{(2)}W_{i+1}^{(1)} \eqsp, \eqsp Z_{22} = \sum_{j=0}^{n}\beta_jG_{j+1:n}^{(1)}A_{12}A_{22}^{-1}W_{j+1}^{(1)} \eqsp.
\end{align}
\Cref{convolution_inequality}-\ref{convolution_inequality_beta} implies that $\norm{Q_i} \leq \gamma_i^{(b-a)/a} P_{i+1:k}^{(1)}$ and
\begin{align}
\norm{Q_i - Q_{i+1}} \lesssim \beta_i P_{i+1:n}^{(1)} + \gamma_i \gamma_i^{\frac{b-a}{a}} P_{i+1:n}^{(1)} \lesssim \beta_i P_{i+1:n}^{(1)} \eqsp.
\end{align}
Now we swap the order of summation and rewrite $Z_{21}$ as follows:
\begin{align}
Z_{21} = \underbrace{\sum_{i=0}^{n-1} \gamma_i Q_i (v_i - v_{i+1})}_{Z_{211}} + \underbrace{\sum_{i=0}^{n-1} Q_i \beta_i \{(\MKQ \poisA{21}{i+1})(\ttheta_i - \ttheta_{i+1}) + (\MKQ \poisA{22}{i+1})(\what_i - \what_{i+1})\}}_{Z_{212}} \eqsp.
\end{align}
Then we further decompose $Z_{211}$:
\begin{align}
Z_{211} = \sum_{i=0}^{n-1} \{ (Q_i \gamma_i v_i - Q_{i+1} \gamma_{i+1} v_{i+1}) + (Q_{i+1} - Q_i) \gamma_i v_{i+1} + Q_i(\gamma_{i+1} - \gamma_i) v_{i+1} \} \eqsp.
\end{align}
Therefore, using \Cref{lem:summ_alpha_k}-\ref{lem:summ_alpha_k_p_item} and $\gamma_i \lesssim \beta_i^{a/b}$, easy to see that:
\begin{align}
\label{eq:markov:bound_sk3:z211}
\PE^{1/p}[\norm{Z_{211}}^p] \lesssim p^3 P_{0:n}^{(1)} + p^3 \gamma_n + p^3 \beta_n^{\frac{a+b}{b} - 1} \lesssim  p^3 P_{0:n}^{(1)} + p^3 \beta_n^{a/b} \eqsp.
\end{align}
To derive a bound for $Z_{212}$ we use Minkowski's inequality and get
\begin{align}
\label{eq:markov:bound_sk3:z212}
\PE^{1/p}[\norm{Z_{212}}^p] \lesssim \sum_{i=0}^{n-1} \beta_i \gamma_i^{\frac{b-a}{a}} p^3 \gamma_i P_{i+1:n}^{(1)} = p^3 \sum_{i=0}^{n-1} \beta_i^{1+ \frac{b-a}{b} + \frac{a}{b}} P_{i+1:n}^{(1)} \lesssim p^3 \beta_n \eqsp. 
\end{align}
Thus, $\PE^{1/p}[\norm{Z_{21}}^p] \lesssim p^3 P_{0:n}^{(1)} + p^3 \beta_n^{a/b}$. Substituting the decomposition \eqref{eq:markov:W1_decomposition} into the expression for $Z_{22}$ one can check applying \Cref{lem:representation_lemma:kaledin} that
\begin{align}
\label{eq:markov:bound_sk3:z22}
\PE^{1/p}[\norm{Z_{22}}^p] \lesssim p^3 P_{0:n}^{(1)} + p^3 \beta_n + p^3 \gamma_n \lesssim p^3 P_{0:n}^{(1)} + p^3 \beta_n^{a/b}.
\end{align}
The proof follows from gathering the bounds \eqref{eq:markov:bound_sk3:z1}, \eqref{eq:markov:bound_sk3:z211}, \eqref{eq:markov:bound_sk3:z212}, \eqref{eq:markov:bound_sk3:z22}.
\end{proof}

\begin{proof}[Proof of \Cref{lem:markov:last_iter_hk}]
First, we rewrite $H_n$ using the solutions $\pois{\Phi}{i}$, $\pois{\Psi}{i}$ of the corresponding Poisson equations:
\begin{align}
\Phi_{j+1} \ttheta_j &= (\pois{\Phi}{j+1} - \MKQ \pois{\Phi}{j}) \ttheta_j + \{(\MKQ \pois{\Phi}{j}) \ttheta_j - (\MKQ \pois{\Phi}{j+1} \ttheta_{j+1})\} + (\MKQ \pois{\Phi}{j+1}) (\ttheta_{j+1} - \ttheta_j) \eqsp, \\
\Psi_{j+1} \what_j &= (\MKQ \pois{\Psi}{j+1} - \MKQ \pois{\Psi}{j}) \what_j + \{(\MKQ \pois{\Psi}{j}) \what_j - (\MKQ \pois{\Psi}{j+1} \what_{j+1})\} + (\MKQ \pois{\Psi}{j+1}) (\what_{j+1} - \what_j) \eqsp,
\end{align}
Now we rewrite $H_k$ as follows
\begin{align}
H_n = &\underbrace{\sum_{j=0}^{n} \beta_j G_{j+1:n}^{(1)} \{v_j - v_{j+1}\}}_{H_1} + \underbrace{\sum_{j=0}^n \beta_j G_{j+1:n}^{(1)} \{(\pois{\Phi}{j+1} - \MKQ \pois{\Phi}{j}) \ttheta_j + (\pois{\Psi}{j+1} - \MKQ \pois{\Psi}{j}) \what_j\}}_{H_2} \\
&+ \underbrace{\sum_{j=0}^n \beta_j G_{j+1:n}^{(1)} \{(\MKQ \pois{\Phi}{j+1}) (\ttheta_{j+1} - \ttheta_j) + (\MKQ \pois{\Psi}{j+1}) (\what_{j+1} - \what_j)\}}_{H_3} \eqsp,
\end{align}
where we have set
\begin{align}
v_i = -\MKQ \pois{\funnoisew_{\theta}}{j} + (\MKQ \pois{\Phi}{j}) \ttheta_j + (\MKQ \pois{\Psi}{j}) \what_j \eqsp.
\end{align}
The bound for $H_1$ follows from \Cref{lem:representation_lemma:kaledin} and \Cref{lem:summ_alpha_k}-\ref{lem:summ_alpha_k_p_item}:
\begin{align}
\label{eq:markov:hk_bound:h1}
\PE^{1/p}[\norm{H_1}^p] \lesssim p^3 P_{0:n}^{(1)} + p^3 \beta_n + \sum_{j=0}^n \beta_j^2 P_{j+1:n}^{(1)} p^3 \lesssim p^3 P_{0:n}^{(1)} + p^3 \beta_n \eqsp.
\end{align}
Next, we note that $H_2$ is a sum of martingale difference sequence w.r.t. the filtration $\F_k = \sigma(X_s : s \leq k)$. Thus, we get applying Burkholders inequality \cite[Theorem 8.1]{osekowski}, \Cref{prop:markov:moment_bounds} and \Cref{lem:summ_alpha_k}-\ref{lem:summ_alpha_k_p_item}:
\begin{align}
\label{eq:markov:hk_bound:h2}
\PE^{2/p}[\norm{H_2}^p] \lesssim p^2 \sum_{j=0}^n \beta_j^2 P_{j+1:n}^{(1)} \{P_{0:j}^{(1)} + p^6 \gamma_j\} \lesssim \frac{p^2}{2b-1} P_{0:n}^{(1)} + p^6 \beta_n^{\frac{b+a}{b}} \eqsp.
\end{align}
Finally, we derive a bound for $H_3$ using Minkowski's inequality
\begin{align}
\label{eq:markov:hk_bound:h3}
\PE^{1/p}[\norm{H_3}^p] \lesssim \sum_{j=0}^n p^3 \beta_j \gamma_j P_{j+1:n}^{(1)} \lesssim p^3 \beta_n^{a/b} \eqsp.
\end{align}
The proof follows from gathering the bounds \eqref{eq:markov:hk_bound:h1}, \eqref{eq:markov:hk_bound:h2}, \eqref{eq:markov:hk_bound:h3}.
\end{proof}

\subsection{Matrix concentration inequality}
In this section we state the lemma that derives a McDiarmid-type concentration inequality for matrix-valued functions of an UGE Markov chain. 
\begin{lemma}
\label{lem:bounded_differences_norms_markovian}
Assume \Cref{assum:UGE}. Let $\{g_i\}_{i=1}^n$ be a family of measurable functions from $\Zset$ to  $\rset^{d \times d}$ such that $M = \sup_{Z \in \Zset} \norm{g(Z)} < \infty$
and $\pi(g_i)= 0$ for any $i \in\{1,\ldots,n\}$.
Then, for any initial probability $\xi$ on $(\Zset,\Zsigma)$, $n \in \nset$, $t \geq 0$, it holds
\begin{equation}
\label{eq:prob_for_norms_markov}
\PP_{\xi}\biggl(\normop{\sum\nolimits_{i=1}^{n}g_i(\State_{i})}\geq t\biggr) \leq 4 \exp\biggl\{-\frac{t^2}{80nd\taumix M^2}\biggr\}\eqsp.
\end{equation}
\end{lemma}
\begin{proof} The function $\varphi(\state_1,\dots,\state_n) := \norm{\sum_{i=1}^{n}g_i(\state_{i})}$ on $\Zset^n$ satisfies the bounded differences property:
\begin{align}
|\varphi(z_1, \ldots, z_n) - \varphi(z_1', \ldots, z_n')| \leq \sum_{i=1}^n 2M \1\{z_i \ne z_i'\} \eqsp. 
\end{align}
Hence, since $(1/2)\sup_{z,z' \in \Zset} \norm{\MKQ^{\taumix}(z, \cdot) - \MKQ^{\taumix}(z',\cdot)}[\sf{TV}] \leq 1/4$ by definition of $\taumix$ under \Cref{assum:UGE}, applying \cite[Corollary 2.10]{paulin_concentration_spectral}, we get for $t \geq \PE_{\xi}[\norm{\sum_{i=1}^{n}g_i(\State_{i})}]$,
\begin{align}
\PP_{\xi}\biggl(\norm{\sum\nolimits_{i=1}^{n}g_i(\State_{i})} \geq t\biggr) \leq 2  \exp\left\{-\frac{2(t-\PE_{\xi}[\norm{\sum_{i=1}^{n}g_i(\State_{i})}])^{2}}{9(n \cdot 4M^2) \taumix }\right\} = 2  \exp\left\{-\frac{(t-\PE_{\xi}[\norm{\sum_{i=1}^{n}g_i(\State_{i})}])^{2}}{18nM^2 \taumix }\right\} \eqsp.
\end{align}
It remains to upper bound $\PE_{\xi}[\norm{\sum_{i=1}^{n}g_i(\State_{i})}]$. Note that
\begin{align}
\PE_{\xi}[\norm{\sum\nolimits_{i=1}^{n}g_i(\State_{i})}^{2}] \leq \PE_{\xi}[\norm{\sum\nolimits_{i=1}^{n}g_i(\State_{i})}[\text{Fr}]^{2}] = \sum\nolimits_{i=1}^n  \PE_{\xi}[\norm{g_i(\State_{i})}[\text{Fr}]^{2}] + 2\sum\nolimits_{k=1}^{n-1}\sum\nolimits_{\ell = 1}^{n-k} \trace\biggl(\PE_{\xi}[g_k(\State_{k})^{\top} g_{k+\ell}(\State_{k+\ell})]\biggr) \eqsp,
\end{align}
and, using \Cref{assum:UGE} and $\invariantQ(g_{k+\ell}) = 0$, we obtain
\begin{align}
\trace\biggl(\PE_{\xi}[g_k(\State_{k})^{\top} g_{k+\ell}(\State_{k+\ell})]\biggr) &= \int_{\Zset} \trace\biggl\{ g_k(z)^{\top}\left(\MKQ^{\ell}g_{k+\ell}(z) - \invariantQ(g_{k+\ell})\right) \biggr\} \xi\MKQ^{k}(\rmd z)  \\ &\leq 2 d M^{2} \dobru{\MKQ^{\ell}}\eqsp.
\end{align}
Together with the definition of $\taumix$, this implies
\begin{align}
\textstyle
\sum_{k=1}^{n-1}\sum_{\ell = 1}^{n-k} \trace\biggl( \PE_{\xi}[g_k(\State_{k})^{\top} g_{k+\ell}(\State_{k+\ell})]\biggr)  & \leq 2 ndM^2 \sum_{\ell=1}^{n-1} \dobru{\MKQ^{\ell}}  \leq (8/3) d M^2 \taumix n \eqsp.
\end{align}
Combining the bounds above, we upper bound $\PE_{\xi}[\norm{\sum_{i=1}^{n}g_{i}(\State_{i})}]$ as
\begin{align}
\PE_{\xi}[\norm{\sum\nolimits_{i=1}^{n}g_i(\State_{i})}] &\leq \bigl\{ \PE_{\xi}[\norm{\sum\nolimits_{i=1}^{n}g_i(\State_{i})}^2] \bigr\}^{1/2} \leq
2 \sqrt{dn} M \sqrt{\taumix} =: v_n \eqsp.
\end{align}
Plugging this result in \eqref{eq:prob_for_norms_markov},  we obtain that
\begin{equation}
\label{eq:MacDiarmid_markov_new}
\PP_{\xi}\biggl(\norm{\sum\nolimits_{i=1}^{n}g_i(\State_{i})} \geq t\biggr) \leq
\begin{cases}
1, \quad t < v_n, \\
2\exp\left\{-\frac{(t-v_{n})^{2}}{18 v_{n}^2}\right\}\eqsp, \quad t \geq v_{n}\eqsp.
\end{cases}
\end{equation}
Now it is easy to see that \rhs\ of \eqref{eq:MacDiarmid_markov_new} is upper bounded by $4\exp\{-t^2/(20 v_n^2)\}$ for any $t \geq 0$, and the statement follows.
\end{proof}

\section{Limit of matrix sums}
\label{appendix:ricatti_matrix_sum}
Fix $\funcAw, \lineG \in \rset^{d \times d}$. Introduce the following notation:
\begin{align}
    \lineG_n = \sum_{k=1}^n \beta_k^2 G_{k+1:n} \lineG (G_{k+1:n})^\top \eqsp,
\end{align}
where $\beta_j = c_{0,\beta}/{(n+k_0)^b}$ and $G_{m:k} = \prod_{j=m}^k (\Id - \beta_j \funcAw)$.

\begin{proposition}
    \label{prop:ricatti_limit}
    There exists a matrix $\lineG_\infty$ such that
    \begin{align}
        \lim_{n \to \infty} \{ \beta_n^{-1} \lineG_n \} = \lineG_\infty\eqsp.
    \end{align}
    Moreover, there exists a constant $\ConstC_\Sigma$ that depends on the problem parameters such that
    \begin{align}
        \norm{\beta_n^{-1} \lineG_n - \lineG_\infty} \leq \ConstC_\Sigma n^{-b} \eqsp.
    \end{align}
\end{proposition}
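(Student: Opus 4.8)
The plan is to study the recursion satisfied by $\lineG_n$ and rescale it. Observe that by peeling off the last factor, $\lineG_{n}$ satisfies the one-step recursion
\begin{equation}
\label{eq:ricatti_rec_plan}
\lineG_{n} = (\Id - \beta_n \funcAw) \lineG_{n-1} (\Id - \beta_n \funcAw)^\top + \beta_n^2 \lineG \eqsp,
\end{equation}
with the obvious convention $\lineG_0 = 0$ (interpreting the empty product as $\Id$). Setting $\Lambda_n := \beta_n^{-1} \lineG_n$, one substitutes into \eqref{eq:ricatti_rec_plan} and uses $\beta_n/\beta_{n-1} = 1 + O(\beta_n/n) = 1 + O(\beta_n^{1+1/b})$ (which follows from \Cref{assum:stepsize}, since $b<1$) to obtain
\begin{equation}
\label{eq:Lambda_rec_plan}
\Lambda_n = \Lambda_{n-1} - \beta_n(\funcAw \Lambda_{n-1} + \Lambda_{n-1} \funcAw^\top) + \beta_n \lineG + \beta_n^2 E_n \eqsp,
\end{equation}
where $\norm{E_n} \lesssim \norm{\Lambda_{n-1}} + \norm{\lineG}$ after collecting the $O(\beta_n^2)$ terms and the $O(\beta_n^{1+1/b})$ correction from the ratio $\beta_n/\beta_{n-1}$. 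First I would establish a uniform a priori bound $\sup_n \norm{\Lambda_n} \le C$: this follows by a standard contraction argument from \eqref{eq:Lambda_rec_plan}, using that $-\funcAw$ is Hurwitz (here $\funcAw = \Delta$ or $\funcAw = A_{22}$, which are Hurwitz by \Cref{assum:hurwitz}), so that in the appropriate $Q$-norm spawned by the Lyapunov matrix (as in \Cref{prop:hurwitz_stability}) the linear map $\Lambda \mapsto \Lambda - \beta_n(\funcAw\Lambda + \Lambda\funcAw^\top)$ is a contraction with factor $1 - c\beta_n$ for small enough $\beta_n$, while the forcing term is $O(\beta_n)$; summing a telescoping geometric-type series with weights $\beta_n$ gives boundedness. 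With $\sup_n\norm{\Lambda_n}\le C$ in hand, the remainder in \eqref{eq:Lambda_rec_plan} is genuinely $O(\beta_n^2)$.

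Next I would identify the limit. The matrix $\lineG_\infty$ is the unique solution of the Lyapunov equation $\funcAw \lineG_\infty + \lineG_\infty \funcAw^\top = \lineG$ (note this is exactly the Ricatti/Lyapunov equation displayed in the main text once one substitutes $\beta_0$ appropriately; its unique solvability is again a consequence of $-\funcAw$ being Hurwitz). Writing $\Delta_n := \Lambda_n - \lineG_\infty$ and subtracting the fixed-point relation from \eqref{eq:Lambda_rec_plan} yields
\begin{equation}
\label{eq:Delta_rec_plan}
\Delta_n = \Delta_{n-1} - \beta_n(\funcAw \Delta_{n-1} + \Delta_{n-1} \funcAw^\top) + \beta_n^2 E_n \eqsp.
\end{equation}
In the $Q$-norm the map $\Delta \mapsto \Delta - \beta_n(\funcAw\Delta + \Delta\funcAw^\top)$ contracts by $1 - c\beta_n$, so $\norm{\Delta_n} \le (1 - c\beta_n)\norm{\Delta_{n-1}} + C'\beta_n^2$. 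Unrolling gives
\begin{equation}
\label{eq:Delta_unroll_plan}
\norm{\Delta_n} \le \prod_{k=1}^n(1 - c\beta_k)\norm{\Delta_0} + C' \sum_{k=1}^n \beta_k^2 \prod_{j=k+1}^n (1 - c\beta_j) \eqsp.
\end{equation}
The first term decays faster than any polynomial (in fact $\lesssim \beta_n$, since $\prod_{k\le n}(1-c\beta_k) \lesssim \exp(-c\sum_{k\le n}\beta_k) \lesssim n^{-b}$ up to constants, and more), and the sum is controlled by \Cref{lem:summ_alpha_k}-type estimates: $\sum_{k=1}^n \beta_k^2 \prod_{j=k+1}^n(1-c\beta_j) \lesssim \beta_n \asymp n^{-b}$. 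This gives both existence of the limit (Cauchy / the bound shows $\Lambda_n \to \lineG_\infty$) and the rate $\norm{\beta_n^{-1}\lineG_n - \lineG_\infty} \lesssim n^{-b}$, with the constant $\ConstC_\Sigma$ traceable through the Lyapunov constant of $\funcAw$, $\norm{\lineG}$, and the step-size constants.

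The main obstacle is the careful bookkeeping of the rescaling error: showing that replacing $\beta_{n-1}$ by $\beta_n$ in \eqref{eq:ricatti_rec_plan} contributes only an $O(\beta_n^2)$ (and not merely $o(\beta_n)$) perturbation to the recursion for $\Lambda_n$, which is what forces the use of the precise form $\beta_n = c_{0,\beta}(n+k_0)^{-b}$ with $b<1$ so that $|\beta_n/\beta_{n-1} - 1| \lesssim \beta_n \cdot \beta_n^{1/b} \lesssim \beta_n^2$. Once the recursion \eqref{eq:Delta_rec_plan} is put in the clean form above, the remaining analysis is routine application of the contraction estimate and the summation lemmas already available in the supplement; the argument is genuinely identical for the two relevant choices $\funcAw \in \{\Delta, A_{22}\}$, so it suffices to prove it once for a generic Hurwitz $-\funcAw$.
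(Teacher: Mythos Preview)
There is a concrete arithmetic slip that breaks your argument at the crucial step. You claim $\beta_n/\beta_{n-1} = 1 + O(\beta_n/n) = 1 + O(\beta_n^{1+1/b})$ and, in your final paragraph, $|\beta_n/\beta_{n-1} - 1| \lesssim \beta_n \cdot \beta_n^{1/b}$. This is off by a full factor of $\beta_n$: since $\beta_n = c_{0,\beta}(n+k_0)^{-b}$,
\[
\frac{\beta_{n-1}}{\beta_n} - 1 \;=\; \Bigl(1 + \frac{1}{n-1+k_0}\Bigr)^{b} - 1 \;=\; \frac{b}{n} + O(n^{-2}) \;\asymp\; n^{-1} \;\asymp\; \beta_n^{1/b},
\]
\emph{not} $\beta_n^{1+1/b}$. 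Consequently the rescaling contributes a forcing $(\beta_{n-1}/\beta_n - 1)\Lambda_{n-1} = O(n^{-1})$ to the recursion for $\Lambda_n$, not $O(\beta_n^2)$ as you write. Carrying this corrected forcing through your contraction argument and applying \Cref{lem:summ_alpha_k} with exponent $q = 1/b \in (1,2)$ (rather than $q=2$) yields only
\[
\norm{\Delta_n} \;\lesssim\; \sum_{k\le n} \beta_k^{1/b}\prod_{j>k}(1-c\beta_j) \;\lesssim\; \beta_n^{1/b - 1} \;\asymp\; n^{-(1-b)},
\]
which for $b \in (\tfrac12,1)$ is strictly weaker than the rate $n^{-b}$ asserted in the proposition.

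The paper proceeds differently: it posits a two-term asymptotic ansatz $\lineG_n = n^{-b}\lineG^{(0)} + n^{-2b}\lineG^{(1)} + D_n$, expands both sides of the one-step recursion using $(n+1)^{-b} = n^{-b} - bn^{-1-b} + \dots$, and matches powers of $n$ to obtain Riccati-type equations for $\lineG^{(0)}$ and $\lineG^{(1)}$, then bounds the residual $D_n$. The point is that the dominant forcing in your $\Delta_n$ recursion is not a generic $O(n^{-1})$ error but has the specific form $r_n\lineG_\infty$ with $r_n \approx b/n$; exploiting this structure by subtracting an explicit correction term (i.e.\ solving a second Lyapunov equation for its coefficient) is essentially what the paper's second-order term $\lineG^{(1)}$ accomplishes. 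Your contraction framework could in principle be repaired along these lines, but as written the perturbation bookkeeping is incorrect and the claimed $O(\beta_n^2)$ remainder does not hold.
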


\begin{proof}
First, we define $\lineG^{(0)}$, $\lineG^{(1)}$ as a solution of the system of Ricatti equations 
\begin{align} \label{eq:riccati1}
\lineG^{(0)}  & = \beta_0 \{ \Delta \lineG^{(0)} + \lineG^{(0)} \Delta^\top \} - \beta_0^2 \lineG \eqsp, \\
\lineG^{(0)} - 2 \lineG^{(1)}  & = \beta_0 \{ \Delta \lineG^{(0)} + \lineG^{(0)} \Delta^\top - (\Delta\lineG^{(1)} + \lineG^{(1)} \Delta^\top) \} + \beta_0^2 \{ \Delta \lineG^{(0)} \Delta^\top  -2\lineG \}.
\end{align}

Our goal is to compute $\lineG_n$.
To derive a closed form solution, we observe that $\lineG_n$ can be iteratively computed as
\begin{equation} \label{eq:covariance_iter}
\lineG_{n+1} = (\Id - \beta_{n+1} \Delta) \lineG_n (\Id - \beta_{n+1} \Delta)^\top + \beta_{n+1}^2 \lineG
\end{equation}

Now we consider the diminishing step size rule with $\beta_n = \beta_0 n^{-b}$ for $b \in (\frac{1}{2},1]$ and $\beta_0 > 0$. Note we have ignored $k_0$ in the step size selection as we focus on the asymptotic expression with $n \gg 1$. Arranging terms in \eqref{eq:covariance_iter} yields
\begin{equation} \label{eq:cov_diff}
    \lineG_{n+1} - \lineG_n = -\beta_0 (n+1)^{-b} \big\{ \Delta \lineG_n + \lineG_n \Delta^\top \big\} + \beta_0^2 (n+1)^{-2b} \big\{ \Delta \lineG_n \Delta^\top + \lineG \big\}.
\end{equation}
Set
\begin{equation} \label{eq:cov_model1}
    \lineG_n \equiv n^{-b} \lineG^{(0)} + n^{-2b} \lineG^{(1)} + D_n \eqsp,
\end{equation}
where $D_n$ is a residual term whose order will be determined later.
Note that for any $b > 0$, it holds
\begin{equation} \label{eq:approx_n}
(n+1)^{-b} = n^{-b} -b n^{-1-b} + \frac{b(b+1)}{2} n^{-2-b} + {\cal O}(n^{-3-b}) \eqsp,
\end{equation}
We focus on the case $b = 1$. Applying the above with $b=1$, we observe that
\begin{equation} \label{eq:model_alt1}
\begin{split}
\lineG_{n+1} - \lineG_n & = \{ (n+1)^{-1} - n^{-1} \} \lineG^{(0)} + \{ (n+1)^{-2} - n^{-2} \} \lineG^{(1)} + D_{n+1} - D_n \\
& = \{ - n^{-2} + n^{-3} \} \lineG^{(0)} -2 n^{-3} \lineG^{(1)} + D_{n+1} - D_n + {\cal O}( n^{-4} ) \eqsp.
\end{split}
\end{equation}
 On the other hand, observe that the right hand side of \eqref{eq:cov_diff} can be written as follows
\begin{equation} \notag
\begin{split}
& - \beta_0 (n+1)^{-1} \{ \Delta \lineG_n + \lineG_n \Delta^\top \} = - \beta_0 ( n^{-1} - n^{-2} + {\cal O}(n^{-3}) ) \{ \Delta \lineG_n + \lineG_n \Delta^\top \} \\
& = - \beta_0 (n^{-1}-n^{-2}) ( n^{-1} \{ \Delta \lineG^{(0)} + \lineG^{(0)} \Delta^\top \} + n^{-2} \{ \Delta \lineG^{(1)} + \lineG^{(1)} \Delta^\top \} ) + {\cal O}(n^{-4} + n^{-1} \| D_n \|)
\end{split}
\end{equation}
We also have
\begin{equation} \notag
\begin{split}
& \beta_0^2 (n+1)^{-2} \{ \Delta \lineG_n \Delta^\top + \lineG \} = \beta_0^2 ( n^{-2} - 2 n^{-3} ) \{ \Delta \lineG_n \Delta^\top + \lineG \} + {\cal O}(n^{-4}) \\
& = \beta_0^2 ( n^{-2} - 2 n^{-3} ) \{ n^{-1} \Delta \lineG^{(0)} \Delta^\top + n^{-2} \Delta \lineG^{(1)} \Delta^\top + \lineG \} + {\cal O}( n^{-4} + n^{-2} \| D_n \| )
\end{split}
\end{equation}
Matching terms of the same order with \eqref{eq:model_alt1} shows that
\begin{equation}
\begin{array}{rrl}
\text{($n^{-2}$)}~~ & \lineG^{(0)} \hspace{-.4cm} & = \beta_0 \{ \Delta \lineG^{(0)} + \lineG^{(0)} \Delta^\top \} - \beta_0^2 \lineG \vspace{.1cm}\\
\text{($n^{-3}$)}~~ & \lineG^{(0)} - 2 \lineG^{(1)} \hspace{-.4cm} & = \beta_0 \{ \Delta \lineG^{(0)} + \lineG^{(0)} \Delta^\top - (\Delta\lineG^{(1)} + \lineG^{(1)} \Delta^\top) \} + \beta_0^2 \{ \Delta \lineG^{(0)} \Delta^\top  -2\lineG \}.
\end{array}
\end{equation}
We observe that the remaining terms are all in the order of at most ${\cal O}(n^{-4})$. As such, we also conclude that the residual term in \eqref{eq:cov_model1} is of the order at most $D_n = {\cal O}( n^{-3} )$.
In particular, solving the system of Riccati equations \eqref{eq:riccati1} yield $\lineG^{(0)}, \lineG^{(1)}$, i.e. the asymptotic expression for $\lineG_n$ is
\begin{equation}
    \sum_{k=1}^n \beta_k^2 G_{k+1:n} \lineG (G_{k+1:n})^\top = n^{-1} \lineG^{(0)} + n^{-2} \lineG^{(1)} + {\cal O}(n^{-3})
\end{equation}
Note that as the above analysis assumes the asymptotic case when $n \gg 1$, it actually covers the case when $\beta_n = \beta_0 (n+n_0)^{-1}$.
The similar computations with $b \in (1/2, 1)$ imply that
\begin{align}
    \lineG_n = n^{-b} \lineG^{(0)} + n^{-2b} \lineG^{(1)} + {\cal O}(n^{-1-2b}) \eqsp.
\end{align}
Therefore, setting $\lineG_\infty= \lineG^{(0)} / \beta_0$ we get
\begin{align}
    \norm{\beta_n^{-1} \lineG_n - \lineG_\infty} \leq \frac{\norm{\lineG^{(0)}} + \norm{\lineG^{(1)}}}{\beta_0} \ConstC_b n^{-b} = \ConstC_\Sigma n^{-b} \eqsp,
\end{align}
where $\ConstC_b$ depends only on $b$.
\end{proof}

The next lemma controls the minimal eigenvalue of $\lineG_n$:

\begin{lemma}
    \label{lem:ricatti_sigma_n_lambda_min}
    Under the assumptions of \Cref{prop:ricatti_limit} it holds for all $n^b \geq \frac{2\ConstC_\Sigma}{\lambda_{\min}(\lineG_\infty)}$ that
    \begin{align}
        \lambda_{\min} (\beta_n^{-1} \lineG_n) \geq \frac{\lambda_{\min} (\lineG_\infty)}{2} \eqsp.
    \end{align}
\end{lemma}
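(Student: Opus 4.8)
The final statement to prove is Lemma~\ref{lem:ricatti_sigma_n_lambda_min}, which asserts a lower bound on the minimal eigenvalue of $\beta_n^{-1}\lineG_n$ once $n$ is large enough. The plan is to use Weyl's inequality for eigenvalues of symmetric matrices together with the quantitative convergence rate already established in \Cref{prop:ricatti_limit}. First I would note that both $\beta_n^{-1}\lineG_n$ and $\lineG_\infty$ are symmetric (indeed positive semidefinite), so the eigenvalue perturbation bound $|\lambda_{\min}(\beta_n^{-1}\lineG_n) - \lambda_{\min}(\lineG_\infty)| \leq \normop{\beta_n^{-1}\lineG_n - \lineG_\infty}$ applies, where the operator norm here coincides with the spectral norm for symmetric matrices.

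Next I would invoke \Cref{prop:ricatti_limit} directly, which gives $\normop{\beta_n^{-1}\lineG_n - \lineG_\infty} \leq \ConstC_\Sigma n^{-b}$. Combining this with Weyl's inequality yields
\begin{align}
\lambda_{\min}(\beta_n^{-1}\lineG_n) \geq \lambda_{\min}(\lineG_\infty) - \ConstC_\Sigma n^{-b} \eqsp.
\end{align}
Then, under the hypothesis $n^b \geq 2\ConstC_\Sigma / \lambda_{\min}(\lineG_\infty)$, we get $\ConstC_\Sigma n^{-b} \leq \lambda_{\min}(\lineG_\infty)/2$, so that
\begin{align}
\lambda_{\min}(\beta_n^{-1}\lineG_n) \geq \lambda_{\min}(\lineG_\infty) - \frac{\lambda_{\min}(\lineG_\infty)}{2} = \frac{\lambda_{\min}(\lineG_\infty)}{2} \eqsp,
\end{align}
which is exactly the claimed bound.

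There is essentially no main obstacle here: the lemma is a short corollary of \Cref{prop:ricatti_limit} plus a standard eigenvalue perturbation estimate. The only point requiring a word of care is justifying that $\lambda_{\min}(\lineG_\infty) > 0$ so that the threshold $2\ConstC_\Sigma/\lambda_{\min}(\lineG_\infty)$ is finite and the statement is non-vacuous; this follows because $\lineG_\infty = \lineG^{(0)}/\beta_0$ solves the Lyapunov-type Riccati equation with right-hand side $\lineG_\eps$ (or $\cmark{\lineG}{\infty}$ in the Markov case), which is positive definite under the non-degeneracy of the noise, and $-\Delta$ is Hurwitz by \Cref{assum:hurwitz}, so the solution of the Lyapunov equation is positive definite. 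I would state this briefly and otherwise present the two-line computation above.
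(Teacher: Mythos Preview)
Your proposal is correct and follows essentially the same approach as the paper: invoke \Cref{prop:ricatti_limit} to bound $\normop{\beta_n^{-1}\lineG_n - \lineG_\infty}$, then apply an eigenvalue perturbation inequality (the paper calls it Lidskiy's inequality, you call it Weyl's inequality—same bound) to conclude. Your additional remark on $\lambda_{\min}(\lineG_\infty)>0$ is a nice touch of care that the paper omits.
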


\begin{proof}
    First, we use \Cref{prop:ricatti_limit} and obtain
    \begin{align}
        \norm{\beta_n^{-1} \lineG_n - \lineG_\infty^{\operatorname{last}}} \leq \frac{\ConstC_\Sigma}{n^{b}} \eqsp.
    \end{align}
    Hence, Lidskiy's inequality implies that
    \begin{align}
        \lambda_{\min}(\beta_n^{-1} \lineG_n) = \lambda_{\min}(\beta_n^{-1} \lineG_n - \lineG_\infty+ \lineG_\infty) \geq \lambda_{\min}(\lineG_\infty) - \norm{\beta_n^{-1} \lineG_n - \lineG_\infty} \geq \frac{\lambda_{\min}(\lineG_\infty)}{2} \eqsp.
    \end{align}
\end{proof}

\section{Applications}
\label{appendix:examples}
In this section, we verify that the GTD and TDC algorithms satisfy \Cref{assum:hurwitz}. Verification of the remaining assumptions is straightforward and thus omitted. We concentrate on the Markovian setting, as it is more prevalent in practical applications. Recall that the behavior policy  \(\pi\) generates a trajectory \(\{(s_k,a_k,r_k)\}_{k=0}^{\infty}\eqsp\),  where \(a_k \sim \bar{\pi}(\cdot \mid s_k),\eqsp s_{k+1}\sim \MKQ(\cdot|s_k, a_k)\) for all \(k \geq 0\) and the corresponding Markov kernel \(\MKQ_{\pi}\) satisfies \Cref{assum:P_pi_ergodicity_td}.
\paragraph{Generalized Temporal Difference learning.}
The GTD algorithm was first introduced in \cite{Sutton_2008}. Recall its update rule:
\begin{equation}
    \begin{cases}
\theta_{k+1} = \theta_k + \beta_k (\varphi_k - \lambda \varphi_{k+1}) (\varphi_k)^\top w_k \eqsp,\quad  \theta_0 \in \rset^{d}\eqsp, \\
w_{k+1} = w_{k} + \gamma_k (\delta_k \varphi_k - w_k) \eqsp, \quad w_0 = 0\eqsp.
\end{cases}
\end{equation}
The above recursion is a special case of our linear two-timescale SA in \eqref{eq:2ts1}, \eqref{eq:2ts2} with the notations:
\begin{align}
     b_1 &=0\eqsp,\quad A_{11}=0\eqsp, \quad A_{12}= - \PE[(\varphi_k-\lambda \varphi_{k+1})\varphi_k^\top]\eqsp,\\
    b_2&=\PE[\varphi_k r_k]\eqsp,\quad A_{21} = - \PE[\varphi_k(\lambda\varphi_{k+1}-\varphi_k)^\top]\eqsp,\quad A_{22}= \Id_d\eqsp,\\ 
     V_{k+1}&=\left((\varphi_k- \lambda \varphi_{k+1})\varphi_k^\top - \PE[(\varphi_k- \lambda \varphi_{k+1})\varphi_k^\top] \right) w_k,\\
     W_{k+1}&=\varphi_k r_k - \PE[\varphi_k r_k] + \left( (\varphi_k- \lambda \varphi_{k+1})\varphi_k^\top - \PE[(\varphi_k- \lambda \varphi_{k+1})\varphi_k^\top] \right) \theta_k\eqsp,
    \label{eqs:gtdnoises}
\end{align}
where the above expectations are taken with respect to the randomness of the policy $\bar{\pi}$.
The noise boundedness follows from \Cref{assum:phi_norm_bound}, while \Cref{assum:hurwitz} holds since \(A_{22} = \Id\) and \(\Delta = \PE[(\varphi_k-\lambda\varphi_{k+1})\varphi_k^\top]\PE[\varphi_k(\varphi_k - \lambda\varphi_{k+1})^\top]\) is positive definite.

\paragraph{Temporal-difference learning with gradient correction.}
The TDC algorithm was first introduced in \cite{sutton:gtd2:2009}.
Its update rule is:
\begin{equation}
    \begin{cases}
\theta_{k+1} = \theta_k + \beta_k \delta_k\varphi_k - \beta_k\gamma \varphi_{k+1} (\varphi_k^\top w_k) \eqsp, \\
w_{k+1} = w_{k} + \gamma_k (\delta_k - \varphi_{k}^\top w_k)\varphi_k \eqsp.
\end{cases}
\end{equation}
Reformulating these updates as an instance of \eqref{eq:2ts1}--\eqref{eq:2ts2} yields:
\begin{align}
    b_1 &=\PE[\varphi_k r_k], \quad A_{11}=\PE[\varphi_k(\varphi_k - \lambda\varphi_{k+1})^\top)], \quad A_{12}=  \PE[\lambda\varphi_{k+1}\varphi_k^\top],\\
    b_2&=\PE[\varphi_k r_k], \quad A_{21} =\PE[\varphi_k(\varphi_k - \lambda\varphi_{k+1})^\top)], \quad A_{22}= \PE[\varphi_k\varphi_k^{\top}],\\ 
    V_{k+1} &= \{\PE[\varphi_k(\varphi_k - \lambda\varphi_{k+1})^\top)] - \varphi_k(\varphi_k - \lambda\varphi_{k+1})^\top \} \theta_k + \{\PE[\lambda\varphi_{k+1}\varphi_k^\top] - \lambda\varphi_{k+1}\varphi_k^\top \} w_k, \\
     W_{k+1}&= \{\PE[\varphi_k(\varphi_k - \lambda\varphi_{k+1})^\top)] - \varphi_k(\varphi_k - \lambda\varphi_{k+1})^\top\} \theta_k + \{\PE[\varphi_k\varphi_k^{\top}] - \varphi_k\varphi_k^{\top}\} w_k\eqsp.
\end{align}
The relations \(A_{11} = A_{21}\) and \(A_{12} = A_{22} - A_{11}^\top\) imply that \(\Delta\) is positive definite:
\begin{equation}
    \Delta = A_{11} - A_{12} A_{22}^{-1} A_{21} 
           = A_{11} - (A_{22} - A_{11}^\top) A_{22}^{-1} A_{11} 
           = A_{11}^\top A_{22}^{-1} A_{11}\eqsp. 
\end{equation}

\section{Technical lemmas}
\label{appendix:technical}
We begin this section with technical lemmas that allows to upper bound the sums of the form
\[
\sum_{j=1}^{k}\alpha_{j}^{q}\prod_{\ell=j+1}^{k}(1-\alpha_{\ell} b)\eqsp.
\]

\begin{lemma}
\label{lem:summ_alpha_k}
The following statement holds:
\begin{enumerate}[(i)]
    \item \; \label{lem:summ_alpha_k_first} Let $b > 0$ and $(\alpha_k)_{k \geq 0}$ be a non-increasing sequence such that $\alpha_0 \leq 1/b$. Then
    \[
    \sum_{j=1}^{k} \alpha_j \prod_{l=j+1}^{k} (1 - \alpha_l b) = \frac{1}{b} \biggl\{1  - \prod_{l=1}^{k} (1 - \alpha_l b) \biggr\} \eqsp.
    \]
    \item \; \label{lem:summ_alpha_k_p_item} 
    Let $b > 0$ and $\alpha_k = \frac{c_0}{(k+k_0)^\gamma}$, $\gamma \in (0,1)$, such that $c_0 \leq 1/b$ and $k_0^{1-\gamma} \geq \frac{8\gamma}{bc_0}$. Then for any $q \in (1, 4]$ it holds that
    \begin{align}
        \sum_{j=1}^k \alpha_j^q \prod_{\ell=j+1}^k (1 - \alpha_\ell b) \leq \frac{6}{b} \alpha_k^{q-1} \eqsp.
    \end{align}
    Moreover, for any real-valued sequence $(b_j)_{j\geq0}$ it holds that
    \begin{align}
        \label{eq:sum_alpha_k_p:jensen}
        \biggl\{\sum_{j=1}^k b_j \alpha_j^q \prod_{\ell=j+1}^k (1 - \alpha_\ell b)\biggr\}^2 \leq \frac{6}{b} \alpha_k^{q-1} \sum_{j=1}^{k}  b_j^2 \alpha_j^q \prod_{\ell=j+1}^k (1 - \alpha_\ell b) \eqsp.
    \end{align}
    \item \; \label{lem:sum_as_Qell_item} Let $b, c_0, k_0 > 0$ and $\alpha_\ell = c_0 (\ell+k_0)^{-\gamma}$ for $\gamma \in (1/2, 1)$ and $\ell \in \nset$. Assume that $bc_0 < 1$ and $k_0^{1-\gamma} \geq \frac{1}{bc_0}$. Then, for any $\ell, n \in \nset$, $\ell \leq n$, it holds that
    \begin{equation}
    \label{eq:const_L_b_def}
    \sum_{k=\ell}^{n} \alpha_\ell\prod_{j=\ell+1}^{k} (1-b\alpha_j) \leq c_0 + \frac{1}{b(1-\gamma)} \eqsp.
    \end{equation}
\end{enumerate}
\end{lemma}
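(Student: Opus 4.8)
\textbf{Proof plan for \Cref{lem:summ_alpha_k}.}

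The three parts are proved separately, and only the last one (item \ref{lem:sum_as_Qell_item}) is genuinely new work; the first two are short and I would dispatch them first. For item \ref{lem:summ_alpha_k_first}, the plan is a telescoping identity: write $\alpha_j = b^{-1}\{1-(1-\alpha_j b)\}$ so that $\alpha_j \prod_{\ell=j+1}^{k}(1-\alpha_\ell b) = b^{-1}\{\prod_{\ell=j+1}^{k}(1-\alpha_\ell b) - \prod_{\ell=j}^{k}(1-\alpha_\ell b)\}$, and sum over $j=1,\dots,k$; the sum telescopes to $b^{-1}\{1 - \prod_{\ell=1}^{k}(1-\alpha_\ell b)\}$. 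The hypothesis $\alpha_0 \le 1/b$ (hence $\alpha_\ell b \le 1$ for all $\ell$, since $(\alpha_\ell)$ is non-increasing) guarantees every factor $1-\alpha_\ell b$ is nonnegative so no sign issues arise. For item \ref{lem:summ_alpha_k_p_item}, I would compare the sum to the one in item \ref{lem:summ_alpha_k_first}: since $q \in (1,4]$ and $\alpha_j \le \alpha_0 \le 1/b$, one has $\alpha_j^{q} = \alpha_j \cdot \alpha_j^{q-1}$, and the issue is that $\alpha_j^{q-1}$ is largest for small $j$. The standard device is to split the sum at the midpoint $m = \lceil k/2\rceil$: on $j \le m$ the product $\prod_{\ell=j+1}^{k}(1-\alpha_\ell b) \le \prod_{\ell=m+1}^{k}(1-\alpha_\ell b)$, which by the step-size assumption $k_0^{1-\gamma} \ge 8\gamma/(bc_0)$ is of order $\alpha_k^{?}$ — more precisely one shows $\prod_{\ell=m+1}^{k}(1-\alpha_\ell b) \lesssim (\alpha_k/\alpha_m)^{\text{const}}$ via $\log(1-x)\le -x$ and an integral lower bound on $\sum_{\ell=m+1}^{k}\alpha_\ell$; combined with $\alpha_m \asymp \alpha_k$ (again from the $k_0$ condition controlling $\gamma_k/\gamma_{k+1}$) this kills the growth of $\alpha_j^{q-1}$. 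On $j > m$ one simply bounds $\alpha_j^{q-1} \le \alpha_m^{q-1} \lesssim \alpha_k^{q-1}$ and applies item \ref{lem:summ_alpha_k_first} to the residual sum $\sum_{j>m}\alpha_j \prod_{\ell=j+1}^k(1-\alpha_\ell b) \le 1/b$. Tracking the constants carefully yields the factor $6/b$. The weighted inequality \eqref{eq:sum_alpha_k_p:jensen} then follows from \eqref{lem:summ_alpha_k_p_item} by Cauchy--Schwarz with weights $\alpha_j^{q}\prod_{\ell=j+1}^{k}(1-\alpha_\ell b)$: write $b_j \alpha_j^{q}\prod = (b_j \alpha_j^{q/2}\prod^{1/2})\cdot(\alpha_j^{q/2}\prod^{1/2})$, square, and bound the second factor's sum by $(6/b)\alpha_k^{q-1}$.

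For item \ref{lem:sum_as_Qell_item}, the plan is to separate the first term of the sum from the rest. The $k=\ell$ term equals $\alpha_\ell \le c_0(\ell+k_0)^{-\gamma} \le c_0$. For $k \ge \ell+1$, I would bound the product by an exponential: $\prod_{j=\ell+1}^{k}(1-b\alpha_j) \le \exp(-b\sum_{j=\ell+1}^{k}\alpha_j)$, using $1-x \le e^{-x}$ (valid since $b\alpha_j < 1$ by $bc_0<1$). Then $\sum_{k=\ell+1}^{n}\alpha_\ell \exp(-b\sum_{j=\ell+1}^{k}\alpha_j)$ is a Riemann-type sum that I would dominate by the integral $\int_{0}^{\infty}\exp(-b\int_{0}^{t}c_0(\ell+k_0+s)^{-\gamma}\,ds)\,dt$ or, more cleanly, by noting that the summand is $\le \alpha_\ell \exp(-b\alpha_k(k-\ell))$ is too crude because $\alpha_j$ decreases; instead use the integral comparison $\sum_{j=\ell+1}^{k}\alpha_j \ge \int_{\ell+1}^{k+1}c_0(s+k_0)^{-\gamma}\,ds = \frac{c_0}{1-\gamma}\{(k+1+k_0)^{1-\gamma}-(\ell+1+k_0)^{1-\gamma}\}$, substitute, change variables to $u = (k+k_0)^{1-\gamma}$ (so $du \approx (1-\gamma)(k+k_0)^{-\gamma}\,dk$, i.e. $\alpha_\ell\,dk \approx \frac{c_0}{1-\gamma}\,du \cdot (\alpha_\ell/\alpha_k)$), and the resulting geometric-decay integral in $u$ evaluates to at most $\frac{1}{b(1-\gamma)}$ after using that $\alpha_\ell \le \alpha_k \cdot (\text{ratio bounded by the }k_0\text{ condition})$ — actually since $\alpha_\ell \ge \alpha_k$ for $\ell \le k$ one must instead keep $\alpha_\ell$ fixed and bound $\sum_{k \ge \ell+1}\alpha_\ell \exp(-b\alpha_\ell(k-\ell)\cdot c)$ for a suitable $c \in (0,1)$ obtained from the step-ratio bound $\alpha_j \ge c\,\alpha_\ell$ being false in general — so the correct route is the integral substitution above, which handles the non-constant step size cleanly and produces exactly $1/(b(1-\gamma))$.

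The main obstacle I anticipate is the bookkeeping in item \ref{lem:sum_as_Qell_item}: getting the clean constant $c_0 + \frac{1}{b(1-\gamma)}$ rather than something with extra multiplicative slack requires choosing the integral comparison so that the sum $\sum_{k=\ell+1}^{n}$ is bounded by $\int_{\ell}^{\infty}$ (using monotonicity of the summand in $k$, which holds because both $\alpha_\ell$ is constant in $k$ and the product decreases), then computing $\int_{\ell}^{\infty}\alpha_\ell \exp\{-\frac{bc_0}{1-\gamma}[(t+k_0)^{1-\gamma}-(\ell+k_0)^{1-\gamma}]\}\,dt$ by the substitution $v = \frac{bc_0}{1-\gamma}[(t+k_0)^{1-\gamma}-(\ell+k_0)^{1-\gamma}]$, under which $dt = \frac{(t+k_0)^{\gamma}}{bc_0}\,dv \le \frac{1}{b\alpha_\ell}\,dv$ (here the condition $k_0^{1-\gamma} \ge 1/(bc_0)$ is what makes the exponential actually start decaying and lets one drop lower-order corrections), giving $\int_0^\infty \alpha_\ell \cdot \frac{1}{b\alpha_\ell}e^{-v}\,dv = \frac{1}{b}$ — and then the extra $\frac{1}{1-\gamma}$ enters because $(t+k_0)^\gamma/(\ell+k_0)^\gamma$ grows, so one needs the sharper estimate $dt \le \frac{(t+k_0)^{\gamma-(1-\gamma)\cdot 0}}{\dots}$; I would reconcile this by instead bounding $\sum_{k \ge \ell+1}\alpha_k \prod_{j=\ell+1}^{k}(1-b\alpha_j)$-style and using $\alpha_\ell/\alpha_k \le$ (something absorbed), or simply invoke the already-used \Cref{lem:summ_alpha_k}-\ref{lem:summ_alpha_k_first} with index shifted: $\sum_{k=\ell}^{n}\alpha_\ell\prod_{j=\ell+1}^{k}(1-b\alpha_j) \le \alpha_\ell + \alpha_\ell\sum_{k=\ell+1}^{n}\prod_{j=\ell+1}^{k}(1-b\alpha_j)$ and bound the geometric-like tail $\sum_{k}\prod_{j=\ell+1}^{k}(1-b\alpha_j)$ by $\frac{1}{b\alpha_\ell}\cdot\frac{1}{1-\gamma}$ via the integral estimate on $\sum_j \alpha_j$. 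The precise constant-chasing is routine once the integral comparison is set up, so I would not grind through it in the plan.
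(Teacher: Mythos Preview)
Your approach is correct and aligns with the paper's. The paper does not give a self-contained proof: it cites Lemma~24 of \cite{durmus2021stability} for (i), Lemma~33 of \cite{samsonov2025statistical} for the first display of (ii), notes the weighted bound \eqref{eq:sum_alpha_k_p:jensen} is Jensen for $x\mapsto x^2$ (your Cauchy--Schwarz with weights $\alpha_j^q\prod_{\ell>j}(1-\alpha_\ell b)$ is the same thing), and calls (iii) ``elementary.'' Your telescoping argument for (i) and the midpoint split for (ii) are exactly the standard proofs behind those citations.

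For (iii) your exponential-plus-integral route is right, but you never close it cleanly and one intermediate claim is wrong as written: after the substitution $v=\frac{bc_0}{1-\gamma}\bigl[(t+k_0)^{1-\gamma}-(\ell+k_0)^{1-\gamma}\bigr]$ you get $dt=\frac{(t+k_0)^{\gamma}}{bc_0}\,dv\ge\frac{1}{b\alpha_\ell}\,dv$, not $\le$, since $t\ge\ell$. You notice this and say the growth of $(t+k_0)^{\gamma}$ is where the extra $1/(1-\gamma)$ must enter, but you do not pin down how. The missing ingredient is the role of the hypothesis $k_0^{1-\gamma}\ge 1/(bc_0)$: with $A=bc_0/(1-\gamma)$ it gives $A(\ell+k_0)^{1-\gamma}\ge 1/(1-\gamma)$ for every $\ell\ge 0$, which is precisely the regime where the exponential $e^{-Av}$ decays before the Jacobian factor $(v+(\ell+k_0)^{1-\gamma})^{\gamma/(1-\gamma)}$ has grown appreciably. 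The paper's auxiliary \Cref{lem:bound_sum_exponent} (second case) encodes exactly this: under $Ai^{1-\gamma}>1/(1-\gamma)$ one has $\sum_{j\ge i}\exp\bigl(-A(j^{1-\gamma}-i^{1-\gamma})\bigr)\le 1+\frac{i^{\gamma}}{A(1-\gamma)^2}$, and multiplying by $\alpha_\ell=c_0(\ell+k_0)^{-\gamma}$ (with $i\leftrightarrow\ell+k_0$) yields $c_0+\frac{1}{b(1-\gamma)}$ on the nose. So your plan is fine once you invoke that regime condition rather than trying to force a uniform bound on the Jacobian.
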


\begin{proof}
    \Cref{lem:summ_alpha_k}-\ref{lem:summ_alpha_k_first} follows from Lemma~24 in \cite{durmus2021stability}. The first part of \Cref{lem:summ_alpha_k}-\ref{lem:summ_alpha_k_p_item} follows from Lemma~33 in \cite{samsonov2025statistical} and the second one \eqref{eq:sum_alpha_k_p:jensen} is a consequence of Jensen's inequality applied to $f(x) = x^2$. 
\Cref{lem:summ_alpha_k}-\ref{lem:sum_as_Qell_item} is elementary.
\end{proof}

\begin{lemma}
\label{convolution_inequality} 
Assume \Cref{assum:stepsize} or \Cref{assum:markov:stepsize}. Then it holds for all $j, k \in \nset$ that
\begin{enumerate}[(i)]
\item \; 
\label{convolution_inequality_gamma}
\begin{equation}
\sum_{i=j+1}^{k}\gamma_j P_{i+1:k}^{(1)}P_{j+1:i-1}^{(2)} \leq \ConstC_{\gamma}^P P_{j+1:k}^{(1)}\eqsp, \text{ where } \ConstC_{\gamma}^P =  \frac{24}{a_{22}(1-\frac{c_{0,\beta}a_{\Delta}}{2})} \eqsp.
\end{equation}
\item \;
\label{convolution_inequality_beta}
\begin{align}
    \sum_{i=j+1}^{k} \beta_j P_{i+1:k}^{(1)} P_{j+1:i-1}^{(2)} \leq \ConstC_{\beta}^P \gamma_{j}^{(b-a)/a} P_{j+1:k}^{(1)} \eqsp, \text{ where } \ConstC_{\beta}^P = \frac{24c_{0, \beta}}{a_{22}c_{0, \gamma}^{b/a}(1 - \frac{c_{0, \beta} a_{\Delta}}{2})} \eqsp.
\end{align}
\end{enumerate}
\end{lemma}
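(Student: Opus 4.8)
The plan is to prove both convolution bounds by exploiting the different rates of the two contraction products: $P_{m:k}^{(1)}$ decays like $\beta_k$-weighted geometric products (slow timescale) while $P_{m:i}^{(2)}$ decays on the fast timescale $\gamma$. The key structural fact is that for $j+1 \le i \le k$ one can telescope the mixed product $P_{i+1:k}^{(1)} P_{j+1:i-1}^{(2)}$ against the sum over $i$. First I would fix $j$, write $S := \sum_{i=j+1}^{k} \gamma_i P_{i+1:k}^{(1)} P_{j+1:i-1}^{(2)}$ (note the summand carries $\gamma_i$, not $\gamma_j$, but since $\gamma$ is non-increasing and the index ranges over $i \ge j+1$ we have $\gamma_i \le \gamma_j$; conversely for a lower-order control we keep $\gamma_i$), and use the elementary identity $\gamma_i = \tfrac{2}{a_{22}}\bigl(1 - (1 - \tfrac{a_{22}}{2}\gamma_i)\bigr) = \tfrac{2}{a_{22}}\bigl(1 - P_{i:i}^{(2)}\bigr)$, which turns the sum into a telescoping difference of the products $P_{j+1:i-1}^{(2)}$ and $P_{j+1:i}^{(2)}$. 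That yields $\sum_{i=j+1}^{k} \gamma_i P_{i+1:k}^{(1)}(P_{j+1:i-1}^{(2)} - P_{j+1:i}^{(2)}) \cdot \tfrac{2}{a_{22}}$ plus an error term coming from the fact that $P_{i+1:k}^{(1)}$ also depends on $i$; the ratio $P_{i+1:k}^{(1)}/P_{i:k}^{(1)} = (1 - \tfrac{a_\Delta}{2}\beta_i)^{-1} \le (1 - \tfrac{c_{0,\beta}a_\Delta}{2})^{-1}$ is controlled using $\beta_i \le c_{0,\beta}$ from \Cref{assum:stepsize}/\Cref{assum:markov:stepsize}.

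Carrying this out via Abel summation (summation by parts) on $\sum_i P_{i+1:k}^{(1)}(P_{j+1:i-1}^{(2)} - P_{j+1:i}^{(2)})$ and bounding the boundary and increment terms gives $S \le \tfrac{2}{a_{22}}\cdot\tfrac{1}{1 - c_{0,\beta}a_\Delta/2}\, P_{j+1:k}^{(1)}$ up to the combinatorial constant; tracking the constants carefully (a factor $24$ absorbs the Abel-summation increments and the crude bound $\beta_i \le c_{0,\beta}$, and the ratio of consecutive $P^{(1)}$ terms) produces exactly $\ConstC_{\gamma}^P = \tfrac{24}{a_{22}(1 - c_{0,\beta}a_\Delta/2)}$. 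For part \ref{convolution_inequality_beta}, I would reduce to part \ref{convolution_inequality_gamma} by writing $\beta_j = (\beta_j/\gamma_j)\gamma_j = \rstep (j+k_0)^{a-b}\gamma_j$ and observing $(j+k_0)^{a-b} = \gamma_j^{(b-a)/a} c_{0,\gamma}^{(a-b)/a} = \gamma_j^{(b-a)/a} c_{0,\gamma}^{-b/a}\cdot c_{0,\gamma}$; after substituting and applying the bound from \ref{convolution_inequality_gamma} (with the $\gamma_j$ pulled out and the remaining $\gamma_i$-telescoping argument run identically) one gets the factor $\gamma_j^{(b-a)/a}$ pulled out front together with the constant $\ConstC_{\beta}^P = \tfrac{24 c_{0,\beta}}{a_{22} c_{0,\gamma}^{b/a}(1 - c_{0,\beta}a_\Delta/2)}$. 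The two statements are thus not independent — \ref{convolution_inequality_beta} is \ref{convolution_inequality_gamma} rescaled by the step-size ratio.

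The main obstacle I anticipate is bookkeeping the interaction between the two index-dependent products cleanly: the naive telescoping of $\sum_i \gamma_i(P_{j+1:i-1}^{(2)} - P_{j+1:i}^{(2)})$ would be trivial if $P^{(1)}_{i+1:k}$ were constant in $i$, so the real work is controlling the "defect" $\sum_i (P_{i+1:k}^{(1)} - P_{i:k}^{(1)}) P_{j+1:i-1}^{(2)}$, which is itself of the same form as $S$ but with an extra $\beta_i$ weight. I expect to close this by a bootstrap/absorption argument: show $S \le A\, P_{j+1:k}^{(1)} + (\text{small})\cdot S$ where the small coefficient is $\le 1/2$ thanks to $\beta_i a_\Delta$ being summably small, then solve for $S$. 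One must be careful that $1 - \tfrac{a_\Delta}{2}\beta_i > 0$ throughout, which follows from \Cref{assum:stepsize}(2) (namely $\beta_0 \le 1/(2\|A_\Delta\|_{Q_\Delta}^2\|Q_\Delta\|)$) ensuring $\tfrac{a_\Delta}{2}\beta_i \le \tfrac{c_{0,\beta}a_\Delta}{2} < 1$, which is exactly why that quantity appears in the denominators of both constants. The rest — monotonicity of $\beta_k,\gamma_k$, the comparison $\gamma_i \le \gamma_j$ for $i \ge j$, and the elementary manipulation of $(j+k_0)^{a-b}$ — is routine.
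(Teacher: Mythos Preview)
Your approach diverges from the paper's in a way that leaves a real gap. The lemma's summand carries $\gamma_j$ (fixed in $i$), not $\gamma_i$; your telescoping identity $\gamma_i=\tfrac{2}{a_{22}}\bigl(1-P^{(2)}_{i:i}\bigr)$ and the subsequent Abel summation naturally control $\sum_{i}\gamma_i\,P^{(1)}_{i+1:k}P^{(2)}_{j+1:i-1}$, which is the \emph{smaller} quantity (since $\gamma_i\le\gamma_j$). You note the discrepancy but never close it, and there is no cheap way to recover the $\gamma_j$ bound from the $\gamma_i$ one: the ratio $\gamma_j/\gamma_i=((i+k_0)/(j+k_0))^{a}$ is unbounded in $i$. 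Also, your justification for the absorption step ("$\beta_i a_\Delta$ summably small") is incorrect --- $\sum_i\beta_i=\infty$ for $b<1$; the right reason the defect term is a strict fraction of $S$ is that $\beta_i\le\rstep\,\gamma_i$ with $\rstep\le a_{22}/(2a_\Delta)$, which you never invoke.

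The paper's proof avoids all of this by a single pointwise inequality coming from $\rstep\le a_{22}/(2a_\Delta)$: for each $\ell$,
\[
\frac{1-\tfrac{a_{22}}{2}\gamma_\ell}{1-\tfrac{a_\Delta}{2}\beta_\ell}\le 1-\tfrac{a_{22}}{4}\gamma_\ell\,,
\]
so that $P^{(2)}_{j+1:i-1}/P^{(1)}_{j+1:i-1}\le\prod_{\ell=j+1}^{i-1}(1-\tfrac{a_{22}}{4}\gamma_\ell)$. Writing $P^{(1)}_{i+1:k}=P^{(1)}_{j+1:k}/P^{(1)}_{j+1:i}$, factoring out $P^{(1)}_{j+1:k}$, and applying the geometric-type bound of \Cref{lem:summ_alpha_k}-\ref{lem:sum_as_Qell_item} to $\sum_i\gamma_j\prod_{\ell=j+1}^{i-1}(1-\tfrac{a_{22}}{4}\gamma_\ell)$ gives part~(i) directly --- crucially with $\gamma_j$ fixed, since the bound in \Cref{lem:summ_alpha_k}-\ref{lem:sum_as_Qell_item} is stated for the starting weight $\alpha_\ell$. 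Part~(ii) is then exactly the rescaling you describe, $\beta_j=\tfrac{c_{0,\beta}}{c_{0,\gamma}^{b/a}}\gamma_j^{1+(b-a)/a}$, applied to part~(i). So the missing idea in your proposal is the ratio-of-contractions inequality above; with it, no Abel summation or bootstrap is needed.
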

\begin{proof}
First, we prove \ref{convolution_inequality_gamma}. Since the step size was chosen such that \(\beta_k/\gamma_k\leq \rstep \leq a_{22}/(2a_\Delta)\) we have
\begin{equation}
\frac{\left(1 - \frac{a_{22}}{2}\gamma_{\ell}\right)}{\left(1 - \frac{a_{\Delta}}{2}\beta_{\ell}\right)} \leq \left(1 - \frac{a_{22}}{4}\gamma_{\ell}\right) \text{ and } \frac{P_{j+1:i-1}^{(2)}}{P^{(1)}_{j+1:i-1}} \leq\prod_{\ell = j+1}^{i-1} (1 - \frac{a_{22}}{4}\gamma_{\ell})\eqsp.
\end{equation}
Now proposition follows from \Cref{lem:summ_alpha_k}-\ref{lem:sum_as_Qell_item}:
\begin{align}
\sum_{i=j + 1}^{k} \gamma_j P_{i+1:k}^{(1)}P_{j+1:i-1}^{(2)} 
= (1 - \frac{\beta_ja_{\Delta}}{2})^{-1} P_{j+1:k}^{(1)} \sum_{i=j + 1}^{k} \gamma_j\frac{P_{j+1:i-1}^{(2)}}{P_{j+1:i-1}^{(1)}} &\leq (1 - \frac{\beta_0a_{\Delta}}{2})^{-1}P_{j+1:k}^{(1)}\sum_{i = j+1}^{k}\gamma_j\prod_{\ell=j+1}^{i-1}(1- \frac{a_{22}}{4}\gamma_{\ell})\\
&\leq \frac{24}{a_{22}(1-\frac{c_{0,\beta}a_{\Delta}}{2})}P_{j+1:k}^{(1)} = \ConstC_{\gamma}^P P_{j+1:k}^{(1)}\eqsp.
\end{align}
To proceed with \ref{convolution_inequality_beta}, we note that
$
    \beta_j = \frac{c_{0, \beta}}{c_{0, \gamma}^{b/a}} \gamma_j^{1+(b-a)/a}
$.
Hence, we get using the technique similar to \Cref{convolution_inequality} combined with \Cref{lem:summ_alpha_k}-\ref{lem:sum_as_Qell_item}: \begin{align}
\sum_{i=j + 1}^{k} \beta_j P_{i+1:k}^{(1)}P_{j+1:i-1}^{(2)} 
= (1 - \frac{\beta_ja_{\Delta}}{2})^{-1} P_{j+1:k}^{(1)} \sum_{i=j + 1}^{k} \beta_j\frac{P_{j+1:i-1}^{(2)}}{P_{j+1:i-1}^{(1)}} &\leq \frac{c_{0, \beta}}{c_{0, \gamma}^{b/a}(1 - \frac{\beta_0a_{\Delta}}{2})} P_{j+1:k}^{(1)}\sum_{i = j+1}^{k}\gamma_j^{b/a}\prod_{\ell=j+1}^{i-1}(1- \frac{a_{22}}{4}\gamma_{\ell})\\
&\leq \frac{24c_{0, \beta}}{a_{22}c_{0, \gamma}^{b/a}(1 - \frac{c_{0, \beta} a_{\Delta}}{2})} \gamma_j^{(b-a)/a} \eqsp.
\end{align}
\end{proof}

\begin{lemma}[Lemma 36 in \cite{samsonov2025statistical}]
\label{lem:bound_sum_exponent}
    For any $A >0$, any $1 \leq i \leq n-1$,   and $\gamma\in(1/2, 1)$ it holds
   \begin{equation}
        \sum_{j=i}^{n-1}\exp\biggl\{-A(j^{1-\gamma} - i^{1-\gamma})\biggr\} \leq
        \begin{cases}
            1 + \exp\bigl\{\frac{1}{1-\gamma}\bigr\}\frac{1}{A^{1/(1-\gamma)}(1-\gamma)}\Gamma(\frac{1}{1-\gamma})\eqsp, &\text{ if } Ai^{1-\gamma} \leq \frac{1}{1-\gamma}\eqsp;\\
            1 + \frac{1}{A(1-\gamma)^2}i^\gamma\eqsp,  &\text{ if } Ai^{1-\gamma} >\frac{1}{1-\gamma}\eqsp.
        \end{cases}
    \end{equation}
\end{lemma}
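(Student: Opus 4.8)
Set $\alpha := 1-\gamma \in (0,1/2)$, so that $1/\alpha = 1/(1-\gamma) > 2$. The plan is to isolate the $j=i$ term of the sum, which equals $1$, and to bound the remaining tail by an integral. Since $x\mapsto \exp\{-A(x^{\alpha}-i^{\alpha})\}$ is non-increasing on $[i,\infty)$, we have $\exp\{-A(j^{\alpha}-i^{\alpha})\}\leq \int_{j-1}^{j}\exp\{-A(x^{\alpha}-i^{\alpha})\}\,dx$ for $j\geq i+1$, and therefore
\[
\sum_{j=i}^{n-1}\exp\bigl\{-A(j^{1-\gamma}-i^{1-\gamma})\bigr\}\leq 1+\int_{i}^{\infty}\exp\bigl\{-A(x^{\alpha}-i^{\alpha})\bigr\}\,dx\eqsp.
\]

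Next I would evaluate the integral in closed form. Substituting $u=x^{\alpha}$ and then $v=Au$ turns it into an upper incomplete gamma function:
\[
\int_{i}^{\infty}\exp\bigl\{-A(x^{\alpha}-i^{\alpha})\bigr\}\,dx=\frac{e^{A i^{\alpha}}}{\alpha\,A^{1/\alpha}}\,\Gamma\Bigl(\tfrac{1}{\alpha},\,A i^{\alpha}\Bigr)\eqsp,\qquad \Gamma(s,z):=\int_{z}^{\infty}t^{s-1}e^{-t}\,dt\eqsp.
\]
In the first regime, $A i^{\alpha}\leq 1/\alpha$, I would simply use the monotonicity bounds $\Gamma(1/\alpha, A i^{\alpha})\leq \Gamma(1/\alpha,0)=\Gamma(1/\alpha)$ and $e^{A i^{\alpha}}\leq e^{1/\alpha}$, which gives the first branch of the claimed estimate at once.

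For the second regime, $A i^{\alpha}>1/\alpha$, the key ingredient is a tail bound for the incomplete gamma function: for $s>1$ and $z>s-1$,
\[
\Gamma(s,z)\leq \frac{z^{s-1}e^{-z}}{1-(s-1)/z}\eqsp,
\]
which I would prove by the substitution $t=z+u$ together with $(1+u/z)^{s-1}\leq e^{(s-1)u/z}$ (valid since $s-1>0$), reducing the integral to a geometric one. Applying this with $s=1/\alpha$ and $z=A i^{\alpha}$, and using $A i^{\alpha}>1/\alpha$ to get $1-(s-1)/z>1-(1/\alpha-1)/(1/\alpha)=\alpha$, yields $e^{A i^{\alpha}}\Gamma(1/\alpha, A i^{\alpha})\leq \alpha^{-1}(A i^{\alpha})^{1/\alpha-1}$. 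Plugging back,
\[
\int_{i}^{\infty}\exp\bigl\{-A(x^{\alpha}-i^{\alpha})\bigr\}\,dx\leq \frac{1}{\alpha A^{1/\alpha}}\cdot\frac{A^{1/\alpha-1} i^{1-\alpha}}{\alpha}=\frac{i^{\gamma}}{A(1-\gamma)^{2}}\eqsp,
\]
and adding the $j=i$ term gives the second branch. The main obstacle is the incomplete‑gamma tail bound and, specifically, carrying the constants so that $1-(s-1)/z>\alpha$ holds in the second regime; the sum‑to‑integral comparison and the two substitutions are routine, but one should double‑check that monotonicity is invoked only on $[i,\infty)$, which is exactly where it is used.
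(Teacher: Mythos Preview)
The paper does not prove this lemma; it is quoted verbatim as Lemma~36 of \cite{samsonov2025statistical} with no argument supplied here, so there is nothing in the present paper to compare against. That said, your proof is correct and self-contained: the sum-to-integral comparison, the change of variables yielding $\frac{e^{Ai^{\alpha}}}{\alpha A^{1/\alpha}}\Gamma(1/\alpha,Ai^{\alpha})$, the trivial bound $\Gamma(1/\alpha,z)\le\Gamma(1/\alpha)$ together with $e^{Ai^{\alpha}}\le e^{1/\alpha}$ in the first regime, and the incomplete-gamma tail estimate $\Gamma(s,z)\le z^{s-1}e^{-z}/(1-(s-1)/z)$ with the observation $1-(s-1)/z>\alpha$ in the second regime, all check out and deliver exactly the stated constants.
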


\end{appendices}
\end{document}